\definecolor{mydarkblue}{rgb}{0,0.08,0.45}
\icmltitlerunning{NOMU: Neural Optimization-based Model Uncertainty}
\begin{document}

\ifOnlyPaper{%

\twocolumn[
\icmltitle{NOMU: Neural Optimization-based Model Uncertainty}

\icmlsetsymbol{equal}{*}

\begin{icmlauthorlist}
\icmlauthor{Jakob Heiss}{equal,eth,ai}
\icmlauthor{Jakob Weissteiner}{equal,ai,uzh}
\icmlauthor{Hanna Wutte}{equal,eth,ai}
\icmlauthor{Sven Seuken}{ai,uzh}
\icmlauthor{Josef Teichmann}{eth,ai}
\end{icmlauthorlist}

\icmlaffiliation{eth}{ETH Zurich}
\icmlaffiliation{uzh}{University of Zurich}
\icmlaffiliation{ai}{ETH AI Center}

\icmlcorrespondingauthor{Jakob Weissteiner}{weissteiner@ifi.uzh.ch}
\icmlkeywords{Neural Networks, Model Uncertainty}

\vskip 0.3in
]

\printAffiliationsAndNotice{\icmlEqualContribution} %

\begin{abstract}
We study methods for estimating model uncertainty for neural networks (NNs) in regression. To isolate the effect of model uncertainty, we focus on a noiseless setting with scarce training data. We introduce five important desiderata regarding model uncertainty that any method should satisfy. However, we find that established benchmarks often fail to reliably capture some of these desiderata, even those that are required by Bayesian theory. To address this, we introduce a new approach for capturing model uncertainty for NNs, which we call \emph{Neural Optimization-based Model Uncertainty (NOMU)}. The main idea of NOMU is to design a network architecture consisting of two connected sub-NNs, one for model prediction and one for model uncertainty, and to train it using a carefully-designed loss function. Importantly, our design enforces that NOMU satisfies our five desiderata. Due to its modular architecture, NOMU can provide model uncertainty for any given (previously trained) NN if given access to its training data. We evaluate NOMU in various regressions tasks and noiseless Bayesian optimization (BO) with costly evaluations. In regression, NOMU performs at least as well as state-of-the-art methods. In BO, NOMU even outperforms all considered benchmarks.
\end{abstract}

\section{Introduction}
\label{sec:Introduction}
Neural networks (NNs) are becoming increasingly important in machine learning applications \citep{LeCun2015}. In many domains, it is essential to be able to quantify the \emph{model uncertainty (epistemic uncertainty)} of NNs \citep{neal2012bayesian,ghahramani2015probabilistic}. 
Good estimates of model uncertainty are indispensable in Bayesian optimization (BO) and active learning, where exploration is steered by (functions of) these uncertainty estimates. In recent years, BO has been successfully applied in practice to a wide range of problems, including robotics \citep{martinez2009bayesian}, sensor networks \citep{Srinivas_2012}, and drug development \citep{gomez2018automatic}. Better model uncertainty estimates for BO directly translate to improvements in these applications.

\begin{figure}[t!]
    \begin{center}
    \centerline{\includegraphics[width=1\columnwidth]{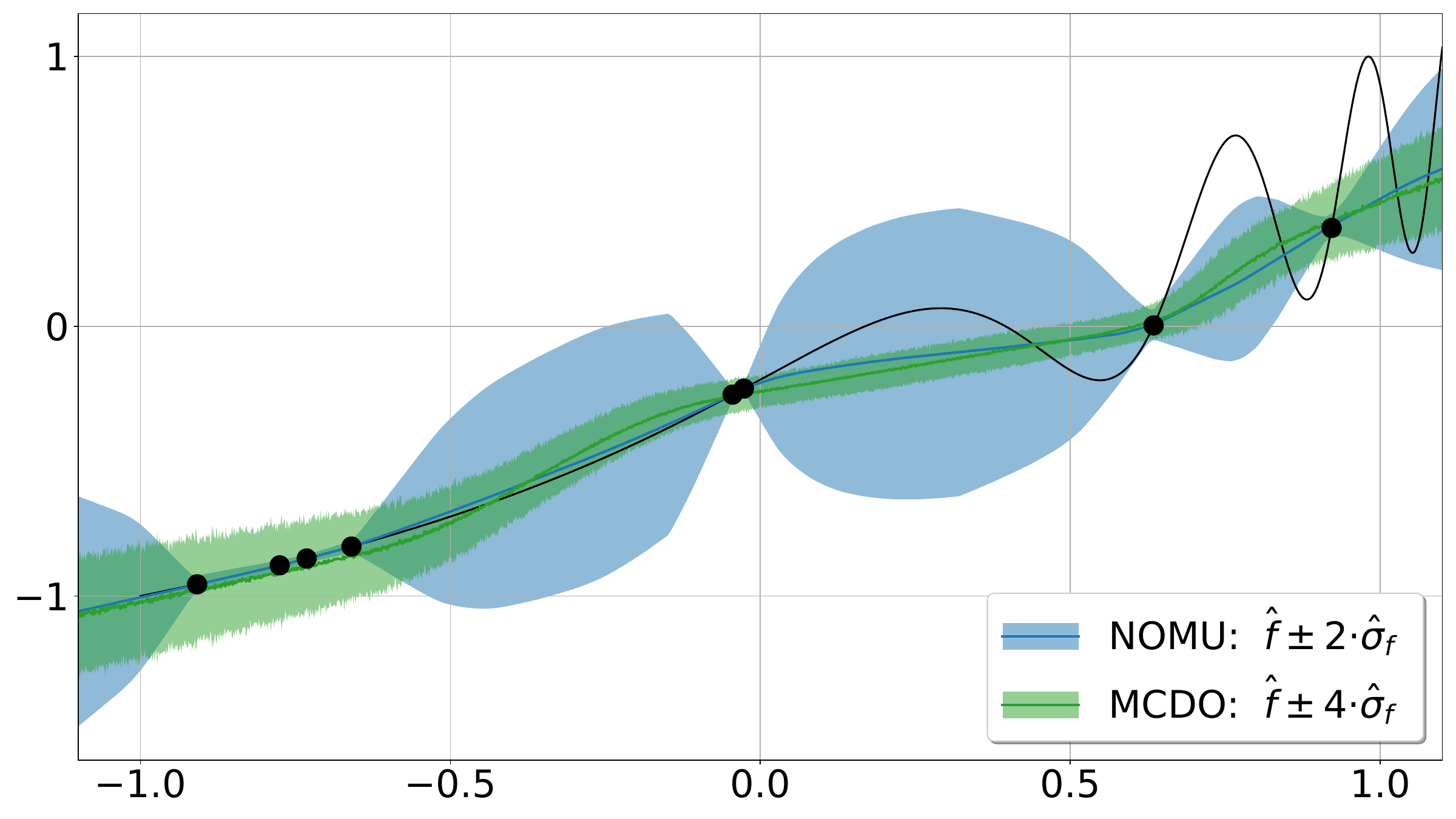}}
    \caption{Visualization of estimated model uncertainty $\sigmodelhat$. The unknown true function is depicted as a black solid line with training points as black dots. NOMU's model prediction $\fhat$ is shown as a solid blue line and its uncertainty bounds are shown as a blue shaded area. As a benchmark, MC Dropout is shown in green.}
    \label{fig:D41dJakobhfun}
    \end{center}
    \vskip -0.2in
\end{figure}

However, estimating model uncertainty well for NNs is still an open research problem. For settings with scarce training data and negligible data noise, where model uncertainty is the main source of uncertainty, we uncover deficiencies of widely used state-of-the-art methods for estimating model uncertainty for NNs. Prior work often only measures the performance in data noise dominant settings, and thus does not adequately isolate the pure model uncertainty, thereby overlooking the algorithms' deficiencies. However, in tasks such as BO with costly evaluations, where accurate estimates of model uncertainty are of utmost importance, these deficiencies can drastically decrease performance.

In this paper, we study the problem of estimating model uncertainty for NNs to obtain \emph{uncertainty bounds (UBs)} that estimate Bayesian credible bounds in a setting with negligible data noise and scarce training data. For this, we propose a novel algorithm (NOMU) that is specialized to such a setting.
Figure~\ref{fig:D41dJakobhfun} shows UBs for NOMU and the benchmark method MC Dropout.

\subsection{Prior Work on Model Uncertainty for NNs}\label{subsec:Uncertainty Quantification of Neural Networks}
Over the last decade, researchers have developed various methods to quantify model uncertainty for NNs. One strand of research considers Bayesian Neural Networks (BNNs), where distributions are placed over the NN's parameters \citep{graves2011practical,blundell2015weight,hernandez2015probabilistic}. However, variational methods approximating BNNs are usually computationally prohibitive and require careful hyperparameter tuning. Thus, BNNs are rarely used in practice \citep{wenzel2020good}.

In practice, \textit{ensemble methods} are more established:
\begin{itemize}[leftmargin=*,topsep=0pt,partopsep=0pt, parsep=0pt]
\item \citet{gal2016dropout} proposed \textit{Monte Carlo dropout (MCDO)} to estimate model uncertainty via stochastic forward passes. Interestingly, they could show that training a NN with dropout can also be interpreted as variational inference approximating a BNN.
\item \citet{lakshminarayanan2017simple} experimentally evaluated ensembles of NNs and showed that they perform as well as or better than BNNs. They proposed using \emph{deep ensembles (DE)}, which use NNs with two outputs for model prediction and data noise, and they estimate model uncertainty via the empirical standard deviation of the ensemble. DE is the most established state-of-the art ensemble method and has been shown to consistently outperform other ensemble methods \citep{ovadia2019can,fort2019deep,gustafsson2020evaluating,ashukha2020pitfalls}.
\item Recently, \citet{wenzel2020hyperparameter} proposed \emph{hyper deep ensembles (HDE)}, an extension of DE where additional diversity is created via different hyperparameters, and they showed that HDE outperforms DE.
\end{itemize}

Despite the popularity of MCDO, DE and HDE, our empirical results %
suggest that none of them reliably capture all essential features of model uncertainty: MCDO yields tubular bounds that do not narrow at observed data points (which can already be observed in \Cref{fig:D41dJakobhfun}); DE and HDE can produce UBs that are sometimes unreasonably narrow in regions far from observed data or unreasonably wide at training points (as we will show in Section \ref{subsec:Regression}).

\subsection{Overview of our Contribution}\label{subsec:Our Contribution}
We present a new approach for estimating model uncertainty for NNs, which we call \emph{neural optimization-based model uncertainty (NOMU)}. In contrast to a fully Bayesian approach (e.g., BNNs), where approximating the posterior for a realistic prior is in general very challenging, we estimate posterior credible bounds by directly enforcing essential properties of model uncertainty. Specifically, we make the following contributions:
\begin{enumerate}[leftmargin=*,topsep=0pt,partopsep=0pt, parsep=0pt]
\item We first introduce five desiderata that we argue model UBs should satisfy (\Cref{subsubsec:Desiderata}).
\item We then introduce NOMU, which consists of a network architecture (\Cref{subsec:The Network Architecture}) and a carefully-designed loss function (\Cref{subsec:modeluncertaintyloss}),  %
such that the estimated UBs fulfill these five desiderata. NOMU is easy to implement, scales well to large NNs, and can be represented as a \emph{single} NN without the need for further ensemble distillation (in contrast to MCDO, DE and HDE). Because of its modular architecture, it can easily be used to obtain UBs for already trained NNs. 
 \item We experimentally evaluate NOMU in various regression settings: in scarce and noiseless settings to isolate model uncertainty (\Cref{subsubsec:ToyRegression,subsubsec:GenerativeTestbed}) and on real-word data-sets (\Cref{subsubsec:SolarIrradiance,subsubsec:UCI}). We show that NOMU performs well across all these settings while state-of-the-art methods (MCDO, DE, and HDE) exhibit several deficiencies.\footnote{We also conducted experiments with \citep{blundell2015weight}. However, we found that this method did not perform as well as the other considered benchmarks. Moreover, it was shown in \citep{gal2016dropout,lakshminarayanan2017simple} that \emph{deep ensembles} and \emph{MC dropout} outperform the methods by \citep{hernandez2015probabilistic} and \citep{graves2011practical}, respectively. Therefore, we do not include \citep{graves2011practical,blundell2015weight,hernandez2015probabilistic} in our experiments.}
\item Finally, we evaluate the performance of NOMU in high-dimensional Bayesian optimization (BO) and show that NOMU performs as well or better than all considered benchmarks (Section~\ref{subsec:BayesianOptimization}).
\end{enumerate}

Our source code is available on GitHub: \url{https://github.com/marketdesignresearch/NOMU}.

\subsection{Further Related Work}\label{subsec:Related Work}
\citet{nix1994estimating} were among the first to introduce NNs with two outputs: one for model prediction and one for \emph{data noise (aleatoric uncertainty)}, using the Gaussian negative log-likelihood as loss function. 
However, such a data noise output cannot be used as an estimator for model uncertainty (epistemic uncertainty); see \Cref{sec:appendix:Aleatoric Neural Networks: Aleatoric vs. Epistemic Uncertainty} for details.
To additionally capture model uncertainty, \citet{kendall2017uncertainties} combined the idea of \citet{nix1994estimating} with MCDO.

Similarly, NNs with two outputs for lower and upper UBs, trained on specifically-designed loss functions, were previously considered by \citet{khosravi2010lower} and \citet{pearce2018high}. However, the method by \citet{khosravi2010lower} again only accounts for data noise and does not consider model uncertainty. The method by \citet{pearce2018high} also does not take model uncertainty into account in the design of their loss function and only incorporates it via ensembles (as in DE).

Besides the state-of-the art ensemble methods HDE and DE, there exist several other papers on ensemble methods that, for example, promote their diversity on the function space \citep{wang2019function} or reduce their computational cost \citep{wen2020batchensemble,havasi2021training}.

For classification, \citet{malinin2018predictive} introduced prior networks, which explicitly model in-sample and out-of-distribution uncertainty, where the latter is realized by minimizing the reverse KL-distance to a selected flat point-wise defined prior. In a recent working paper (and concurrent to our work), \citet{malinin2020regression} report on progress extending their idea to regression. While the idea of introducing a separate loss for learning model uncertainty is related to NOMU, there are several important differences (loss, architecture, behavior of the model prediction, theoretical motivation; see \Cref{sec:NOMUvsPriorNetworks} for details).
Furthermore, their experiments suggest that DE still performs weakly better than their proposed method.

In contrast to BNNs, which perform approximate inference over the entire set of weights, Neural Linear Models (NLMs) perform \emph{exact} inference on only the last layer. NLMs have been extensively benchmarked in \citep{ober2019benchmarking} against MCDO and the method from \citep{blundell2015weight}. Their results suggest that MCDO and \citep{blundell2015weight} perform  competitive, even to carefully-tuned NLMs. 

Neural processes, introduced by \citet{conditional_neural_proc_pmlr-v80-garnelo18a,neural_proc}, have been used to express model uncertainty for image completion tasks, where one has access to thousands of different images interpreted as functions $f_i$ instead of input points $x_i$. See \Cref{sec:appendix:NOMU vs. Neural Processes} for a detailed comparison of their setting to the setting we consider in this paper.

\section{Preliminaries}\label{sec:Bayesian Uncertainty Framework}
In this section, we briefly review the classical Bayesian uncertainty framework for regression.

Let $\X\subset\R^d, Y\subset\R$ and let $f\colon \X \to \Y$ denote the unknown ground truth function. Let $\Dtr:=\{\left( \xtr_i,\ytr_i\right)\in\X\times\Y, i\in \fromto{\ntr}\},$ with $\ntr\in\N$ be i.i.d samples from the data generating process
$\label{modelAssumption}
    y = f(x) + \varepsilon,
$
where $\varepsilon|x\sim\mathcal{N}(0,\signoise^2(x))$. We use $\signoise$ to refer to the \emph{data noise (aleatoric uncertainty)}. We refer to $\left( \xtr_i,\ytr_i\right)$ as a \emph{training point} and to $\xtr_i$ as an \emph{input training point}.

In the remainder of this paper, we follow the classic Bayesian uncertainty framework by modelling the unknown ground truth function $f$ as a random variable. Hence, with a slight abuse of notation, we use the symbol $f$ to denote \textit{both} the unknown ground truth function as well as the corresponding random variable.
In \Cref{sec:DetailsNotation}, we provide a mathematically rigorous formulation of the considered Bayesian uncertainty framework.

Given a prior distribution for $f$ and known data noise $\signoise$, the posterior of $f$ and $y$ are well defined. The \emph{model uncertainty (epistemic uncertainty)} $\sigmodel(x)$ is the posterior standard deviation of $f(x)$, i.e.,
\begin{align}\label{eq:modelUncertainty}
\sigmodel(x):= \sqrt{\mathbb{V}[f(x)|\Dtr,x]}, \quad x\in \X.
\end{align}
Assuming independence between $f$ and $\varepsilon$, the variance of the predictive distribution of $y$ can be decomposed as
$\label{varianceAss}\mathbb{V}[y|\Dtr,x]=\sigmodel^2(x) + \signoise^2(x).$
We present our algorithm for estimating model uncertainty $\sigmodelhat$ for the case of zero or negligible data noise, i.e., $\signoise\approx0$ (see \Cref{sec:Extensions} for an extension to $\signoise\gg0$). For a given model prediction $\fhat$, the induced uncertainty bounds (UBs) are then given by $\left(\lUB_c(x),\uUB_c(x)\right):=\left(\fhat(x)\mp c~\sigmodelhat(x)\right)$, for $x\in\X$ and a calibration parameter $c\geq0$.\footnote{Note that our UBs estimate \emph{credible bounds (CBs)}~$\underline{C\!B}$ and $\overline{C\!B}$, which, for $\alpha \in [0,1]$, fulfill that $\mathbb{P}[f(x)\in [\underline{C\!B},\overline{C\!B}]|\Dtr,x]=\alpha$. For $\signoise\equiv0$, CBs are equal to \emph{predictive bounds} $\underline{P\!B},\overline{P\!B}$ with $\mathbb{P}[y \in [\underline{P\!B},\overline{P\!B}]|\Dtr,x]=\alpha$. See \Cref{sec:TheoreticalAnalysisAppendix} for an explanation.}

\section{The NOMU Algorithm}\label{sec:Neural Optimization-based Model Uncertainty}
We now present NOMU. We design NOMU to yield a model prediction $\fhat$ and a model uncertainty prediction $\sigmodelhat$, such that the resulting UBs $(\lUB_c,\uUB_c)$ fulfill five desiderata.
\subsection{Desiderata}\label{subsubsec:Desiderata}
\begin{enumerate}[label=D\arabic*,align=left, leftmargin=*,topsep=0pt]
    \item[\mylabel{itm:Axioms:trivial}{D1 (Non-Negativity)}]
	\textit{The upper/lower UB between two training points lies above/below the model prediction $\fhat$, i.e.,  $\lUB_c(x)\le\fhat(x)\le\uUB_c(x)$ for all $x\in\X$ and for $c\geq0$. Thus, $\sigmodelhat\geq0$.}
	\end{enumerate}
By definition, for any given prior, the exact posterior model uncertainty $\sigmodel$ is positive, and therefore Desideratum~\hyperref[itm:Axioms:trivial]{D1} should also hold for any estimate $\sigmodelhat$.
\begin{enumerate}[resume,label=D\arabic*,align=left, leftmargin=*,topsep=0pt]
		\item[\mylabel{itm:Axioms:ZeroUncertaintyAtData}{D2 (In-Sample)}]
	\textit{In the noiseless case ($\signoise\equiv0$), there is zero model uncertainty at each input training point $\xtr$, i.e., $\sigmodelhat(\xtr)=0$. Thus, $\uUB_c(\xtr)=\lUB_c(\xtr)=\fhat(\xtr)$ for $c\ge0$.}
\end{enumerate}
In \Cref{sec:desiderata:InSample}, we prove that, for any prior that does not contradict the training data, the exact $\sigmodel$ satisfies \hyperref[itm:Axioms:ZeroUncertaintyAtData]{D2}. Thus, \hyperref[itm:Axioms:ZeroUncertaintyAtData]{D2} should also hold for any estimate $\sigmodelhat$, and we argue that even in the case of non-zero small data noise, model uncertainty should be small at input training data points.%
\begin{enumerate}[resume,label=D\arabic*,align=left, leftmargin=*,topsep=0pt]
	\item[\mylabel{itm:Axioms:largeDistantLargeUncertainty}{D3 (Out-of-Sample)}]
	\textit{The larger the distance of a point $x\in\X$ to the input training points in $\Dtr$, the wider the UBs at $x$, i.e., model uncertainty~$\sigmodelhat$ increases out-of-sample.\footnote{Importantly, \hyperref[itm:Axioms:largeDistantLargeUncertainty]{D3} also promotes model uncertainty in gaps \emph{between} input training points.}}
\end{enumerate}
For \hyperref[itm:Axioms:largeDistantLargeUncertainty]{D3} it is often not obvious which metric to choose to measure distances.
Some aspects of this metric can be specified via the architecture (e.g., shift invariance for CNNs).
In many applications, it is best to \textit{learn} further aspects of this metric from training data, motivating our next desideratum.
\begin{enumerate}[resume,label=D\arabic*,align=left, leftmargin=*,topsep=0pt]
	\item[\mylabel{itm:Axioms:irregular}{D4 (Metric Learning)}]
	\textit{Changes in those features of $x$ that have high predictive power on the training set have a large effect on the distance metric used in \hyperref[itm:Axioms:largeDistantLargeUncertainty]{D3}.}\footnote{\label{footnote:LearningTheMetricISImportant}
	 Consider the task of learning facial expressions from images. For this, eyes and mouth are important features, while background color is not. 
	 A CNN automatically learns which features are important for model prediction.
	 The same features are also important for model uncertainty: Consider an image with pixel values similar to those of an image of the training data, but where mouth and eyes are very different. We should be substantially more uncertain about the model prediction for such an image than for one which is almost identical to a training image except that it has a different background color, even if this change of background color results in a huge Euclidean distance of the pixel vectors. \hyperref[itm:Axioms:irregular]{D4} requires that a more useful metric is learned instead.
	 }
	 \end{enumerate}
\hyperref[itm:Axioms:irregular]{D4} is not required for any application. However, specifically in \emph{deep} learning applications, where it is a priori not clear which features are important, \hyperref[itm:Axioms:irregular]{D4} is particularly desirable.%
\begin{enumerate}[resume,label=D\arabic*,align=left, leftmargin=*,topsep=0pt]
	\item[\mylabel{itm:Axioms:UncertaintyDecreasesWithMoreTrainingPoints}{D5 (Vanishing)}]
	\textit{As the number~$\ntr$ of training points (with $\xtr_i\overset{\text{i.i.d}}{\sim}\PP_X$) tends to infinity, model uncertainty vanishes for each $x$ in the support of the input data distribution~$\PP_X$, i.e., $\lim_{\ntr\to\infty}\sigmodelhat(x) = 0$ for a fixed $c\ge0$. Thus, for a fixed $c$, $\lim_{\ntr\to\infty} |\uUB_c(x)-\lUB_c(x)|=0$.
	}
\end{enumerate}
In \Cref{appendix:Desiderata}, we discuss all desiderata in more detail (see \Cref{subsec:A note on Desideratum D4} for a visualization of \hyperref[itm:Axioms:irregular]{D4}).
\begin{figure}[t!]
        \vskip 0.1in
        \begin{center}
        \centerline{
        \resizebox{1\columnwidth}{!}{
                \begin{tikzpicture}
                [cnode/.style={draw=black,fill=#1,minimum width=3mm,circle}]
                \node[cnode=gray,label=180:$\mathlarger{\mathlarger{\mathlarger{x \in \X}}}$] (x1) at (0.5,0) {};
                \node[cnode=gray] (x2+3) at (3,3) {};
                \node at (3,2.2) {$\mathlarger{\mathlarger{\mathlarger{\vdots}}}$};
                \node[cnode=gray] (x2+1) at (3,1) {};
                \node[cnode=gray] (x2-3) at (3,-3) {};
                \node at (3,-1.7) {$\mathlarger{\mathlarger{\mathlarger{\vdots}}}$};
                \node[cnode=gray] (x2-1) at (3,-1) {};
                \draw (x1) -- (x2+3);
                \draw (x1) -- (x2+1);
                \draw (x1) -- (x2-1);
                \draw (x1) -- (x2-3);
                \node[cnode=gray] (x3+3) at (6,3) {};
                \node at (6,2.2) {$\mathlarger{\mathlarger{\mathlarger{\vdots}}}$};
                \node[cnode=gray] (x3+1) at (6,1) {};
                \node[cnode=gray] (x3-3) at (6,-3) {};
                \node at (6,-1.7) {$\mathlarger{\mathlarger{\mathlarger{\vdots}}}$};
                \node[cnode=gray] (x3-1) at (6,-1) {};
                \foreach \y in {1,3}
                {   \foreach \z in {1,3}
                        {   \draw (x2+\z) -- (x3+\y);
                        }
                }
                \foreach \y in {1,3}
                {   \foreach \z in {1,3}
                        {   \draw (x2-\z) -- (x3-\y);
                        }
                }
                \node at (7.5,+3) {$\mathlarger{\mathlarger{\mathlarger{\ldots}}}$};
                \node at (7.5,+2) {$\mathlarger{\mathlarger{\mathlarger{\ldots}}}$};
                \node at (7.5,+1) {$\mathlarger{\mathlarger{\mathlarger{\ldots}}}$};
                \node at (7.5,-1) {$\mathlarger{\mathlarger{\mathlarger{\ldots}}}$};
                \node at (7.5,-2) {$\mathlarger{\mathlarger{\mathlarger{\ldots}}}$};
                \node at (7.5,-3) {$\mathlarger{\mathlarger{\mathlarger{\ldots}}}$};
                \node[cnode=gray] (x4+3) at (9,3) {};
                \node at (9,2.2) {$\mathlarger{\mathlarger{\mathlarger{\vdots}}}$};
                \node[cnode=gray] (x4+1) at (9,1) {};
                \node[cnode=gray] (x4-3) at (9,-3) {};
                \node at (9,-1.7) {$\mathlarger{\mathlarger{\mathlarger{\vdots}}}$};
                \node[cnode=gray] (x4-1) at (9,-1) {};
                \node[cnode=gray,label=360:$\mathlarger{\mathlarger{\mathlarger{\fhat(x)\in \Y}}}$] (x5+2) at (11.5,2) {};
                \node[cnode=gray,label=360:$\mathlarger{\mathlarger{\mathlarger{\sigmodelhatraw(x)\in \Rpz}}}$] (x5-2) at (11.5,-2) {};
                \draw (x4+3)--(x5+2);
                \draw (x4+1)--(x5+2);
                \draw[dashed,-Latex,arrows={[scale=1.15]}] (x4+3)--(11.52,-1.8);
                \draw[dashed,-Latex,arrows={[scale=1.15]}] (x4+1)--(11.4,-1.86);
                \draw (x4-1)--(x5-2);
                \draw (x4-3)--(x5-2);
                \draw [dotted, line width=0.4mm] (2.5,0.5) rectangle (9.5,+3.5);
                \draw [dotted, line width=0.4mm] (2.5,-3.5) rectangle (9.5,-0.5);
                \node at (1,+2.5) {$\mathlarger{\mathlarger{\mathlarger{\fhat}}}${\Large-network}};
                \node at (1,-2.5) {$\mathlarger{\mathlarger{\mathlarger{\sigmodelhatraw}}}${\Large-network}};
                \end{tikzpicture}
        }
        }
        \caption{NOMU's network architecture}
        \label{fig:nn_tik}
        \end{center}
        \vskip -0.2in
\end{figure}

\subsection{The Network Architecture}\label{subsec:The Network Architecture}
For NOMU, we construct a network $\NN_\theta$ with two outputs: \emph{the model prediction} $\fhat$ (e.g., mean prediction) and a \emph{raw} model uncertainty prediction $\sigmodelhatraw$. Formally: $\NN_\theta \colon\X \to  \Y\times \Rpz$, with $x \mapsto \NN_\theta(x):=(\fhat(x),\sigmodelhatraw(x))$. NOMU's architecture consists of two almost separate sub-networks: the $\fhat$-network and the $\sigmodelhatraw$-network (see \Cref{fig:nn_tik}). For each sub-network, any network architecture can be used (e.g., feed-forward NNs, CNNs). This makes NOMU highly modular and we can plug in any previously trained NN for $\fhat$, or we can train $\fhat$  simultaneously with the $\sigmodelhatraw$-network. The $\sigmodelhatraw$-network learns the raw model uncertainty and is connected with the $\fhat$-network through the last hidden layer (dashed lines in Figure \ref{fig:nn_tik}). This connection enables $\sigmodelhatraw$ to re-use features that are important for the model prediction~$\fhat$, implementing Desideratum \ref{itm:Axioms:irregular}.\footnote{To prevent that $\sigmodelhatraw$ impacts $\fhat$, the dashed lines should only make forward passes when trained.}

\begin{remark} 
NOMU's network architecture can be modified to realize \ref{itm:Axioms:irregular} in many different ways. For example, if low-level features were important for predicting the model uncertainty, one could additionally add connections from earlier hidden layers of the $\fhat$-network to layers of the $\sigmodelhatraw$-network. Furthermore, one can strengthen \ref{itm:Axioms:irregular} by increasing the regularization of the $\sigmodelhatraw$-network (see \Cref{subsec:A note on Desideratum D4}).
\end{remark}

After training $\NN_\theta$, we apply the readout map $\readout(z)=\sigmax(1-\exp(-\frac{\max(0,z)+\sigmin}{\sigmax})),\, \sigmin\ge0, \sigmax>0$ to the \emph{raw} model uncertainty output $\sigmodelhatraw$ to obtain NOMU's \emph{model uncertainty prediction}
\begin{align}\label{eq:modelUncertaintyPrediction}
\sigmodelhat(x):=\readout(\sigmodelhatraw(x)),\, \forall x\in \X.
\end{align}
The readout map $\readout$ monotonically interpolates between a minimal $\approx\sigmin$ and a maximal $\approx\sigmax$ model uncertainty (see \Cref{fig:app:readout_map} in \Cref{sec:readout_map} for a visualization). Here, $\sigmin$ is used for numerical stability, and $\sigmax$ defines the maximal model uncertainty far away from input training points (similarly to the prior variance for RBF-GPs). 
With NOMU's model prediction $\fhat$, its model uncertainty prediction $\sigmodelhat$ defined in \eqref{eq:modelUncertaintyPrediction}, and given a calibration parameter\footnote{Like all other methods, NOMU outputs \emph{relative} UBs that should be calibrated, e.g., via a parameter $c\ge0$. See also \citet{pmlr-v80-kuleshov18a} for a non-linear calibration method.%
} $c\in \Rpz$, we can now define for each $x\in\X$  NOMU's UBs as
\begin{align}\label{eq:UBs}
&\left(\lUB_c(x),\uUB_c(x)\right):=\left(\fhat(x)\mp c~\sigmodelhat(x)\right). %
\end{align}
It is straightforward to construct a \emph{single} NN that directly outputs the upper/lower UB, by extending the architecture shown in Figure~\ref{fig:nn_tik}: we monotonically transform and scale the output $\sigmodelhatraw(x)$ and then add/subtract this to/from the other output $\fhat(x)$. It is also straightforward to compute NOMU's UBs for any \emph{given, previously trained NN}, by attaching the $\sigmodelhatraw$-network to the trained NN, and only training the $\sigmodelhatraw$-network on the same training points as the original NN.
\begin{remark}\label{rem:different_readout}
The readout map $\readout$ can be modified depending on the subsequent use of the estimated UBs. For example, for BO over discrete domains (e.g., $\X=\{0,1\}^d$) \citep{baptista2018bayesian}, we propose the linearized readout map $\readout(z)=\sigmin+\max(0,z-\sigmin)-\max(0,z-\sigmax)$. With this $\readout$ and ReLU activations, one can encode NOMU's UBs as a mixed integer program (MIP) \citep{weissteiner2020deep}.
This enables optimizing the upper UB as acquisition function without the need for further approximation via ensemble distillations \citep{malinin2019ensemble}.
\end{remark}
\subsection{The Loss Function}\label{subsec:modeluncertaintyloss}
We now introduce the loss function $L^\hp$ we use for training NOMU's architecture. Let $\X$ be such that $0<\lambda_d(\X)<\infty$, where $\lambda_d$ denotes the $d$-dimensional Lebesgue measure. We train $\NN_{\theta}$ with loss $L^\hp$ and L2-regularization parameter $\lambda>0$, i.e.,  %
minimizing $\Lmu(\NN_{\theta})+\lambda\twonorm[{\theta}]^2$ via a gradient descent-based algorithm.
\begin{definition}[NOMU Loss]\label{def:Model Uncertainty Loss}
Let  $\hp:=(\musqr,\muexp,\cexp)\in \Rpz^3$ denote a tuple of hyperparameters. Given a training set $\Dtr$, the loss function $L^\hp$ is defined as
\begin{align}\label{eq:lmu2}
  L^\hp(\NN_\theta):=&\underbrace{\sum_{i=1}^{\ntr}(\fhat(\xtr_i
  )-\ytr_i)^2}_{\terma{}}+ \,\musqr\cdot\underbrace{\sum_{i=1}^{\ntr}\left(\sigmodelhatraw(\xtr_i)\right)^2}_{\termb{}} \notag\\ 
   &+ \muexp\cdot\underbrace{\frac{1}{\lambda_d(\X)}\int_{\X}e^{-\cexp\cdot \sigmodelhatraw(x)}\,dx}_{\termc{}}.
\end{align}
\end{definition}
In the following, we explain how the three terms of $L^\hp$ promote the desiderata we introduced in \Cref{subsubsec:Desiderata}. Note that the behaviour of $\sigmodelhatraw$ directly translates to that of $\sigmodelhat$.
\begin{itemize}[leftmargin=*,topsep=0pt,partopsep=0pt, parsep=0pt]
\item Term~\terma{} solves the regression task, i.e., learning a smooth function $\fhat$ given $\Dtr$. If $\fhat$ is given as a pre-trained NN, then this term can be omitted.
\item Term~\termb{} implements \ref{itm:Axioms:ZeroUncertaintyAtData} and \ref{itm:Axioms:UncertaintyDecreasesWithMoreTrainingPoints} (i.e.,  $\sigmodelhatraw(\xtr_i)\approx 0$). The hyperparameter $\musqr$ controls the amount of uncertainty at the training points.\footnote{In the noiseless case, in theory $\frac{\musqr}{\lambda}=\infty$; we set $\frac{\musqr}{\lambda}=10^7$. For small non-zero data noise, setting $\frac{\musqr}{\lambda}\ll\infty$ captures \textit{data noise induced model uncertainty} $\sigmodel(\xtr)>0$ (\Cref{sec:desiderata:InSample}).} The larger $\musqr$, the narrower the UBs at training points.
\item Term~\termc{} has two purposes. First, it implements \ref{itm:Axioms:trivial} (i.e., $\sigmodelhatraw\ge0$). Second, it pushes $\sigmodelhatraw$ towards infinity across the whole input space $\X$. However, due to the counteracting force of \termb{} as well as regularization, $\sigmodelhatraw$ increases continuously as you move away from the training data. The interplay of \termb{}, \termc{}, and regularization thus promotes \ref{itm:Axioms:largeDistantLargeUncertainty}. The hyperparameters $\muexp$ and $\cexp$ control the size and shape of the UBs. Concretely, the larger $\muexp$, the wider the UBs; the larger $\cexp$, the narrower the UBs at points $x$ with large $\sigmodelhat(x)$ and the wider the UBs at points $x$ with small $\sigmodelhat(x)$.
\end{itemize}
In \Cref{subsec:appendix:Qualitative Sensitivity Analysis}, we provide detailed visualizations on how the loss hyperparameters $\musqr$, $\muexp$, and $\cexp$ shape NOMU's model uncertainty estimate.
In the implementation of $L^\hp$, we approximate \termc{} via MC-integration using additional, \emph{artificial input points} ${\Daug:=\left\{x_i\right\}_{i=1}^l\stackrel{i.i.d}{\sim}\textrm{Unif}(\X)}$ by
$\Scale[1.2]{\frac{1}{l}}\cdot\hspace{-0.1cm}\sum_{x\in \Daug}\hspace{-0.05cm}e^{-\cexp\cdot \sigmodelhat(x)}$.
\begin{remark}
In \termc{}, instead of the Lebesgue-measure, one can also use a different measure $\nu$, i.e., $\frac{1}{\nu(\X)}\int_{\X}e^{-\cexp\cdot \sigmodelhatraw(x)}\,d\nu(x)$. This can be relevant in high dimensions, where meaningful data points often lie close to a lower-dimensional manifold \citep{cayton2005algorithms}; $\nu$ can then be chosen to concentrate on that region. In practice, this can be implemented by sampling from a large unlabeled data set $\Daug$ representing this region or learning the measure $\nu$ using GANs \citep{goodfellow2014gen}.
\end{remark}

\paragraph{Theory} In \Cref{appendix:Desiderata}, we prove that NOMU fulfills \hyperref[itm:Axioms:trivial]{D1}, \hyperref[itm:Axioms:ZeroUncertaintyAtData]{D2} and \hyperref[itm:Axioms:UncertaintyDecreasesWithMoreTrainingPoints]{D5} (\Cref{prop:NOMUD1,prop:NOMUD2,prop:NOMUD5}),
and discuss how NOMU fulfills \hyperref[itm:Axioms:largeDistantLargeUncertainty]{D3} and \hyperref[itm:Axioms:irregular]{D4}.
In \Cref{subsec:connectingNOMUtoPointwiseUncertaintyBounds}, we show that, under certain assumptions, NOMU's UBs can be interpreted as \textit{pointwise worst-case} UBs $\uUBp(x):=\sup_{f\in \HC_{\Dtr}}f(x)$ within a hypothesis class~$\HC_{\Dtr}$ of data-explaining functions.
In \Cref{subsec:Pointwise Uncertainty Bounds}, we explain how $\lUBp(x)$ and $\uUBp(x)$ estimate posterior CBs of BNNs (with a Gaussian prior on the weights), without performing challenging variational inference. However, while exact posterior CBs of BNNs lose \hyperref[itm:Axioms:irregular]{D4} as their width goes to infinity, NOMU's UBs are capable of retaining \hyperref[itm:Axioms:irregular]{D4} in this limit.  

\section{Experimental Evaluation}\label{sec:ExperimentsMain}
In this section, we experimentally evaluate NOMU's model uncertainty estimate in multiple synthetic and real-world regression settings (\Cref{subsec:Regression}) as well as in high-dimensional Bayesian optimization (\Cref{subsec:BayesianOptimization}).

\paragraph{Benchmarks} We compare NOMU against four benchmarks, each of which gives a model prediction $\fhat$ and a model uncertainty prediction $\sigmodelhat$ (see \Cref{subec:Benchmarks} for formulas). We calculate model-specific UBs at $x\in\X$ as $(\fhat(x)\mp c~\sigmodelhat(x))$ with calibration parameter $c \in \Rpz$ and use them to evaluate all methods. We consider three algorithms that are specialized to model uncertainty for NNs: (i) MC dropout (MCDO) (ii) deep ensembles (DE) and (iii) hyper deep ensembles (HDE) and a non-NN-based benchmark: (iv) Gaussian process (GP) with RBF kernel.

\subsection{Regression}
\label{subsec:Regression}
To develop intuition, we first study the \emph{model} UBs of all methods on synthetic test functions with 1D--2D scarce input training points without data noise (\Cref{subsubsec:ToyRegression}). We then propose a novel generative test-bed and evaluate NOMU within this setting (\Cref{subsubsec:GenerativeTestbed}). Next, we analyze NOMU on a real-world time series (\Cref{subsubsec:SolarIrradiance}). Finally, we evaluate NOMU on the real-world UCI data sets (\Cref{subsubsec:UCI}).
\begin{figure}[t!]
    \begin{center}
    \centerline{\includegraphics[width=1\columnwidth]{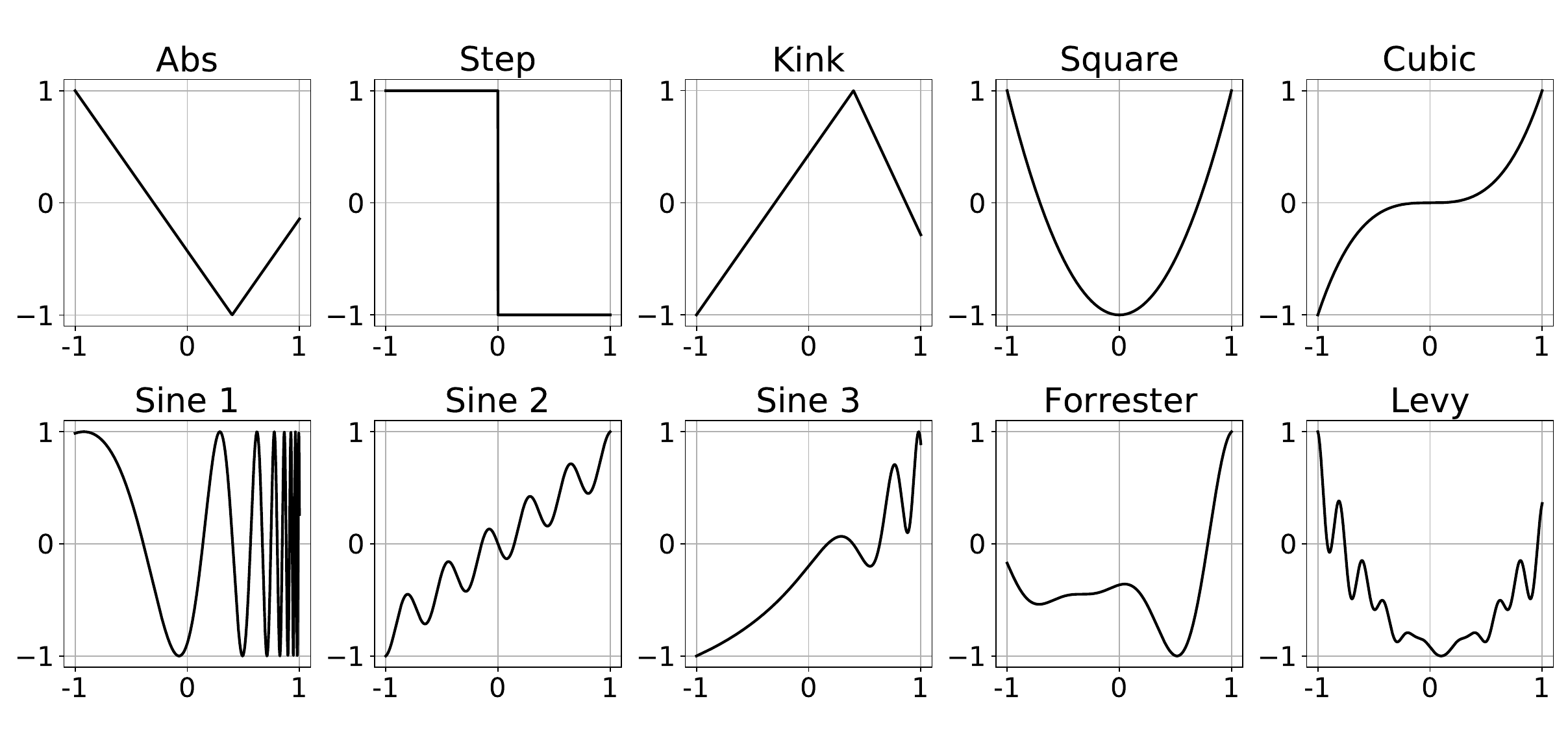}}
    \caption{1D synthetic test functions}
    \label{fig:1dfuns}
    \end{center}
    \vskip -0.2in
\end{figure}

\subsubsection{Toy Regression}\label{subsubsec:ToyRegression}
\paragraph{Setting} We consider ten different
1D functions whose graphs are shown in \Cref{fig:1dfuns}. 
Those include the popular Levy and Forrester function with multiple local optima.\footnote{\label{footnote:sfu}See \href{www.sfu.ca/~ssurjano/optimization.html}{sfu.ca/~ssurjano/optimization.html}.} All functions are transformed to $\X:=[-1,1]=:f(X)$. For each function, we conduct $500$ runs. In each run, we randomly sample eight noiseless input training points from $\X$,
such that the only source of uncertainty is model uncertainty.
For each run, we also generate $100$ \emph{test points} in the same fashion to assess the quality of the UBs.

\paragraph{Metrics} We report the \emph{average negative log (Gaussian) likelihood (NLL)}, minimized over the calibration parameter $c$, which we denote as $\MNLPD$. Following prior work \citep{khosravi2010lower,pmlr-v80-kuleshov18a,pearce2018high}, we further measure the quality of UBs by contrasting their \emph{mean width} $(\MW)$ with their \emph{coverage probability} $(\CP)$. 
Ideally, $\MW$ should be as small as possible, while $\CP$ should be close to $1$. Since $\CP$ is counter-acting $\MW$, we consider ROC-like curves, plotting $\MW$ against $CP$ for a range of calibration parameters $c$, and report the \emph{area under the curve (AUC)} (see \Cref{subsec:performanceMeasures} for details).

\paragraph{Algorithm Setup} For each of the two NOMU sub-networks, we use a feed-forward NN with three fully-connected hidden layers \`a $2^{10}$ nodes, ReLUs, and hyperparameters $\muexp=0.01, \musqr=0.1, \cexp=30$. In practice, the values for $\muexp, \musqr$, and $\cexp$ can be tuned on a validation set. However, for all synthetic experiments (\Cref{subsubsec:ToyRegression} and \Cref{subsubsec:GenerativeTestbed}), we use the same values, which lead to good results across all functions. Moreover, we set $\lambda=10^{-8}$ accounting for zero data-noise, $\sigmin=0.001$ and $\sigmax=2$.
In \Cref{subsec:appendix:Quantitative Sensitivity Analysis}, we provide an extensive sensitivity analysis for the hyperparameters $\muexp$, $\musqr$, $\cexp$, $\sigmin$ and $\sigmax$. This analysis demonstrates NOMU's robustness within a certain range of hyperparameter values.

Furthermore, to enable a fair comparison of all methods, we use generic representatives and do not optimize any of them for any test function. We also choose the architectures of all NN-based methods, such that the overall number of parameters is comparable. We provide all methods with the same prior information (specifically, this entails knowledge of zero data noise), and set the corresponding parameters accordingly. Finally, we set all remaining hyperparameters of the benchmarks to the values proposed in the literature. Details on all configurations are provided in \Cref{subsec:ConfigurationDetailsofBenchmarksRegression}.

\begin{figure}[t!]
    \begin{center}
    \centerline{\includegraphics[width=1\columnwidth]{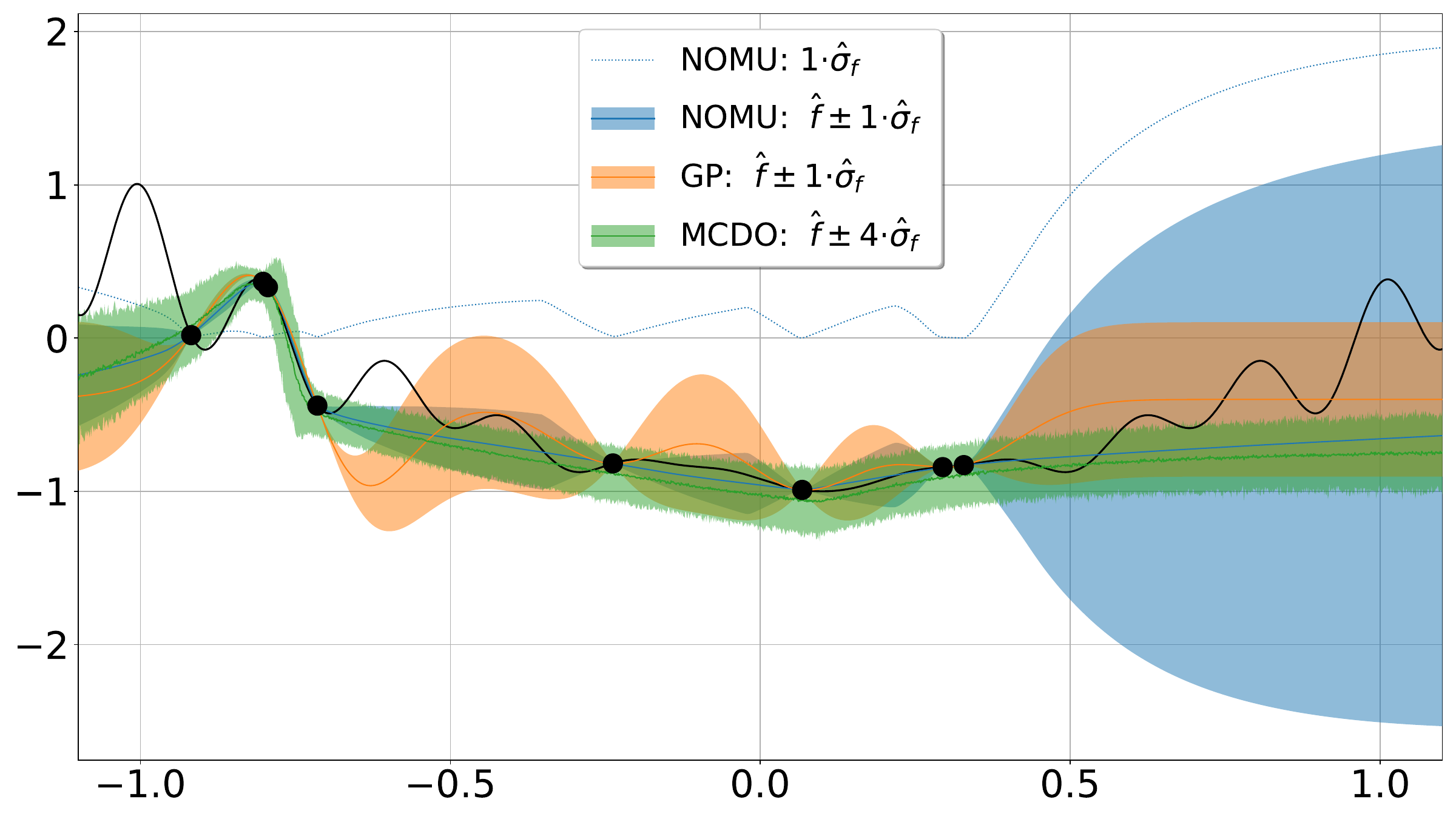}}
	\centerline{\includegraphics[width=1\columnwidth]{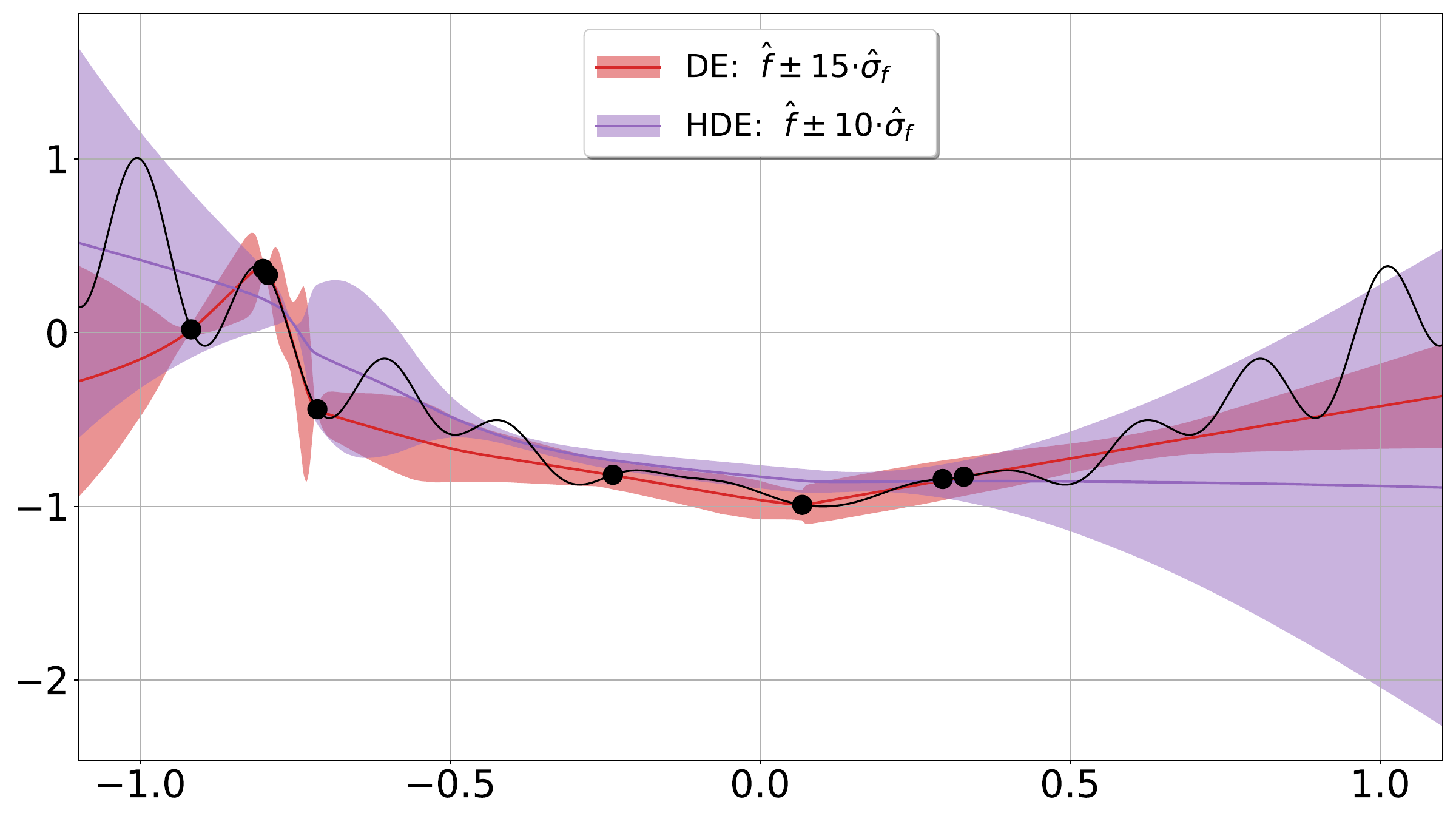}}
	\caption{UBs resulting from NOMU, GP, MCDO, and DE, HDE for the Levy function (solid black line). For NOMU, we also show $\sigmodelhat$ as a dotted blue line. Training points are shown as black dots.}
	\label{fig:1dboundsLevy}
	\end{center}
    \vskip -0.2in
\end{figure}
\paragraph{Results} \Cref{fig:1dboundsLevy} exemplifies our findings, showing typical UBs for the Levy function as obtained in one run. In \Cref{subsubsec:Uncertainty Bounds Detailed Characteristics} we provide further visualisations. We find that MCDO consistently yields tube-like UBs; in particular, its UBs do not narrow at training points, i.e., failing in \ref{itm:Axioms:ZeroUncertaintyAtData} even though MCDO's Bayesian interpretation requires \hyperref[itm:Axioms:ZeroUncertaintyAtData]{D2} to hold (see \Cref{sec:desiderata:InSample}). Moreover, it only fulfills \ref{itm:Axioms:largeDistantLargeUncertainty} to a limited degree. We frequently observe that DE leads to UBs of somewhat arbitrary shapes. This can be seen most prominently in \Cref{fig:1dboundsLevy} around $x\approx-0.75$ and at the edges of its input range, where DE's UBs are very different in width with no clear justification. Thus, also DE is limited in \ref{itm:Axioms:largeDistantLargeUncertainty}. In addition, we sometimes see that also DE's UBs do not narrow sufficiently at training points, i.e., not fulfilling \ref{itm:Axioms:ZeroUncertaintyAtData}.
HDE's UBs are even more random, i.e., predicting large model uncertainty at training points and sometimes zero model uncertainty in gaps between them (e.g., $x\approx-0.75$). %
In contrast, NOMU displays the behaviour it is designed to show. Its UBs nicely tighten at training points and expand in-between (\hyperref[itm:Axioms:trivial]{D1}\crefrangeconjunction\hyperref[itm:Axioms:largeDistantLargeUncertainty]{D3}, for \ref{itm:Axioms:irregular} see \cref{subsec:A note on Desideratum D4}). %
Like NOMU, the GP fulfills \ref{itm:Axioms:ZeroUncertaintyAtData} and \ref{itm:Axioms:largeDistantLargeUncertainty} well, but cannot account for \ref{itm:Axioms:irregular} (a fixed kernel does not depend on the model prediction). 
\Cref{tab:1dsynfunresults} provides the ranks achieved by each algorithm (see \Cref{subsubsec:Detailed Synthetic Functions Results in Regression} for corresponding metrics). We calculate the ranks based on the medians and a 95\% bootstrap CI of \AUC\, and \MNLPD{}. An algorithm loses one rank to each other algorithm that significantly dominates it. Winners are marked in grey. We observe that NOMU is the only algorithm that never comes in third place or worse.
Thus, while some algorithms do particularly well in capturing uncertainties of functions with certain characteristics (e.g., RBF-GPs for polynomials), NOMU is the only algorithm that consistently performs well. HDE's bad performance can be explained by its randomness and the fact that it sometimes predicts zero model uncertainty out-of-sample. For 2D, we provide results and visualizations in \Cref{subsubsec:Detailed Synthetic Functions Results in Regression} and \ref{subsubsec:Uncertainty Bounds Detailed Characteristics} highlighting similar characteristics of all the algorithms as in 1D.
\begin{table}[t!]
    \setlength\tabcolsep{2pt}
    \caption{Ranks (1=best to 5=worst) for \AUC\ and \MNLPD{}.}
    \label{tab:1dsynfunresults}
    \vskip 0.1in
    \begin{center}
    \begin{small}
    \begin{sc}
	\resizebox{1\columnwidth}{!}{
		\begin{tabular}{
				l
				c
				c
				c
				c
				c
				c
				c
				c
				c
				c
			}
			\toprule
			& \multicolumn{2}{c}{\textbf{NOMU}} &  \multicolumn{2}{c}{\textbf{GP}}  & \multicolumn{2}{c}{\textbf{MCDO}}  &  \multicolumn{2}{c}{\textbf{DE}} & \multicolumn{2}{c}{\textbf{HDE}}\\
			\textbf{Function} & \small\AUC & \small\MNLPD & \small\AUC  & \small\MNLPD & \small\AUC & \small\MNLPD & \small\AUC & \small\MNLPD & \small\AUC &\small\MNLPD\\
			\midrule
			Abs & \winc1&\winc1 & 3&\winc1 & 3&4 & \winc1&\winc1&5&5\\
    	    Step & 2&2 & 4&3 & 2&3 & \winc1&\winc1&5&5\\ 
			Kink &\winc1&\winc1 & 3&\winc1 & 4&4 & \winc1&\winc1&5&5\\
			Square & 2&2 & 2&\winc1 & 4&4 &2&3&5&5\\
			Cubic & 2&2 &\winc1&\winc1 & 3&4 & 3&3&5&5\\
			Sine 1 & 2&\winc1 & 2&\winc1 & \winc1&\winc1 & 2&\winc1&5&5\\
			Sine 2 & 2&2 & 3&\winc1 & \winc1&2 & 3&3&5&5\\
			Sine 3 & \winc1&\winc1 & 4&\winc1 & 3&4 & \winc1&\winc1&5&5\\
			Forrester & \winc1&2 & \winc1&\winc1 & 3&4  & 3&3&5&5\\
			Levy & \winc1&\winc1 & 4&3 & \winc1&4 & 3&\winc1&5&5\\
			\bottomrule 
		\end{tabular}
}
\end{sc}
\end{small}
\end{center}
\vskip -0.1in
\end{table}
\subsubsection{Generative Test-Bed}\label{subsubsec:GenerativeTestbed}
\paragraph{Setting} Instead of only relying on a \emph{limited} number of data-sets, we also evaluate all algorithms on a generative test-bed, which provides an unlimited number of test-functions.
The importance of using a test-bed (to avoid over-fitting on specific data-sets) has also been highlighted in a recent work by \citet{osband2021epistemic}.
For our test-bed, we generate $200$ different data-sets by randomly sampling 200 different test-functions from a BNN with i.i.d centered Gaussian weights and three fully-connected hidden layers with nodes $[2^{10},2^{11},2^{10}]$ and ReLU activations.
From each of these test-functions we uniformly at random sample $\ntr=8\cdot d$ input training points and $100\cdot d$ test data points, where $d$ refers to the input dimension. We train all algorithms on these training sets and determine the \NLPD\ on the corresponding test sets averaged over the 200 test-functions. This metric converges to the Kullback-Leibler divergence to the exact BNN-posterior (see \Cref{thm:appendix_dkl_approximation}). We calibrate by choosing per dimension an optimal value of $c$ in terms of average \NLPD, which does not depend on the test-function.
\begin{table}[t!]
    \setlength\tabcolsep{2pt}
    \caption{Average \NLPD\ {\scriptsize(without const. $\ln(2\pi)/2$)} and a 95\% CI over 200 BNN samples. Winners are marked in grey.}
    \label{tab:bnn_uniform}
	\vskip 0.1in
    \begin{center}
    \begin{small}
    \begin{sc}
	\resizebox{1\columnwidth}{!}{
		\begin{tabular}{
				l
				c
				c
				c
				c
				c
				c
				c
				c
				c
				c
			}
			\toprule
			\textbf{Function} & \multicolumn{1}{c}{\textbf{NOMU}} &  \multicolumn{1}{c}{\textbf{GP}}  & \multicolumn{1}{c}{\textbf{MCDO}}  &  \multicolumn{1}{c}{\textbf{DE}} & \multicolumn{1}{c}{\textbf{HDE}}\\
			\midrule
		    BNN1D & \winc-1.65$\pm$\scriptsize0.10 & -1.08$\pm$\scriptsize0.22 & -0.34$\pm$\scriptsize0.23 &  -0.38$\pm$\scriptsize0.36 & \hphantom{-}8.47$\pm$\scriptsize1.00\\
		    BNN2D &  \winc-1.16$\pm$\scriptsize0.05& -0.52$\pm$\scriptsize0.11 &-0.33$\pm$\scriptsize0.13 & -0.77$\pm$\scriptsize0.07 & \hphantom{-}9.11$\pm$\scriptsize0.39 \\
		    BNN5D & \winc-0.37$\pm$\scriptsize0.02 & -0.33$\pm$\scriptsize0.02 & -0.05$\pm$\scriptsize0.04 & -0.13$\pm$\scriptsize0.03 & \hphantom{-}8.41$\pm$\scriptsize1.00 \\
			\bottomrule 
		\end{tabular}
}
\end{sc}
\end{small}
\end{center}
\vskip -0.1in
\end{table}
\paragraph{Results} In \Cref{tab:bnn_uniform}, we provide the results for input dimensions 1, 2 and 5.
We see that NOMU outperforms all other algorithms including MCDO, which is a variational inference method to approximate this posterior \citep{gal2016dropout}.
See \Cref{sec:Appendix:GenerativeTestBed} for a detailed explanation of the experiment setting and further results in modified settings including a discussion of higher dimensional settings.

\subsubsection{Solar Irradiance Time Series}\label{subsubsec:SolarIrradiance}
\paragraph{Setting} Although the current version of NOMU is specifically designed for scarce settings without data noise, we are also interested to see how well it performs in settings where these assumptions are not satisfied. To this end, we now study a setting with many training points and small non-zero data noise. This allows us to analyze how well NOMU captures \ref{itm:Axioms:UncertaintyDecreasesWithMoreTrainingPoints}.
We consider the popular task of interpolating the solar irradiance data \citep{doi:10.1029/2009GL040142} also studied in \citep{gal2016dropout}. We scale the data to $X=[-1,1]$ and split it into 194 training and 197 test points. As in \citep{gal2016dropout}, we choose five intervals to contain only test points. Since the true function is likely of high frequency, we set $\lambda=10^{-19}$ for NOMU and the benchmarks' regularization accordingly. To account for small non-zero data noise, we set $\muexp=0.05$, $\sigmin=0.01$ and use otherwise the same hyperparameters as in \Cref{subsubsec:ToyRegression}.
\begin{figure}[b!]
    \begin{center}
    \centerline{\includegraphics[width=1\columnwidth, trim= 10 180 0 200, clip]{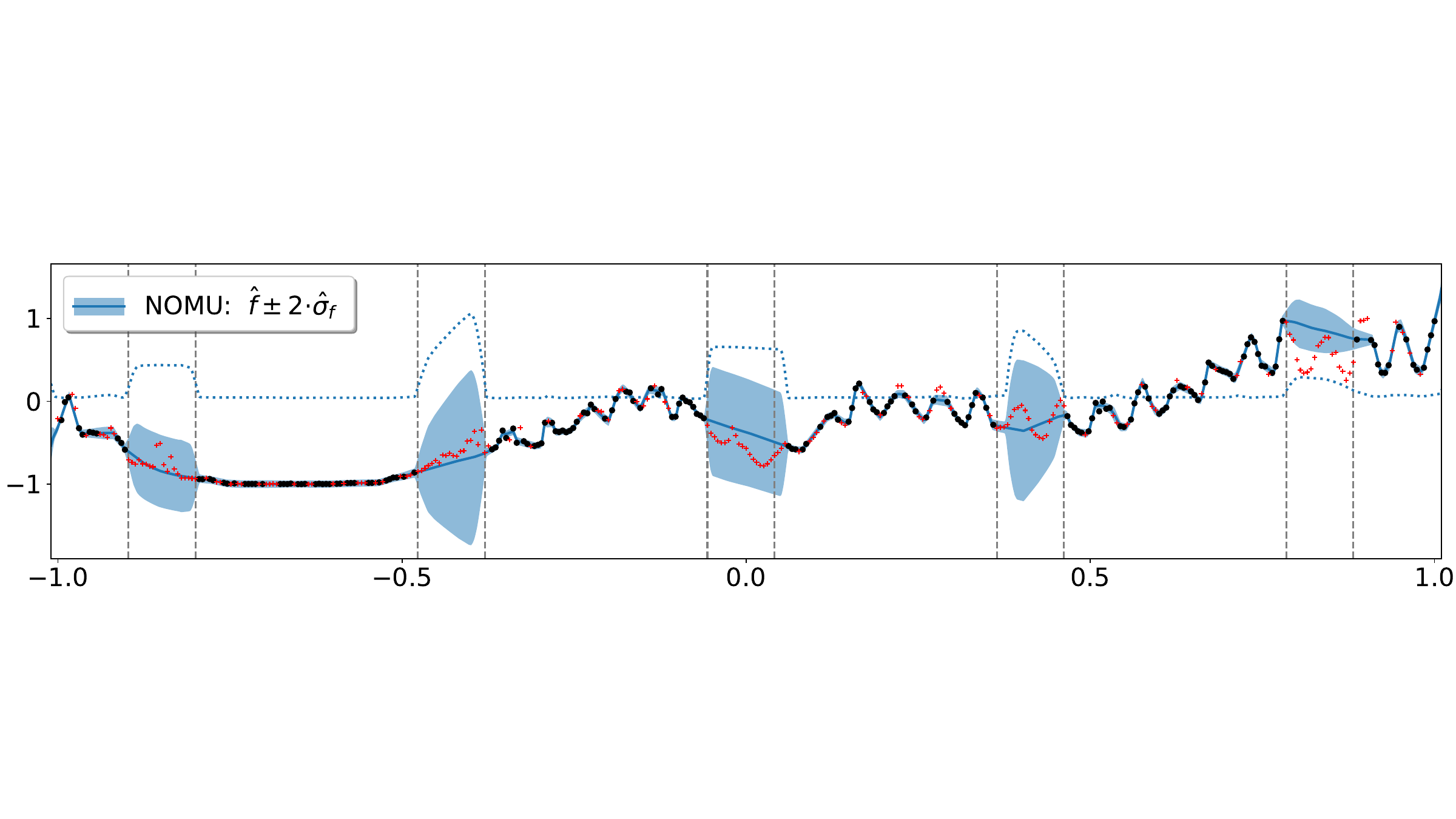}}
    \caption{NOMU's model prediction (solid), model uncertainty (dotted) and UBs (shaded area) on the solar irradiance data. Training and test points are shown as black dots and red crosses.}
    \label{fig:irradianceNOMU}
    \end{center}
    \vskip -0.2in
\end{figure}

\paragraph{Results} \Cref{fig:irradianceNOMU} visualizes NOMU's UBs. We see that NOMU manages to fit the training data well while capturing model uncertainty between input training points. In particular, large gaps \emph{between} input training points are successfully modeled as regions of high model uncertainty (\ref{itm:Axioms:largeDistantLargeUncertainty}). Moreover, in regions highly populated by input training points, NOMU's model uncertainty vanishes as required by \ref{itm:Axioms:UncertaintyDecreasesWithMoreTrainingPoints}. Plots for the other algorithms are provided in \Cref{subsubsec:Detailed Real-Data Time Series Results}, where we observe similar patterns as in \Cref{subsubsec:ToyRegression}.

\subsubsection{UCI Data Sets}\label{subsubsec:UCI}
\paragraph{Setting} Recall that NOMU is specifically tailored to noiseless settings with scarce input training data. However, even the current version of NOMU, which does not explicitly model data noise, already performs on par with existing benchmarks on real-world regression tasks \textit{with} data noise. To demonstrate this, we test its performance on (a) the UCI data sets proposed in \citet{hernandez2015probabilistic}, a common benchmark for uncertainty quantification in noisy, real-world regression, and (b) the UCI gap data set extension proposed in \citet{foong2019inbetween}. We consider exactly the same experiment setup as proposed in these works, with a $70/20/10$-train-validation-test split, equip NOMU with a shallow architecture of $50$ hidden nodes, and train it for $400$ epochs. Validation data are used to calibrate the constant $c$ on NLL. See \Cref{subsec:DetailsUCIExperiment} for details on NOMU's setup.

\paragraph{Results} \Cref{tab:uci_nll_detailed} reports NLLs on test data, averaged across 20 splits, compared to the current state-of-the-art. We report the \NLPD\ for MCDO \citep{gal2016dropout} and DE \citep{lakshminarayanan2017simple} from the original papers. Moreover, we reprint the best results (for comparable network sizes) of neural linear models (NLMs) with and without hyperparameter optimization (HPO) from \citep{ober2019benchmarking} (NLM-HPO, NLM), linearized laplace (LL) \cite{foong2019inbetween} and a recent strong MCDO baseline (MCDO2) from \citep{mukhoti2018importance}.
It is surprising that, even though the current design of NOMU does not yet explicitly incorporate data noise, it already performs comparably to state-of-the-art results. We obtain similar results for the UCI gap data. See \Cref{subsec:DetailsUCIExperiment} for more details on this experiment.

\begin{table}[t!]
    \caption{Average \NLPD\ and a $95\%$ normal-CI over 20 runs for \textsc{UCI} data sets. Winners are marked in grey.}
    \label{tab:uci_nll_detailed}
    \robustify\bfseries
    \setlength\tabcolsep{1pt}
    \vskip 0.1in
    \begin{center}
    \begin{small}
    \begin{sc}
    \resizebox{1\columnwidth}{!}{
    \begin{tabular}{lcccccccc}
    \toprule
    \textbf{Dataset}& NOMU & DE & MCDO &  MCDO2 & LL & NLM-HPO & NLM\\
    \midrule
    Boston & 2.68 $\pm$\scriptsize 0.11 & \winc2.41 $\pm$\scriptsize 0.49 & \winc2.46 $\pm$\scriptsize 0.11 &   \winc2.40 $\pm$\scriptsize 0.07 & 2.57 $\pm$\scriptsize 0.09 & \winc2.58 $\pm$\scriptsize 0.17 &     3.63 $\pm$\scriptsize 0.39\\
    Concrete & \winc3.05 $\pm$\scriptsize 0.06 & \winc3.06 $\pm$\scriptsize 0.35 & 3.04 $\pm$\scriptsize0.03 & \winc2.97 $\pm$\scriptsize 0.03 & \winc3.05 $\pm$\scriptsize 0.07 & 3.11 $\pm$\scriptsize 0.09 & 3.12 $\pm$\scriptsize 0.09 \\
    Energy & \winc0.77 $\pm$\scriptsize 0.06 & 1.38 $\pm$\scriptsize 0.43 & 1.99 $\pm$\scriptsize 0.03 & 1.72 $\pm$\scriptsize 0.01 & 0.82 $\pm$\scriptsize 0.05 & \winc0.69 $\pm$\scriptsize 0.05 & \winc0.69 $\pm$\scriptsize 0.05 \\
    Kin8nm & -1.08 $\pm$\scriptsize 0.01 & \winc-1.20 $\pm$\scriptsize 0.03 & -0.95 $\pm$\scriptsize 0.01 & -0.97 $\pm$\scriptsize 0.00 & \winc-1.23 $\pm$\scriptsize 0.01 & -1.12 $\pm$\scriptsize 0.01 & -1.13 $\pm$\scriptsize 0.01 \\
    Naval & -5.63 $\pm$\scriptsize 0.39 & -5.63 $\pm$\scriptsize 0.09 & -3.80 $\pm$\scriptsize 0.01 & -3.91 $\pm$\scriptsize 0.01 & -6.40 $\pm$\scriptsize 0.11 & \winc-7.36 $\pm$\scriptsize 0.15 & \winc-7.35 $\pm$\scriptsize 0.01 \\
    CCPP & \winc2.79 $\pm$\scriptsize 0.01 & \winc2.79 $\pm$\scriptsize 0.07 & \winc2.80 $\pm$\scriptsize 0.01 & \winc2.79 $\pm$\scriptsize 0.01 & 2.83 $\pm$\scriptsize 0.01 & \winc2.79 $\pm$\scriptsize 0.01 & \winc2.79 $\pm$\scriptsize 0.01 \\
    Protein & \winc2.79 $\pm$\scriptsize 0.01 & 2.83 $\pm$\scriptsize 0.03 & 2.89 $\pm$\scriptsize 0.00 & 2.87 $\pm$\scriptsize 0.00 & 2.89 $\pm$\scriptsize 0.00 & \winc2.78 $\pm$\scriptsize 0.01 & 2.81 $\pm$\scriptsize 0.00 \\
    Wine & 1.08 $\pm$\scriptsize 0.04 & \winc0.94 $\pm$\scriptsize 0.23 & \winc0.93 $\pm$\scriptsize 0.01 & \winc0.92 $\pm$\scriptsize 0.01 & 0.97 $\pm$\scriptsize 0.03 & 0.96 $\pm$\scriptsize 0.01 & 1.48 $\pm$\scriptsize 0.09 \\
    Yacht & \winc1.38 $\pm$\scriptsize 0.28 & \winc1.18 $\pm$\scriptsize 0.41 & 1.55 $\pm$\scriptsize 0.05 & 1.38 $\pm$\scriptsize 0.01 & \winc1.01 $\pm$\scriptsize 0.09 & \winc1.17 $\pm$\scriptsize 0.13 & \winc1.13 $\pm$\scriptsize 0.09 \\
    \bottomrule
    \end{tabular}
    }
    \end{sc}
    \end{small}
    \end{center}
    \vskip -0.1in
\end{table}

\subsection{Bayesian Optimization}
\label{subsec:BayesianOptimization}
In this section, we assess the performance of NOMU in high-dimensional noiseless \emph{Bayesian optimization (BO)}.

\paragraph{Setting} In BO, the goal is to maximize an unknown expensive-to-evaluate function, given a budget of function queries. We use a set of test functions with different characteristics from the same library as before, but now in $5$ to $20$ dimensions $d$, transformed to $X=[-1,1]^d,\ f(X)=[ -1,1]$.\footnote{These functions are designed for minimization. We multiply them by $-1$ and equivalently maximize instead.} Additionally, we again use Gaussian BNNs to create a generative test-bed consisting of a large variety of test functions (see \Cref{subsubsec:GenerativeTestbed}).
For each test function, we randomly sample $8$ initial points~$\left(x_i,f(x_i)\right)$ and let each algorithm choose $64$ further function evaluations (one by one) using its upper UB as acquisition function. %
This corresponds to a setting where one can only afford $72$ expensive function evaluations in total.
We provide details regarding the selected hyperparameters for each algorithm in \Cref{subsubsec:Hyperparameters}. We measure the performance of each algorithm based on its \emph{final regret} $|\max_{x\in X} f(x)-\max_{i\in\fromto{72}} f(x_i)|/|\max_{x\in X} f(x)|$. 

For each algorithm, the UBs must be calibrated by choosing appropriate values of $c$. We do so in the following straightforward way: First, after observing the $8$ random initial points, we 
determine those two values of $c$ for which the resulting mean width (MW) of the UBs is $0.05$ and $0.5$, respectively (\textsc{MW scaling}).\footnote{We fix MW instead of $c$, since the scales of the algorithms vary by orders of magnitude.} We perform one BO run for both resulting initial values of $c$. Additionally, if in a BO run, an algorithm were to choose a point~$x_{i'}$ very close to an already observed point $x_i$, we dynamically increase $c$ to make it select a different one instead (\textsc{Dynamic c}; see \Cref{subsubsec:Calibration} for details). A value of $0.05$ in \textsc{MW scaling} corresponds to small model uncertainty, such that exploration is mainly due to \textsc{Dynamic c}. Smaller values than $0.05$ thus lead to similar outcomes. In contrast, a value of $0.5$ corresponds to large model uncertainties, such that \textsc{Dynamic c} is rarely used. Only for the \enquote{plain GP (pGP)} we use neither \textsc{MW scaling} nor \textsc{Dynamic c}, as pGP uses its default calibration ($c$ is determined by the built-in hyperparameter optimization in every step). However, a comparison of GP and pGP suggests that \textsc{MW scaling} and \textsc{Dynamic c} surpass the built-in calibration (see \Cref{tab:BOresults}). As a baseline, we also report random search (RAND).

\sisetup{output-exponent-marker=\ensuremath{\mathrm{e}}}
\begin{table}[t!]
    \setlength\tabcolsep{2pt}
    \caption{BO results: average final regrets per dimension and ranks for each individual function (1=best to 7=worst).}
    \label{tab:BOresults}
    \vskip 0.1in
    \begin{center}
    \begin{small}
    \begin{sc}
    \resizebox{1\columnwidth}{!}{
    \begin{tabular}{
    		l
    		c
    		c
    		c
    		c
    		c
    		c
    		c
    	}
    		\toprule
    		\textbf{Function} & \textbf{NOMU}  & \textbf{GP}  &  \textbf{MCDO} & \textbf{DE} & \textbf{HDE} & \textbf{pGP}&\textbf{RAND} \\
    		
    		\midrule
    	    Levy5D & \winc 1 & \winc 1 & 6 & 3 & 3 &4 & 7  \\ 
            Rosenbrock5D & \winc 1 & \winc 1 & \winc 1 & \winc 1 & 2 &5 & 7  \\
            G-Function5D & 2 & 3 & \winc{1} & 4 & 2 & 3 & 7  \\
            Perm5D & 3 & \winc1 & \winc 1 & 5 & 7 &2 & 4  \\
            BNN5D & \winc{1} & \winc{1} & 4 & \winc{1} & 4& \winc{1}& 7  \\ 
    		\midrule
    		\emph{Average Regret 5D}& \winc\num{2.87e-02} & \num{5.03e-02}  & \num{4.70e-02}& \num{5.18e-02} & \num{7.13e-02}& \num{4.14e-02} & \num{1.93e-01} \\
    		\midrule
    		\midrule
            Levy10D & \winc 1 & 3 & 5 & 6 & \winc{1} &\winc{1} & 6  \\ 
            Rosenbrock10D & \winc{1} & \winc{1} & 2 & 6 & 3&2 & 7\\
            G-Function10D & 2 & 5 & \winc1 & 3 & 2& 5 & 7  \\
            Perm10D & 2 &  \winc{1} & 2 &6 & 2&2 & \winc{1}  \\
            BNN10D & \winc{1} & 2 & \winc{1} & \winc{1} & 3 & \winc1 & 7  \\ 
            \midrule
    		\emph{Average Regret 10D}& \winc \num{8.40e-02}   & \num{1.17e-01}  & \winc \num{6.96e-02}& \num{1.15e-01}& \num{9.32e-02}& \num{9.46e-02}& \num{2.35e-01}\\
    		\midrule\midrule
    	    Levy20D & \winc{1} & \winc{1} & 5 & 7 &\winc{1} &\winc{1} & 6  \\ 
            Rosenbrock20D & 2& 2 & 2 & 6 & \winc{1}& 4 & 6  \\
            G-Function20D & \winc{1} & 4 & 5 & \winc{1} &\winc{1} & 3 & 7  \\
            Perm20D & 3 & 5 & 3 & 2 &3 & 3 & \winc{1}  \\
            BNN20D & \winc{1} & 2 & 2 & 2 & 6 & \winc1 & 7  \\ 
    		\midrule
    		\emph{Average Regret 20D}& \winc\num{1.12e-01} &  \num{1.33e-01} & \num{1.39e-01}& \num{1.71e-01}& \num{1.37e-01}& \winc\num{1.17e-01} &\num{2.80e-01}\\
    		\midrule
    		\bottomrule 
    \end{tabular}%
    }
    \end{sc}
    \end{small}
    \end{center}
    \vskip -0.1in
\end{table}
\begin{figure}[b!]
    \begin{center}
    \centerline{\includegraphics[width=1\columnwidth, trim=33 0 50 15 ,clip]{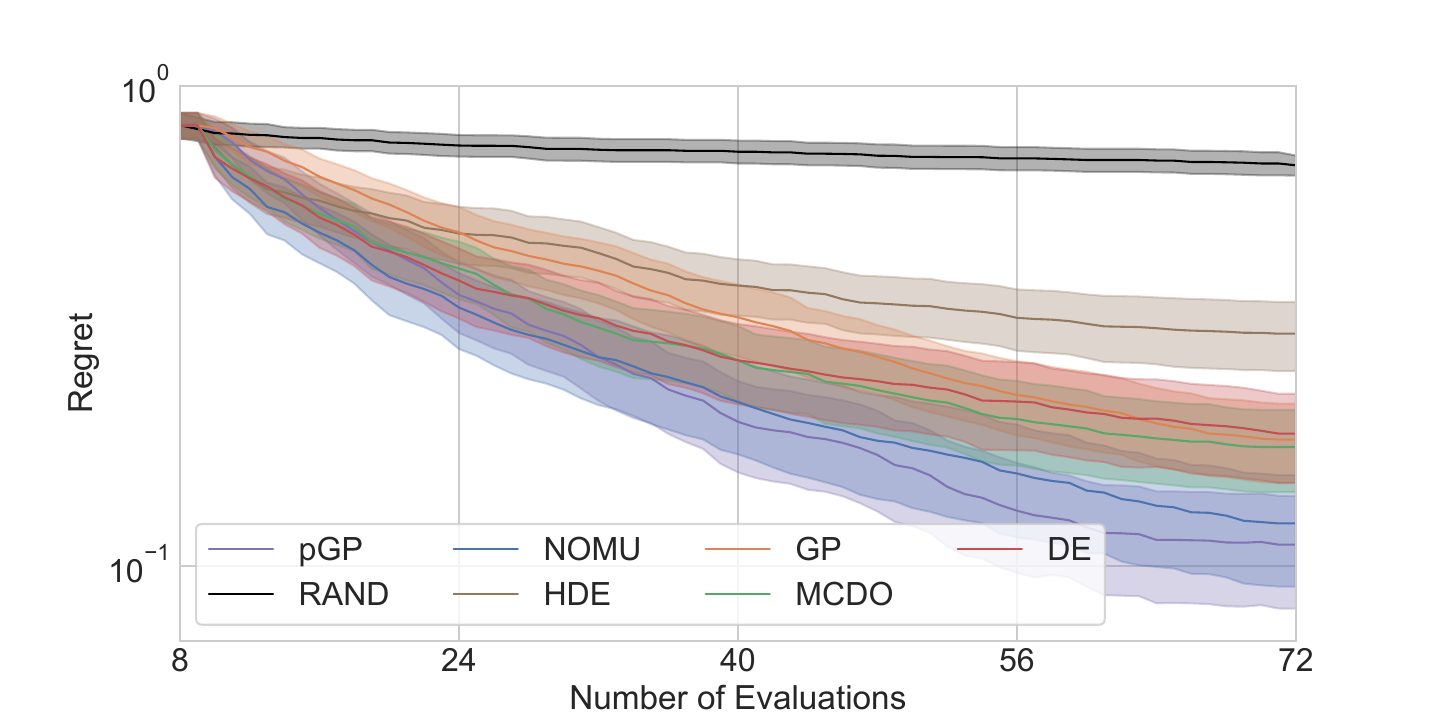}}
    \caption{Regret plot for BNN20D. For each BO step, we show the regrets averaged over 50 runs (solid lines) with 95\% CIs.}
    \label{fig:BNN20D}
    \end{center}
    \vskip -0.2in
\end{figure}
\paragraph{Results} In \Cref{tab:BOresults}, we present the BO results in 5D, 10D and 20D. We show the average final regret per dimension across the five functions. For each algorithm and dimension, we give the results corresponding to the MW scaling parameter (0.05 or 0.5) that minimizes the average final regret across that dimension (see \Cref{subsubsec:Detailed Results} for both MWs).
In practice, one often only knows the dimensionality of a given BO task, which is why we use the average final regret per dimension as the criterion for setting the optimal MW. For each individual function, we also present the ranks based on the final regret and a 95\% CI over 100 (5D) and 50 (10-20D) runs. We see that NOMU performs as well or better than all benchmarks in terms of average final regret.
By inspecting the ranks achieved for each individual function, we further observe that NOMU is never ranked worse than $3$\textsuperscript{rd}. In contrast, the performance of the benchmarks heavily depends on the test function; and each benchmark is ranked $4$\textsuperscript{th} and worse multiple times.
For Perm10D/20D we see that RAND performs best. However, due to a flat optimum of Perm, all algorithms achieve similar (very small) final regrets. 
Finally, we see that NOMU is always ranked first for the BNN test functions. \Cref{fig:BNN20D} shows the regret plot for BNN20D (see \Cref{subsubsec:Regret Plots} for all regret plots).
\section{Conclusion}
We have introduced NOMU, a new algorithm for estimating model uncertainty for NNs, specialized for scarce and noiseless settings.
By using a specific architecture and carefully-designed loss function, we have ensured that NOMU satisfies five important desiderata regarding model uncertainty that any method should satisfy.
However, when analyzing model uncertainty decoupled from data noise, we have experimentally uncovered that, perhaps surprisingly, established state-of-the-art methods fail to reliably capture some of the desiderata, even those that are required by Bayesian theory. In contrast, NOMU satisfies all desiderata, matches the performance of all benchmarks in regression tasks, and performs as well or better in noiseless BO tasks. We see great potential to further improve NOMU, for example by adapting the loss, or by modifying the connections between the two sub-NNs.
We also envision several extensions of NOMU, including its application to classification, employing different architectures (CNNs, GNNs, RNNs or Transformers), and incorporating data noise.

\section*{Acknowledgements}
We thank Marius H\"ogger and Aurelio Dolfini for insightful discussions and their excellent research assistance in implementing the Bayesian optimization experiments and the UCI data set experiments, respectively. Furthermore, we thank the anonymous reviewers for helpful comments. This paper is part of a project that has received funding from the European Research Council (ERC)
under the European Union’s Horizon 2020 research and innovation programme (Grant agreement No. 805542).

\bibliographystyle{icml2022}%
}

\ifAppendix{
\clearpage{}%

\twocolumn[
\icmltitle{NOMU: Neural Optimization-based Model Uncertainty}

\icmlsetsymbol{equal}{*}

\begin{icmlauthorlist}
\icmlauthor{Jakob Heiss}{equal,eth,ai}
\icmlauthor{Jakob Weissteiner}{equal,ai,uzh}
\icmlauthor{Hanna Wutte}{equal,eth,ai}
\icmlauthor{Sven Seuken}{ai,uzh}
\icmlauthor{Josef Teichmann}{eth,ai}
\end{icmlauthorlist}

\icmlaffiliation{eth}{ETH Zurich}
\icmlaffiliation{uzh}{University of Zurich}
\icmlaffiliation{ai}{ETH AI Center}

\icmlcorrespondingauthor{Jakob Weissteiner}{weissteiner@ifi.uzh.ch}
\icmlkeywords{Neural Networks, Model Uncertainty}

\vskip 0.3in
]

\printAffiliationsAndNotice{\icmlEqualContribution} %

\begin{abstract}
We study methods for estimating model uncertainty for neural networks (NNs) in regression. To isolate the effect of model uncertainty, we focus on a noiseless setting with scarce training data. We introduce five important desiderata regarding model uncertainty that any method should satisfy. However, we find that established benchmarks often fail to reliably capture some of these desiderata, even those that are required by Bayesian theory. To address this, we introduce a new approach for capturing model uncertainty for NNs, which we call \emph{Neural Optimization-based Model Uncertainty (NOMU)}. The main idea of NOMU is to design a network architecture consisting of two connected sub-NNs, one for model prediction and one for model uncertainty, and to train it using a carefully-designed loss function. Importantly, our design enforces that NOMU satisfies our five desiderata. Due to its modular architecture, NOMU can provide model uncertainty for any given (previously trained) NN if given access to its training data. We evaluate NOMU in various regressions tasks and noiseless Bayesian optimization (BO) with costly evaluations. In regression, NOMU performs at least as well as state-of-the-art methods. In BO, NOMU even outperforms all considered benchmarks.
\end{abstract}

\section{Introduction}
\label{sec:Introduction}
Neural networks (NNs) are becoming increasingly important in machine learning applications \citep{LeCun2015}. In many domains, it is essential to be able to quantify the \emph{model uncertainty (epistemic uncertainty)} of NNs \citep{neal2012bayesian,ghahramani2015probabilistic}. 
Good estimates of model uncertainty are indispensable in Bayesian optimization (BO) and active learning, where exploration is steered by (functions of) these uncertainty estimates. In recent years, BO has been successfully applied in practice to a wide range of problems, including robotics \citep{martinez2009bayesian}, sensor networks \citep{Srinivas_2012}, and drug development \citep{gomez2018automatic}. Better model uncertainty estimates for BO directly translate to improvements in these applications.

\begin{figure}[t!]
    \begin{center}
    \centerline{\includegraphics[width=1\columnwidth]{figures/main_paper/1d_synthetic_functions/NOMU2_DO4_JakobhfunTrend_NO3_nouncertainty.pdf}}
    \caption{Visualization of estimated model uncertainty $\sigmodelhat$. The unknown true function is depicted as a black solid line with training points as black dots. NOMU's model prediction $\fhat$ is shown as a solid blue line and its uncertainty bounds are shown as a blue shaded area. As a benchmark, MC Dropout is shown in green.}
    \label{fig:D41dJakobhfun}
    \end{center}
    \vskip -0.2in
\end{figure}

However, estimating model uncertainty well for NNs is still an open research problem. For settings with scarce training data and negligible data noise, where model uncertainty is the main source of uncertainty, we uncover deficiencies of widely used state-of-the-art methods for estimating model uncertainty for NNs. Prior work often only measures the performance in data noise dominant settings, and thus does not adequately isolate the pure model uncertainty, thereby overlooking the algorithms' deficiencies. However, in tasks such as BO with costly evaluations, where accurate estimates of model uncertainty are of utmost importance, these deficiencies can drastically decrease performance.

In this paper, we study the problem of estimating model uncertainty for NNs to obtain \emph{uncertainty bounds (UBs)} that estimate Bayesian credible bounds in a setting with negligible data noise and scarce training data. For this, we propose a novel algorithm (NOMU) that is specialized to such a setting.
Figure~\ref{fig:D41dJakobhfun} shows UBs for NOMU and the benchmark method MC Dropout.

\subsection{Prior Work on Model Uncertainty for NNs}\label{subsec:Uncertainty Quantification of Neural Networks}
Over the last decade, researchers have developed various methods to quantify model uncertainty for NNs. One strand of research considers Bayesian Neural Networks (BNNs), where distributions are placed over the NN's parameters \citep{graves2011practical,blundell2015weight,hernandez2015probabilistic}. However, variational methods approximating BNNs are usually computationally prohibitive and require careful hyperparameter tuning. Thus, BNNs are rarely used in practice \citep{wenzel2020good}.

In practice, \textit{ensemble methods} are more established:
\begin{itemize}[leftmargin=*,topsep=0pt,partopsep=0pt, parsep=0pt]
\item \citet{gal2016dropout} proposed \textit{Monte Carlo dropout (MCDO)} to estimate model uncertainty via stochastic forward passes. Interestingly, they could show that training a NN with dropout can also be interpreted as variational inference approximating a BNN.
\item \citet{lakshminarayanan2017simple} experimentally evaluated ensembles of NNs and showed that they perform as well as or better than BNNs. They proposed using \emph{deep ensembles (DE)}, which use NNs with two outputs for model prediction and data noise, and they estimate model uncertainty via the empirical standard deviation of the ensemble. DE is the most established state-of-the art ensemble method and has been shown to consistently outperform other ensemble methods \citep{ovadia2019can,fort2019deep,gustafsson2020evaluating,ashukha2020pitfalls}.
\item Recently, \citet{wenzel2020hyperparameter} proposed \emph{hyper deep ensembles (HDE)}, an extension of DE where additional diversity is created via different hyperparameters, and they showed that HDE outperforms DE.
\end{itemize}

Despite the popularity of MCDO, DE and HDE, our empirical results %
suggest that none of them reliably capture all essential features of model uncertainty: MCDO yields tubular bounds that do not narrow at observed data points (which can already be observed in \Cref{fig:D41dJakobhfun}); DE and HDE can produce UBs that are sometimes unreasonably narrow in regions far from observed data or unreasonably wide at training points (as we will show in Section \ref{subsec:Regression}).

\subsection{Overview of our Contribution}\label{subsec:Our Contribution}
We present a new approach for estimating model uncertainty for NNs, which we call \emph{neural optimization-based model uncertainty (NOMU)}. In contrast to a fully Bayesian approach (e.g., BNNs), where approximating the posterior for a realistic prior is in general very challenging, we estimate posterior credible bounds by directly enforcing essential properties of model uncertainty. Specifically, we make the following contributions:
\begin{enumerate}[leftmargin=*,topsep=0pt,partopsep=0pt, parsep=0pt]
\item We first introduce five desiderata that we argue model UBs should satisfy (\Cref{subsubsec:Desiderata}).
\item We then introduce NOMU, which consists of a network architecture (\Cref{subsec:The Network Architecture}) and a carefully-designed loss function (\Cref{subsec:modeluncertaintyloss}),  %
such that the estimated UBs fulfill these five desiderata. NOMU is easy to implement, scales well to large NNs, and can be represented as a \emph{single} NN without the need for further ensemble distillation (in contrast to MCDO, DE and HDE). Because of its modular architecture, it can easily be used to obtain UBs for already trained NNs. 
 \item We experimentally evaluate NOMU in various regression settings: in scarce and noiseless settings to isolate model uncertainty (\Cref{subsubsec:ToyRegression,subsubsec:GenerativeTestbed}) and on real-word data-sets (\Cref{subsubsec:SolarIrradiance,subsubsec:UCI}). We show that NOMU performs well across all these settings while state-of-the-art methods (MCDO, DE, and HDE) exhibit several deficiencies.\footnote{We also conducted experiments with \citep{blundell2015weight}. However, we found that this method did not perform as well as the other considered benchmarks. Moreover, it was shown in \citep{gal2016dropout,lakshminarayanan2017simple} that \emph{deep ensembles} and \emph{MC dropout} outperform the methods by \citep{hernandez2015probabilistic} and \citep{graves2011practical}, respectively. Therefore, we do not include \citep{graves2011practical,blundell2015weight,hernandez2015probabilistic} in our experiments.}
\item Finally, we evaluate the performance of NOMU in high-dimensional Bayesian optimization (BO) and show that NOMU performs as well or better than all considered benchmarks (Section~\ref{subsec:BayesianOptimization}).
\end{enumerate}

Our source code is available on GitHub: \url{https://github.com/marketdesignresearch/NOMU}.

\subsection{Further Related Work}\label{subsec:Related Work}
\citet{nix1994estimating} were among the first to introduce NNs with two outputs: one for model prediction and one for \emph{data noise (aleatoric uncertainty)}, using the Gaussian negative log-likelihood as loss function. 
However, such a data noise output cannot be used as an estimator for model uncertainty (epistemic uncertainty); see \Cref{sec:appendix:Aleatoric Neural Networks: Aleatoric vs. Epistemic Uncertainty} for details.
To additionally capture model uncertainty, \citet{kendall2017uncertainties} combined the idea of \citet{nix1994estimating} with MCDO.

Similarly, NNs with two outputs for lower and upper UBs, trained on specifically-designed loss functions, were previously considered by \citet{khosravi2010lower} and \citet{pearce2018high}. However, the method by \citet{khosravi2010lower} again only accounts for data noise and does not consider model uncertainty. The method by \citet{pearce2018high} also does not take model uncertainty into account in the design of their loss function and only incorporates it via ensembles (as in DE).

Besides the state-of-the art ensemble methods HDE and DE, there exist several other papers on ensemble methods that, for example, promote their diversity on the function space \citep{wang2019function} or reduce their computational cost \citep{wen2020batchensemble,havasi2021training}.

For classification, \citet{malinin2018predictive} introduced prior networks, which explicitly model in-sample and out-of-distribution uncertainty, where the latter is realized by minimizing the reverse KL-distance to a selected flat point-wise defined prior. In a recent working paper (and concurrent to our work), \citet{malinin2020regression} report on progress extending their idea to regression. While the idea of introducing a separate loss for learning model uncertainty is related to NOMU, there are several important differences (loss, architecture, behavior of the model prediction, theoretical motivation; see \Cref{sec:NOMUvsPriorNetworks} for details).
Furthermore, their experiments suggest that DE still performs weakly better than their proposed method.

In contrast to BNNs, which perform approximate inference over the entire set of weights, Neural Linear Models (NLMs) perform \emph{exact} inference on only the last layer. NLMs have been extensively benchmarked in \citep{ober2019benchmarking} against MCDO and the method from \citep{blundell2015weight}. Their results suggest that MCDO and \citep{blundell2015weight} perform  competitive, even to carefully-tuned NLMs. 

Neural processes, introduced by \citet{conditional_neural_proc_pmlr-v80-garnelo18a,neural_proc}, have been used to express model uncertainty for image completion tasks, where one has access to thousands of different images interpreted as functions $f_i$ instead of input points $x_i$. See \Cref{sec:appendix:NOMU vs. Neural Processes} for a detailed comparison of their setting to the setting we consider in this paper.

\section{Preliminaries}\label{sec:Bayesian Uncertainty Framework}
In this section, we briefly review the classical Bayesian uncertainty framework for regression.

Let $\X\subset\R^d, Y\subset\R$ and let $f\colon \X \to \Y$ denote the unknown ground truth function. Let $\Dtr:=\{\left( \xtr_i,\ytr_i\right)\in\X\times\Y, i\in \fromto{\ntr}\},$ with $\ntr\in\N$ be i.i.d samples from the data generating process
$\label{modelAssumption}
    y = f(x) + \varepsilon,
$
where $\varepsilon|x\sim\mathcal{N}(0,\signoise^2(x))$. We use $\signoise$ to refer to the \emph{data noise (aleatoric uncertainty)}. We refer to $\left( \xtr_i,\ytr_i\right)$ as a \emph{training point} and to $\xtr_i$ as an \emph{input training point}.

In the remainder of this paper, we follow the classic Bayesian uncertainty framework by modelling the unknown ground truth function $f$ as a random variable. Hence, with a slight abuse of notation, we use the symbol $f$ to denote \textit{both} the unknown ground truth function as well as the corresponding random variable.
In \Cref{sec:DetailsNotation}, we provide a mathematically rigorous formulation of the considered Bayesian uncertainty framework.

Given a prior distribution for $f$ and known data noise $\signoise$, the posterior of $f$ and $y$ are well defined. The \emph{model uncertainty (epistemic uncertainty)} $\sigmodel(x)$ is the posterior standard deviation of $f(x)$, i.e.,
\begin{align}\label{eq:modelUncertainty}
\sigmodel(x):= \sqrt{\mathbb{V}[f(x)|\Dtr,x]}, \quad x\in \X.
\end{align}
Assuming independence between $f$ and $\varepsilon$, the variance of the predictive distribution of $y$ can be decomposed as
$\label{varianceAss}\mathbb{V}[y|\Dtr,x]=\sigmodel^2(x) + \signoise^2(x).$
We present our algorithm for estimating model uncertainty $\sigmodelhat$ for the case of zero or negligible data noise, i.e., $\signoise\approx0$ (see \Cref{sec:Extensions} for an extension to $\signoise\gg0$). For a given model prediction $\fhat$, the induced uncertainty bounds (UBs) are then given by $\left(\lUB_c(x),\uUB_c(x)\right):=\left(\fhat(x)\mp c~\sigmodelhat(x)\right)$, for $x\in\X$ and a calibration parameter $c\geq0$.\footnote{Note that our UBs estimate \emph{credible bounds (CBs)}~$\underline{C\!B}$ and $\overline{C\!B}$, which, for $\alpha \in [0,1]$, fulfill that $\mathbb{P}[f(x)\in [\underline{C\!B},\overline{C\!B}]|\Dtr,x]=\alpha$. For $\signoise\equiv0$, CBs are equal to \emph{predictive bounds} $\underline{P\!B},\overline{P\!B}$ with $\mathbb{P}[y \in [\underline{P\!B},\overline{P\!B}]|\Dtr,x]=\alpha$. See \Cref{sec:TheoreticalAnalysisAppendix} for an explanation.}

\section{The NOMU Algorithm}\label{sec:Neural Optimization-based Model Uncertainty}
We now present NOMU. We design NOMU to yield a model prediction $\fhat$ and a model uncertainty prediction $\sigmodelhat$, such that the resulting UBs $(\lUB_c,\uUB_c)$ fulfill five desiderata.
\subsection{Desiderata}\label{subsubsec:Desiderata}
\begin{enumerate}[label=D\arabic*,align=left, leftmargin=*,topsep=0pt]
    \item[\mylabel{itm:Axioms:trivial}{D1 (Non-Negativity)}]
	\textit{The upper/lower UB between two training points lies above/below the model prediction $\fhat$, i.e.,  $\lUB_c(x)\le\fhat(x)\le\uUB_c(x)$ for all $x\in\X$ and for $c\geq0$. Thus, $\sigmodelhat\geq0$.}
	\end{enumerate}
By definition, for any given prior, the exact posterior model uncertainty $\sigmodel$ is positive, and therefore Desideratum~\hyperref[itm:Axioms:trivial]{D1} should also hold for any estimate $\sigmodelhat$.
\begin{enumerate}[resume,label=D\arabic*,align=left, leftmargin=*,topsep=0pt]
		\item[\mylabel{itm:Axioms:ZeroUncertaintyAtData}{D2 (In-Sample)}]
	\textit{In the noiseless case ($\signoise\equiv0$), there is zero model uncertainty at each input training point $\xtr$, i.e., $\sigmodelhat(\xtr)=0$. Thus, $\uUB_c(\xtr)=\lUB_c(\xtr)=\fhat(\xtr)$ for $c\ge0$.}
\end{enumerate}
In \Cref{sec:desiderata:InSample}, we prove that, for any prior that does not contradict the training data, the exact $\sigmodel$ satisfies \hyperref[itm:Axioms:ZeroUncertaintyAtData]{D2}. Thus, \hyperref[itm:Axioms:ZeroUncertaintyAtData]{D2} should also hold for any estimate $\sigmodelhat$, and we argue that even in the case of non-zero small data noise, model uncertainty should be small at input training data points.%
\begin{enumerate}[resume,label=D\arabic*,align=left, leftmargin=*,topsep=0pt]
	\item[\mylabel{itm:Axioms:largeDistantLargeUncertainty}{D3 (Out-of-Sample)}]
	\textit{The larger the distance of a point $x\in\X$ to the input training points in $\Dtr$, the wider the UBs at $x$, i.e., model uncertainty~$\sigmodelhat$ increases out-of-sample.\footnote{Importantly, \hyperref[itm:Axioms:largeDistantLargeUncertainty]{D3} also promotes model uncertainty in gaps \emph{between} input training points.}}
\end{enumerate}
For \hyperref[itm:Axioms:largeDistantLargeUncertainty]{D3} it is often not obvious which metric to choose to measure distances.
Some aspects of this metric can be specified via the architecture (e.g., shift invariance for CNNs).
In many applications, it is best to \textit{learn} further aspects of this metric from training data, motivating our next desideratum.
\begin{enumerate}[resume,label=D\arabic*,align=left, leftmargin=*,topsep=0pt]
	\item[\mylabel{itm:Axioms:irregular}{D4 (Metric Learning)}]
	\textit{Changes in those features of $x$ that have high predictive power on the training set have a large effect on the distance metric used in \hyperref[itm:Axioms:largeDistantLargeUncertainty]{D3}.}\footnote{\label{footnote:LearningTheMetricISImportant}
	 Consider the task of learning facial expressions from images. For this, eyes and mouth are important features, while background color is not. 
	 A CNN automatically learns which features are important for model prediction.
	 The same features are also important for model uncertainty: Consider an image with pixel values similar to those of an image of the training data, but where mouth and eyes are very different. We should be substantially more uncertain about the model prediction for such an image than for one which is almost identical to a training image except that it has a different background color, even if this change of background color results in a huge Euclidean distance of the pixel vectors. \hyperref[itm:Axioms:irregular]{D4} requires that a more useful metric is learned instead.
	 }
	 \end{enumerate}
\hyperref[itm:Axioms:irregular]{D4} is not required for any application. However, specifically in \emph{deep} learning applications, where it is a priori not clear which features are important, \hyperref[itm:Axioms:irregular]{D4} is particularly desirable.%
\begin{enumerate}[resume,label=D\arabic*,align=left, leftmargin=*,topsep=0pt]
	\item[\mylabel{itm:Axioms:UncertaintyDecreasesWithMoreTrainingPoints}{D5 (Vanishing)}]
	\textit{As the number~$\ntr$ of training points (with $\xtr_i\overset{\text{i.i.d}}{\sim}\PP_X$) tends to infinity, model uncertainty vanishes for each $x$ in the support of the input data distribution~$\PP_X$, i.e., $\lim_{\ntr\to\infty}\sigmodelhat(x) = 0$ for a fixed $c\ge0$. Thus, for a fixed $c$, $\lim_{\ntr\to\infty} |\uUB_c(x)-\lUB_c(x)|=0$.
	}
\end{enumerate}
In \Cref{appendix:Desiderata}, we discuss all desiderata in more detail (see \Cref{subsec:A note on Desideratum D4} for a visualization of \hyperref[itm:Axioms:irregular]{D4}).
\begin{figure}[t!]
        \vskip 0.1in
        \begin{center}
        \centerline{
        \resizebox{1\columnwidth}{!}{
                \begin{tikzpicture}
                [cnode/.style={draw=black,fill=#1,minimum width=3mm,circle}]
                \node[cnode=gray,label=180:$\mathlarger{\mathlarger{\mathlarger{x \in \X}}}$] (x1) at (0.5,0) {};
                \node[cnode=gray] (x2+3) at (3,3) {};
                \node at (3,2.2) {$\mathlarger{\mathlarger{\mathlarger{\vdots}}}$};
                \node[cnode=gray] (x2+1) at (3,1) {};
                \node[cnode=gray] (x2-3) at (3,-3) {};
                \node at (3,-1.7) {$\mathlarger{\mathlarger{\mathlarger{\vdots}}}$};
                \node[cnode=gray] (x2-1) at (3,-1) {};
                \draw (x1) -- (x2+3);
                \draw (x1) -- (x2+1);
                \draw (x1) -- (x2-1);
                \draw (x1) -- (x2-3);
                \node[cnode=gray] (x3+3) at (6,3) {};
                \node at (6,2.2) {$\mathlarger{\mathlarger{\mathlarger{\vdots}}}$};
                \node[cnode=gray] (x3+1) at (6,1) {};
                \node[cnode=gray] (x3-3) at (6,-3) {};
                \node at (6,-1.7) {$\mathlarger{\mathlarger{\mathlarger{\vdots}}}$};
                \node[cnode=gray] (x3-1) at (6,-1) {};
                \foreach \y in {1,3}
                {   \foreach \z in {1,3}
                        {   \draw (x2+\z) -- (x3+\y);
                        }
                }
                \foreach \y in {1,3}
                {   \foreach \z in {1,3}
                        {   \draw (x2-\z) -- (x3-\y);
                        }
                }
                \node at (7.5,+3) {$\mathlarger{\mathlarger{\mathlarger{\ldots}}}$};
                \node at (7.5,+2) {$\mathlarger{\mathlarger{\mathlarger{\ldots}}}$};
                \node at (7.5,+1) {$\mathlarger{\mathlarger{\mathlarger{\ldots}}}$};
                \node at (7.5,-1) {$\mathlarger{\mathlarger{\mathlarger{\ldots}}}$};
                \node at (7.5,-2) {$\mathlarger{\mathlarger{\mathlarger{\ldots}}}$};
                \node at (7.5,-3) {$\mathlarger{\mathlarger{\mathlarger{\ldots}}}$};
                \node[cnode=gray] (x4+3) at (9,3) {};
                \node at (9,2.2) {$\mathlarger{\mathlarger{\mathlarger{\vdots}}}$};
                \node[cnode=gray] (x4+1) at (9,1) {};
                \node[cnode=gray] (x4-3) at (9,-3) {};
                \node at (9,-1.7) {$\mathlarger{\mathlarger{\mathlarger{\vdots}}}$};
                \node[cnode=gray] (x4-1) at (9,-1) {};
                \node[cnode=gray,label=360:$\mathlarger{\mathlarger{\mathlarger{\fhat(x)\in \Y}}}$] (x5+2) at (11.5,2) {};
                \node[cnode=gray,label=360:$\mathlarger{\mathlarger{\mathlarger{\sigmodelhatraw(x)\in \Rpz}}}$] (x5-2) at (11.5,-2) {};
                \draw (x4+3)--(x5+2);
                \draw (x4+1)--(x5+2);
                \draw[dashed,-Latex,arrows={[scale=1.15]}] (x4+3)--(11.52,-1.8);
                \draw[dashed,-Latex,arrows={[scale=1.15]}] (x4+1)--(11.4,-1.86);
                \draw (x4-1)--(x5-2);
                \draw (x4-3)--(x5-2);
                \draw [dotted, line width=0.4mm] (2.5,0.5) rectangle (9.5,+3.5);
                \draw [dotted, line width=0.4mm] (2.5,-3.5) rectangle (9.5,-0.5);
                \node at (1,+2.5) {$\mathlarger{\mathlarger{\mathlarger{\fhat}}}${\Large-network}};
                \node at (1,-2.5) {$\mathlarger{\mathlarger{\mathlarger{\sigmodelhatraw}}}${\Large-network}};
                \end{tikzpicture}
        }
        }
        \caption{NOMU's network architecture}
        \label{fig:nn_tik}
        \end{center}
        \vskip -0.2in
\end{figure}

\subsection{The Network Architecture}\label{subsec:The Network Architecture}
For NOMU, we construct a network $\NN_\theta$ with two outputs: \emph{the model prediction} $\fhat$ (e.g., mean prediction) and a \emph{raw} model uncertainty prediction $\sigmodelhatraw$. Formally: $\NN_\theta \colon\X \to  \Y\times \Rpz$, with $x \mapsto \NN_\theta(x):=(\fhat(x),\sigmodelhatraw(x))$. NOMU's architecture consists of two almost separate sub-networks: the $\fhat$-network and the $\sigmodelhatraw$-network (see \Cref{fig:nn_tik}). For each sub-network, any network architecture can be used (e.g., feed-forward NNs, CNNs). This makes NOMU highly modular and we can plug in any previously trained NN for $\fhat$, or we can train $\fhat$  simultaneously with the $\sigmodelhatraw$-network. The $\sigmodelhatraw$-network learns the raw model uncertainty and is connected with the $\fhat$-network through the last hidden layer (dashed lines in Figure \ref{fig:nn_tik}). This connection enables $\sigmodelhatraw$ to re-use features that are important for the model prediction~$\fhat$, implementing Desideratum \ref{itm:Axioms:irregular}.\footnote{To prevent that $\sigmodelhatraw$ impacts $\fhat$, the dashed lines should only make forward passes when trained.}

\begin{remark} 
NOMU's network architecture can be modified to realize \ref{itm:Axioms:irregular} in many different ways. For example, if low-level features were important for predicting the model uncertainty, one could additionally add connections from earlier hidden layers of the $\fhat$-network to layers of the $\sigmodelhatraw$-network. Furthermore, one can strengthen \ref{itm:Axioms:irregular} by increasing the regularization of the $\sigmodelhatraw$-network (see \Cref{subsec:A note on Desideratum D4}).
\end{remark}

After training $\NN_\theta$, we apply the readout map $\readout(z)=\sigmax(1-\exp(-\frac{\max(0,z)+\sigmin}{\sigmax})),\, \sigmin\ge0, \sigmax>0$ to the \emph{raw} model uncertainty output $\sigmodelhatraw$ to obtain NOMU's \emph{model uncertainty prediction}
\begin{align}\label{eq:modelUncertaintyPrediction}
\sigmodelhat(x):=\readout(\sigmodelhatraw(x)),\, \forall x\in \X.
\end{align}
The readout map $\readout$ monotonically interpolates between a minimal $\approx\sigmin$ and a maximal $\approx\sigmax$ model uncertainty (see \Cref{fig:app:readout_map} in \Cref{sec:readout_map} for a visualization). Here, $\sigmin$ is used for numerical stability, and $\sigmax$ defines the maximal model uncertainty far away from input training points (similarly to the prior variance for RBF-GPs). 
With NOMU's model prediction $\fhat$, its model uncertainty prediction $\sigmodelhat$ defined in \eqref{eq:modelUncertaintyPrediction}, and given a calibration parameter\footnote{Like all other methods, NOMU outputs \emph{relative} UBs that should be calibrated, e.g., via a parameter $c\ge0$. See also \citet{pmlr-v80-kuleshov18a} for a non-linear calibration method.%
} $c\in \Rpz$, we can now define for each $x\in\X$  NOMU's UBs as
\begin{align}\label{eq:UBs}
&\left(\lUB_c(x),\uUB_c(x)\right):=\left(\fhat(x)\mp c~\sigmodelhat(x)\right). %
\end{align}
It is straightforward to construct a \emph{single} NN that directly outputs the upper/lower UB, by extending the architecture shown in Figure~\ref{fig:nn_tik}: we monotonically transform and scale the output $\sigmodelhatraw(x)$ and then add/subtract this to/from the other output $\fhat(x)$. It is also straightforward to compute NOMU's UBs for any \emph{given, previously trained NN}, by attaching the $\sigmodelhatraw$-network to the trained NN, and only training the $\sigmodelhatraw$-network on the same training points as the original NN.
\begin{remark}\label{rem:different_readout}
The readout map $\readout$ can be modified depending on the subsequent use of the estimated UBs. For example, for BO over discrete domains (e.g., $\X=\{0,1\}^d$) \citep{baptista2018bayesian}, we propose the linearized readout map $\readout(z)=\sigmin+\max(0,z-\sigmin)-\max(0,z-\sigmax)$. With this $\readout$ and ReLU activations, one can encode NOMU's UBs as a mixed integer program (MIP) \citep{weissteiner2020deep}.
This enables optimizing the upper UB as acquisition function without the need for further approximation via ensemble distillations \citep{malinin2019ensemble}.
\end{remark}
\subsection{The Loss Function}\label{subsec:modeluncertaintyloss}
We now introduce the loss function $L^\hp$ we use for training NOMU's architecture. Let $\X$ be such that $0<\lambda_d(\X)<\infty$, where $\lambda_d$ denotes the $d$-dimensional Lebesgue measure. We train $\NN_{\theta}$ with loss $L^\hp$ and L2-regularization parameter $\lambda>0$, i.e.,  %
minimizing $\Lmu(\NN_{\theta})+\lambda\twonorm[{\theta}]^2$ via a gradient descent-based algorithm.
\begin{definition}[NOMU Loss]\label{def:Model Uncertainty Loss}
Let  $\hp:=(\musqr,\muexp,\cexp)\in \Rpz^3$ denote a tuple of hyperparameters. Given a training set $\Dtr$, the loss function $L^\hp$ is defined as
\begin{align}\label{eq:lmu2}
  L^\hp(\NN_\theta):=&\underbrace{\sum_{i=1}^{\ntr}(\fhat(\xtr_i
  )-\ytr_i)^2}_{\terma{}}+ \,\musqr\cdot\underbrace{\sum_{i=1}^{\ntr}\left(\sigmodelhatraw(\xtr_i)\right)^2}_{\termb{}} \notag\\ 
   &+ \muexp\cdot\underbrace{\frac{1}{\lambda_d(\X)}\int_{\X}e^{-\cexp\cdot \sigmodelhatraw(x)}\,dx}_{\termc{}}.
\end{align}
\end{definition}
In the following, we explain how the three terms of $L^\hp$ promote the desiderata we introduced in \Cref{subsubsec:Desiderata}. Note that the behaviour of $\sigmodelhatraw$ directly translates to that of $\sigmodelhat$.
\begin{itemize}[leftmargin=*,topsep=0pt,partopsep=0pt, parsep=0pt]
\item Term~\terma{} solves the regression task, i.e., learning a smooth function $\fhat$ given $\Dtr$. If $\fhat$ is given as a pre-trained NN, then this term can be omitted.
\item Term~\termb{} implements \ref{itm:Axioms:ZeroUncertaintyAtData} and \ref{itm:Axioms:UncertaintyDecreasesWithMoreTrainingPoints} (i.e.,  $\sigmodelhatraw(\xtr_i)\approx 0$). The hyperparameter $\musqr$ controls the amount of uncertainty at the training points.\footnote{In the noiseless case, in theory $\frac{\musqr}{\lambda}=\infty$; we set $\frac{\musqr}{\lambda}=10^7$. For small non-zero data noise, setting $\frac{\musqr}{\lambda}\ll\infty$ captures \textit{data noise induced model uncertainty} $\sigmodel(\xtr)>0$ (\Cref{sec:desiderata:InSample}).} The larger $\musqr$, the narrower the UBs at training points.
\item Term~\termc{} has two purposes. First, it implements \ref{itm:Axioms:trivial} (i.e., $\sigmodelhatraw\ge0$). Second, it pushes $\sigmodelhatraw$ towards infinity across the whole input space $\X$. However, due to the counteracting force of \termb{} as well as regularization, $\sigmodelhatraw$ increases continuously as you move away from the training data. The interplay of \termb{}, \termc{}, and regularization thus promotes \ref{itm:Axioms:largeDistantLargeUncertainty}. The hyperparameters $\muexp$ and $\cexp$ control the size and shape of the UBs. Concretely, the larger $\muexp$, the wider the UBs; the larger $\cexp$, the narrower the UBs at points $x$ with large $\sigmodelhat(x)$ and the wider the UBs at points $x$ with small $\sigmodelhat(x)$.
\end{itemize}
In \Cref{subsec:appendix:Qualitative Sensitivity Analysis}, we provide detailed visualizations on how the loss hyperparameters $\musqr$, $\muexp$, and $\cexp$ shape NOMU's model uncertainty estimate.
In the implementation of $L^\hp$, we approximate \termc{} via MC-integration using additional, \emph{artificial input points} ${\Daug:=\left\{x_i\right\}_{i=1}^l\stackrel{i.i.d}{\sim}\textrm{Unif}(\X)}$ by
$\Scale[1.2]{\frac{1}{l}}\cdot\hspace{-0.1cm}\sum_{x\in \Daug}\hspace{-0.05cm}e^{-\cexp\cdot \sigmodelhat(x)}$.
\begin{remark}
In \termc{}, instead of the Lebesgue-measure, one can also use a different measure $\nu$, i.e., $\frac{1}{\nu(\X)}\int_{\X}e^{-\cexp\cdot \sigmodelhatraw(x)}\,d\nu(x)$. This can be relevant in high dimensions, where meaningful data points often lie close to a lower-dimensional manifold \citep{cayton2005algorithms}; $\nu$ can then be chosen to concentrate on that region. In practice, this can be implemented by sampling from a large unlabeled data set $\Daug$ representing this region or learning the measure $\nu$ using GANs \citep{goodfellow2014gen}.
\end{remark}

\paragraph{Theory} In \Cref{appendix:Desiderata}, we prove that NOMU fulfills \hyperref[itm:Axioms:trivial]{D1}, \hyperref[itm:Axioms:ZeroUncertaintyAtData]{D2} and \hyperref[itm:Axioms:UncertaintyDecreasesWithMoreTrainingPoints]{D5} (\Cref{prop:NOMUD1,prop:NOMUD2,prop:NOMUD5}),
and discuss how NOMU fulfills \hyperref[itm:Axioms:largeDistantLargeUncertainty]{D3} and \hyperref[itm:Axioms:irregular]{D4}.
In \Cref{subsec:connectingNOMUtoPointwiseUncertaintyBounds}, we show that, under certain assumptions, NOMU's UBs can be interpreted as \textit{pointwise worst-case} UBs $\uUBp(x):=\sup_{f\in \HC_{\Dtr}}f(x)$ within a hypothesis class~$\HC_{\Dtr}$ of data-explaining functions.
In \Cref{subsec:Pointwise Uncertainty Bounds}, we explain how $\lUBp(x)$ and $\uUBp(x)$ estimate posterior CBs of BNNs (with a Gaussian prior on the weights), without performing challenging variational inference. However, while exact posterior CBs of BNNs lose \hyperref[itm:Axioms:irregular]{D4} as their width goes to infinity, NOMU's UBs are capable of retaining \hyperref[itm:Axioms:irregular]{D4} in this limit.  

\section{Experimental Evaluation}\label{sec:ExperimentsMain}
In this section, we experimentally evaluate NOMU's model uncertainty estimate in multiple synthetic and real-world regression settings (\Cref{subsec:Regression}) as well as in high-dimensional Bayesian optimization (\Cref{subsec:BayesianOptimization}).

\paragraph{Benchmarks} We compare NOMU against four benchmarks, each of which gives a model prediction $\fhat$ and a model uncertainty prediction $\sigmodelhat$ (see \Cref{subec:Benchmarks} for formulas). We calculate model-specific UBs at $x\in\X$ as $(\fhat(x)\mp c~\sigmodelhat(x))$ with calibration parameter $c \in \Rpz$ and use them to evaluate all methods. We consider three algorithms that are specialized to model uncertainty for NNs: (i) MC dropout (MCDO) (ii) deep ensembles (DE) and (iii) hyper deep ensembles (HDE) and a non-NN-based benchmark: (iv) Gaussian process (GP) with RBF kernel.

\subsection{Regression}
\label{subsec:Regression}
To develop intuition, we first study the \emph{model} UBs of all methods on synthetic test functions with 1D--2D scarce input training points without data noise (\Cref{subsubsec:ToyRegression}). We then propose a novel generative test-bed and evaluate NOMU within this setting (\Cref{subsubsec:GenerativeTestbed}). Next, we analyze NOMU on a real-world time series (\Cref{subsubsec:SolarIrradiance}). Finally, we evaluate NOMU on the real-world UCI data sets (\Cref{subsubsec:UCI}).
\begin{figure}[t!]
    \begin{center}
    \centerline{\includegraphics[width=1\columnwidth]{figures/main_paper/1d_synthetic_functions/1d_synthetic_functions.pdf}}
    \caption{1D synthetic test functions}
    \label{fig:1dfuns}
    \end{center}
    \vskip -0.2in
\end{figure}

\subsubsection{Toy Regression}\label{subsubsec:ToyRegression}
\paragraph{Setting} We consider ten different
1D functions whose graphs are shown in \Cref{fig:1dfuns}. 
Those include the popular Levy and Forrester function with multiple local optima.\footnote{\label{footnote:sfu}See \href{www.sfu.ca/~ssurjano/optimization.html}{sfu.ca/~ssurjano/optimization.html}.} All functions are transformed to $\X:=[-1,1]=:f(X)$. For each function, we conduct $500$ runs. In each run, we randomly sample eight noiseless input training points from $\X$,
such that the only source of uncertainty is model uncertainty.
For each run, we also generate $100$ \emph{test points} in the same fashion to assess the quality of the UBs.

\paragraph{Metrics} We report the \emph{average negative log (Gaussian) likelihood (NLL)}, minimized over the calibration parameter $c$, which we denote as $\MNLPD$. Following prior work \citep{khosravi2010lower,pmlr-v80-kuleshov18a,pearce2018high}, we further measure the quality of UBs by contrasting their \emph{mean width} $(\MW)$ with their \emph{coverage probability} $(\CP)$. 
Ideally, $\MW$ should be as small as possible, while $\CP$ should be close to $1$. Since $\CP$ is counter-acting $\MW$, we consider ROC-like curves, plotting $\MW$ against $CP$ for a range of calibration parameters $c$, and report the \emph{area under the curve (AUC)} (see \Cref{subsec:performanceMeasures} for details).

\paragraph{Algorithm Setup} For each of the two NOMU sub-networks, we use a feed-forward NN with three fully-connected hidden layers \`a $2^{10}$ nodes, ReLUs, and hyperparameters $\muexp=0.01, \musqr=0.1, \cexp=30$. In practice, the values for $\muexp, \musqr$, and $\cexp$ can be tuned on a validation set. However, for all synthetic experiments (\Cref{subsubsec:ToyRegression} and \Cref{subsubsec:GenerativeTestbed}), we use the same values, which lead to good results across all functions. Moreover, we set $\lambda=10^{-8}$ accounting for zero data-noise, $\sigmin=0.001$ and $\sigmax=2$.
In \Cref{subsec:appendix:Quantitative Sensitivity Analysis}, we provide an extensive sensitivity analysis for the hyperparameters $\muexp$, $\musqr$, $\cexp$, $\sigmin$ and $\sigmax$. This analysis demonstrates NOMU's robustness within a certain range of hyperparameter values.

Furthermore, to enable a fair comparison of all methods, we use generic representatives and do not optimize any of them for any test function. We also choose the architectures of all NN-based methods, such that the overall number of parameters is comparable. We provide all methods with the same prior information (specifically, this entails knowledge of zero data noise), and set the corresponding parameters accordingly. Finally, we set all remaining hyperparameters of the benchmarks to the values proposed in the literature. Details on all configurations are provided in \Cref{subsec:ConfigurationDetailsofBenchmarksRegression}.

\begin{figure}[t!]
    \begin{center}
    \centerline{\includegraphics[width=1\columnwidth]{figures/main_paper/1d_synthetic_functions/scaledLevy_NOMU1_GP1_DO4_newGP.pdf}}
	\centerline{\includegraphics[width=1\columnwidth]{figures/main_paper/1d_synthetic_functions/Levy_DE15_HDE10.pdf}}
	\caption{UBs resulting from NOMU, GP, MCDO, and DE, HDE for the Levy function (solid black line). For NOMU, we also show $\sigmodelhat$ as a dotted blue line. Training points are shown as black dots.}
	\label{fig:1dboundsLevy}
	\end{center}
    \vskip -0.2in
\end{figure}
\paragraph{Results} \Cref{fig:1dboundsLevy} exemplifies our findings, showing typical UBs for the Levy function as obtained in one run. In \Cref{subsubsec:Uncertainty Bounds Detailed Characteristics} we provide further visualisations. We find that MCDO consistently yields tube-like UBs; in particular, its UBs do not narrow at training points, i.e., failing in \ref{itm:Axioms:ZeroUncertaintyAtData} even though MCDO's Bayesian interpretation requires \hyperref[itm:Axioms:ZeroUncertaintyAtData]{D2} to hold (see \Cref{sec:desiderata:InSample}). Moreover, it only fulfills \ref{itm:Axioms:largeDistantLargeUncertainty} to a limited degree. We frequently observe that DE leads to UBs of somewhat arbitrary shapes. This can be seen most prominently in \Cref{fig:1dboundsLevy} around $x\approx-0.75$ and at the edges of its input range, where DE's UBs are very different in width with no clear justification. Thus, also DE is limited in \ref{itm:Axioms:largeDistantLargeUncertainty}. In addition, we sometimes see that also DE's UBs do not narrow sufficiently at training points, i.e., not fulfilling \ref{itm:Axioms:ZeroUncertaintyAtData}.
HDE's UBs are even more random, i.e., predicting large model uncertainty at training points and sometimes zero model uncertainty in gaps between them (e.g., $x\approx-0.75$). %
In contrast, NOMU displays the behaviour it is designed to show. Its UBs nicely tighten at training points and expand in-between (\hyperref[itm:Axioms:trivial]{D1}\crefrangeconjunction\hyperref[itm:Axioms:largeDistantLargeUncertainty]{D3}, for \ref{itm:Axioms:irregular} see \cref{subsec:A note on Desideratum D4}). %
Like NOMU, the GP fulfills \ref{itm:Axioms:ZeroUncertaintyAtData} and \ref{itm:Axioms:largeDistantLargeUncertainty} well, but cannot account for \ref{itm:Axioms:irregular} (a fixed kernel does not depend on the model prediction). 
\Cref{tab:1dsynfunresults} provides the ranks achieved by each algorithm (see \Cref{subsubsec:Detailed Synthetic Functions Results in Regression} for corresponding metrics). We calculate the ranks based on the medians and a 95\% bootstrap CI of \AUC\, and \MNLPD{}. An algorithm loses one rank to each other algorithm that significantly dominates it. Winners are marked in grey. We observe that NOMU is the only algorithm that never comes in third place or worse.
Thus, while some algorithms do particularly well in capturing uncertainties of functions with certain characteristics (e.g., RBF-GPs for polynomials), NOMU is the only algorithm that consistently performs well. HDE's bad performance can be explained by its randomness and the fact that it sometimes predicts zero model uncertainty out-of-sample. For 2D, we provide results and visualizations in \Cref{subsubsec:Detailed Synthetic Functions Results in Regression} and \ref{subsubsec:Uncertainty Bounds Detailed Characteristics} highlighting similar characteristics of all the algorithms as in 1D.
\begin{table}[t!]
    \setlength\tabcolsep{2pt}
    \caption{Ranks (1=best to 5=worst) for \AUC\ and \MNLPD{}.}
    \label{tab:1dsynfunresults}
    \vskip 0.1in
    \begin{center}
    \begin{small}
    \begin{sc}
	\resizebox{1\columnwidth}{!}{
		\begin{tabular}{
				l
				c
				c
				c
				c
				c
				c
				c
				c
				c
				c
			}
			\toprule
			& \multicolumn{2}{c}{\textbf{NOMU}} &  \multicolumn{2}{c}{\textbf{GP}}  & \multicolumn{2}{c}{\textbf{MCDO}}  &  \multicolumn{2}{c}{\textbf{DE}} & \multicolumn{2}{c}{\textbf{HDE}}\\
			\textbf{Function} & \small\AUC & \small\MNLPD & \small\AUC  & \small\MNLPD & \small\AUC & \small\MNLPD & \small\AUC & \small\MNLPD & \small\AUC &\small\MNLPD\\
			\midrule
			Abs & \winc1&\winc1 & 3&\winc1 & 3&4 & \winc1&\winc1&5&5\\
    	    Step & 2&2 & 4&3 & 2&3 & \winc1&\winc1&5&5\\ 
			Kink &\winc1&\winc1 & 3&\winc1 & 4&4 & \winc1&\winc1&5&5\\
			Square & 2&2 & 2&\winc1 & 4&4 &2&3&5&5\\
			Cubic & 2&2 &\winc1&\winc1 & 3&4 & 3&3&5&5\\
			Sine 1 & 2&\winc1 & 2&\winc1 & \winc1&\winc1 & 2&\winc1&5&5\\
			Sine 2 & 2&2 & 3&\winc1 & \winc1&2 & 3&3&5&5\\
			Sine 3 & \winc1&\winc1 & 4&\winc1 & 3&4 & \winc1&\winc1&5&5\\
			Forrester & \winc1&2 & \winc1&\winc1 & 3&4  & 3&3&5&5\\
			Levy & \winc1&\winc1 & 4&3 & \winc1&4 & 3&\winc1&5&5\\
			\bottomrule 
		\end{tabular}
}
\end{sc}
\end{small}
\end{center}
\vskip -0.1in
\end{table}
\subsubsection{Generative Test-Bed}\label{subsubsec:GenerativeTestbed}
\paragraph{Setting} Instead of only relying on a \emph{limited} number of data-sets, we also evaluate all algorithms on a generative test-bed, which provides an unlimited number of test-functions.
The importance of using a test-bed (to avoid over-fitting on specific data-sets) has also been highlighted in a recent work by \citet{osband2021epistemic}.
For our test-bed, we generate $200$ different data-sets by randomly sampling 200 different test-functions from a BNN with i.i.d centered Gaussian weights and three fully-connected hidden layers with nodes $[2^{10},2^{11},2^{10}]$ and ReLU activations.
From each of these test-functions we uniformly at random sample $\ntr=8\cdot d$ input training points and $100\cdot d$ test data points, where $d$ refers to the input dimension. We train all algorithms on these training sets and determine the \NLPD\ on the corresponding test sets averaged over the 200 test-functions. This metric converges to the Kullback-Leibler divergence to the exact BNN-posterior (see \Cref{thm:appendix_dkl_approximation}). We calibrate by choosing per dimension an optimal value of $c$ in terms of average \NLPD, which does not depend on the test-function.
\begin{table}[t!]
    \setlength\tabcolsep{2pt}
    \caption{Average \NLPD\ {\scriptsize(without const. $\ln(2\pi)/2$)} and a 95\% CI over 200 BNN samples. Winners are marked in grey.}
    \label{tab:bnn_uniform}
	\vskip 0.1in
    \begin{center}
    \begin{small}
    \begin{sc}
	\resizebox{1\columnwidth}{!}{
		\begin{tabular}{
				l
				c
				c
				c
				c
				c
				c
				c
				c
				c
				c
			}
			\toprule
			\textbf{Function} & \multicolumn{1}{c}{\textbf{NOMU}} &  \multicolumn{1}{c}{\textbf{GP}}  & \multicolumn{1}{c}{\textbf{MCDO}}  &  \multicolumn{1}{c}{\textbf{DE}} & \multicolumn{1}{c}{\textbf{HDE}}\\
			\midrule
		    BNN1D & \winc-1.65$\pm$\scriptsize0.10 & -1.08$\pm$\scriptsize0.22 & -0.34$\pm$\scriptsize0.23 &  -0.38$\pm$\scriptsize0.36 & \hphantom{-}8.47$\pm$\scriptsize1.00\\
		    BNN2D &  \winc-1.16$\pm$\scriptsize0.05& -0.52$\pm$\scriptsize0.11 &-0.33$\pm$\scriptsize0.13 & -0.77$\pm$\scriptsize0.07 & \hphantom{-}9.11$\pm$\scriptsize0.39 \\
		    BNN5D & \winc-0.37$\pm$\scriptsize0.02 & -0.33$\pm$\scriptsize0.02 & -0.05$\pm$\scriptsize0.04 & -0.13$\pm$\scriptsize0.03 & \hphantom{-}8.41$\pm$\scriptsize1.00 \\
			\bottomrule 
		\end{tabular}
}
\end{sc}
\end{small}
\end{center}
\vskip -0.1in
\end{table}
\paragraph{Results} In \Cref{tab:bnn_uniform}, we provide the results for input dimensions 1, 2 and 5.
We see that NOMU outperforms all other algorithms including MCDO, which is a variational inference method to approximate this posterior \citep{gal2016dropout}.
See \Cref{sec:Appendix:GenerativeTestBed} for a detailed explanation of the experiment setting and further results in modified settings including a discussion of higher dimensional settings.

\subsubsection{Solar Irradiance Time Series}\label{subsubsec:SolarIrradiance}
\paragraph{Setting} Although the current version of NOMU is specifically designed for scarce settings without data noise, we are also interested to see how well it performs in settings where these assumptions are not satisfied. To this end, we now study a setting with many training points and small non-zero data noise. This allows us to analyze how well NOMU captures \ref{itm:Axioms:UncertaintyDecreasesWithMoreTrainingPoints}.
We consider the popular task of interpolating the solar irradiance data \citep{doi:10.1029/2009GL040142} also studied in \citep{gal2016dropout}. We scale the data to $X=[-1,1]$ and split it into 194 training and 197 test points. As in \citep{gal2016dropout}, we choose five intervals to contain only test points. Since the true function is likely of high frequency, we set $\lambda=10^{-19}$ for NOMU and the benchmarks' regularization accordingly. To account for small non-zero data noise, we set $\muexp=0.05$, $\sigmin=0.01$ and use otherwise the same hyperparameters as in \Cref{subsubsec:ToyRegression}.
\begin{figure}[b!]
    \begin{center}
    \centerline{\includegraphics[width=1\columnwidth, trim= 10 180 0 200, clip]{figures/main_paper/irradiance/NOMUred.pdf}}
    \caption{NOMU's model prediction (solid), model uncertainty (dotted) and UBs (shaded area) on the solar irradiance data. Training and test points are shown as black dots and red crosses.}
    \label{fig:irradianceNOMU}
    \end{center}
    \vskip -0.2in
\end{figure}

\paragraph{Results} \Cref{fig:irradianceNOMU} visualizes NOMU's UBs. We see that NOMU manages to fit the training data well while capturing model uncertainty between input training points. In particular, large gaps \emph{between} input training points are successfully modeled as regions of high model uncertainty (\ref{itm:Axioms:largeDistantLargeUncertainty}). Moreover, in regions highly populated by input training points, NOMU's model uncertainty vanishes as required by \ref{itm:Axioms:UncertaintyDecreasesWithMoreTrainingPoints}. Plots for the other algorithms are provided in \Cref{subsubsec:Detailed Real-Data Time Series Results}, where we observe similar patterns as in \Cref{subsubsec:ToyRegression}.

\subsubsection{UCI Data Sets}\label{subsubsec:UCI}
\paragraph{Setting} Recall that NOMU is specifically tailored to noiseless settings with scarce input training data. However, even the current version of NOMU, which does not explicitly model data noise, already performs on par with existing benchmarks on real-world regression tasks \textit{with} data noise. To demonstrate this, we test its performance on (a) the UCI data sets proposed in \citet{hernandez2015probabilistic}, a common benchmark for uncertainty quantification in noisy, real-world regression, and (b) the UCI gap data set extension proposed in \citet{foong2019inbetween}. We consider exactly the same experiment setup as proposed in these works, with a $70/20/10$-train-validation-test split, equip NOMU with a shallow architecture of $50$ hidden nodes, and train it for $400$ epochs. Validation data are used to calibrate the constant $c$ on NLL. See \Cref{subsec:DetailsUCIExperiment} for details on NOMU's setup.

\paragraph{Results} \Cref{tab:uci_nll_detailed} reports NLLs on test data, averaged across 20 splits, compared to the current state-of-the-art. We report the \NLPD\ for MCDO \citep{gal2016dropout} and DE \citep{lakshminarayanan2017simple} from the original papers. Moreover, we reprint the best results (for comparable network sizes) of neural linear models (NLMs) with and without hyperparameter optimization (HPO) from \citep{ober2019benchmarking} (NLM-HPO, NLM), linearized laplace (LL) \cite{foong2019inbetween} and a recent strong MCDO baseline (MCDO2) from \citep{mukhoti2018importance}.
It is surprising that, even though the current design of NOMU does not yet explicitly incorporate data noise, it already performs comparably to state-of-the-art results. We obtain similar results for the UCI gap data. See \Cref{subsec:DetailsUCIExperiment} for more details on this experiment.

\begin{table}[t!]
    \caption{Average \NLPD\ and a $95\%$ normal-CI over 20 runs for \textsc{UCI} data sets. Winners are marked in grey.}
    \label{tab:uci_nll_detailed}
    \robustify\bfseries
    \setlength\tabcolsep{1pt}
    \vskip 0.1in
    \begin{center}
    \begin{small}
    \begin{sc}
    \resizebox{1\columnwidth}{!}{
    \begin{tabular}{lcccccccc}
    \toprule
    \textbf{Dataset}& NOMU & DE & MCDO &  MCDO2 & LL & NLM-HPO & NLM\\
    \midrule
    Boston & 2.68 $\pm$\scriptsize 0.11 & \winc2.41 $\pm$\scriptsize 0.49 & \winc2.46 $\pm$\scriptsize 0.11 &   \winc2.40 $\pm$\scriptsize 0.07 & 2.57 $\pm$\scriptsize 0.09 & \winc2.58 $\pm$\scriptsize 0.17 &     3.63 $\pm$\scriptsize 0.39\\
    Concrete & \winc3.05 $\pm$\scriptsize 0.06 & \winc3.06 $\pm$\scriptsize 0.35 & 3.04 $\pm$\scriptsize0.03 & \winc2.97 $\pm$\scriptsize 0.03 & \winc3.05 $\pm$\scriptsize 0.07 & 3.11 $\pm$\scriptsize 0.09 & 3.12 $\pm$\scriptsize 0.09 \\
    Energy & \winc0.77 $\pm$\scriptsize 0.06 & 1.38 $\pm$\scriptsize 0.43 & 1.99 $\pm$\scriptsize 0.03 & 1.72 $\pm$\scriptsize 0.01 & 0.82 $\pm$\scriptsize 0.05 & \winc0.69 $\pm$\scriptsize 0.05 & \winc0.69 $\pm$\scriptsize 0.05 \\
    Kin8nm & -1.08 $\pm$\scriptsize 0.01 & \winc-1.20 $\pm$\scriptsize 0.03 & -0.95 $\pm$\scriptsize 0.01 & -0.97 $\pm$\scriptsize 0.00 & \winc-1.23 $\pm$\scriptsize 0.01 & -1.12 $\pm$\scriptsize 0.01 & -1.13 $\pm$\scriptsize 0.01 \\
    Naval & -5.63 $\pm$\scriptsize 0.39 & -5.63 $\pm$\scriptsize 0.09 & -3.80 $\pm$\scriptsize 0.01 & -3.91 $\pm$\scriptsize 0.01 & -6.40 $\pm$\scriptsize 0.11 & \winc-7.36 $\pm$\scriptsize 0.15 & \winc-7.35 $\pm$\scriptsize 0.01 \\
    CCPP & \winc2.79 $\pm$\scriptsize 0.01 & \winc2.79 $\pm$\scriptsize 0.07 & \winc2.80 $\pm$\scriptsize 0.01 & \winc2.79 $\pm$\scriptsize 0.01 & 2.83 $\pm$\scriptsize 0.01 & \winc2.79 $\pm$\scriptsize 0.01 & \winc2.79 $\pm$\scriptsize 0.01 \\
    Protein & \winc2.79 $\pm$\scriptsize 0.01 & 2.83 $\pm$\scriptsize 0.03 & 2.89 $\pm$\scriptsize 0.00 & 2.87 $\pm$\scriptsize 0.00 & 2.89 $\pm$\scriptsize 0.00 & \winc2.78 $\pm$\scriptsize 0.01 & 2.81 $\pm$\scriptsize 0.00 \\
    Wine & 1.08 $\pm$\scriptsize 0.04 & \winc0.94 $\pm$\scriptsize 0.23 & \winc0.93 $\pm$\scriptsize 0.01 & \winc0.92 $\pm$\scriptsize 0.01 & 0.97 $\pm$\scriptsize 0.03 & 0.96 $\pm$\scriptsize 0.01 & 1.48 $\pm$\scriptsize 0.09 \\
    Yacht & \winc1.38 $\pm$\scriptsize 0.28 & \winc1.18 $\pm$\scriptsize 0.41 & 1.55 $\pm$\scriptsize 0.05 & 1.38 $\pm$\scriptsize 0.01 & \winc1.01 $\pm$\scriptsize 0.09 & \winc1.17 $\pm$\scriptsize 0.13 & \winc1.13 $\pm$\scriptsize 0.09 \\
    \bottomrule
    \end{tabular}
    }
    \end{sc}
    \end{small}
    \end{center}
    \vskip -0.1in
\end{table}

\subsection{Bayesian Optimization}
\label{subsec:BayesianOptimization}
In this section, we assess the performance of NOMU in high-dimensional noiseless \emph{Bayesian optimization (BO)}.

\paragraph{Setting} In BO, the goal is to maximize an unknown expensive-to-evaluate function, given a budget of function queries. We use a set of test functions with different characteristics from the same library as before, but now in $5$ to $20$ dimensions $d$, transformed to $X=[-1,1]^d,\ f(X)=[ -1,1]$.\footnote{These functions are designed for minimization. We multiply them by $-1$ and equivalently maximize instead.} Additionally, we again use Gaussian BNNs to create a generative test-bed consisting of a large variety of test functions (see \Cref{subsubsec:GenerativeTestbed}).
For each test function, we randomly sample $8$ initial points~$\left(x_i,f(x_i)\right)$ and let each algorithm choose $64$ further function evaluations (one by one) using its upper UB as acquisition function. %
This corresponds to a setting where one can only afford $72$ expensive function evaluations in total.
We provide details regarding the selected hyperparameters for each algorithm in \Cref{subsubsec:Hyperparameters}. We measure the performance of each algorithm based on its \emph{final regret} $|\max_{x\in X} f(x)-\max_{i\in\fromto{72}} f(x_i)|/|\max_{x\in X} f(x)|$. 

For each algorithm, the UBs must be calibrated by choosing appropriate values of $c$. We do so in the following straightforward way: First, after observing the $8$ random initial points, we 
determine those two values of $c$ for which the resulting mean width (MW) of the UBs is $0.05$ and $0.5$, respectively (\textsc{MW scaling}).\footnote{We fix MW instead of $c$, since the scales of the algorithms vary by orders of magnitude.} We perform one BO run for both resulting initial values of $c$. Additionally, if in a BO run, an algorithm were to choose a point~$x_{i'}$ very close to an already observed point $x_i$, we dynamically increase $c$ to make it select a different one instead (\textsc{Dynamic c}; see \Cref{subsubsec:Calibration} for details). A value of $0.05$ in \textsc{MW scaling} corresponds to small model uncertainty, such that exploration is mainly due to \textsc{Dynamic c}. Smaller values than $0.05$ thus lead to similar outcomes. In contrast, a value of $0.5$ corresponds to large model uncertainties, such that \textsc{Dynamic c} is rarely used. Only for the \enquote{plain GP (pGP)} we use neither \textsc{MW scaling} nor \textsc{Dynamic c}, as pGP uses its default calibration ($c$ is determined by the built-in hyperparameter optimization in every step). However, a comparison of GP and pGP suggests that \textsc{MW scaling} and \textsc{Dynamic c} surpass the built-in calibration (see \Cref{tab:BOresults}). As a baseline, we also report random search (RAND).

\sisetup{output-exponent-marker=\ensuremath{\mathrm{e}}}
\begin{table}[t!]
    \setlength\tabcolsep{2pt}
    \caption{BO results: average final regrets per dimension and ranks for each individual function (1=best to 7=worst).}
    \label{tab:BOresults}
    \vskip 0.1in
    \begin{center}
    \begin{small}
    \begin{sc}
    \resizebox{1\columnwidth}{!}{
    \begin{tabular}{
    		l
    		c
    		c
    		c
    		c
    		c
    		c
    		c
    	}
    		\toprule
    		\textbf{Function} & \textbf{NOMU}  & \textbf{GP}  &  \textbf{MCDO} & \textbf{DE} & \textbf{HDE} & \textbf{pGP}&\textbf{RAND} \\
    		
    		\midrule
    	    Levy5D & \winc 1 & \winc 1 & 6 & 3 & 3 &4 & 7  \\ 
            Rosenbrock5D & \winc 1 & \winc 1 & \winc 1 & \winc 1 & 2 &5 & 7  \\
            G-Function5D & 2 & 3 & \winc{1} & 4 & 2 & 3 & 7  \\
            Perm5D & 3 & \winc1 & \winc 1 & 5 & 7 &2 & 4  \\
            BNN5D & \winc{1} & \winc{1} & 4 & \winc{1} & 4& \winc{1}& 7  \\ 
    		\midrule
    		\emph{Average Regret 5D}& \winc\num{2.87e-02} & \num{5.03e-02}  & \num{4.70e-02}& \num{5.18e-02} & \num{7.13e-02}& \num{4.14e-02} & \num{1.93e-01} \\
    		\midrule
    		\midrule
            Levy10D & \winc 1 & 3 & 5 & 6 & \winc{1} &\winc{1} & 6  \\ 
            Rosenbrock10D & \winc{1} & \winc{1} & 2 & 6 & 3&2 & 7\\
            G-Function10D & 2 & 5 & \winc1 & 3 & 2& 5 & 7  \\
            Perm10D & 2 &  \winc{1} & 2 &6 & 2&2 & \winc{1}  \\
            BNN10D & \winc{1} & 2 & \winc{1} & \winc{1} & 3 & \winc1 & 7  \\ 
            \midrule
    		\emph{Average Regret 10D}& \winc \num{8.40e-02}   & \num{1.17e-01}  & \winc \num{6.96e-02}& \num{1.15e-01}& \num{9.32e-02}& \num{9.46e-02}& \num{2.35e-01}\\
    		\midrule\midrule
    	    Levy20D & \winc{1} & \winc{1} & 5 & 7 &\winc{1} &\winc{1} & 6  \\ 
            Rosenbrock20D & 2& 2 & 2 & 6 & \winc{1}& 4 & 6  \\
            G-Function20D & \winc{1} & 4 & 5 & \winc{1} &\winc{1} & 3 & 7  \\
            Perm20D & 3 & 5 & 3 & 2 &3 & 3 & \winc{1}  \\
            BNN20D & \winc{1} & 2 & 2 & 2 & 6 & \winc1 & 7  \\ 
    		\midrule
    		\emph{Average Regret 20D}& \winc\num{1.12e-01} &  \num{1.33e-01} & \num{1.39e-01}& \num{1.71e-01}& \num{1.37e-01}& \winc\num{1.17e-01} &\num{2.80e-01}\\
    		\midrule
    		\bottomrule 
    \end{tabular}%
    }
    \end{sc}
    \end{small}
    \end{center}
    \vskip -0.1in
\end{table}
\begin{figure}[b!]
    \begin{center}
    \centerline{\includegraphics[width=1\columnwidth, trim=33 0 50 15 ,clip]{figures/main_paper/regret_plots/bnn_20d_mix_no3.pdf}}
    \caption{Regret plot for BNN20D. For each BO step, we show the regrets averaged over 50 runs (solid lines) with 95\% CIs.}
    \label{fig:BNN20D}
    \end{center}
    \vskip -0.2in
\end{figure}
\paragraph{Results} In \Cref{tab:BOresults}, we present the BO results in 5D, 10D and 20D. We show the average final regret per dimension across the five functions. For each algorithm and dimension, we give the results corresponding to the MW scaling parameter (0.05 or 0.5) that minimizes the average final regret across that dimension (see \Cref{subsubsec:Detailed Results} for both MWs).
In practice, one often only knows the dimensionality of a given BO task, which is why we use the average final regret per dimension as the criterion for setting the optimal MW. For each individual function, we also present the ranks based on the final regret and a 95\% CI over 100 (5D) and 50 (10-20D) runs. We see that NOMU performs as well or better than all benchmarks in terms of average final regret.
By inspecting the ranks achieved for each individual function, we further observe that NOMU is never ranked worse than $3$\textsuperscript{rd}. In contrast, the performance of the benchmarks heavily depends on the test function; and each benchmark is ranked $4$\textsuperscript{th} and worse multiple times.
For Perm10D/20D we see that RAND performs best. However, due to a flat optimum of Perm, all algorithms achieve similar (very small) final regrets. 
Finally, we see that NOMU is always ranked first for the BNN test functions. \Cref{fig:BNN20D} shows the regret plot for BNN20D (see \Cref{subsubsec:Regret Plots} for all regret plots).
\section{Conclusion}
We have introduced NOMU, a new algorithm for estimating model uncertainty for NNs, specialized for scarce and noiseless settings.
By using a specific architecture and carefully-designed loss function, we have ensured that NOMU satisfies five important desiderata regarding model uncertainty that any method should satisfy.
However, when analyzing model uncertainty decoupled from data noise, we have experimentally uncovered that, perhaps surprisingly, established state-of-the-art methods fail to reliably capture some of the desiderata, even those that are required by Bayesian theory. In contrast, NOMU satisfies all desiderata, matches the performance of all benchmarks in regression tasks, and performs as well or better in noiseless BO tasks. We see great potential to further improve NOMU, for example by adapting the loss, or by modifying the connections between the two sub-NNs.
We also envision several extensions of NOMU, including its application to classification, employing different architectures (CNNs, GNNs, RNNs or Transformers), and incorporating data noise.

\section*{Acknowledgements}
We thank Marius H\"ogger and Aurelio Dolfini for insightful discussions and their excellent research assistance in implementing the Bayesian optimization experiments and the UCI data set experiments, respectively. Furthermore, we thank the anonymous reviewers for helpful comments. This paper is part of a project that has received funding from the European Research Council (ERC)
under the European Union’s Horizon 2020 research and innovation programme (Grant agreement No. 805542).

\clearpage{}%

\bibliographystyle{icml2022}%
\icmltitlerunning{Appendix of NOMU: Neural Optimization-based Model Uncertainty}
\clearpage
\appendix
\section*{\centering Appendix}
\section{Theoretical Analysis of NOMU}\label{sec:TheoreticalAnalysisAppendix}
In this section, we
\begin{enumerate}[leftmargin=*,topsep=0pt,partopsep=0pt, parsep=0pt]
\item first provide a theoretical motivation for the design of NOMU and establish via \Cref{thm:approxOfUBs} a connection to pointwise worst-case UBs $\lUBp,\uUBp$ \textbf{(\Cref{subsec:connectingNOMUtoPointwiseUncertaintyBounds})}.
\item Next, we provide a Bayesian interpretation of those pointwise worst-case UBs $\lUBp,\uUBp$ and elaborate on \emph{relative uncertainties} \textbf{(\Cref{subsec:Pointwise Uncertainty Bounds})}.
\item Then, we discuss the case if $\HC_{\Dtr}$ is not \emph{upwards directed} as assumed in \Cref{thm:approxOfUBs} and further show how we deal with this challenge \textbf{(\Cref{subsec:A note on Theorem})}.
\item Finally, we prove \Cref{thm:approxOfUBs} \textbf{(\Cref{subsec:ProofofTheorem})}.
\end{enumerate}

\subsection{Relating NOMU to Pointwise Worst-Case Uncertainty Bounds}\label{subsec:connectingNOMUtoPointwiseUncertaintyBounds}
In this section, we provide a theoretical motivation for the design of NOMU. To this end, we first define worst-case UBs. We then state a theorem connecting these worst case bounds and NOMU's UBs via NOMU's loss function. In what follows, we assume zero data noise~$\signoise=0$.

Consider a \emph{hypothesis class} $\HC$ given as a subset of a Banach space of functions $f:X\to Y$.
Furthermore, let $\HC_{\Dtr}:=\{f\in \HC:f(\xtr_i)=\ytr_i, i =1,\ldots,\ntr\}$ denote the set of all functions from the hypothesis class that fit through the training points and let $\fhat\in \HC_{\Dtr}$ be a prediction function (e.g., a NN trained on $\Dtr$).
Worst-case bounds within the class $\HC_{\Dtr}$ can be defined \textbf{p}oint\textbf{w}ise for each $x\in \X$ as:
\begin{align}
    \lUBp(x):=\inf_{f\in \HC_{\Dtr}}f(x),\label{eq:upperSupBound}\\
    \uUBp(x):=\sup_{f\in \HC_{\Dtr}}f(x).\label{eq:lowerInfBound}
\end{align}
By definition, these UBs are the tightest possible bounds that cover every $f\in \HC_{\Dtr}$ (i.e., $\lUBp(x)\le f(x)\le \uUBp(x)\  \forall x \in \X$). 
From a Bayesian perspective, such bounds correspond to credible bounds for $\alpha=1$ if the support of the prior is contained in $\HC$. Interestingly, if $\HC$ is the class of regularized NNs, these bounds can also be interpreted as an approximation of credible bounds for $\alpha<1$ with respect to a Gaussian prior on the parameters of a NN (see Appendix A.2. for a derivation).\footnote{Standard BNNs also aim to approximate the posterior coming from exactly this prior.}

In applications like BO, when optimizing an acquisition function based on these pointwise-defined bounds, we require the UBs for \emph{all} $x \in \X$. Thus, numerically optimizing such an acquisition function is practically infeasible, as it would require solving the optimization problems from \eqref{eq:upperSupBound} millions of times.
In NOMU, we circumvent this problem by constructing the UBs for \textit{all} $x\in \X$ simultaneously. We can do so by only solving a \emph{single} optimization problem, i.e., minimizing the NOMU loss from \eqref{eq:lmu2}.

In the following theorem, we show that these pointwise-defined UBs can be computed by solving a single optimization problem under the following assumption.%

\begin{assumption}[Upwards Directed]\label{assumption:upwards direction}
For every $f_1,f_2 \in \HC_{\Dtr}$ there exists an $f \in \HC_{\Dtr}$ such that $f(x)\ge \max(f_1(x),f_2(x))$ for all $x \in X$.
\end{assumption}

\begin{theorem}[Single optimization problem]\label{thm:approxOfUBs}
Let $X=\prod_{i=1}^{d}[a_i,b_i]\subset\R^d,\, a_i<b_i$, let $Y=\R$, and let $\Dtr$ be a nonempty set of training points. Furthermore, let $\HC_{\Dtr}\subset (C(X,Y),\|\cdot\nolinebreak\|_\infty)$ be compact and \emph{upwards directed} and $\fhat\in \HC_{\Dtr}$. Then, for every strictly-increasing and continuous $u:\R\to\R$, it holds that
\begin{align}\label{eq:IntegralEqualityUB}
    \uUBp=\argmax_{h\in \HC_{\Dtr}}\int_{X}\hspace{-0.15cm}u(h(x)-\fhat(x))\,dx.\hspace{-0.15cm}
\end{align}
\end{theorem}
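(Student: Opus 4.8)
The plan is to prove the identity in two stages: first show that the pointwise worst-case bound $\uUBp$ is itself a member of the compact class $\HC_{\Dtr}$ (so that $\uUBp$ can even be a candidate for the $\argmax$), and then show it is the \emph{unique} maximizer of the functional $h\mapsto \int_X u(h(x)-\fhat(x))\,dx$. Throughout, note that all integrands are continuous and bounded on the compact set $X$ (with $0<\lambda_d(X)<\infty$ since $a_i<b_i$), hence integrable, so the functional is well defined.

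\textbf{Stage 1 (the key step).} Because $X$ is a compact metric space, $(C(X,Y),\|\cdot\|_\infty)$ is separable, so $\HC_{\Dtr}$ admits a countable dense subset $\{f_k\}_{k\in\N}$. A short approximation argument shows $\uUBp(x)=\sup_k f_k(x)$ for every $x$: the nontrivial bound $\uUBp(x)\le\sup_k f_k(x)$ follows from density (a function $f\in\HC_{\Dtr}$ nearly attaining the supremum at $x$ is uniformly $\epsilon$-close to some $f_k$), while $\sup_k f_k(x)\le\uUBp(x)$ is immediate. I then use Assumption~\ref{assumption:upwards direction} to build a pointwise nondecreasing sequence $g_n\in\HC_{\Dtr}$ with $g_n\ge\max(f_1,\dots,f_n)$, by iterating the upwards-directed property. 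Then $g_n(x)\uparrow\uUBp(x)$ pointwise while $g_n(x)\le\uUBp(x)$. Compactness of $\HC_{\Dtr}$ yields a subsequence converging in $\|\cdot\|_\infty$ to some $g^\ast\in\HC_{\Dtr}$ (a compact subset of a metric space is sequentially compact and closed); since uniform convergence forces pointwise convergence, $g^\ast=\uUBp$, and therefore $\uUBp\in\HC_{\Dtr}$ and is in particular continuous. I expect this to be the main obstacle, since it is where separability, upwards-directedness, and compactness must interact; the delicate point is ruling out that the pointwise supremum leaves the class or fails to be continuous, which is exactly what compactness (together with the monotone sequence) buys us.

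\textbf{Stage 2 (optimality and uniqueness).} By the definition of the pointwise supremum, $\uUBp(x)\ge h(x)$ for all $h\in\HC_{\Dtr}$ and all $x\in X$, so strict monotonicity of $u$ gives $u(\uUBp(x)-\fhat(x))\ge u(h(x)-\fhat(x))$ pointwise; integrating over $X$ shows $\uUBp$ maximizes the functional. For uniqueness, I would suppose $h\in\HC_{\Dtr}$ is also a maximizer. Then the function $x\mapsto u(\uUBp(x)-\fhat(x))-u(h(x)-\fhat(x))$ is continuous, nonnegative, and has vanishing integral over $X$; since a nonnegative continuous integrand with zero integral over a set of positive Lebesgue measure vanishes identically, we get $u(\uUBp(x)-\fhat(x))=u(h(x)-\fhat(x))$ for all $x$. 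Injectivity of the strictly increasing $u$ then yields $h(x)=\uUBp(x)$ for all $x$, i.e.\ $h=\uUBp$, which establishes \eqref{eq:IntegralEqualityUB} and completes the proof.
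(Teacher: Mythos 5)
Your proposal is correct, but it takes a genuinely different route from the paper's proof. The paper never attempts to show directly that $\uUBp\in\HC_{\Dtr}$: it first fixes a maximizer $h^*$ of the functional (existence follows from compactness of $\HC_{\Dtr}$ and sup-norm continuity of $h\mapsto\int_X u(h(x)-\fhat(x))\,dx$), notes $\uUBp\ge h^*$ trivially, and then argues by contradiction that $h^*\ge \uUBp$: if $\uUBp(x')>h^*(x')$ at some $x'$, it takes $f_{x'}:=\argmax_{f\in\HC_{\Dtr}}f(x')$ (compactness again), obtains a neighborhood of $x'$ on which $f_{x'}>h^*$, applies the upwards-directed property \emph{once} to dominate $\max(f_{x'},h^*)$, and concludes that the dominating function has strictly larger objective since the neighborhood has positive measure --- contradicting optimality of $h^*$. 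Membership $\uUBp\in\HC_{\Dtr}$ then falls out as a by-product of $h^*=\uUBp$. You instead prove membership first, via a countable dense subset of the class, countably many iterated applications of upwards-directedness to build a monotone sequence $g_n\uparrow\uUBp$, and compactness to extract a uniform limit; after that, maximality is immediate from pointwise domination and uniqueness follows from the standard fact that a continuous nonnegative function with vanishing integral on a nondegenerate box is identically zero. Each approach has its merits: the paper's is shorter, uses the directedness assumption only once, and transfers verbatim to its more general version (Theorem~\ref{thm:gernelazidApproxOfUBs}, where $X$ is merely a topological space carrying a finite measure with full support); yours isolates a structural fact of independent interest --- the pointwise supremum is \emph{attained} in $\HC_{\Dtr}$ --- and avoids having to invoke existence of a maximizer of the integral functional altogether. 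One small remark: you justify separability via $X$ being compact metric, but it is cleaner (and more general) to note that $\HC_{\Dtr}$ itself, being a compact subset of a metric space, is separable; with that substitution your argument would also cover the paper's general topological setting.
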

In words, $\uUBp$ can be calculated via the single optimization problem~\eqref{eq:IntegralEqualityUB} on $\HC_{\Dtr}$.\footnote{We formulate \Cref{thm:approxOfUBs} for upper UBs~$\uUBp$. The analogous statement also holds for lower UBs~$\lUBp$.}
\begin{proof}
See \Cref{subsec:ProofofTheorem}.
\end{proof}
In practice, \Cref{assumption:upwards direction} can be violated such that a straightforward calculation of the r.h.s. of \eqref{eq:IntegralEqualityUB} for an \emph{arbitrary} $u$ would result in unreasonable UBs. However, for a sensible choice of $u$, NOMU's UBs based on the r.h.s. of \eqref{eq:IntegralEqualityUB} still satisfy our Desiderata
\hyperref[itm:Axioms:trivial]{D1}\crefrangeconjunction\hyperref[itm:Axioms:UncertaintyDecreasesWithMoreTrainingPoints]{D5},
similar to $\uUBp$ (see \Cref{subsec:A note on Theorem} for a discussion).

The connection of NOMU's UBs to the pointwise worst-case bounds $\lUBp, \uUBp$ becomes clear by observing that minimizing NOMU's loss function $L^{\hp}$ (Equation~\eqref{eq:lmu2}) can be interpreted as solving the r.h.s of \eqref{eq:IntegralEqualityUB} for a specific choice of $u$, when $\HC$ is the class of regularized NNs. In detail:
\begin{itemize}[leftmargin=*,topsep=0pt,partopsep=0pt, parsep=0pt]
\item Term~\terma{} of the NOMU-loss \eqref{eq:lmu2} implements that $\fhat$ solves the regression task and thus $\fhat\in \HC_{\Dtr}$ up to numerical precision (if the regularization $\lambda$ is small enough).
\item Term~\termb{} enforces $\sigmodelhatraw(\xtr_i)\approx0$  and thus when defining $h:=\fhat+\sigmodelhatraw$, we directly obtain $h(\xtr_i)\approx\ytr_i$  corresponding to the constraint $h\in\HC_{\Dtr}$ in \eqref{eq:IntegralEqualityUB}.
\item While terms~\terma{} and \termb{} enforce the constraints of \eqref{eq:IntegralEqualityUB}, term~\termc{} is the objective function of \eqref{eq:IntegralEqualityUB} for the specific choice of ${u(z):=-e^{-\cexp z}, \cexp \in \Rpz}$.%
\end{itemize}

\subsection{Bayesian Interpretation of Pointwise Worst-Case Uncertainty Bounds}\label{subsec:Pointwise Uncertainty Bounds}
In this section, we provide a Bayesian interpretation of the pointwise worst-case UBs $\lUBp,\uUBp$ and elaborate on \emph{relative uncertainties}.

In the following, we denote by $\NNf:\X\to\Y$  a (standard) NN for model predictions. Note that $\NNf$ does not represent the whole NOMU architecture but can be used as $\fhat$-sub-network in the NOMU architecture~$\NN_\theta$. Furthermore, we consider the hypothesis class of regularized NNs, i.e., $\HC:=\left\{\NNf: \twonorm[\theta]\leq\gamma\right\}$. Recall that one needs to assume that the prior is fully concentrated on $\HC$ in order to interpret the pointwise UBs $\lUBp,\uUBp$ as $\alpha{=}1$ CBs. In the following, we will present an alternative Bayesian interpretation of $\uUBp$.

Many other approaches (MC dropout, BNNs) assume a Gaussian prior on the parameters of the NNs, i.e, $\theta\sim\mathcal{N}(0,\sigma^2_{\theta}I)$, and try to approximate the corresponding posterior CBs. Interestingly, $\lUBp$ and $\uUBp$ can also be seen as approximations of $\alpha{<}1$ CBs in the case of such a Gaussian prior on the parameters. This can be seen as follows:

Let the data generating process be given as $y=\NNf+\varepsilon$, with $\varepsilon\sim\mathcal{N}(0,\signoise^2)$.\footnote{For simplicity we assume homoskedastic noise in this section.} For a Gaussian prior on the parameters $\theta\sim\mathcal{N}(0,\sigma^2_{\theta}I)$ the negative log posterior can be written as
\begin{align}
    -\log(p(\theta|\Dtr))=&\frac{1}{2\signoise^2}\overbrace{\sum_{i=1}^{\ntr} \left(\NNf(\xtr_i)-\ytr_i\right)^2}^{=:L(\theta)}\nonumber\\
    &+\frac
{\twonorm[\theta]^2}{2\sigma_{\theta}^2}+\ntr\log(\signoise)+C_{\ntr,\sigma_{\theta}}.
\end{align}
for a constant $C_{\ntr,\sigma_{\theta}}:=\frac{\ntr}{2}\log(2\pi)+\frac{1}{2}\log(2\pi\sigma^2_{\theta})$. Then the pointwise upper UBs can be reformulated to
\begin{align}
    \uUBp(x)&\overset{\text{def.}}{=}\sup_{f\in \HC_{\Dtr}}f(x)=\lim\limits_{\signoise\to 0}\sup_{f\in \tilde{\HC}_{\Dtr}^{\signoise}}f(x)
    \label{eq:upperSupBoundAppendix}
\end{align}
with
\begin{subequations}
\begin{align}
\tilde{\HC}_{\Dtr}^{\signoise}:&=\left\{\NNf: \frac{\sigma_{\theta}^2}{\signoise^2}L(\theta) +\twonorm[\theta]^2\leq\gamma\right\}\\
&=\left\{\NNf: \log(p(\theta|\Dtr))\geq
\tilde{\gamma}_{\signoise}\right\}\label{subeq:BayesianHDtr}
\end{align}
\end{subequations}

where $\tilde{\gamma}_{\signoise}:=-\frac{\gamma}{2\sigma_{\theta}^2}-\ntr\log(\signoise)-C_{\ntr,\sigma_{\theta}}$.

Therefore, for small data noise $\signoise\approx0$ we obtain from \cref{eq:upperSupBoundAppendix,subeq:BayesianHDtr} that
\begin{align}\label{eq:pseudoHPDI}
\uUBp(x)\approx\sup_\theta\left(\NNf(x):p(\theta|\Dtr)>e^{\tilde{\gamma}_{\signoise}}\right).
\end{align}
In words, from a Bayesian point of view, we seek the highest value $\NNf(x)$ for which the posterior density $p(\theta|\Dtr)>e^{\tilde{\gamma}_{\signoise}}$, which can be seen as a heuristic to approximate CBs analogously to the MAP on the parameter-space as a popular heuristic to approximate the posterior mean.

\begin{remark}
If $p(\NNf(x)|\Dtr)=p(\theta|\Dtr)$ and this posterior is unimodal, \cref{eq:pseudoHPDI} is a classical \emph{highest posterior density interval (HPDI)}. However, typically $p(\NNf(x)|\Dtr)\neq p(\theta|\Dtr)$. Thus, NOMU is only a heuristic to approximate Gaussian BNN posterior CBs (analogously to the MAP). \citet{HeissPart3MultiTask} show that such heuristics can perform better than exact BNNs (e.g., in contrast to BNNs, NOMU fulfills \hyperref[itm:Axioms:irregular]{D4} also for infinite-width).
\end{remark}

\subsubsection{From Absolute to Relative Model Uncertainty}\label{subsubsec:relativeModelUncertainty}
If the prior scale (e.g., $\gamma$ or $\sigma_{\theta}$ in the above mentioned approaches) is known, in theory no further calibration is needed and one can interpret the resulting UBs in \emph{absolute} terms \citep{pmlr-v80-kuleshov18a}. However, typically, the prior scale is unknown and the resulting UBs can only be interpreted in terms of \emph{relative} model uncertainty (i.e., how much more model uncertainty does one have at one point $x$ compared to any other point $x'$?) and in a second step careful calibration of the resulting UBs is required.

Thus, there are two (almost) independent problems: First, the fundamental concept of how to estimate relative model uncertainty (such as MC dropout, deep ensembles, hyper deep ensembles or NOMU) and second, the calibration of these UBs. In this paper, we do not mix up these two challenges and only focus on the first one. Furthermore, desiderata \hyperref[itm:Axioms:trivial]{D1}\crefrangeconjunction\hyperref[itm:Axioms:irregular]{D4} and our metrics \AUC\ and \MNLPD\ do not depend on the scaling of the uncertainty. Whenever  we use \NLPD\ as a metric, we make sure to calibrate the uncertainties by a scalar $c$ to ensure that our evaluations do not depend on the scaling of the uncertainty predictions.

\subsection{A Note on \Cref{thm:approxOfUBs} }\label{subsec:A note on Theorem}
In practice, the set~$\HC_{\Dtr}$ often is not upwards directed for typical NN-architectures and \Cref{eq:IntegralEqualityUB} of \Cref{thm:approxOfUBs} is not fulfilled in general.
Indeed, $h^*:=\argmax_{h\in \HC_{\Dtr}}\int_{X}u(h(x)-\fhat(x))\,d(x)$ can be much more overconfident than $\uUBp$. 
However, in the following we will motivate why for a suitably chosen $u$ the relative model uncertainty of $h^*$ still fulfills the desiderata
\ref{itm:Axioms:trivial}, \ref{itm:Axioms:ZeroUncertaintyAtData} and \ref{itm:Axioms:largeDistantLargeUncertainty}.

\begin{remark}
In our proposed final algorithm from Section~\ref{sec:Neural Optimization-based Model Uncertainty} NOMU, we incorporated \ref{itm:Axioms:irregular} by  
modifying the network architecture as presented in \Cref{subsec:The Network Architecture}.
\end{remark}

\paragraph{Problem} First, we give an example of a specific NN-architecture, where \Cref{thm:approxOfUBs} is not fulfilled due to a violation of the upwards directed assumption. Note that from a Lagrangian perspective it is equivalent to add a term $-\lambda\twonorm[\theta]^2, \lambda\ge0$ to the target function of \eqref{eq:IntegralEqualityUB} instead of bounding $\twonorm[\theta]^2\leq\gamma^2$ as a constraint in $\HC_{\Dtr}$. Moreover, for a specific NN-architecture it is shown that regularizing $\twonorm[\theta]^2$ is equivalent to regularizing the $L_2$-norm of the second derivative of the function $f=\NNf$ \citep{heiss2019implicit1}, i.e., regularizing $\int_{\X}f''(x)^2\,dx$. 
If we choose for example $u=id:\R\to\R$, then increasing $h$ in-between the two close points in the middle of \Cref{fig:Wrong_u_breaks_D2}, would improve the target function of \eqref{eq:IntegralEqualityUB} $\int u(h(x)-\fhat(x))\,dx$ less than the additional regularization cost resulting from the additional second derivative when increasing $h$.
Therefore, $h^*$ will be below the mean prediction $\fhat$ in this region, break \ref{itm:Axioms:trivial}, and $h^*\neq\uUBp$.

\begin{figure}[h!]
    \vskip 0.2in
    \begin{center}
    \centerline{
    \includegraphics[page=1,trim= 180 180 180 175, clip, width=\columnwidth]{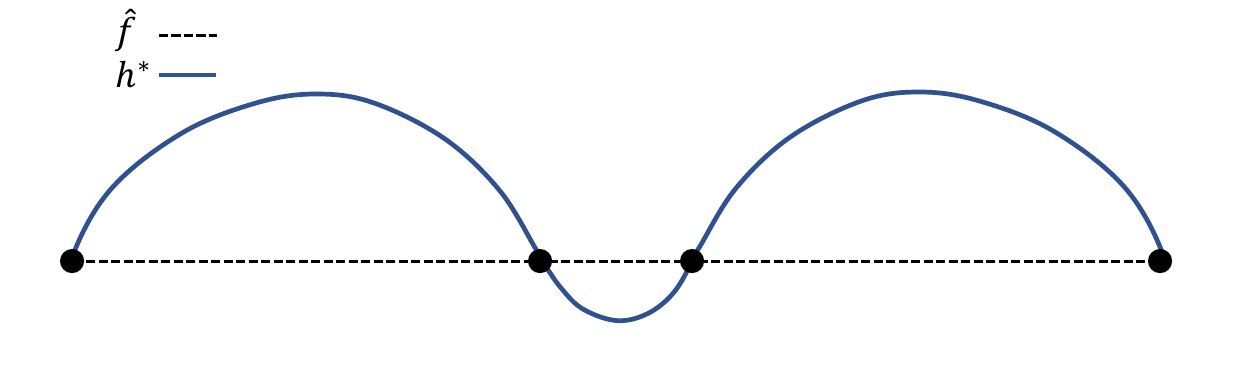}}
    \caption{$h^*$ would not fulfill \ref{itm:Axioms:trivial}, if $u$ was chosen as the identity and the architecture from \citet{heiss2019implicit1} was used.}
    \label{fig:Wrong_u_breaks_D2}
    \end{center}
    \vskip -0.2in
\end{figure}

\paragraph{Solution} However, if e.g., we choose $u:x\mapsto-e^{-\cexp x}$ with $\cexp$ large enough, we highly penalize $h<\fhat$, and thus $\fhat\leq h^*$ \ref{itm:Axioms:trivial} is fulfilled.
Since $h^* \in \HC_{\Dtr}$, it follows that $h^*(\xtr_i)-\fhat(\xtr_i)=0$ for all training points. This together with $\fhat\leq h^*$  implies that $\nabla\left( h^*-\fhat\right)(\xtr_i)=0$ at the training points (in the interior of $\X$), which subsequently
interrupts the downwards trend of $h^*-\fhat$ from one side of a point to the other side of a point as depicted in \Cref{fig:Better_u_fulfills_D2}.

\begin{figure}[h!]
    \vskip 0.2in
    \begin{center}
    \centerline{
   \includegraphics[page=2,trim= 180 180 180 175, clip, width=\columnwidth]{figures/appendix/motivation_exponential_u.pdf}}
    \caption{$h^*$ fulfills \hyperref[itm:Axioms:trivial]{D1}\crefrangeconjunction\hyperref[itm:Axioms:largeDistantLargeUncertainty]{D3} if $u$ prevents it from getting negative.}
    \label{fig:Better_u_fulfills_D2}
    \end{center}
    \vskip -0.2in
\end{figure}

See \Cref{appendix:Desiderata} for a detailed discussion how NOMU fulfills all five desiderata.

\subsection{Proof of \Cref{thm:approxOfUBs}}\label{subsec:ProofofTheorem}
In the following, we prove \Cref{thm:gernelazidApproxOfUBs}, an even more general version of \Cref{thm:approxOfUBs}. The statement of \Cref{thm:approxOfUBs} follows from \Cref{thm:gernelazidApproxOfUBs} by setting
\begin{enumerate}[leftmargin=*,topsep=0pt,partopsep=0pt, parsep=0pt]
    \item $\X=\prod_{i=1}^{d}[a_i,b_i]\subset\R^d$ compact, $Y:=\R$
    \item $\mu=\lambda_d$, where $\lambda_d$ is the Lebesgue measure on $\R^d$ (which is finite on every compact set and has full support).
\end{enumerate}

\begin{theorem}\label{thm:gernelazidApproxOfUBs}
Let $\X$ be a nonempty topological space equipped with a finite measure $\mu$ with full support\footnote{I.e. every nonempty open set has non-zero measure.}, let $\Y$ be a normed and totally ordered space and let $\Dtr$ denote a set of observations. Moreover, let $\HC_{\Dtr}\subset (C(X,Y),\|\cdot\|_\infty)$ be compact, and $\fhat\in \HC_{\Dtr}$. Assume further that $\HC_{\Dtr}$ is upwards directed (see \Cref{assumption:upwards direction}). Then, for every strictly-increasing and continuous $u:Y\to\R$ it holds that
\begin{align}\label{eq:IntegralEqualityUBAppendix}
 \uUBp=\argmax_{h\in \HC_{\Dtr}}\int_{X}\hspace{-0.2cm}u(h(x)-\fhat(x))\,d\mu(x).
\end{align}%
\end{theorem}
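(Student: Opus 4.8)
The plan is to prove the statement in two stages: first to show that the pointwise supremum $\uUBp$ is itself an element of $\HC_{\Dtr}$ (in particular, a \emph{continuous} function), and then to show it is the unique maximiser of the functional $h\mapsto\int_X u(h(x)-\fhat(x))\,d\mu(x)$. For the first stage I would exploit that a compact subset of the metric space $(C(X,Y),\|\cdot\|_\infty)$ is separable and sequentially compact. Fix a countable $\|\cdot\|_\infty$-dense subset $\{g_k\}_{k\in\N}\subset\HC_{\Dtr}$. Using the upwards-directed assumption, define inductively $h_1:=g_1$ and choose $h_{n+1}\in\HC_{\Dtr}$ with $h_{n+1}\ge\max(h_n,g_{n+1})$ pointwise; then $(h_n)$ is pointwise non-decreasing and dominates every $g_k$. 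Because $\{g_k\}$ is dense, for any $f\in\HC_{\Dtr}$ and any $\varepsilon>0$ some $g_k$ satisfies $g_k>f-\varepsilon$ everywhere, so $\sup_k g_k(x)=\uUBp(x)$ for every $x$, whence $\sup_n h_n(x)=\uUBp(x)$ pointwise. By sequential compactness a subsequence $h_{n_j}$ converges uniformly to some $h^\ast\in\HC_{\Dtr}$; uniform convergence gives pointwise convergence to $h^\ast$, while monotonicity of $(h_n)$ gives pointwise convergence of the same subsequence to $\uUBp$. Therefore $\uUBp=h^\ast\in\HC_{\Dtr}$.

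For the second stage, note that by definition $h(x)\le\uUBp(x)$ for every $h\in\HC_{\Dtr}$ and every $x$, so strict monotonicity of $u$ yields $u(h(x)-\fhat(x))\le u(\uUBp(x)-\fhat(x))$ pointwise; integrating against the finite measure $\mu$ shows that $\uUBp$ maximises the functional. For uniqueness, suppose $h\in\HC_{\Dtr}$ attains the same value. Then the nonnegative continuous integrand $u(\uUBp(x)-\fhat(x))-u(h(x)-\fhat(x))$ has vanishing integral, hence equals zero $\mu$-almost everywhere; injectivity of the strictly increasing $u$ gives $h(x)=\uUBp(x)$ for $\mu$-a.e.\ $x$. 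Since $h-\uUBp$ is continuous, the set $\{h\ne\uUBp\}$ is open, and being $\mu$-null it must be empty because $\mu$ has full support. Thus $h=\uUBp$ everywhere, establishing $\uUBp=\argmax_{h\in\HC_{\Dtr}}\int_X u(h-\fhat)\,d\mu$.

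I expect the first stage to be the main obstacle: the pointwise supremum of an arbitrary compact family need not be continuous, and it is precisely the upwards-directed assumption, combined with separability and sequential compactness, that rescues continuity and membership in $\HC_{\Dtr}$. A minor technical point to settle along the way is integrability of $u(h-\fhat)$: compactness of $\HC_{\Dtr}$ bounds $\|h-\fhat\|_\infty$ uniformly, so the arguments lie in a fixed norm-ball, and (for $Y=\R$, hence on a compact interval) $u$ is bounded there, which together with finiteness of $\mu$ makes every integral finite and renders the functional $\|\cdot\|_\infty$-continuous by dominated convergence.
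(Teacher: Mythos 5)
Your proof is correct, but it takes a genuinely different route from the paper's. The paper starts from an optimizer $h^*$ of the functional (which exists by compactness of $\HC_{\Dtr}$ and $\|\cdot\|_\infty$-continuity of $h\mapsto\int_X u(h-\fhat)\,d\mu$) and shows $h^*=\uUBp$ by a local contradiction: if $h^*(x')<\uUBp(x')$ at some $x'$, it takes $f_{x'}\in\argmax_{f\in\HC_{\Dtr}}f(x')$, obtains a neighbourhood $U_{x'}$ on which $f_{x'}>h^*$ by continuity, and then uses the upwards-directed property once, applied to the pair $(f_{x'},h^*)$, to produce $\tilde h\ge\max(f_{x'},h^*)$ which strictly improves the objective on a set of positive measure (full support), a contradiction. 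The paper thus never establishes directly that $\uUBp$ is continuous; its membership in $\HC_{\Dtr}$ falls out only at the end from the identification with $h^*$. You instead attack exactly that point first: your Dini-type construction (countable dense subset of the compact metric space, an increasing sequence built by iterating directedness, sequential compactness to extract a uniform limit that must coincide with the pointwise supremum) shows $\uUBp\in\HC_{\Dtr}$ as an explicit intermediate result, after which maximality is immediate from pointwise domination and monotonicity of $u$, and uniqueness follows from the full-support argument. What each buys: your route isolates a statement of independent interest ($\uUBp$ is continuous and attainable as a uniform monotone limit within the class) and makes the uniqueness of the argmax explicit, at the cost of heavier machinery (separability, subsequence extraction); the paper's route is shorter and uses directedness only locally, one comparison at a time, but leaves the regularity of $\uUBp$ implicit. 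Both arguments use compactness, upwards directedness, and full support in essential and non-interchangeable ways, and both (like the paper's general statement itself) implicitly assume the norm and order on $\Y$ are compatible, which is automatic in the case $\Y=\R$ you flag.
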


\begin{proof}
First note that since $\mu$ is finite and $h, \fhat$ and $\uUBp$ are bounded since $\HC_{\Dtr}$ is assumed to be compact with respect to the $\|\cdot\|_\infty$-topology, it holds that the integral in \eqref{eq:IntegralEqualityUBAppendix} is finite.

Let $h^* \in \HC_{\Dtr}$ denote an optimizer of \eqref{eq:IntegralEqualityUBAppendix}, which
exists since $\HC_{\Dtr}$ is assumed to be compact and the operator $h\mapsto\int_{X}\hspace{-0.05cm}u(h(x)-\fhat(x))\,d\mu (x)$ is continuous on the $\|\cdot\|_\infty$-topology.

We want to show that $h^*(x)=\uUBp(x)$ for every $x\in X$.

Note that per definition for all $x\in X$ and $h\in \HC_{\Dtr}$ it holds that
\begin{align}\label{eq:proofThm0}
\uUBp(x)=\sup\limits_{f\in \HC_{\Dtr}}f(x)\ge h(x).
\end{align}
Thus $\uUBp(x)\ge h^*(x)$ for all $x\in \X.$

For the reverse inequality assume on the contrary that there exists an $x^\prime\in X$ such that
\begin{align}
    \uUBp(x^\prime)>h^*(x^\prime).
\end{align}
Then, we define $f_{x^\prime}:=\argmax_{f\in \HC_{\Dtr}}f(x^\prime)$, which exists because of compactness and continuity. Since $f_{x^\prime}$ and $h^*$ are both continuous and $f_{x^\prime}(x^\prime)=\uUBp(x^\prime)>h^*(x^\prime)$ there exists a neighbourhood $U_{x^\prime}$ of $x^\prime$ such that 
\begin{align}\label{eq:proofThm1}
    f_{x^\prime}(x)>h^*(x) \text{ for all } x \in U_{x^\prime},
\end{align}

Using the \emph{upwards directed} property with $f_1:=f_{x^\prime}$ and $f_2:=h^*$, it follows that there exists a $\tilde{h} \in \HC_{\Dtr}$ with 
\begin{align}\label{eq:proofThm2}
\tilde{h}(x)\ge \max(f_{x^\prime}(x),h^*(x)) \textrm{ for all } x\in \X.
\end{align}
Using \eqref{eq:proofThm1} together with \eqref{eq:proofThm2} implies further that
\begin{align}
   &\tilde{h}(x)\ge h^*(x) \text{ for all } x \in \X \text{ and }\\
   &\tilde{h}(x) > h^*(x) \text{ for all } x \in U_{x^\prime}.
\end{align}
However, since $u$ is strictly increasing and $\mu(U_{x^\prime})>0$ by the full support assumption, we get that
\begin{align}
    \int_{X}\hspace{-0.05cm} u(\tilde{h}(x)-\fhat(x))\,d\mu(x) > \hspace{-0.05cm}\int_{X}\hspace{-0.05cm}u(h^*(x)-\fhat(x))\,d\mu(x),
\end{align}
which is a contradiction to the assumption that $h^*$ is the optimizer of \eqref{eq:IntegralEqualityUBAppendix}. Therefore, it holds that $\uUBp(x)\le h^*(x)$ for all $x \in \X$.

In total we get that $\uUBp(x)=h^*(x)$ for all $x \in \X$, which concludes the proof.
\end{proof}

\begin{remark} 
For our algorithm we select
$\HC:=\left\{\NNf: \twonorm[\theta]\leq\gamma\right\}$ to be the class of regularized NNs with a continuous activation function. Thus, the assumption of $\HC_{\Dtr}$ being compact and a subset of $C(X,Y)$ is fulfilled.
\end{remark}

\section{Experiments}\label{sec:Experiments}

\subsection{Benchmark Methods}\label{subec:Benchmarks}
In this section, we give a brief overview of each considered benchmark algorithm.
\subsubsection{Gaussian Process (GP)}\label{subsubsec:GaussianProcess}
A GP defines a distribution over a set of functions $\{f:\X\to \Y\}$ where every finite collection of function evaluations follows a Gaussian distribution (see \citet{williams2006gaussian} for a survey on GPs). A GP is completely specified by a \emph{mean} $m:\X\to\Y$ and a \emph{kernel} function $k_\hp:\X\times\X\to\Y$, where $\hp$ denotes a tuple of hyper-parameters. More formally, for any finite set of $k\in \N$ input points $\boldsymbol{x}:=\fromto[x_1]{x_k},\, x_i \in \X$ and for $\boldsymbol{f}:=(f_1,\ldots,f_k)\in \Y^k$ with $ f_i:=f(x_i)$ it holds that
\begin{align}
   \boldsymbol{f}\sim \mathcal{N}_k\left(\boldsymbol{m}(\boldsymbol{x}),\gramm{\boldsymbol{x}}{\boldsymbol{x}}\right),
\end{align}
i.e., it follows a $k$-dimensional Gaussian distribution with covariance (or Gramian) matrix $\left[\gramm{\boldsymbol{x}}{\boldsymbol{x}}\right]_{i,j}:=k_\hp(x_i,x_j),$ and mean vector $\boldsymbol{m}(\boldsymbol{x}):=\left(m(x_1),\ldots,m(x_k)\right)$. Let
\begin{align}\label{GP}
f\sim \GP(m(\cdot),k_\hp(\cdot,\cdot)),
\end{align}
denote a GP with mean function $m$ and kernel $k_\hp$.

In the following, we summarize the main steps of GP regression for a 1D-setting and $m\equiv0$.
\begin{enumerate}[topsep=0pt,itemsep=0pt,partopsep=0pt, parsep=0pt,labelindent=0pt,wide]
\item \emph{Define probabilistic model:}
\begin{align}
y=f(x)+\varepsilon.
\end{align}
\item \emph{Specify prior (kernel and data noise):}
\begin{align}
   f&\sim \GP(0,k_\hp(\cdot,\cdot)),\\
   \varepsilon|x&\sim \mathcal{N}\left(0,\signoise^2(x)\right).
\end{align}
\item Calculate \emph{likelihood} for training points $\boldsymbol{x}$ and ${\boldsymbol{y}:=\fromto[y_1]{y_k}}$:
\begin{align}
    \boldsymbol{y}|\boldsymbol{x},\hp \sim \mathcal{N}_k\left(\boldsymbol{0},\gramm{\boldsymbol{x}}{\boldsymbol{x}}+\diag(\signoise^2(\boldsymbol{x})\right),
\end{align}

with $\signoise^2(\boldsymbol{x}):=\left(\signoise^2(x_1),\ldots,\signoise^2(x_k)\right)$.
\item\emph{Optimize kernel hyper-parameters (optional):}
\begin{align}
\hat{\hp}\in \argmax\limits_{\hp} p(\boldsymbol{y}|\boldsymbol{x},\hp).
\end{align}
\item Calculate \emph{posterior predictive distribution} for new point (\xnew):
\begin{align}\label{predpost}
    f(\xnew)|\xnew,\boldsymbol{y},\boldsymbol{x},\hat{\hp}\sim\mathcal{N}\left(\muhat(\xnew),\sigmodelhat^2(\xnew)\right),
\end{align}
where for $A:=\left(\grammhat{\boldsymbol{x}}{\boldsymbol{x}}+\diag(\signoise^2(\boldsymbol{x})\right)$ and $\kernelvechat{\xnew}{\boldsymbol{x}}:=\left(k_{\hat{\hp}}(\xnew,x_1),\ldots,k_{\hat{\hp}}(\xnew,x_k)\right)$ the parameters are given as
\begin{align}
    \muhat(\xnew) := &\kernelvechat{\xnew}{\boldsymbol{x}}A^{-1}f(\boldsymbol{x})\label{eq:meanPredGPR}\\
    \sigmodelhat^2(\xnew) := & k_{\hat{\hp}}(\xnew,\xnew)-\kernelvechat{\xnew}{\boldsymbol{x}} A^{-1} \kernelvechat{\xnew}{\boldsymbol{x}}^T \label{eq:sigPredGPR}.
\end{align}
\end{enumerate}
Setting the data noise to zero $\signoise\equiv0$ in \eqref{eq:meanPredGPR} and \eqref{eq:sigPredGPR} yields the mean prediction $\fhat$ and the model uncertainty prediction $\sigmodelhat^2$ as
\begin{align}
    \hspace{-.1cm}\fhat(\xnew)\hspace{-.05cm}&=\hspace{-.05cm}\boldsymbol{k}_{\hat{\hp}}(\xnew,\boldsymbol{x})\left(\grammhat{\boldsymbol{x}}{\boldsymbol{x}}\right)^{-1}\hspace{-.1cm}f(\boldsymbol{x}),\label{eq:meanPredGPRModelUncertainty}\\
    \hspace{-.1cm}\sigmodelhat^2(\xnew)\hspace{-.05cm}&=\hspace{-.05cm}k_{\hat{\hp}}(\xnew,\xnew)\hspace{-.05cm}-\hspace{-.05cm}\kernelvechat{\xnew}{\boldsymbol{x}}\hspace{-.05cm}\left(\grammhat{\boldsymbol{x}}{\boldsymbol{x}}\right)^{-1}\kernelvechat{\xnew}{\boldsymbol{x}}^T\hspace{-.1cm},\hspace{-.05cm}\label{eq:sigPredGPRModelUncertainty}
\end{align}
which are then used to define the Gaussian process's UBs by $\left(\fhat(x)\mp c~\sigmodelhat(x)\right)$ with a calibration parameter $c \in \Rpz$.

\subsubsection{Monte Carlo Dropout (MCDO)}
Let $\NNf=W^{K}\circ\activation\circ W^{K-1}\circ\ldots\circ\activation\circ W^{1}(x)$ be an NN with $K-1$ hidden layers, activation function $\activation$, and fixed parameters $\theta=\{W^{1},\ldots,W^{K}\}$, which have been trained with added dropout regularization. Furthermore, let $(p_1,\ldots,p_K)$ denote the dropout probability vector used when training $\NNf$, i.e, $p_i$ determines the probability for a single node in the $i$\textsuperscript{th} hidden layer $W^i$ to be dropped in each backpropagation step.\footnote{One could also use different probabilities $p_{ij}$ for each node within a hidden layer. The equations extend straightforwardly.}

To obtain model uncertainty one draws $M$ different NNs according to the dropout probability vector and represents model uncertainty using sample estimates of the mean and variance of the model predictions.\footnote{Alternatively, one could also determine the UBs using empirical upper and lower quantiles of the different model predictions.} These predictions are frequently termed \emph{stochastic forward passes}. More formally, given a dropout probability vector $(p_1,\ldots,p_K)$, one draws $M$ realisations $\{\theta^{(1)},\ldots,\theta^{(M)}\}$ of parameters $\theta$, where $\theta^{(m)}:=\{W^{1,(m)},\ldots,W^{K,(m)}\}$. $W^{k,(m)}$ is obtained from the original hidden layer $W^{k}$ by dropping each column with probability $p_i$, i.e., for $W^{k}\in \R^{d_\mathsmaller{{row}}\times d_\mathsmaller{{col}}}$ set
\begin{align}
&W^{k,(m)}=W^{k}\left[z^{(m)}_1,\ldots,z^{(m)}_{d_\mathsmaller{{col}}}\right],\, z^{(m)}_j\in \R^{d_\mathsmaller{{row}}}\\
&\textrm{where }z^{(m)}_j:=\begin{cases} \boldsymbol{0}, \textrm{ with probability } p_i\\
\boldsymbol{1}, \textrm{ with probability } 1-p_i.
\end{cases}
\end{align}
UBs that represent model uncertainty and \emph{known} data noise $\signoise^2$ are then estimated for each $x\in \X$ as
\begin{align}
    \fhat(x)&:=\dfrac{1}{M}\sum_{m=1}^M\NN^f_{\theta^{(m)}}(x),\label{eq:meanPredMCDO}\\
    \hat{\sigma}^2(x)&:=\underbrace{\dfrac{1}{M}\sum_{m=1}^M\left(\NN^f_{\theta^{(m)}}(x)-\fhat(x)\right)^2}_{\text{model uncertainty}} + \underbrace{\signoise^2(x)}_{\text{data noise}}.\label{eq:sigPredMCDO}
\end{align}
The model uncertainty prediction $\sigmodelhat^2$ is then given as
\begin{align}
    \sigmodelhat^2(x)&:=\dfrac{1}{M}\sum_{m=1}^M\left(\NN^f_{\theta^{(m)}}(x)-\fhat(x)\right)^2\label{eq:sigPredMCDOModelUncertainty}
\end{align}
which defines Mc dropout's UBs as $\left(\fhat(x)\mp c~\sigmodelhat(x)\right)$ with a calibration parameter $c \in \Rpz$.

\subsubsection{Deep Ensembles (DE)}\label{subsubsec:Deep Ensembles (DE)}
Deep ensembles consists of the following two steps:\footnote{\citet{lakshminarayanan2017simple} also considered in their paper a third step: \emph{adversarial training}. However, as the authors point out, the effectiveness of adversarial training drops quickly as the number of networks in the ensemble increases. Therefore, we do not consider adversarial training in this paper.}
\begin{enumerate}[leftmargin=*,topsep=0pt,itemsep=0pt,partopsep=0pt, parsep=0pt]
    \item Use a NN to define a predictive distribution $p_\theta(y|x)$, select a pointwise loss function (proper scoring rule) $\ell(p_\theta,(x,y))$, which measures the quality of the predictive distribution $p_\theta(y|x)$ for an observation $(x,y)$ and define the empirical loss used for training as
    \begin{align}
        L(\theta):=\sum_{(x,y)\in \Dtr}\ell(p_\theta,(x,y)).
    \end{align}
    \item Use an ensemble of NNs, each with different randomly initialized parameters to represent model uncertainty.
\end{enumerate}
Concretely, for regression, \citet{lakshminarayanan2017simple} use a NN $\NNf$ with two outputs: $\muhat[\theta]$ (mean prediction) and $\signoisehat[\theta]$ (data noise prediction) and train it using as pointwise loss function the Gaussian negative log-likelihood, i.e, 
\begin{align}
    p_\theta(y|x):=&\mathcal{N}\left(y;\muhat[\theta](x),\left(\signoisehat[\theta](x)\right)^2\right),\\
    \ell(p_\theta,(x,y)):=&\frac{\log\left(\left(\signoisehat[\theta](x)\right)^2\right)}{2} + \frac{(\muhat[\theta](x)-y)^2}{2\left(\signoisehat[\theta](x)\right)^2}\label{eq:scoringRule}.
\end{align}
To add model uncertainty, \citet{lakshminarayanan2017simple} use an ensemble of $M$ NNs $\{\NN^f_{\theta^{(1)}},\ldots,\NN^f_{\theta^{(M)}}\}$, where each NN outputs a mean and data noise prediction, i.e, for $x\in \X$ and $m \in \fromto[1]{M}$
\begin{align}
\NN^f _{\theta^{(m)}}(x):=\left(\muhat[\theta^{(m)}](x),\signoisehat[\theta^{(m)}](x)\right).    
\end{align}
This then defines the learned predictive distribution for each NN in regression as
\begin{align}
    p_{\theta^{(m)}}(y|x)=\mathcal{N}\left(y;\muhat[\theta^{(m)}](x),\left(\signoisehat[\theta^{(m)}](x)\right)^2\right).
\end{align}
Finally, the ensemble is treated as uniformly-weighted Gaussian mixture model, i.e.,
\begin{align}
    \frac{1}{M}\sum_{m=1}^Mp_{\theta^{(m)}}(y|x),
\end{align}
which is further approximated using a single Gaussian by matching the first and second moments. The deep ensemble predictions for the mean $\fhat$ and for the combined model uncertainty and data noise $\sigmahat^2$ are given as
\begin{align}
    \fhat(x)&:=\frac{1}{M}\sum_{m=1}^M\muhat[\theta^{(m)}](x),\label{eq:meanPredDE}\\
    \sigmahat^2(x)&:=\underbrace{\frac{1}{M}\sum_{m=1}^M\left(\signoisehat[\theta^{(m)}](x)\right)^2}_{\text{data noise}}\hspace{-0.1cm}+\underbrace{\frac{1}{M}\sum_{m=1}^M\left[\muhat[\theta^{(m)}](x)-\fhat(x)\right]^2}_{\text{model uncertainty}}\hspace{-0.1cm}.\label{eq:sigPredDE}
\end{align}\label{eq:MeanAndUncertaintyDE}%
The model uncertainty prediction $\sigmodelhat^2$ is then given as
\begin{align}
    \sigmodelhat^2(x)&:=\frac{1}{M}\sum_{m=1}^M\left[\muhat[\theta^{(m)}](x)-\fhat(x)\right]^2\hspace{-0.1cm}.\label{eq:sigPredDEModelUncertainty}
\end{align}
which defines deep ensembles' UBs as $\left(\fhat(x)\mp c~\sigmodelhat(x)\right)$ with a calibration parameter $c \in \Rpz$.

\begin{remark}
For known data noise $\signoise$, no estimation is required and one can use an NN $\NNf$ with only one output $\muhat[\theta]$. If additionally the data noise $\signoise$ is assumed to be homoskedastic, one can train $\NNf$ using the mean squared error (MSE) with suitably chosen L2-regularization parameter $\lambda$. To obtain predictive bounds instead of credible bounds, one can add $\signoise^2$ to \eqref{eq:sigPredDEModelUncertainty} at the end.
\end{remark}
\subsubsection{Hyper Deep Ensembles (HDE)}
Hyper deep ensembles (HDE) \citep{wenzel2020hyperparameter} is a simple extension of DE. In HDE, the ensemble is designed by varying not only weight initializations, but also hyperparameters. HDE involves (i) a random search over different hyperparameters and (ii) stratification across different random initializations. HDE inherits all the components (e.g., the architecture, or the loss function) from DE, which are presented in detail in \Cref{subsubsec:Deep Ensembles (DE)}. Specifically, formulas \eqref{eq:meanPredDE} and \eqref{eq:sigPredDE} for the mean prediction $\fhat$ and the model uncertainty prediction $\sigmodelhat^2(x)$ are the same for HDE.

Let $\NN^f_{\theta^{(m)},\hp^{(m)}}$ denote a NN for model predictions with weights $\theta^{(m)}$ and hyperparameters $\hp^{(m)}$ (e.g., the dropout rate). Furthermore, let $\texttt{rand\_search}(\kappa)$ denote the random search algorithm from \citep{bergstra2012random}, where $\kappa$ is the desired number of different NNs with randomly sampled hyperparameters.

The only difference of HDE compared to DE is the procedure how the ensemble is built, which we reprint in Algorithm~\ref{alg:hyper_deep_ens}. Algorithm~\ref{alg:hyper_deep_ens} uses as subprocedure Algorithm~\ref{alg:hyper_ens} from \citep{caruana2004ensemble}, which we present first.

Algorithm~\ref{alg:hyper_ens} greedily grows an ensemble among a given pre-defined set of models $\mathcal{M}$, until some target size $M$ is met, by selecting \emph{with-replacement} the NN leading to the best improvement of a certain score $\mathcal{S}$ on a validation set.\footnote{Note that we use a slightly different notation than \citet{wenzel2020good} here.}
\vskip 0.1in
\begin{algorithm}[ht]
    \caption{$\texttt{hyper\_ens}\,$ \citep{caruana2004ensemble}}
    \label{alg:hyper_ens}
    \begin{algorithmic}
    \INPUT{$\mathcal{M}$, $M$}
        \STATE Ensemble $\mathcal{E}:=\{\}$; Score $\mathcal{S}(\cdot)$; $s_{best}=+\infty$\;
        \WHILE{$|\mathcal{E}.\textrm{unique}()|\le M$}
        \STATE
        $\NN^f_{\theta^{*}}\in \argmin_{\NN^f_{\theta}\in \mathcal{M}}\mathcal{S}\left(
        \mathcal{E}\cup\{\NN^f_{\theta}\}\right)$\;
        \IF{$\mathcal{S}\left(\mathcal{E}\cup\{\NN^f_{\theta}\}\right)<s_{best}$}
        \STATE{
        $\mathcal{E} = \mathcal{E}\cup\{\NN^f_{\theta}\}$}
        \ELSE{\OUTPUT$\mathcal{E}$}
        \ENDIF
        \ENDWHILE
        \OUTPUT $\mathcal{E}$
    \end{algorithmic}
\end{algorithm}\vskip 0.1in
Note that the Algorithm~\ref{alg:hyper_ens} does not require the NNs to
have the same random weight initialization. However, \citet{wenzel2020hyperparameter} consider a fixed initialization for HDE to isolate the effect of just varying the hyperparameters.

Finally, Algorithm~\ref{alg:hyper_deep_ens} builds an ensemble of at most $M$ unique NNs which can exploit both sources of diversity: different \emph{random initializations} and different choices of \emph{hyperparameters}.
\begin{algorithm}[t!]
    \caption{$\texttt{hyper\_deep\_ens}\,$\citep{wenzel2020hyperparameter}}
    \label{alg:hyper_deep_ens}
    \begin{algorithmic}
        \INPUT{$M$, $\kappa$}
        \STATE $\mathcal{M}_0=\{\NN^f_{\theta^{(j)},\lambda^{(j)}}\}_{j=1}^{\kappa}\leftarrow \texttt{rand\_search}(\kappa)$
        \STATE $\mathcal{E}_0 \leftarrow \texttt{hyper\_ens}(\mathcal{M}_0,M)$
        \STATE $\mathcal{E}_{strat}=\{\}$
        \FORALL{$\NN^f_{\theta,\lambda} \in \mathcal{E}_0.\textrm{unique}()$}{\FORALL{$m \in \fromto[1]{M}$}\STATE
            $\theta^{\prime} \leftarrow \textrm{random initialization (seed number $m$)}$
            \STATE$\NN^f_{\theta^{(m)},\lambda} \leftarrow \textrm{train }\NN^f_{\theta^{\prime},\lambda}$
            \STATE$\mathcal{E}_{strat}=\mathcal{E}_{strat}\cup \{\NN^f_{\theta^{(m)},\lambda}\}$\;
            \ENDFOR}\ENDFOR
        \OUTPUT{$\texttt{hyper\_ens}(\mathcal{E}_{strat},M)$}
    \end{algorithmic}
\end{algorithm}
\subsection{Regression}\label{subsec:RegressionAppendix}
\subsubsection{Metrics}\label{subsec:performanceMeasures}
In this section, we provide details on the three metrics, which we use to assess the quality of UBs in the regression settings. In the following, let 
\begin{align}\label{eq:UBsAppednix}
    \left(\lUB_c(x),\uUB_c(x)\right):=\left(\fhat(x)\mp c~\sigmodelhat(x)\right)
\end{align}
denote UBs obtained from any of the considered models via a model prediction $\fhat$, a model uncertainty prediction $\sigmodelhat$, and a calibration parameter $c\in \Rpz$. Furthermore, we use the following shorthand notation:
\begin{align}\label{eq:UBsshort}
    \UBc(x):=\left(\lUB_c(x),\uUB_c(x)\right).
\end{align}
In the following, let $D:=\{\left( x_i,y_i\right)\in\X\times\Y, i\in \fromto{n}\},$ with $n\in\N$ denote a set of input-output points (depending on the specific purpose, $D$ would typically refer to a validation or test set).

\paragraph{Mean Width vs. Coverage Probability} We first formalize the concepts of mean width (MW) and coverage probability (CP). Then we define the metric \AUC{}. 

\begin{definition}[Coverage Probability]\label{def:coverageProbability}
Let $D$ denote a set of input-output points, and let $\UBc$ denote UBs for a calibration parameter $c$. Then the coverage probability is defined as
\begin{align}
    \hspace{-0.1cm}\CP\left(D\,|\,\UBc\right):=\frac{1}{|D|}\sum_{(x,y)\in D}\hspace{-0.25cm}\mathbbm{1}_{\left\{\lUB_c(x)\le y \le \uUB_c(x)\right\}}.
\end{align}
\end{definition}
\begin{definition}[Mean Width]\label{def:meanWidth}
Let $D$ denote a set of input-output points, and let $\UBc$ denote UBs for a calibration parameter $c$. Then the mean width is defined as
\begin{align}\label{eq:MeanWidth}
    &\MW\left(D\,|\,\UBc\right):=\frac{1}{|D|}\sum_{(x,y)\in D}\hspace{-0.2cm}|\uUB(x)-\lUB(x)|.
\end{align}
\end{definition}

\begin{remark}
Note that in Definition~\ref{def:meanWidth}, uncovered points at which UBs (for some fixed calibration parameter $c$) are narrow have a positive effect on overall mean width. In order not to reward such overconfident mispredictions, a possible remedy is to consider in \eqref{eq:MeanWidth} only the subset $\Dcapt\subset D$ of input-output points captured by the UBs, i.e.,
\[\Dcapt:=\left\{(x,y) \in D: \lUB(x)\le y\le \uUB(x)\right\}.\]
However, focusing on captured data points only punishes UBs that capture some points with large widths and almost cover others with small widths unnecessarily harshly compared to UBs for which the reverse is true. In other words, a slight change in calibration parameter $c$ can lead to very diverse evaluations of UBs that have been assessed almost equal under the original $c$.
Since ultimately we are interested in comparing UBs based on a range of calibration parameters (see \Cref{metric:AUC}) we decided to include all points in the calculation of \MW\ in our experiments.
\end{remark}

Ideally, $\MW$ should be as small as possible, while $\CP$ should be close to its maximal value $1$. Clearly, $\CP$ is counter-acting $\MW$. This naturally motivates considering ROC-like curves, plotting $\MW$ against $CP$ for a range of calibration parameters $c$, and comparing different UBs based on their \emph{area under the curves} ($\AUC$).

\begin{measure}[AUC]\label{metric:AUC}
Let $D$ denote a set of input-output points. Define further $c^*$ as the minimal calibration parameter achieving full coverage of $D$ for given UBs $\UBc$, i.e., 
\[c^*:=\argmin_{c\ge0}\{\CP\left(D\,|\,\UBc\right) = 1\}.\]
\AUC\ is then defined as the integral of the following curve
\[\{(\CP\left(D\,|\,\UBc\right),\,  \MW\left(D\,|\,\UBc\right))\, :\, c\in [0, c^*]\}.\footnote{In our experiments, we approximated this integral via the trapezoidal rule.}\]
\end{measure}

\paragraph{Negative Log-likelihood}
Next, we first define the second metric: \emph{average\footnote{We remark that \NLPD\ can be interpreted as average marginal predictive density over the set $D$, assuming these marginals are Gaussian with mean $\fhat$ and variance $c\sigmodelhat$. In particular, we refrain from posing assumptions of posterior independence, that we believe are highly flawed, i.e., \NLPD\ should not be interpreted as joint negative log predictive density.} negative log (Gaussian) likelihood (NLL)} and then present our third metric $\MNLPD{}$.

\begin{measure}[NLL]\label{metric:NLL}
Let $D$ denote a set of input-output points, and let $\UBc$ denote UBs for a calibration parameter $c$ with corresponding model prediction ${\fhat:\X\to \R}$ and model uncertainty prediction $\sigmodelhat:\X \to \Rpz$. Then \NLPD\ is defined as
\begin{align}\label{nlpd}
&\NLPD(D|\UBc):=\\
&\frac{1}{|D|}\sum_{(x,y) \in D}\left[\frac{\left(y-\fhat(x)\right)^2}{2 \left(c\sigmodelhat(x)\right)^2} +  \ln\left(c\sigmodelhat(x)\right)\right]+\ln(2\pi)/2 \nonumber
\end{align}
\end{measure}
The first term in \NLPD\ measures the error of the model prediction, where large errors can be attenuated by large uncertainties while the second term penalizes logarithmically larger uncertainties. Thus, \NLPD\ penalizes both over-confident ($c\sigmodelhat(x)\approx0$) wrong predictions as well as under-confident ($c\sigmodelhat(x)\gg0$) predictions.

We define the third metric as the minimal value of the \NLPD\ when varying the calibration parameter $c$.
\begin{table*}[ht]
	\renewcommand\arraystretch{1.0}
	\caption{Detailed results of the \AUC{} metric for NOMU, GP, MCDO, DE, HDE and HDE* (our HDE see \Cref{subsec:ConfigurationDetailsofBenchmarksRegression}) for all ten 1D synthetic functions. Shown are the medians and a 95\% bootstrap confidence interval over 500 runs. Winners are marked in grey.}
    \label{tab:1dsynfunresultsDetailsAUC}
	\vskip 0.1in
	\begin{center}
	\begin{small}
	\begin{sc}
	\resizebox{\textwidth}{!}{%
		\begin{tabular}{
				l
				S[table-format = 1.2]
                >{{[}} %
                S[table-format = 1.2,table-space-text-pre={[}]
                @{,\,} %
                S[table-format = 1.2,table-space-text-post={]}]
                <{{]}} %
				S[table-format = 1.2]
                >{{[}} %
                S[table-format = 1.2,table-space-text-pre={[}]
                @{,\,} %
                S[table-format = 1.2,table-space-text-post={]}]
                <{{]}} %
				S[table-format = 1.2]
                >{{[}} %
                S[table-format = 1.2,table-space-text-pre={[}]
                @{,\,} %
                S[table-format = 1.2,table-space-text-post={]}]
                <{{]}} %
				S[table-format = 1.2]
                >{{[}} %
                S[table-format = 1.2,table-space-text-pre={[}]
                @{,\,} %
                S[table-format = 1.2,table-space-text-post={]}]
                <{{]}} %
                S[table-format = 1.2]
                >{{[}} %
                S[table-format = 1.2,table-space-text-pre={[}]
                @{,\,} %
                S[table-format = 1.2,table-space-text-post={]}]
                <{{]}} %
                S[table-format = 1.2]
                >{{[}} %
                S[table-format = 1.2,table-space-text-pre={[}]
                @{,\,} %
                S[table-format = 1.2,table-space-text-post={]}]
                <{{]}} %
			}
			\toprule
			& \multicolumn{3}{c}{\textbf{NOMU}} &  \multicolumn{3}{c}{\textbf{GP}}  & \multicolumn{3}{c}{\textbf{MCDO}}  &  \multicolumn{3}{c}{\textbf{DE}}&\multicolumn{3}{c}{\textbf{HDE}}& \multicolumn{3}{c}{\textbf{HDE*}}\\
			\textbf{Function} & {\AUC$\downarrow$} & \multicolumn{2}{c}{95\%-CI}& {\AUC$\downarrow$} & \multicolumn{2}{c}{95\%-CI}& {\AUC} & \multicolumn{2}{c}{95\%-CI}& {\AUC$\downarrow$} & \multicolumn{2}{c}{95\%-CI}&{\AUC$\downarrow$} & \multicolumn{2}{c}{95\%-CI}&{\AUC$\downarrow$} & \multicolumn{2}{c}{95\%-CI}\\
			\midrule
			Abs & \winc0.07&0.07&0.08&0.14&0.13&0.16&0.16&0.15&0.17&\winc0.08&0.07&0.09&1.02&0.82&1.20&0.72&0.65&0.81\cr
    	    Step & 0.29&0.26&0.31&0.93&0.85&1.02&0.25&0.24&0.27&\winc0.14&0.13&0.15&1.90&1.61&2.67&0.82&0.66&0.99\cr
			Kink &\winc0.08&0.08&0.09&0.17&0.15&0.19&0.22&0.21&0.23&\winc0.09&0.07&0.10&1.81&1.40&2.31&1.01&0.9&1.24\cr
			Square & 0.12&0.11&0.13&0.16&0.13&0.19&0.38&0.36&0.41&0.14&0.12&0.15&1.72&1.50&2.23&1.86&1.54&2.53\cr
			Cubic & 0.07&0.07&0.08&\winc0.00&0.00&0.00&0.10&0.10&0.11&0.10&0.09&0.11&0.63&0.54&0.79&0.52&0.45&0.66\cr
			Sine 1& 1.10&1.03&1.16&1.14&1.09&1.2&\winc0.90&0.87&0.93&1.21&1.16&1.26&9.38&7.42&{11.4}&9.75&7.64&{12.1}\cr
			Sine 2 & 0.38&0.37&0.40&0.42&0.41&0.44&\winc0.36&0.35&0.36&0.50&0.47&0.53&3.18&2.82&4.08&2.43&2.12&3.38\cr
			Sine 3&\winc0.20&0.19&0.21&0.31&0.29&0.34&0.28&0.26&0.29&\winc0.20&0.19&0.21&1.43&1.19&1.73&1.33&1.10&1.57\cr
			Forrester& \winc0.19&0.18&0.20&\winc0.18&0.17&0.19&0.26&0.24&0.28&0.25&0.24&0.27&1.28&1.11&1.51&1.58&1.35&1.93\cr
			Levy &\winc0.40&0.39&0.42&0.53&0.51&0.56&\winc0.38&0.36&0.39&0.45&0.43&0.48&3.43&2.67&4.42&3.58&2.83&4.18\cr
			\bottomrule 
		\end{tabular}
	}
	\end{sc}
	\end{small}
	\end{center}
	\vskip -0.1in
\end{table*}

\begin{table*}[ht]
    \setlength\tabcolsep{2pt}
	\renewcommand\arraystretch{1.0}
	\caption{Detailed results of the \MNLPD{} metric {\scriptsize(without constant $\ln(2\pi)/2$)} for NOMU, GP, MCDO, DE, HDE and HDE* (our HDE see \Cref{subsec:ConfigurationDetailsofBenchmarksRegression}) for all ten 1D synthetic functions. Shown are the medians and a 95\% bootstrap confidence interval over 500 runs. Winners are marked in grey.}
		\label{tab:1dsynfunresultsDetailsNLPD}
	\begin{center}
	\begin{small}
	\begin{sc}
	\resizebox{\textwidth}{!}{%
		\begin{tabular}{
				l
				S[table-format = -1.2]
                >{{[}} %
                S[table-format = -1.2,table-space-text-pre={[}]
                @{,\,} %
                S[table-format = -1.2,table-space-text-post={]}]
                <{{]}} %
				S[table-format = -1.2]
                >{{[}} %
                S[table-format = -1.2,table-space-text-pre={[}]
                @{,\,} %
                S[table-format = -1.2,table-space-text-post={]}]
                <{{]}} %
				S[table-format = -1.2]
                >{{[}} %
                S[table-format = -1.2,table-space-text-pre={[}]
                @{,\,} %
                S[table-format = -1.2,table-space-text-post={]}]
                <{{]}} %
				S[table-format = -1.2]
                >{{[}} %
                S[table-format = -1.2,table-space-text-pre={[}]
                @{,\,} %
                S[table-format = -1.2,table-space-text-post={]}]
                <{{]}} %
                S[table-format = -1.2]
                >{{[}} %
                S[table-format = -1.2,table-space-text-pre={[}]
                @{,\,} %
                S[table-format = -1.2,table-space-text-post={]}]
                <{{]}} %
                S[table-format = -1.2]
                >{{[}} %
                S[table-format = -1.2,table-space-text-pre={[}]
                @{,\,} %
                S[table-format = -1.2,table-space-text-post={]}]
                <{{]}} %
			}
			\toprule
			& \multicolumn{3}{c}{\textbf{NOMU}} &  \multicolumn{3}{c}{\textbf{GP}}  & \multicolumn{3}{c}{\textbf{MCDO}}  &  \multicolumn{3}{c}{\textbf{DE}}&\multicolumn{3}{c}{\textbf{HDE}}&\multicolumn{3}{c}{\textbf{HDE*}}\\
			\textbf{Function} & {\MNLPD$\downarrow$} & \multicolumn{2}{c}{95\%-CI}& {\MNLPD$\downarrow$} & \multicolumn{2}{c}{95\%-CI}& {\MNLPD$\downarrow$} & \multicolumn{2}{c}{95\%-CI}& {\MNLPD$\downarrow$} & \multicolumn{2}{c}{95\%-CI}& {\MNLPD$\downarrow$} & \multicolumn{2}{c}{95\%-CI}& {\MNLPD$\downarrow$} & \multicolumn{2}{c}{95\%-CI}\\
			\midrule
			Abs & \winc-2.85&-2.95&-2.80&\winc-2.82&-2.90&-2.71&-1.76&-1.80&-1.69&\winc-2.91&-3.03&-2.76&-0.19&-0.36&-0.00&-0.72&-0.82&-0.56\cr
    	    Step &-1.04&-1.12&-0.97&-0.61&-0.81&-0.49&-0.78&-0.85&-0.69&\winc-2.49&-2.59&-2.36&0.61&0.38&0.83&-1.18&-1.42&-0.84\cr
			Kink & \winc-2.74&-2.80&-2.69&\winc-2.64&-2.75&-2.54&-1.28&-1.32&-1.22&\winc-2.81&-2.93&-2.68&0.56&0.26&0.83&-0.35&-0.46&-0.17\cr
			Square & -2.45&-2.51&-2.41&\winc-3.80&-4.21&-3.38&-0.56&-0.65&-0.52&-2.27&-2.37&-2.16&0.53&0.31&0.73&0.07&-0.17&0.22\cr
			Cubic & -2.98&-3.04&-2.94&\winc-6.03&-6.15&-5.92&-2.41&-2.47&-2.33&-2.65&-2.70&-2.59&-0.77&-0.93&-0.64&-1.42&-1.63&-1.15\cr
			Sine 1& \winc0.10&0.04&0.18&\winc-0.06&-0.13&0.01&\winc0.09&0.07&0.13&\winc0.09&0.04&0.16&1.97&1.71&2.28&1.65&1.45&1.84\cr
			Sine 2 & -1.35&-1.38&-1.32&\winc-1.48&-1.52&-1.42&-1.03&-1.06&-1.00&-0.92&-0.96&-0.88&1.21&0.95&1.65&0.45&0.26&0.73\cr
			Sine 3&\winc-1.55&-1.61&-1.47&\winc-1.69&-1.85&-1.55&-1.07&-1.12&-1.01&\winc-1.66&-1.75&-1.58&0.26&0.10&0.40&-0.02&-0.21&0.21\cr
			Forrester& -1.98&-2.03&-1.91&\winc-2.68&-2.86&-2.49&-1.10&-1.19&-1.03&-1.75&-1.79&-1.69&-0.11&-0.23&0.17&-0.50&-0.65&-0.28\cr
			Levy & \winc-1.12&-1.14&-1.07&-0.93&-0.98&-0.90&-0.76&-0.79&-0.73&\winc-1.04&-1.08&-0.97&1.35&0.99&1.53&0.74&0.53&0.96\cr 
			\bottomrule 
		\end{tabular}
	}
	\end{sc}
	\end{small}
	\end{center}
	\vskip -0.1in
\end{table*}

\begin{measure}[Minimum NLL]\label{metric:minnlpd}
Let $D$ denote a set of input-output points, and let $\UBc$ denote UBs for a calibration parameter $c$. Then $\MNLPD{}$ is defined as
\begin{align}\label{minnll}
    \MNLPD:=\min\limits_{c\in \Rpz }\,\NLPD(D|\UBc).
\end{align}
\end{measure}

\begin{remark}[Calibration]
Different approaches of obtaining UBs require rather different scaling to reach similar CP (as well as MW). For instance, we found that the calibration parameter $c$ required to achieve a certain value of \CP\ typically is larger for DE than those of the remaining methods by a factor of approximately 10. Therefore, in practice it is important to find a well-calibrated parameter c. Standard calibration techniques can be applied to NOMU (e.g., methods based on isotonic regression \citep{pmlr-v80-kuleshov18a}, or when assuming Gaussian marginals of the posterior one can select $c$ as the $\argmin\limits_{c\in \Rpz }\,\NLPD(D|\UBc)$ on a validation set $D$).
\end{remark}

\subsubsection{Toy Regression: Configuration Details}\label{subsec:ConfigurationDetailsofBenchmarksRegression}
All NN-based methods are fully-connected feed-forward NNs with ReLU activation functions, implemented in \textsc{tensorflow.keras} and trained for $2^{10}$ epochs of \textsc{tensorflow.keras'} Adam stochastic gradient descent with standard learning rate 0.001 and full batch size of all training points. Moreover, weights and biases are initialized uniformly in the interval $[-0.05, 0.05]$. 

\paragraph{NOMU Setup} For the MC-integration of the integral in the NOMU loss \eqref{eq:lmu2}, i.e., term \termc{}, we use $l=128$ and $l=256$ artificial input points (we draw new artificial input points in every gradient descent step) in 1D and 2D, respectively.\footnote{In 1D, we tried MC-integration based on uniform samples and MC-integration based on a deterministic grid, where both approaches led to qualitatively similar results and the latter is presented in this paper.} For numerical stability, we use the parameters~$\theta$ that gave the best training loss during the training which are not necessarily the ones after the last gradient descent step.

\paragraph{Deep Ensembles Setup} We consider the proposed number of five ensembles \citep{lakshminarayanan2017simple} with a single output $\muhat[\theta]$ only (accounting for zero data noise), each with three hidden layers consisting of $2^8, 2^{10}$ and $2^9$ nodes (resulting in $\approx$ 4 million parameters). We train them on standard regularized mean squared error (MSE) with regularization parameter $\lambda=10^{-8}/\ntr$ chosen to represent the same data noise assumptions as NOMU. 

\paragraph{MC Dropout Setup} The MC dropout network is set up with three hidden layers as well, with $2^{10}, 2^{11}$ and $2^{10}$ nodes (resulting in $\approx$ 4 million parameters). Both training and prediction of this model is performed with constant dropout probability $p:=p_i=0.2$, proposed in \cite{gal2016dropout}. We perform $100$ stochastic forward passes. To represent the same data noise assumptions as NOMU and deep ensembles we set the regularization parameter to $\lambda=(1-p)\cdot10^{-8}/\ntr=(1-0.2)\cdot10^{-8}/\ntr$ (based on equation (7) from \cite{gal2016dropout}). 

\paragraph{Gaussian Process Setup} Finally, we compare to a Gaussian process\footnote{We use \emph{GaussianProcessRegressor} from \textsc{scikit-learn}.} with RBF-kernel,
\begin{align}\label{rbfKernel}
    k_\hp(x,x^{\prime}):=\kappa\cdot e^{-\left(\frac{\twonorm[x-x^\prime]^2}{h^2}\right)},
\end{align}
with hyperparameters $\hp:=(\kappa,h)$, where both the prior variance parameter $\kappa$ and the length scale parameter $h$ (initialized as $\kappa=h=1$) are optimized in the default range $[10^{-5}, 10^{5}]$ using $10$ restarts and the data noise level\footnote{This is mainly used for numerical stability.} is set to $10^{-7}$.

\paragraph{Hyper Deep Ensembles Setup}
We consider the same network architectures as for DE with a single output $\muhat[\theta]$ only (accounting for zero data noise), each with three hidden layers consisting of $2^8, 2^{10}$ and $2^9$ nodes (resulting in $\approx$ 4 million parameters). We train them on standard regularized mean squared error (MSE).

Furthermore, we use the following proposed parameter values from \citet{wenzel2020hyperparameter} to build the ensembles: we consider $M:=5$ networks (termed $K$ in \citep{wenzel2020hyperparameter}) in the final ensemble and search over $\kappa=50$ networks resulting from $\texttt{random\_search}$. Moreover, as random hyperparameters we use the proposed L2-regularization and the $\texttt{dropout\_rate}$, which we draw as in \citep{wenzel2020hyperparameter} log-uniform from $(0.001,0.9)$ and $(10^{-8}\cdot 10^{-3},10^{-8}\cdot 10^{3})$, respectively. Furthermore, we use the proposed train/validation split of 80\%/20\% and as score $\mathcal{S}$ the \NLPD. Finally, as for MCDO and DE we scale the realized L2-regularization by $(1-\texttt{dropout\_rate})/\texttt{floor}(0.8\cdot\ntr)$ chosen to represent the same data noise assumptions as NOMU.

For our HDE version termed \textbf{HDE*}, we use a train/validation split of 70\%/30\%, draw the dropout probability from $(0.001,0.5)$ and continue training the final ensemble again on \emph{all} $\ntr$ training points (rescaling the realized L2-regularization again by $\texttt{floor}(0.8\cdot\ntr)/\ntr$).

\subsubsection{Toy Regression: Detailed Results }\label{subsubsec:Detailed Synthetic Functions Results in Regression}
\paragraph{Results 1D} In \Cref{tab:1dsynfunresultsDetailsAUC} (\AUC{})
and \Cref{tab:1dsynfunresultsDetailsNLPD} (\MNLPD{}), we present detailed results, which correspond to the presented ranks from \Cref{tab:1dsynfunresults} in the main paper. In \Cref{tab:1dsynfunresultsaggregated}, we present aggregate results, i.e., median values (incl. a $95\%$ bootstrap confidence interval (CI)) of \AUC\, and \MNLPD{} across \emph{all} $5000$ runs ($500$ runs for $10$ test functions) for each algorithm. We see that NOMU performs well on both metrics and yields numbers similar to those of DE. The GP is only competitive on \MNLPD. MCDO performs in both metrics worse than NOMU and DE. Not surprisingly, HDE, shown to be the state-of-the-art ensemble algorithm \citep{wenzel2020hyperparameter}, fails to produce reliable model uncertainty estimates in such a scarce and noiseless regression setting (see Appendix~\ref{subsubsec:Uncertainty Bounds Detailed Characteristics} for a discussion).
\begin{table}[t!]
    \caption{Aggregate results for NOMU, GP, MCDO, DE, HDE and HDE* (our HDE see \Cref{subsec:ConfigurationDetailsofBenchmarksRegression}) for the set of all ten 1D synthetic functions. Shown are the medians and a 95\% bootstrap CI of \AUC{} and \MNLPD{} {\scriptsize(without constant $\ln(2\pi)/2$)}  across all runs. Winners are marked in grey.}
    	\label{tab:1dsynfunresultsaggregated}
    \vskip 0.1in
    \begin{center}
    \begin{small}
    \begin{sc}
    \resizebox{1\columnwidth}{!}{
    	\begin{tabular}{
    			l
    			c
    			c
    			c
    			c
    		}
    		\toprule
    		\textbf{Method}& {\textbf{\AUC}$\downarrow$} & 95\%-CI& {\textbf{\MNLPD}$\downarrow$} &95\%-CI \\
    		\midrule
    		NOMU & \winc{0.21}& [0.21, 0.22] & \winc{-1.76}& [-1.81, -1.72]\\
    		GP &{0.33} &[0.32, 0.35] & \winc{-1.74}& [-1.81, -1.69]\\
    		MCDO & {0.30} &[0.29, 0.30]& {-1.02} &[-1.05, -0.99]\\
    		DE &\winc{0.22} &[0.21, 0.23] &  \winc{-1.75} &[-1.79, -1.70]\\
    		HDE & {2.04}& [1.90, 2.20] & { \hphantom{-}0.59} &[\hphantom{-}0.51, \hphantom{-}0.66]\\
    		HDE* & {1.73}& [1.63, 1.84] & { \hphantom{-}0.04}& [-0.03, \hphantom{-}0.10]\\
    		
    		\bottomrule 
    	\end{tabular}
    }
    \end{sc}
    \end{small}
    \end{center}
    \vskip -0.1in
\end{table}

\paragraph{Results 2D} Test functions for 2D input are taken from the same library as in the 1D setting.\footnote{See \href{https://www.sfu.ca/~ssurjano/optimization.html}{sfu.ca/~ssurjano/optimization.html}. All test functions are scaled to $X=[-1,1]^2$ and $f(X)=[-1,1]$.} Specifically, we select the following $11$ different test functions: \emph{Sphere}, \emph{G-Function}, \emph{Goldstein-Price}, \emph{Levy}, \emph{Bukin N.6}, \emph{Rosenbrock}, \emph{Beale}, \emph{Camel}, \emph{Perm}, \emph{Branin}, \emph{Styblisnki-Tang}.

In \Cref{tab:2dsynfunresultsaggregated}, we present median values of \AUC{} and \MNLPD{} across \emph{all} 5500 runs (500 runs for 11 test functions) for each algorithm.  We also provide a 95\% bootstrap confidence interval (CI) to assess statistical significance. We observe a similar ranking of the different algorithms as in 1D, however, now GP is ranked first in \AUC, followed by NOMU and DE; 
and the GP is also ranked first in \MNLPD{} followed by NOMU and DE, who share the second rank.

\begin{figure*}[ht!]
\makebox[\textwidth][c]{%
 \subfloat[Sine 3\label{subfig:jakobhfuntrend}]{%
		\includegraphics[width =1\columnwidth]{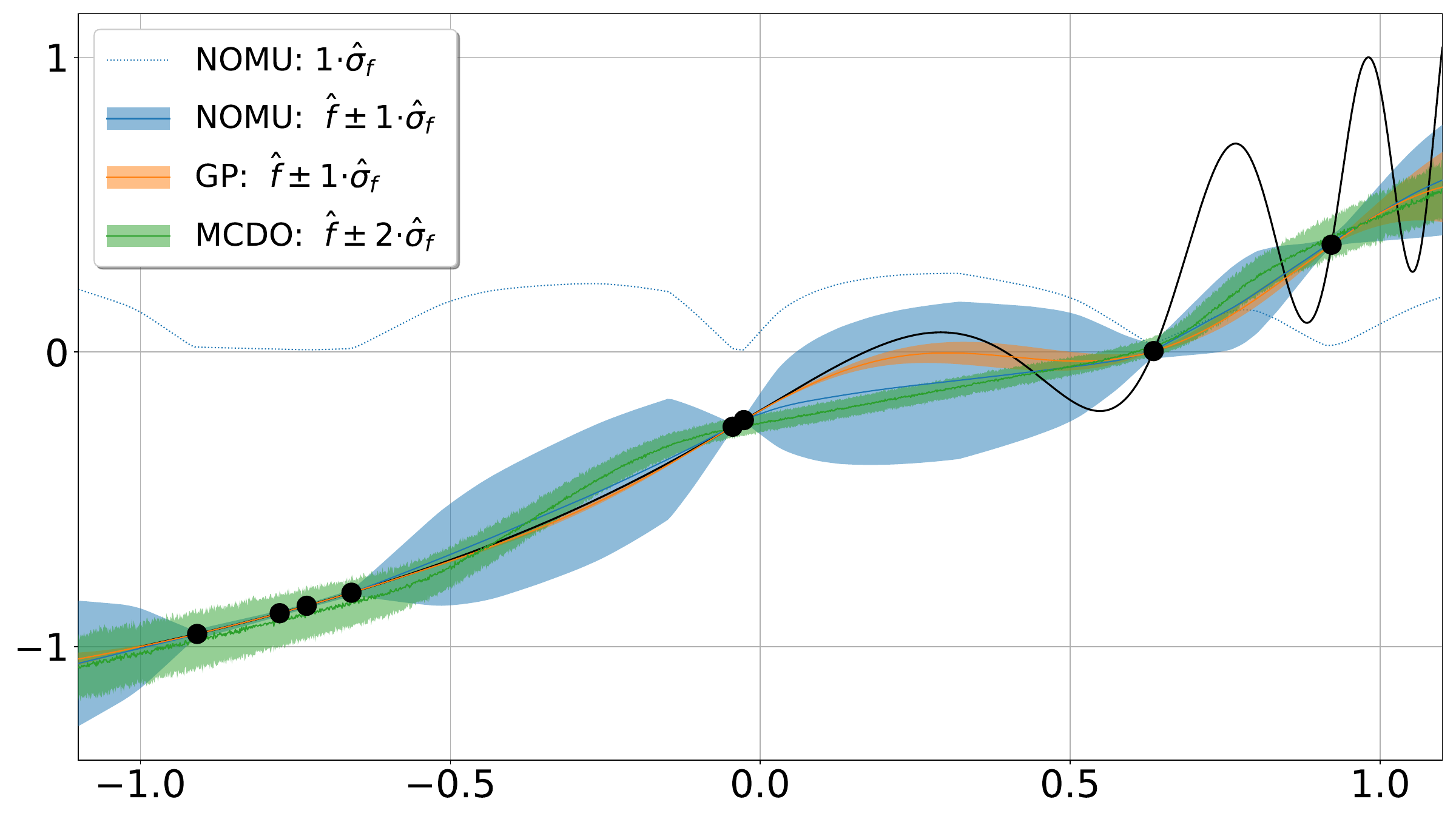}
		\includegraphics[width =1\columnwidth]{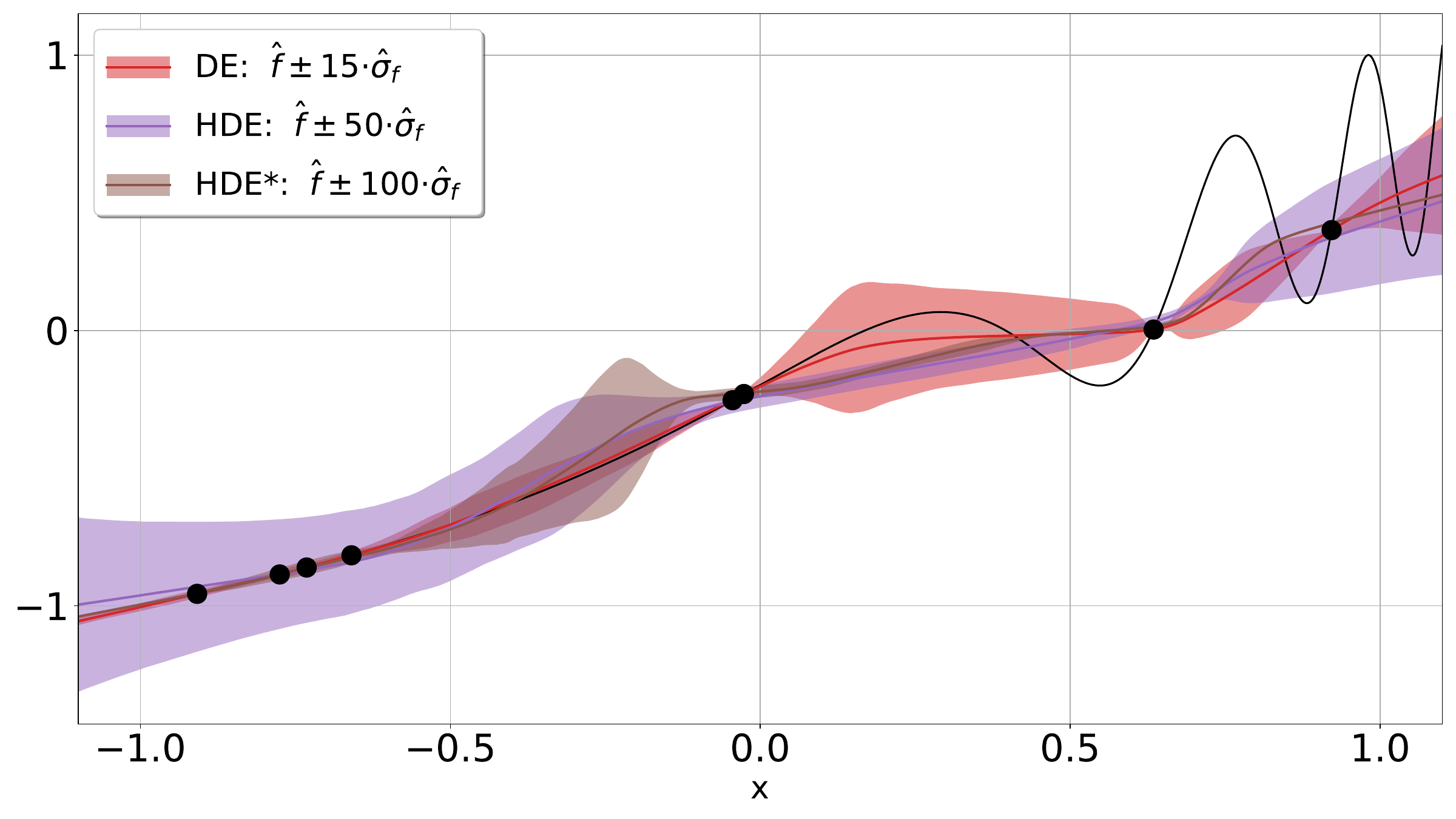}
	
    }
}
\makebox[\textwidth][c]{%
 \subfloat[Step\label{subfig:postoneg}]{%
		\includegraphics[width =1\columnwidth]{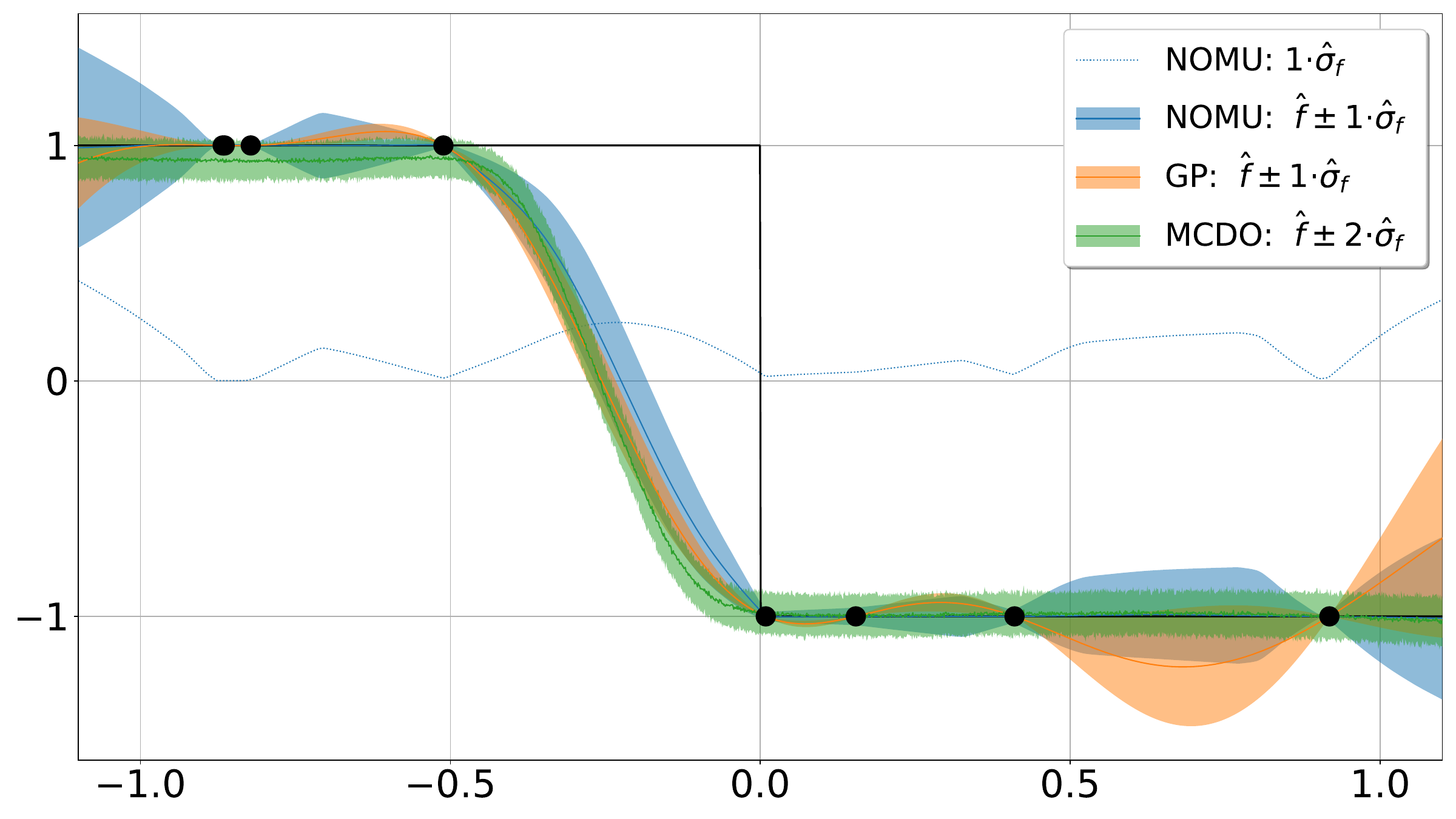}
		\includegraphics[width =1\columnwidth]{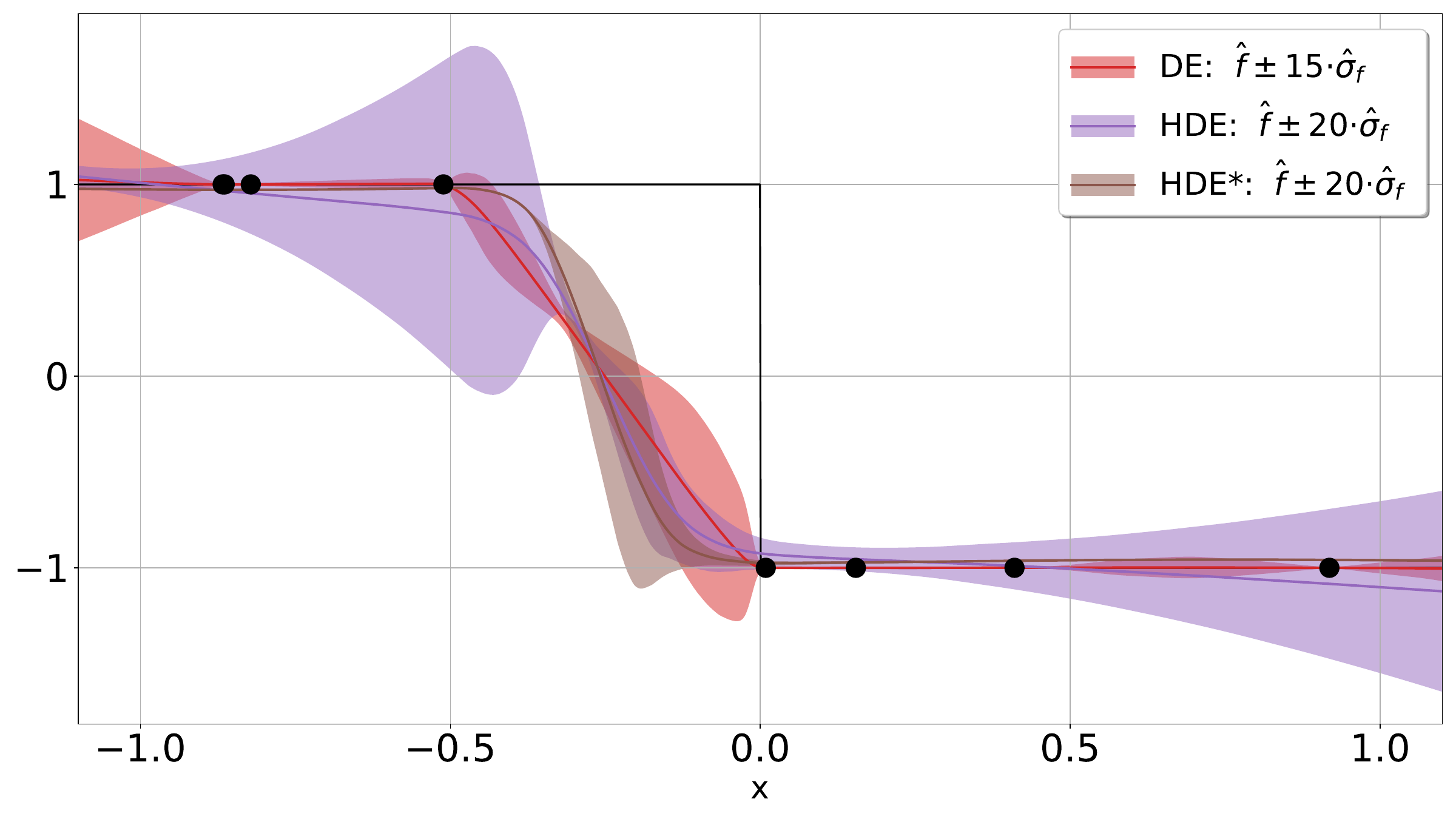}
	
    }
}
\makebox[\textwidth][c]{%
 \subfloat[Squared (GP would not behave reasonable for standard hyper-parameters in this instance, so we changed the range for the prior variance parameter $\kappa$ to be ${[10^{-5},4]}$.)\label{subfig:squared}]{%
		\includegraphics[width =1\columnwidth]{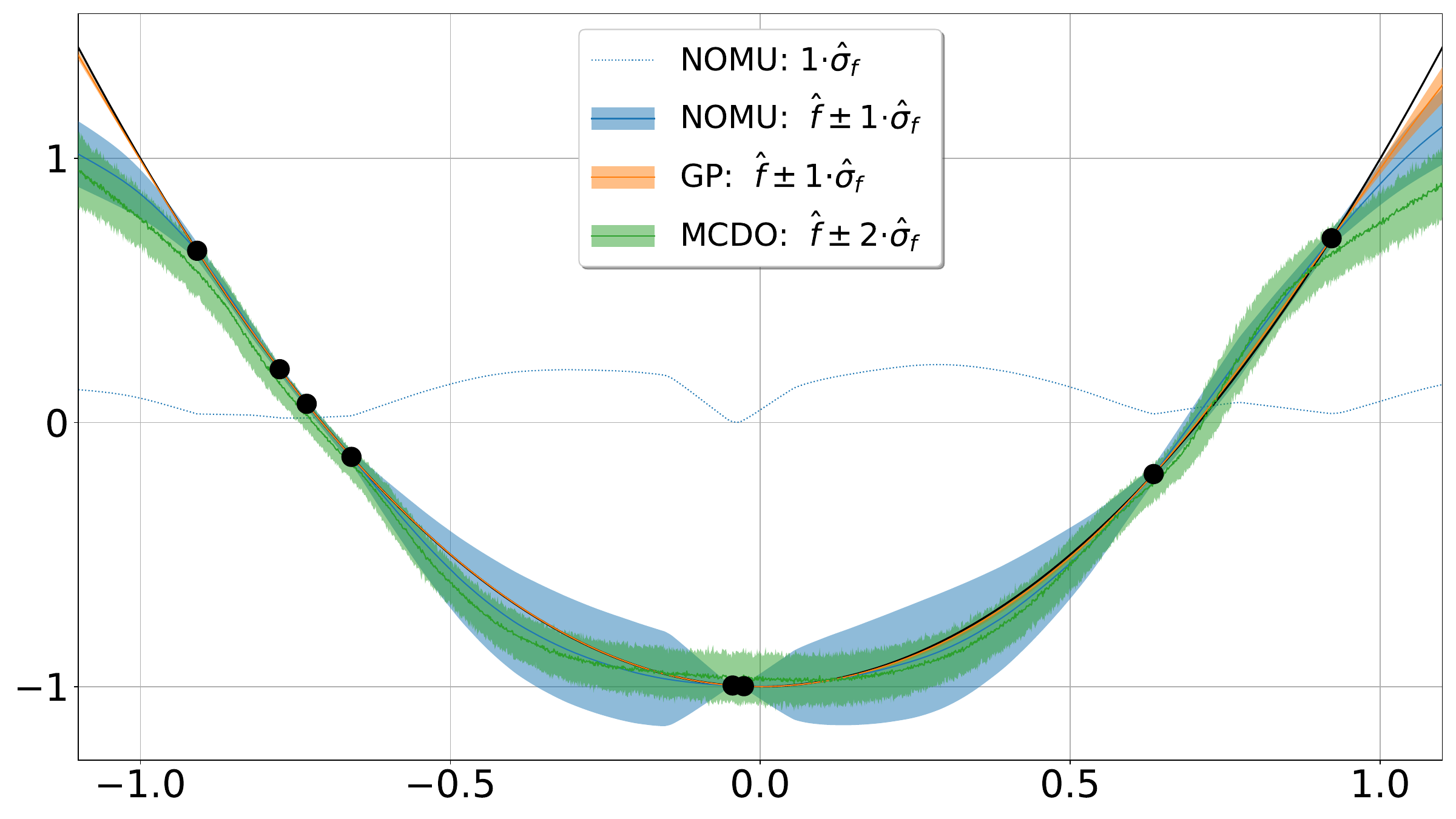}
		\includegraphics[width =1\columnwidth]{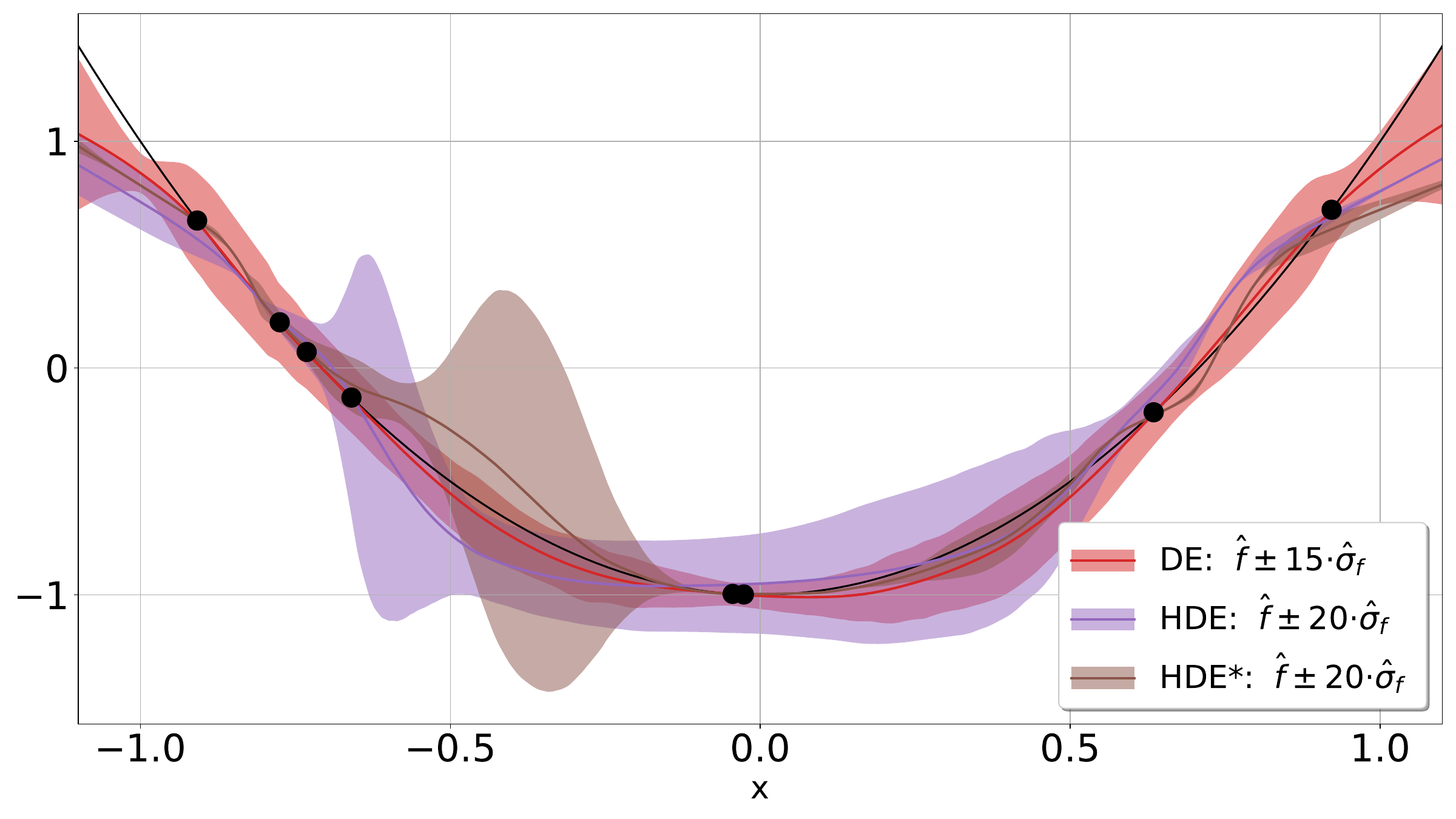}
	
    }
}
\makebox[\textwidth][c]{%
 \subfloat[Forrester\label{subfig:forrester}]{%
		\includegraphics[width =1\columnwidth]{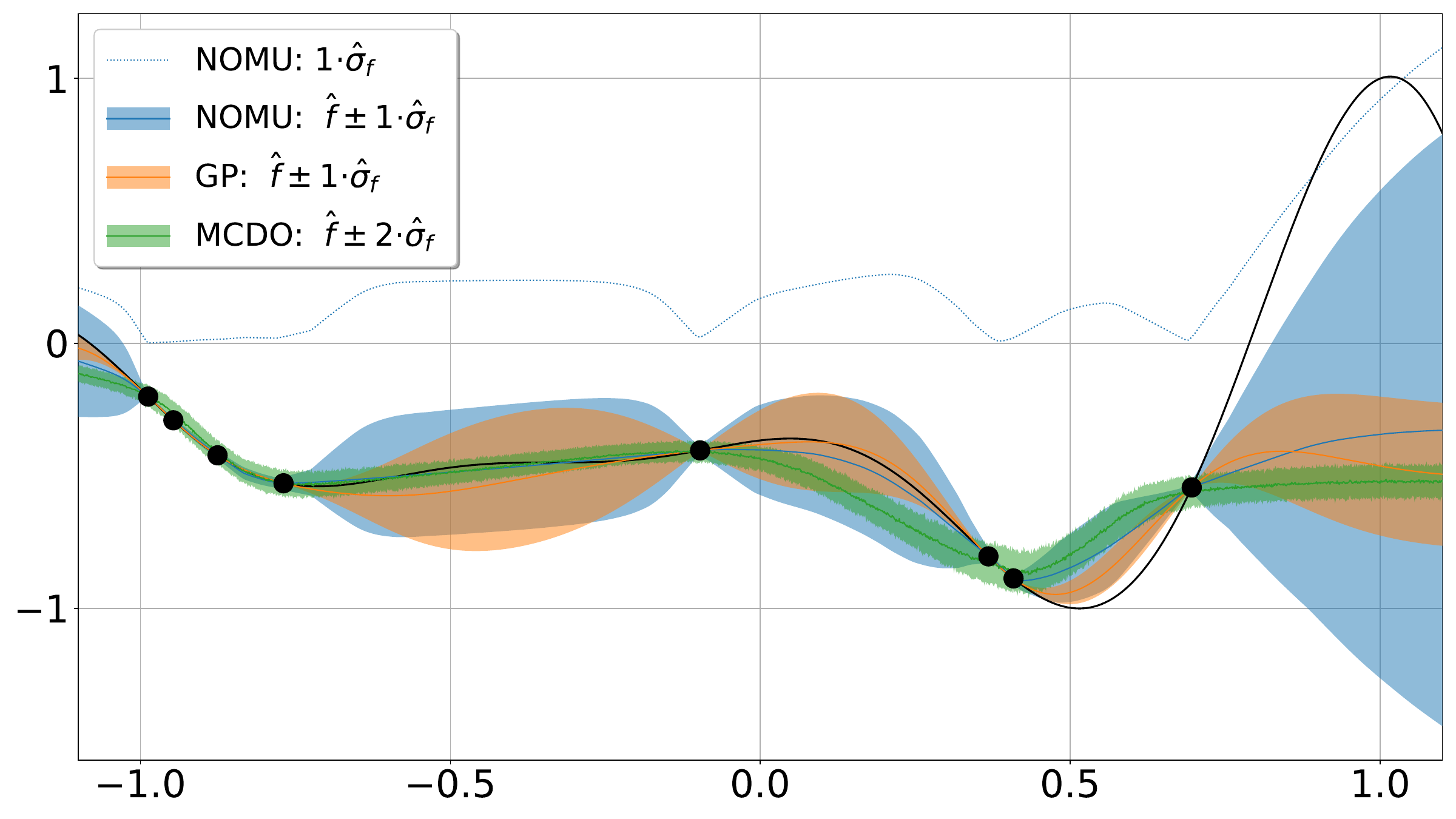}
		\includegraphics[width =1\columnwidth]{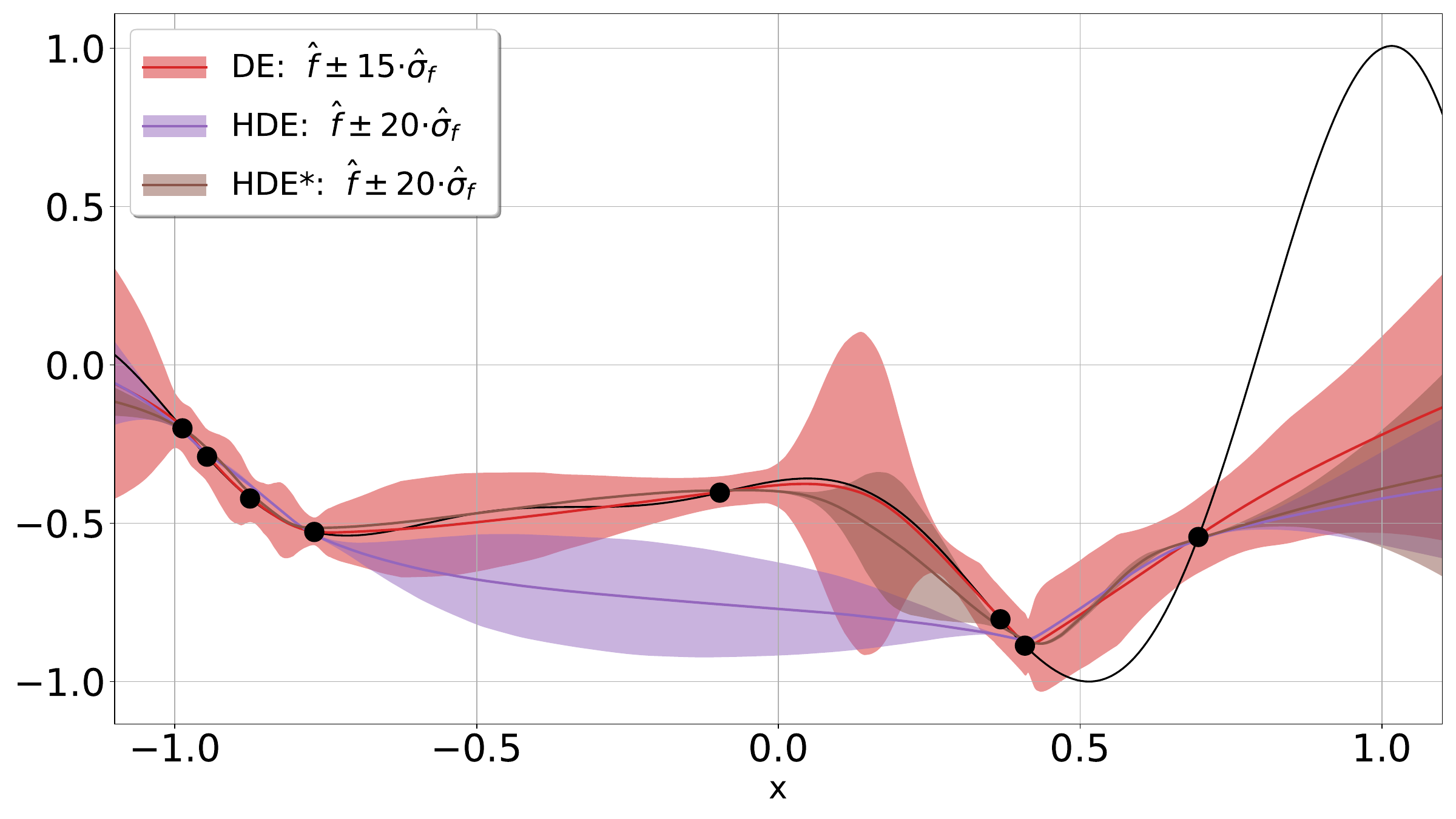}
	
    }
}
\phantomcaption
\end{figure*}
\begin{figure*}[ht!]
  \ContinuedFloat
\makebox[\textwidth][c]{%
 \subfloat[Kink\label{subfig:kink}]{%
		\includegraphics[width =1\columnwidth]{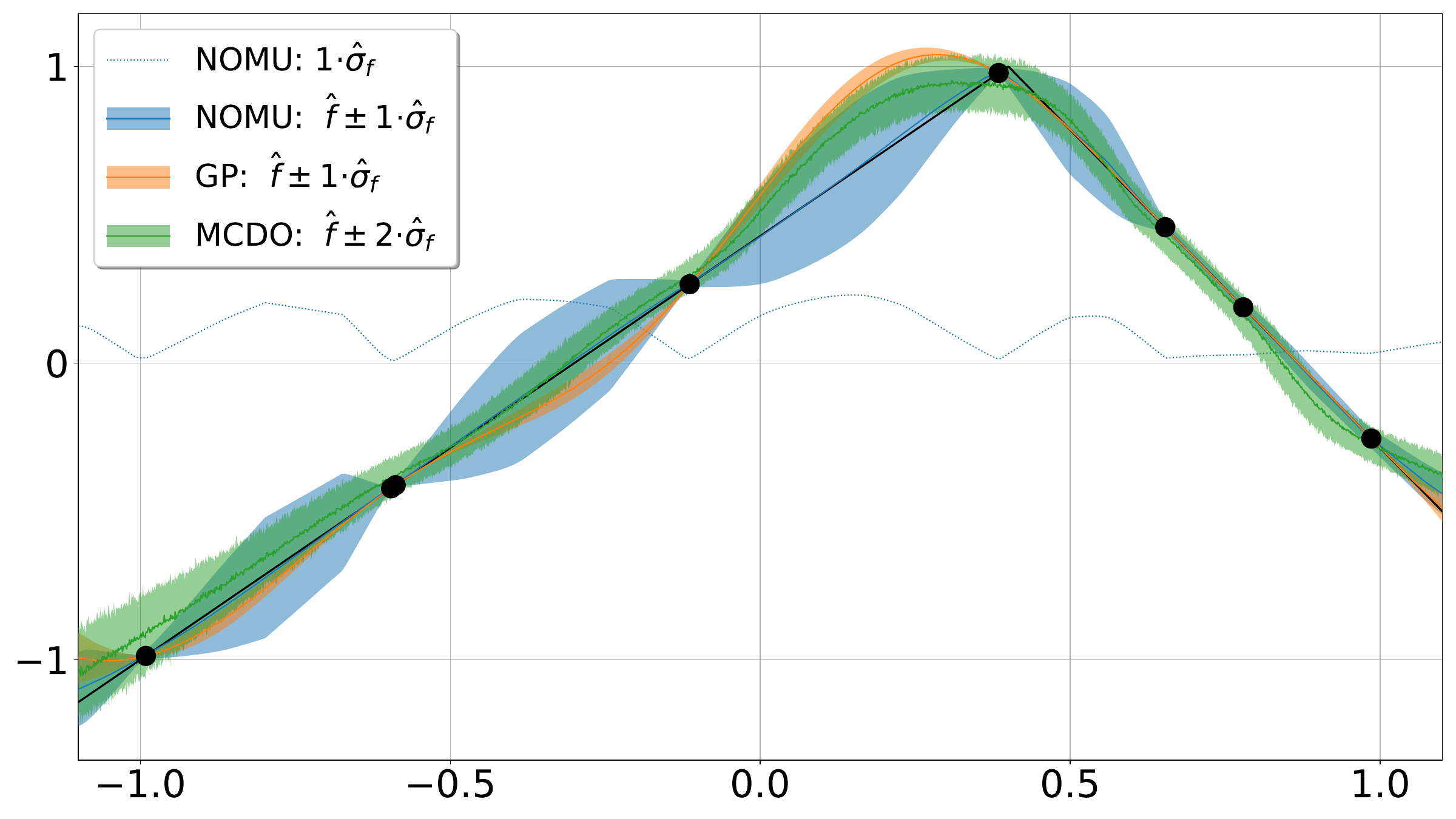}
		\includegraphics[width =1\columnwidth]{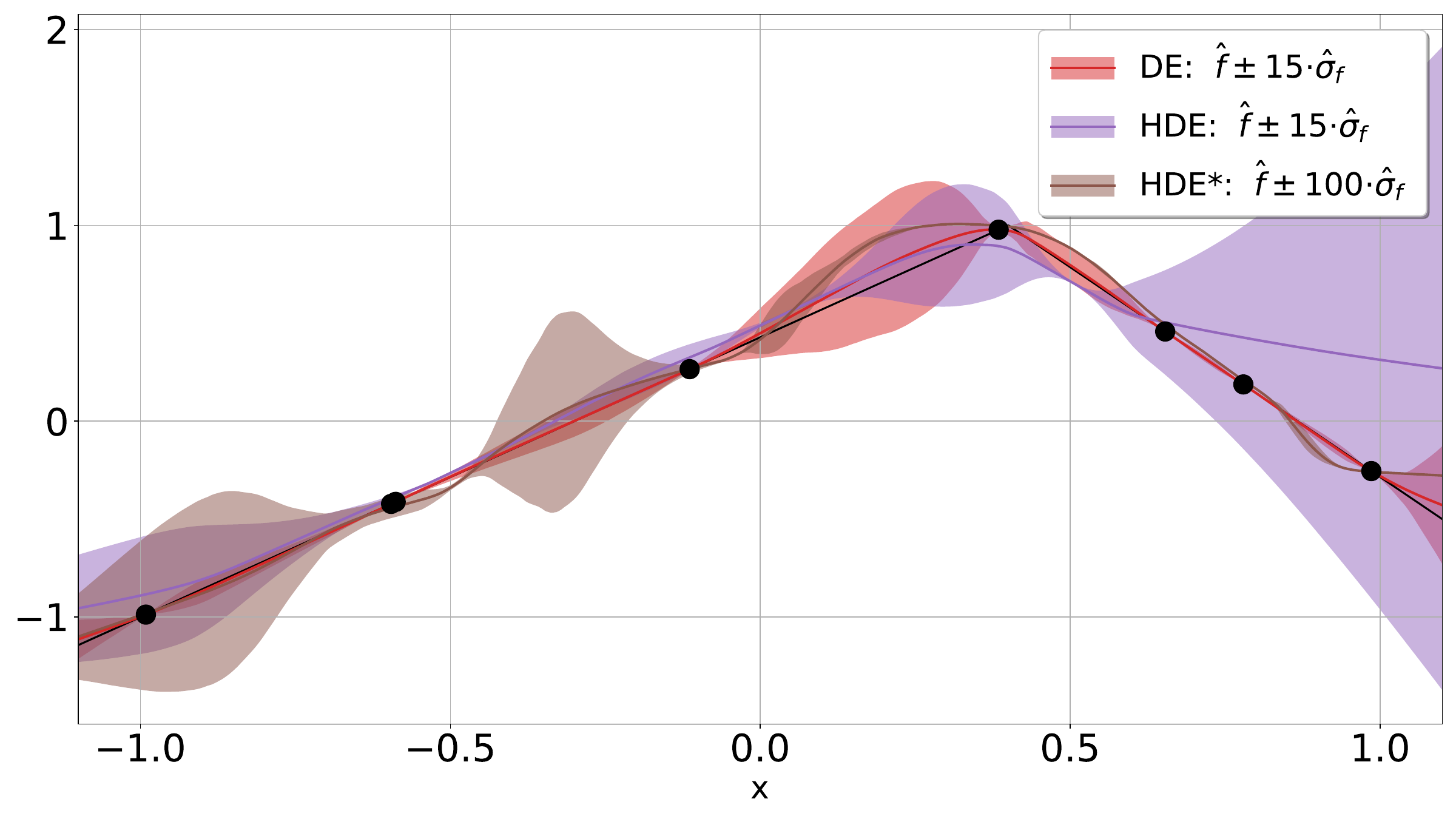}
	
    }
}
\makebox[\textwidth][c]{%
 \subfloat[Levy\label{subfig:levy}]{%
		\includegraphics[width =1\columnwidth]{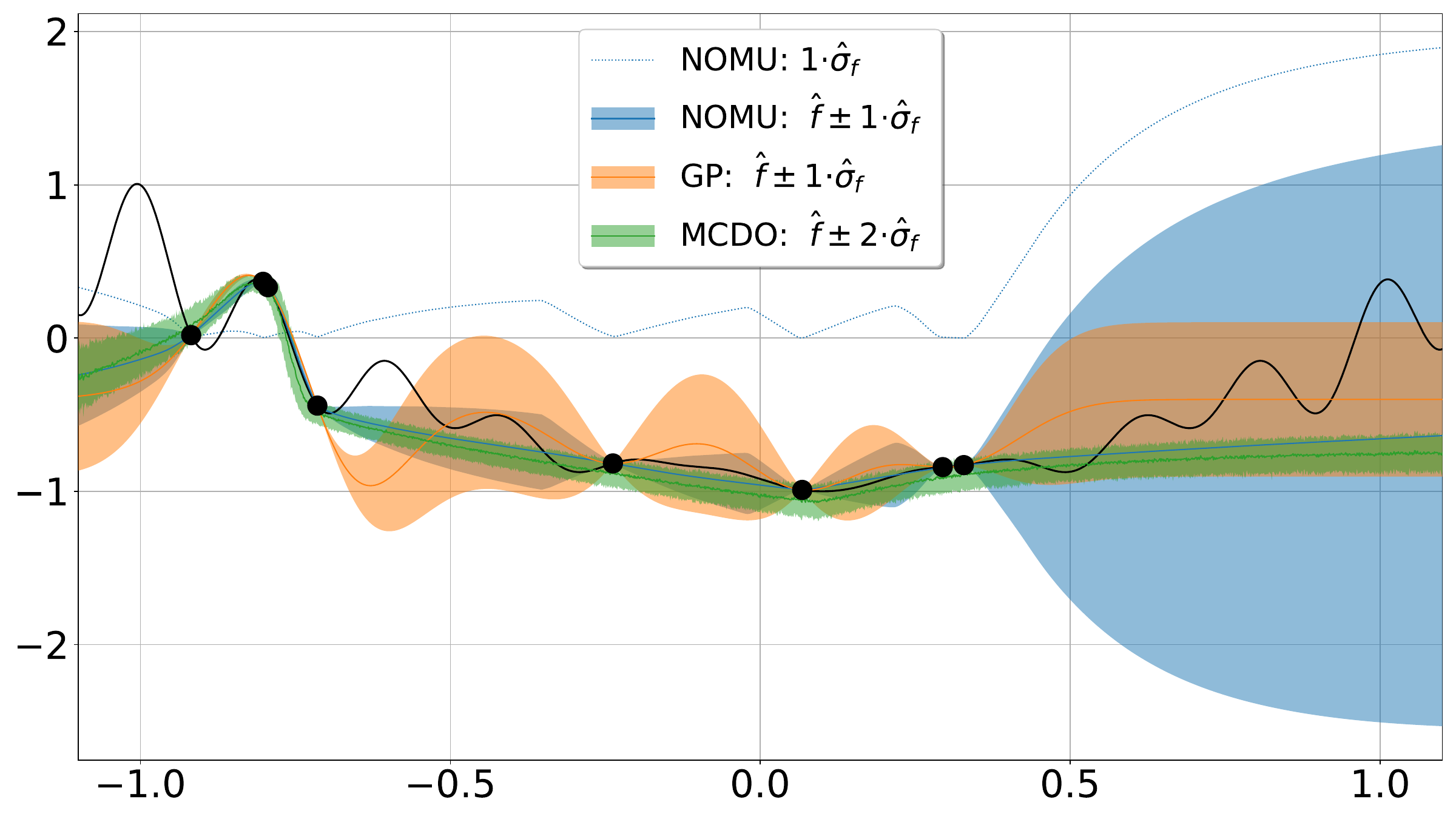}
		\includegraphics[width =1\columnwidth]{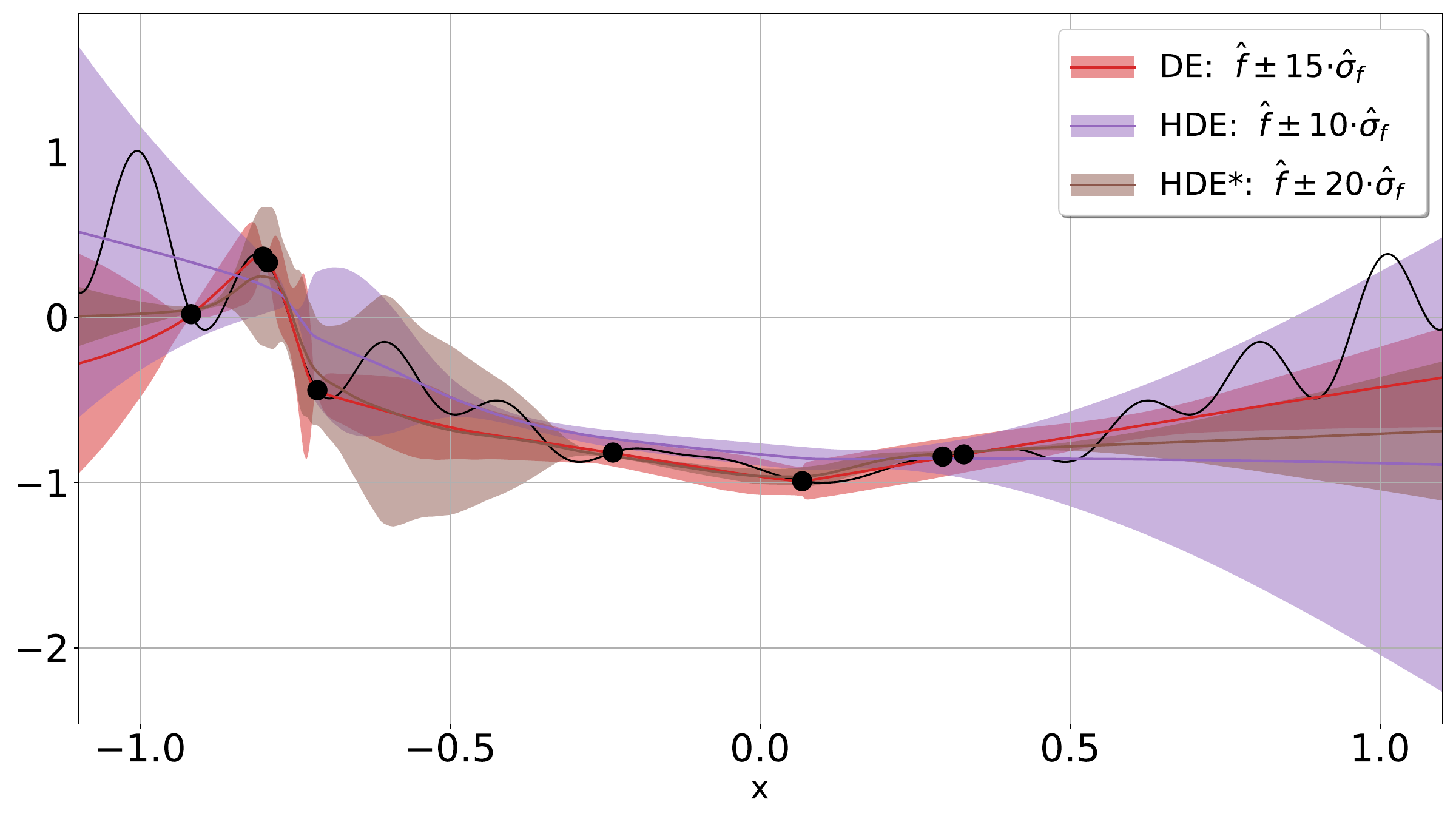}
	
    }
}
	\caption{Visualisations of resulting UBs for NOMU, GP, MCDO, DE, HDE and HDE* for a selection of test functions.}
	\label{fig:1dbounds}
\end{figure*}
\begin{table}[t!]
    \caption{Aggregate results for NOMU, GP, MCDO, DE, HDE and HDE* (our HDE see \Cref{subsec:ConfigurationDetailsofBenchmarksRegression}) for the set of all eleven 2D synthetic functions. Shown are the medians and a 95\% bootstrap CI of of \AUC{} and \MNLPD{} {\scriptsize(without constant $\ln(2\pi)/2$)} across all runs. Winners are marked in grey.}
	\label{tab:2dsynfunresultsaggregated}
	\vskip 0.1in
	\begin{center}
	\begin{small}
	\begin{sc}
	\resizebox{1\columnwidth}{!}{%
		\begin{tabular}{
				l
				c
				c
				c
				c
			}
			\toprule
				\textbf{Method}& {\textbf{\AUC}$\downarrow$} & 95\%-CI& {\textbf{\MNLPD}$\downarrow$} &95\%-CI \\
			\midrule
			NOMU & {0.42} &[0.41, 0.42] &{-0.99}& [-1.01, -0.96] \\
			GP &  \winc{0.36}& [0.36, 0.37] & \winc{-1.07}& [-1.09, -1.05] \\
			MCDO & {0.45}& [0.44, 0.45] & {-0.61}& [-0.62, -0.60]\\
			DE&  {0.42}& [0.41, 0.43] &  {-0.95}& [-0.97, -0.93] \\
			HDE& {5.06}& [4.75, 5.49] &  {\hphantom{-}1.74}& [\hphantom{-}1.66, \hphantom{-}1.84] \\
			HDE*& { 6.33}& [5.87, 6.75] &  {\hphantom{-}1.58}& [\hphantom{-}1.51, \hphantom{-}1.67] \\
			\bottomrule
		\end{tabular}
	}
	\end{sc}
	\end{small}
	\end{center}
	\vskip -0.1in
\end{table}
\subsubsection{Toy Regression: Detailed Characteristics of Uncertainty Bounds}\label{subsubsec:Uncertainty Bounds Detailed Characteristics}
\paragraph{UB Characteristics in 1D}
Within the one-dimensional setting, characteristics of NOMU UBs can be nicely compared to those of the benchmark approaches. \Cref{fig:1dbounds} exemplarily shows typical outcomes of the UBs on a number of selected test functions as obtained in one run.
\begin{itemize}[leftmargin=*,topsep=0pt,partopsep=0pt, parsep=0pt,itemsep=2pt]
\item \textbf{Deep Ensembles UBs:} Throughout the experiment, we observe that deep ensembles, while sometimes nicely capturing increased uncertainty in regions of high second derivative (e.g., around the kink in \Cref{subfig:kink}; pursuant in some form desiderata D4), still at times leads to UBs of somewhat arbitrary shapes. This can be seen most prominently in \Cref{subfig:postoneg} around $x\approx-0.25$, in \Cref{subfig:levy} around $x\approx -0.8$ or in \Cref{subfig:forrester} around $x\approx0.2$. Moreover, deep ensembles' UBs are very different in width, with no clear justification. This can be seen in \cref{subfig:postoneg} when comparing the UBs for $x\ge0$ against the UBs for $x\le0$ and at the edges of the input range in \Cref{subfig:jakobhfuntrend,subfig:levy}. In addition,
we frequently observe that deep ensembles UBs do not get narrow at training points, as for instance depicted in \Cref{subfig:squared} for $x<-0.5$, in \Cref{subfig:levy} for $x>0.2$, or in \Cref{subfig:forrester} for $x\in [-1,-0.7]$ and thus are not able to handle well small or zero data noise.
\item \textbf{Hyper Deep Ensembles UBs:}
Throughout our experiments, HDE produces less accurate UBs, which vary more from one run to another, than DE (recall that here we consider the setting of noiseless scarce regression). Possible reasons for this are:
\begin{enumerate}[leftmargin=*,topsep=0pt,partopsep=0pt, parsep=0pt,itemsep=0pt]
    \item The scarcity of training/validation data. HDE trains its NNs based on 80\% of the training points and uses the remaining 20\% to build an ensemble based on a score, whilst the other methods can use 100\% of the training points for training. In a scarce data setting this implies that first, the mean prediction of HDE does not fit through all the training points, and second, the scoring rule is less reliable.
    \item In a noiseless setting one already knows that the L2-regularization should be small, and thus optimizing this parameter is less useful here.
\end{enumerate}
Therefore, we believe HDE is less well suited in a noiseless scarce regression setting to reliably capture model uncertainty. This manifests in \Cref{fig:1dbounds}. HDE's uncertainty bounds often do not narrow at training points (\Cref{subfig:jakobhfuntrend,subfig:postoneg,subfig:squared,subfig:kink}), while they suggest unreasonably over-confident mean predictions in some regions (\Cref{subfig:forrester,subfig:postoneg}). While our HDE* (see \Cref{subsec:ConfigurationDetailsofBenchmarksRegression} Hyper Deep Ensembles) manages to better narrow at training data, it tends to more frequently result in unnecessarily narrow bounds (\Cref{subfig:jakobhfuntrend,subfig:squared,subfig:forrester}). Overall, both HDE and HDE* vary a lot from run to run and thus tend to yield bounds of seemingly arbitrary shapes, where in each run the desiderata are captured in different regions.
\item \textbf{MC Dropout UBs:} MCDO consistently yields tube-like UBs that do not narrow at training points. Interestingly, we remark that throughout the experiment MCDO samples frequently exhibit stepfunction-like shapes (e.g., see \Cref{subfig:forrester} at $x\approx0.5$ or \Cref{subfig:postoneg} for $x\in[-0.5,0]$). This effect intensifies with increasing training time.

\item\textbf{NOMU UBs:} In contrast, NOMU displays the behaviour it is designed to
show. Its UBs nicely tighten at training points and expand in between and thus NOMU fulfills \hyperref[itm:Axioms:trivial]{D1}\crefrangeconjunction\hyperref[itm:Axioms:largeDistantLargeUncertainty]{D3} across all considered synthetic test functions.

\item\textbf{Gaussian Process UBs:}  The quality of the RBF-Gaussian process' UBs (as judged in this simulated setting) naturally varies with the true underlying function. While the UBs nicely capture parts of the true function with low widths in \Cref{subfig:squared,subfig:forrester} they have a hard time accurately enclosing true functions that are not as conformant with the prior belief induced by the choice of the RBF kernel (e.g., \Cref{subfig:postoneg,subfig:kink}). Nonetheless, we also observe instances in which the training points are misleading to the GP's mean predictions even when considering ground truth functions for which this choice of kernel is very suitable. This manifests in over-confident mean predictions far away from the data generating true function (\Cref{subfig:jakobhfuntrend}) or over-swinging behavior of the fitted mean (\Cref{subfig:levy}). It is true that one could find better function-specific kernels. However, the RBF kernel is a good generic choice when evaluated across many different test functions without any specific prior information, which is why we choose this kernel for our comparison.
\end{itemize}

\paragraph{UB Characteristics in 2D} To visualize the UBs in 2D, we select as the ground truth the \emph{Styblinski} test function depicted in \Cref{fig:2dStyblinski}. In \Cref{fig:2dStyblinski_NOMU} (NOMU) and \Cref{fig:2dStyblinski_benchmarks} (benchmarks), we visualize the estimated model uncertainty as obtained in one run for this \emph{Styblinski} test function.
\begin{itemize}[leftmargin=*,topsep=0pt,partopsep=0pt, parsep=0pt,itemsep=0pt]
 \item \textbf{NOMU (\Cref{fig:2dStyblinski_NOMU}):} As in 1D, we observe that NOMU's UBs nicely tighten at input training points and expand in-between, with overall steady shapes. Specifically, NOMU's UBs are larger for extrapolation, e.g., $[0,1]\times[0.5,1]$, compared to regions which are \emph{surrounded} by input training data points, e.g., $[-0.25,0.25]\times[-0.25,-0.75]$, even though 
the distance to the closest input training point is approximately the same. Thus, NOMU's UBs do not only depend on the distance to the closest training point but rather on the arrangement of all surrounding training points.
\end{itemize}

\begin{figure}[t!]
\begin{center}
\resizebox{\columnwidth}{!}{
\includegraphics[trim= 10 5 50 10, clip]{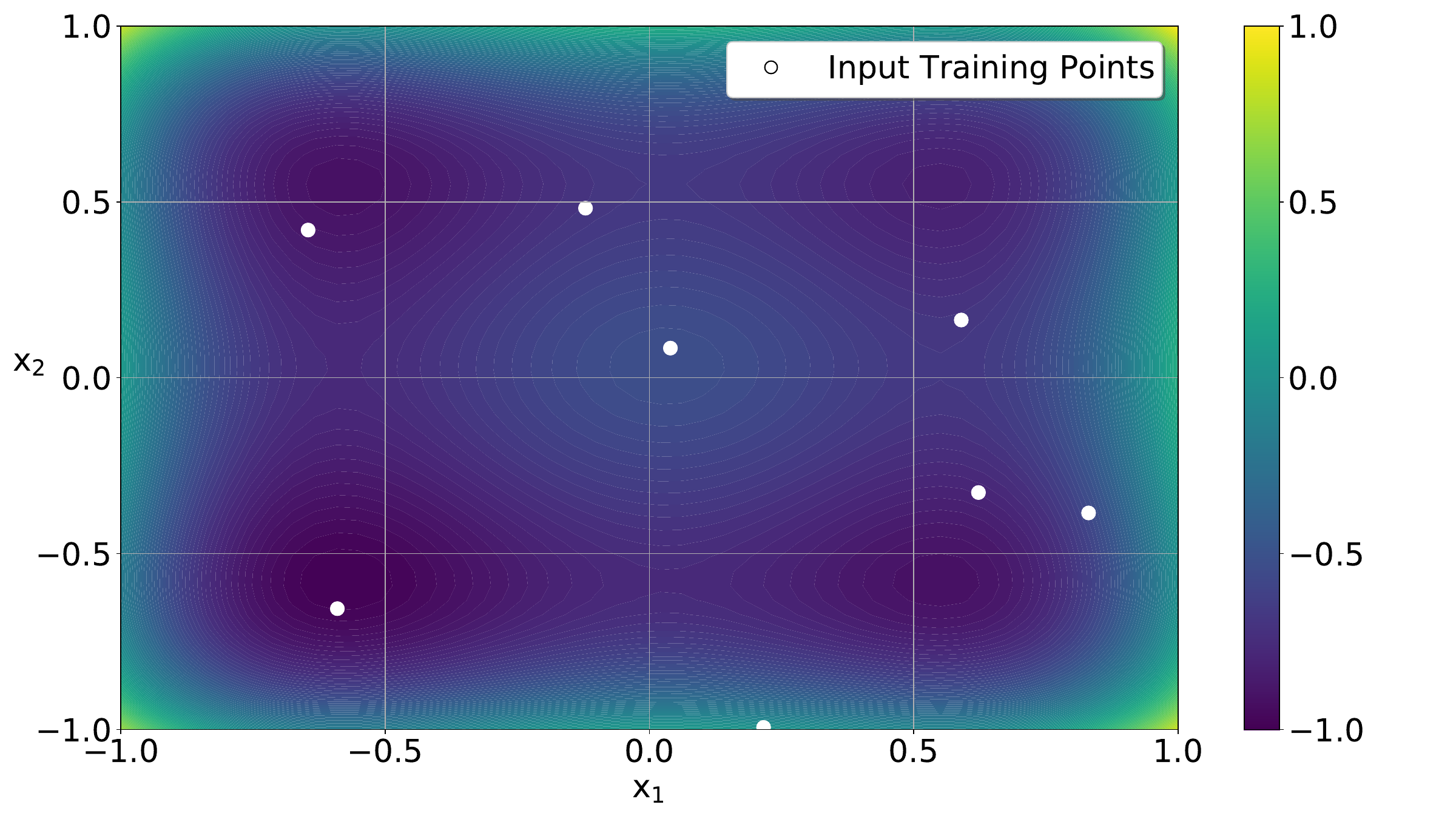}
}
\caption{Contour plot of the 2D Styblinski test function.}
\label{fig:2dStyblinski}
\end{center}
\vskip -0.2in
\end{figure}

\begin{figure}[t!]
\vskip 0.2in
\begin{center}
\resizebox{\columnwidth}{!}{
\includegraphics[trim= 10 5 80 10, clip]{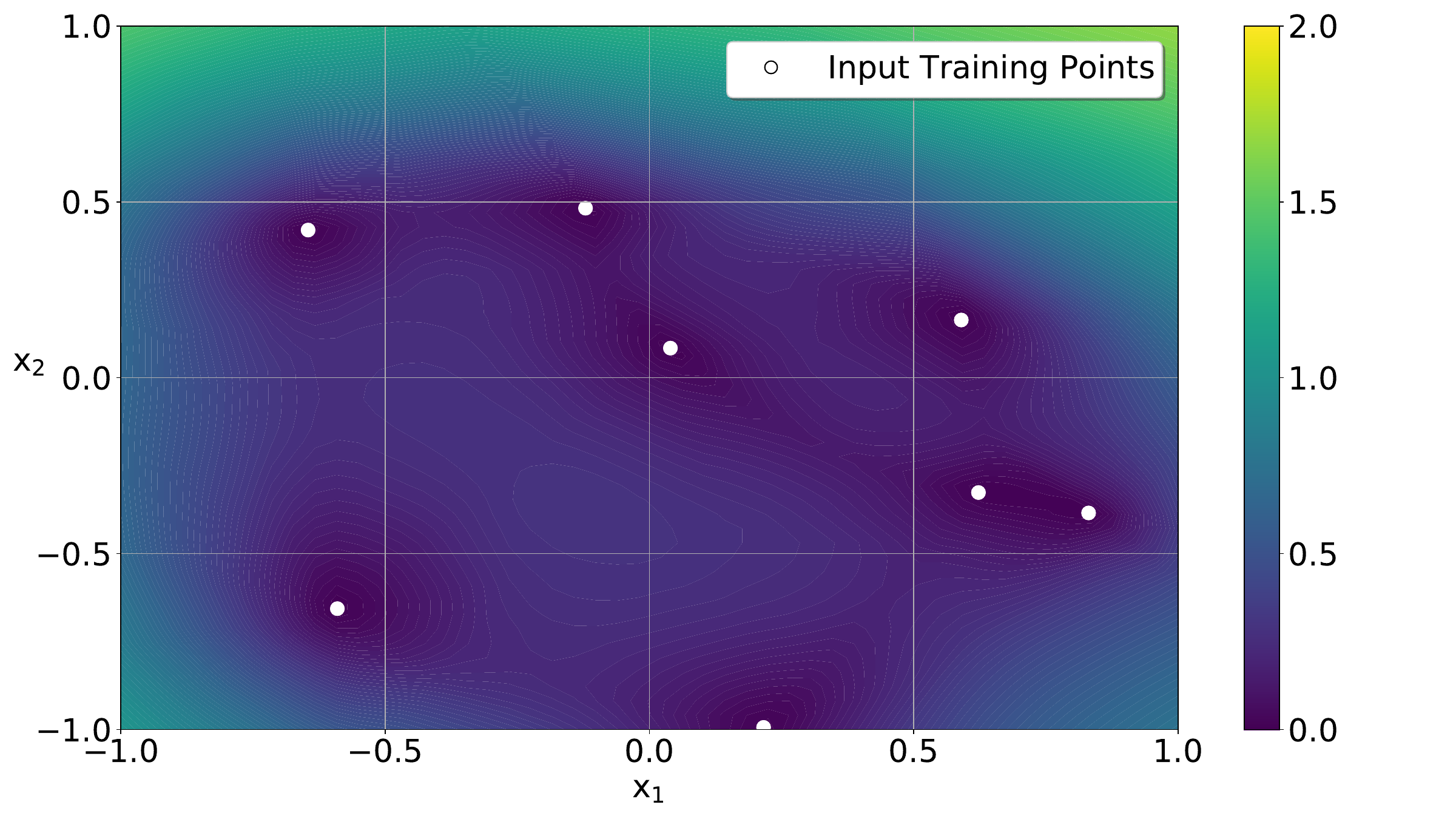}
}
\caption{Estimated model uncertainty $\sigmodelhat$ of NOMU for a single run of the Styblinski test function.}
\label{fig:2dStyblinski_NOMU}
\end{center}
\vskip -0.2in
\end{figure}

\begin{itemize}[leftmargin=*,topsep=0pt,partopsep=0pt, parsep=0pt,itemsep=0pt]
 \item \textbf{Benchmarks (\Cref{fig:2dStyblinski_benchmarks}):} As in 1D, we see that the RBF-based GP's UBs do have the expected smooth and isotropic shape with zero model uncertainty at input training points. Moreover, as in 1D, MCDO's UBs exhibit a tubular shape of equal size ($\approx$ 0--0.25) across the whole input space. Whilst DE nicely captures zero model uncertainty at input training points, it again exhibits the somehow arbitrary behaviour in areas with few input training points. Both HDE and HDE* yield model uncertainty estimates that are small at input training points, except for HDE for one input training point $(x_1,x_2)\approx (0.6,0.2)$, where the estimated model uncertainty is only small when continuing training on all training points (see HDE*). However, these estimates drastically fail to capture increased uncertainty in out-of-sample regions. For both algorithms, the model uncertainty estimate is unreasonably low in most regions of the input space and inexplicably high in a tiny fraction of the input space.
\end{itemize}
\begin{figure}[H]
\vskip 0.2in
\captionsetup[subfloat]{font=small,labelfont=small}
\centering
\subfloat[GP with calibration constant $c=1$. \label{subfig:2d_Styblinski_GPR}]{%
\includegraphics[trim= 10 5 80 10, clip, width=\columnwidth]{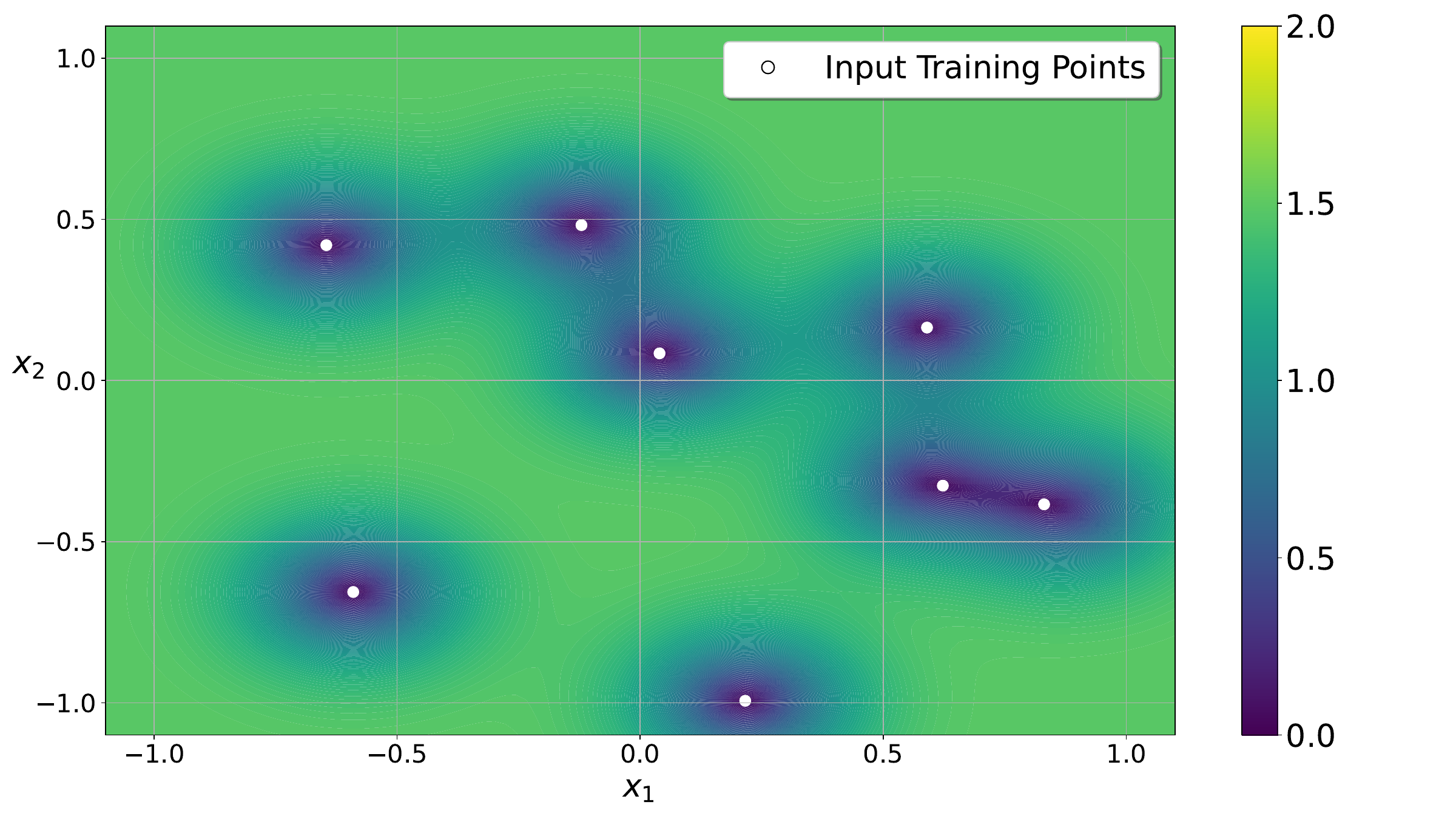}
}

\subfloat[{MCDO with calibration constant $c=10$.}\label{subfig:2d_Styblinski_MCDO}]{%
\includegraphics[trim= 10 5 80 10, clip, width=\columnwidth]{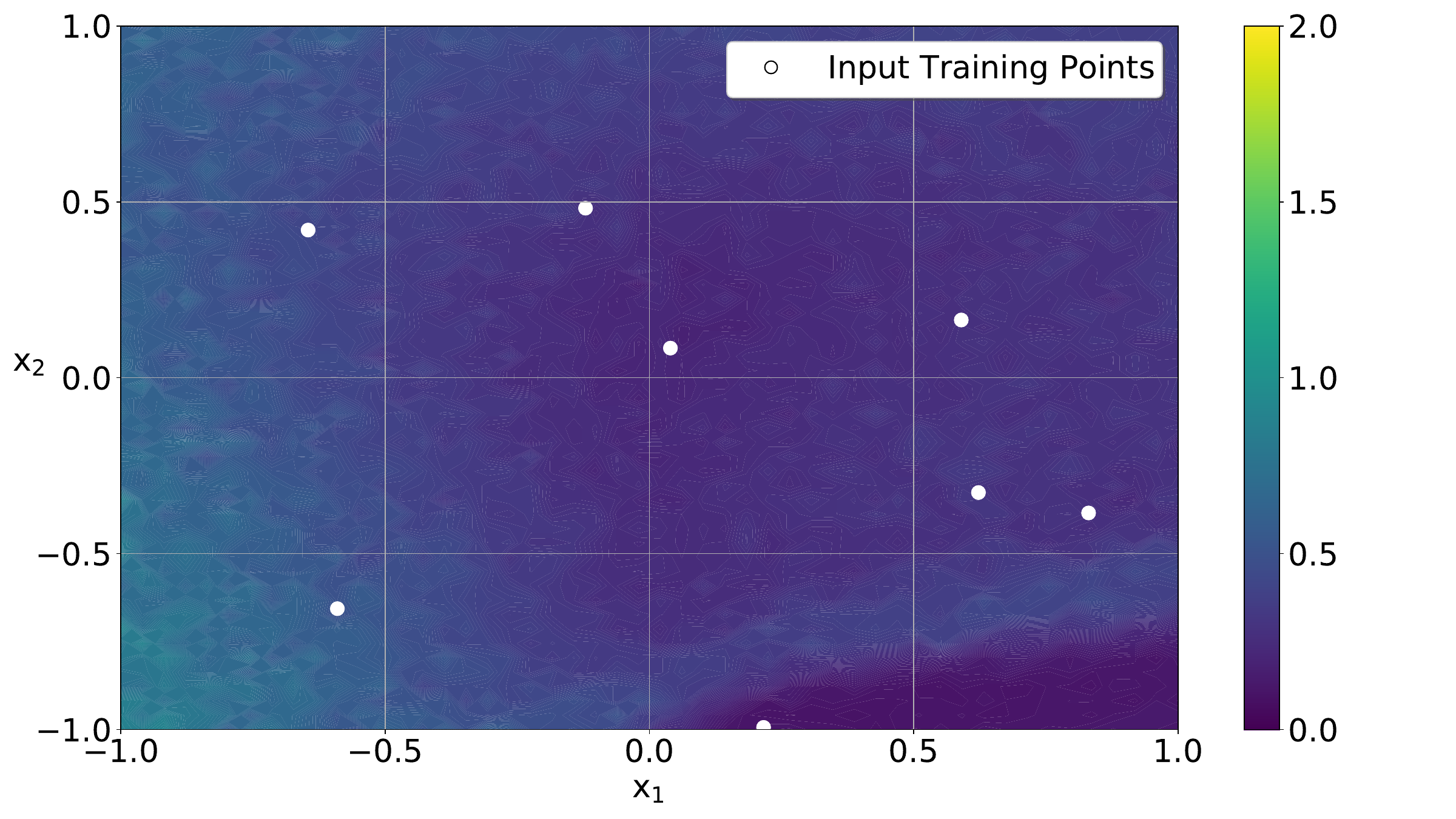}
}

\caption{Estimated model uncertainty of Gaussian process (GP) and MC dropout (MCDO) for a single run of the Styblinski test function.}
\end{figure}

\begin{figure}[H]
\ContinuedFloat

\subfloat[DE with calibration constant $c=15$.\label{subfig:2d_Styblinski_DE}]{%
\includegraphics[trim= 10 5 80 10, clip, width=\columnwidth]{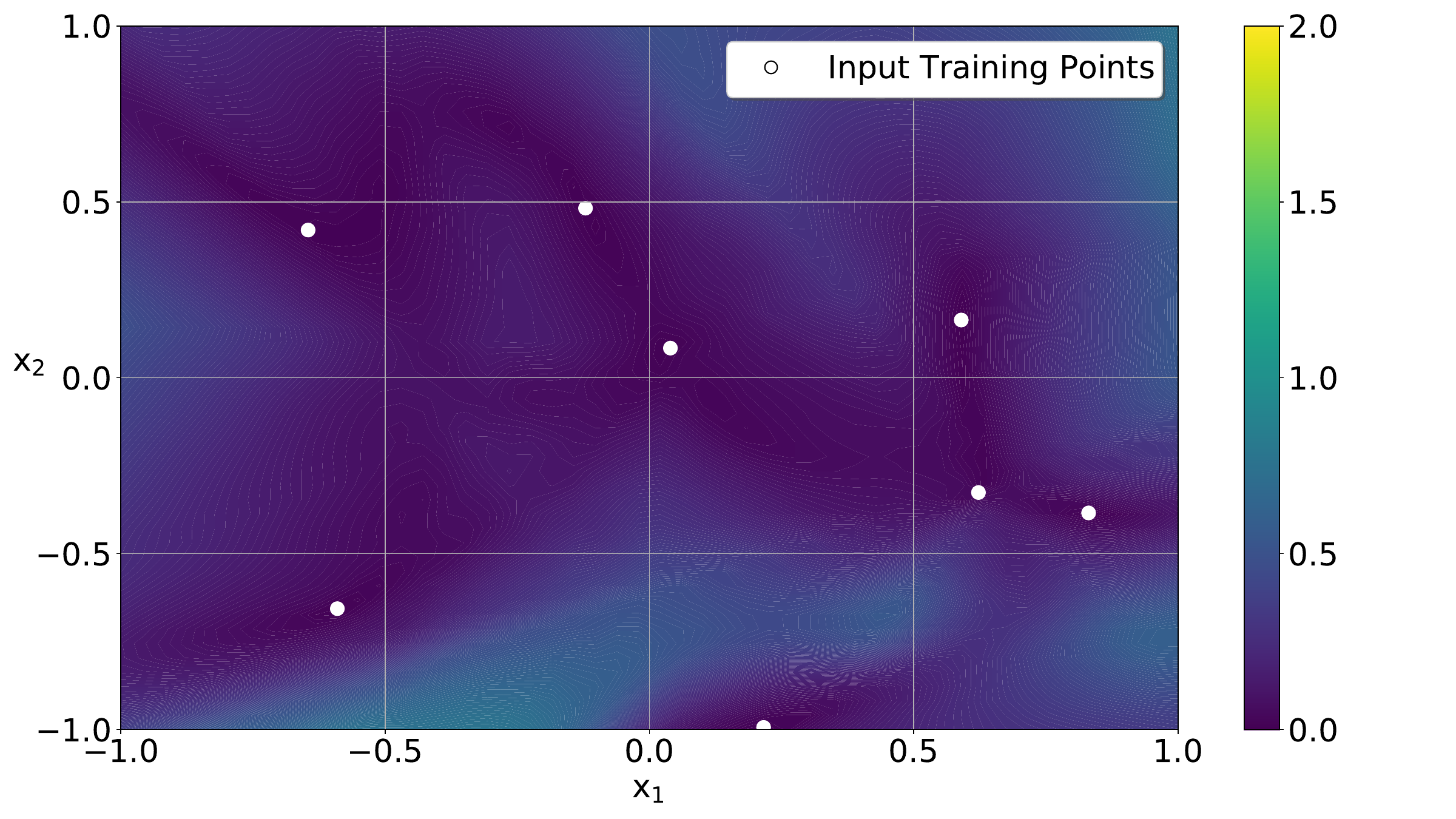}
}

\subfloat[HDE with calibration constant $c=30$.\label{subfig:2d_Styblinski_HDE}]{%
\includegraphics[trim= 10 5 80 10, clip, width=\columnwidth]{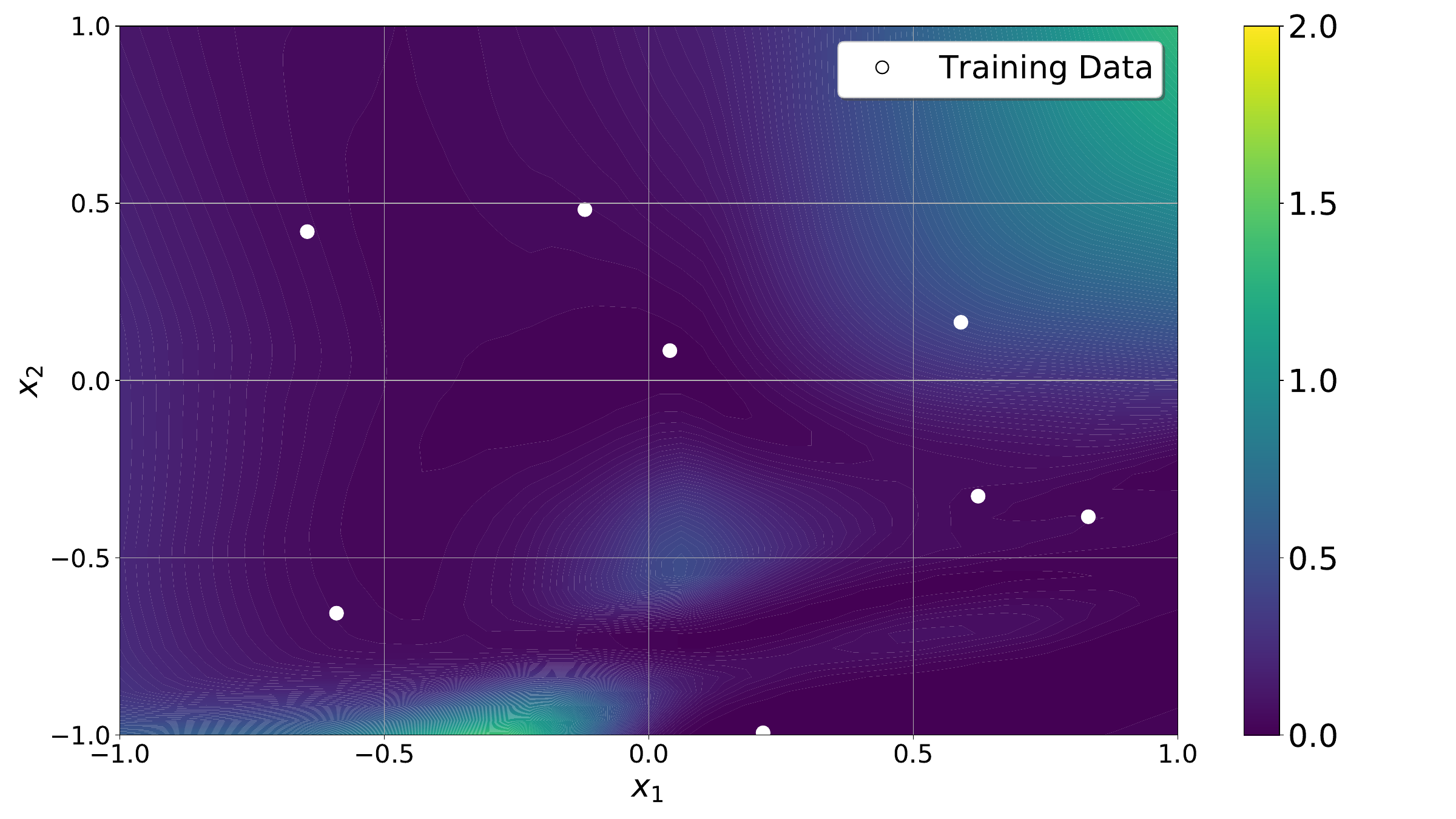}
}

\subfloat[HDE* with calibration constant $c=30$.\label{subfig:2d_Styblinski_HDEstar}]{%
\includegraphics[trim= 10 5 80 10, clip, width=\columnwidth]{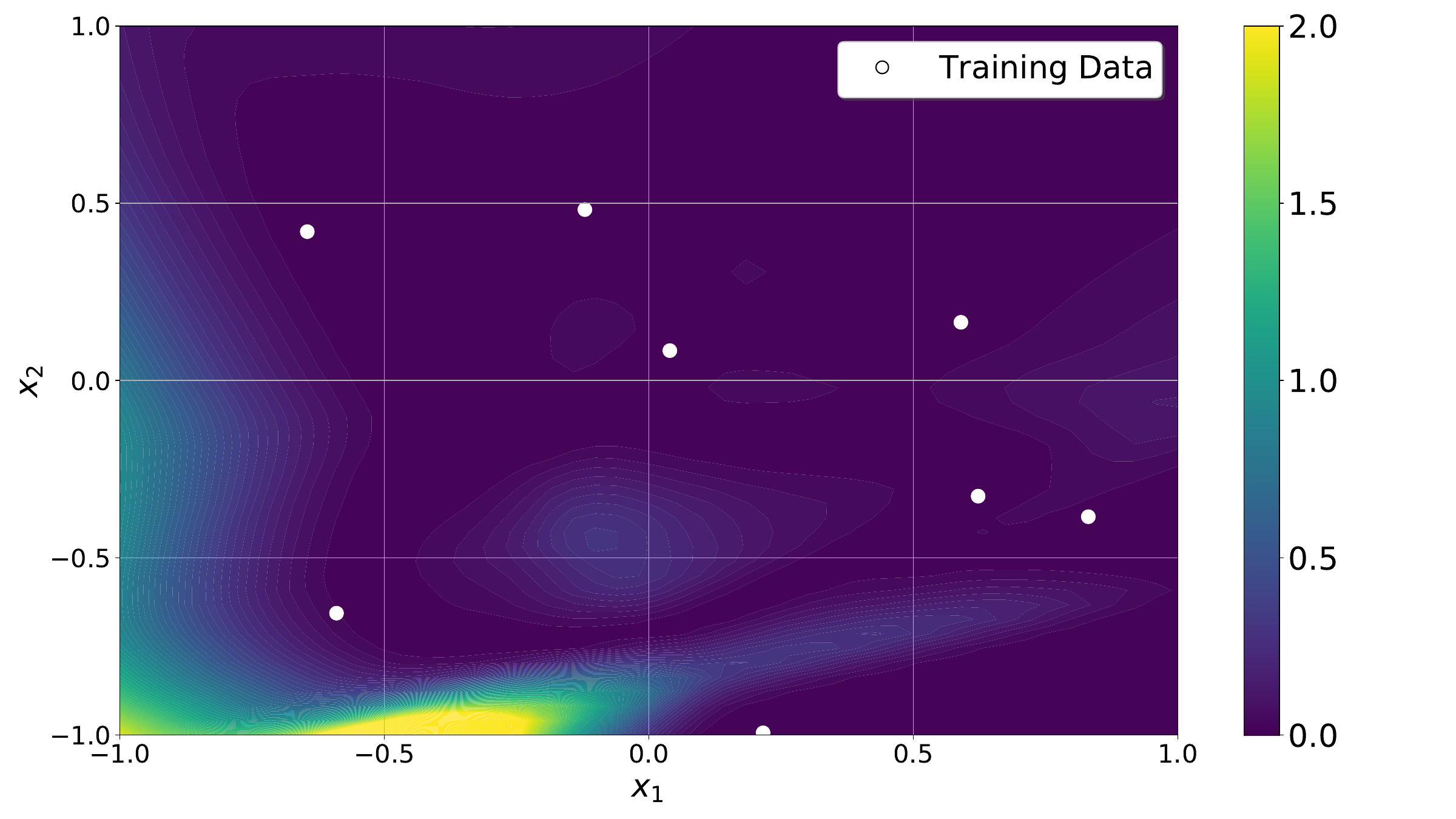}
}
\caption{(cont.) Estimated model uncertainty of deep ensembles (DE), hyper deep ensembles (HDE) and HDE* for a single run of the Styblinski test function.}
\label{fig:2dStyblinski_benchmarks}
\end{figure}

\subsubsection{Generative Test-Bed: Details}\label{sec:Appendix:GenerativeTestBed}
In this section, we give a detailed description of the generative test-bed setting and provide further results for up to $20$ dimensional input.

\paragraph{Detailed Setting}
\begin{enumerate}[leftmargin=*,topsep=0pt,partopsep=0pt, parsep=0pt,itemsep=0pt]
     \item We sample a test-function (i.e., the ground truth function) $f:[-1,1]^d\to \R$ from a BNN with i.i.d Gaussian weights.
     This means we sample i.i.d random parameters $\theta_i\sim\mathcal{N}(0,\sigma_{\text{prior}})$ and set $f=\NN_\theta$.
     The BNN is set up with three hidden layers with $2^{10}, 2^{11}$ and $2^{10}$ nodes, respectively.
     We choose $\sigma_{\text{prior}}$ such that $\E_\theta\left\lbrack\sqrt{\mathbbm{V}_x[\NN_\theta(x)\,|\, \theta]}\right\rbrack\approx 1.$%
     \footnote{$\E_\theta\left\lbrack\sqrt{\mathbbm{V}_x[\NN_\theta(x)\,|\, \theta]}\right\rbrack\approx 1$ holds true for $x\sim\text{Unif}\left([-1,1]^d\right)$. This might deviate (slightly) for Gaussian $x$.}
     Resulting values for $\sigma_{\text{prior}}$ are shown in \cref{tab:prior_variance}.
     
\begin{table}[h!]
    \caption{$\sigma_{\text{prior}}$ depending on the input dimension $d$.}
    \label{tab:prior_variance}
    \vskip 0.1in
    \begin{center}
    \begin{small}
    \begin{sc}
    \begin{tabular}{l
                S[table-format=2.3,detect-weight]}
		\toprule
	    $d$ & $\sigma_{\text{prior}}$  \\
	    \midrule
	    1  &  0.114\\
	    2  &  0.102\\
	    5  &  0.092\\
	    10 &  0.084\\
	    20 &  0.070\\
	    \bottomrule
	\end{tabular}
	\end{sc}
    \end{small}
    \end{center}
\end{table}

\item\label{itm:SamlingTheDataPoints} We sample $\ntr=8\cdot d$ input training points uniformly at random from the input space $[-1,1]^d$ for \Cref{tab:bnn_uniform,tab:Appendix:bnn_uniform}. In the case of \Cref{tab:Appendix:bnn_gauss}, we sample $\ntr=8\cdot d$ input training points from $d$-dimensional centered normal distribution with a covariance with $\min(5,d)$ eigenvalues of value $0.15$ and the remaining eigenvalues only have the value $0.001$.
     In both cases we get for input training point $\xtr_i$ the corresponding noiseless output-training point $\ytr_i=f(\xtr_i)=\NN_\theta(\xtr_i)$.
     And in both cases we create $\ntest=100\cdot d$ test-data points from the same distribution as the training data points.\footnote{Note that we sample the artificial data points for NOMU always uniformly from $[-1,1]^d$, since we assume that the low-dimensional manifold is often unknown in practice. NOMU could probably be improved if one tries to learn the distribution of the input data or if one already had some prior information about the distribution of the input data points.
     It is good to see how robustly NOMU can handle a different distribution of the input data points without the need to adapt the distribution of the artificial data points.}
     \item We calculate UBs of all considered algorithms including NOMU. For this, we use for all algorithms the same configuration as in the 1D and 2D toy regression settings (see \Cref{subsubsec:ToyRegression} and \Cref{subsec:ConfigurationDetailsofBenchmarksRegression}) except for NOMU, where we set $\sigmin=0.1$, $\sigmax=1$ and use $l=100\cdot d$ artificial data points, where $d$ denotes the respective dimension.
     \item We measure the quality of the UBs via \NLPD\ for a fine grid of different values for the calibration parameter $c$.
 \end{enumerate}
 We repeat these four steps $\mSeeds=200$-times. Then we choose for each method the value $c$ where the average \NLPD\ is minimal. Importantly, the chosen calibration constant $c$ \emph{only} depends on the input-dimension $d$ and on the algorithm but not on the randomly chosen test-function $f$.
 In \Cref{tab:Appendix:bnn_uniform} and \Cref{tab:Appendix:bnn_gauss}, we report the mean and a 95\% CI over these 200 runs for the chosen calibration constant $c$.
\begin{table}[t!]
    \setlength\tabcolsep{2pt}
    \caption{Uniform data generation. Shown are average \NLPD\ {\scriptsize(without constant $\ln(2\pi)/2$)} and a 95\% CI over 200 BNN samples.}
    \label{tab:Appendix:bnn_uniform}
    \vskip 0.1in
    \begin{center}
    \begin{small}
    \begin{sc}
    \resizebox{1\columnwidth}{!}{
    	\begin{tabular}{
    			l
    			c
    			c
    			c
    			c
    			c
    			c
    			c
    			c
    			c
    			c
    			c
    		}
    		\toprule
    		\textbf{Function} & \multicolumn{1}{c}{\textbf{NOMU}} &  \multicolumn{1}{c}{\textbf{GP}}  & \multicolumn{1}{c}{\textbf{MCDO}}  &  \multicolumn{1}{c}{\textbf{DE}} & \multicolumn{1}{c}{\textbf{HDE}}&
    		\multicolumn{1}{c}{\textbf{TUBE}}\\
    		\midrule
    	    BNN1D & \winc-1.65$\pm$\scriptsize0.10 & -1.08$\pm$\scriptsize0.22 & -0.34$\pm$\scriptsize0.23 &  -0.38$\pm$\scriptsize0.36 & \hphantom{-}8.47$\pm$\scriptsize1.00 &-0.86$\pm$\scriptsize0.19\\
    	    BNN2D &  \winc-1.16$\pm$\scriptsize0.05& -0.52$\pm$\scriptsize0.11 &-0.33$\pm$\scriptsize0.13 & -0.77$\pm$\scriptsize0.07 & \hphantom{-}9.10$\pm$\scriptsize0.39&-0.81$\pm$\scriptsize 0.06 \\
    	    BNN5D & \winc-0.37$\pm$\scriptsize0.02 & -0.33$\pm$\scriptsize0.02 & -0.05$\pm$\scriptsize0.04 & -0.13$\pm$\scriptsize0.03 & \hphantom{-}8.41$\pm$\scriptsize1.00&-0.33$\pm$\scriptsize0.02 \\
    	    BNN10D & \winc-0.09$\pm$\scriptsize0.01 &\winc-0.11$\pm$\scriptsize0.01 & \hphantom{-}0.06$\pm$\scriptsize0.02 & \hphantom{-}0.10$\pm$\scriptsize0.01 &\hphantom{-}6.44$\pm$\scriptsize 1.36&  \winc-0.12$\pm$\scriptsize 0.01\\
    	    BNN20D & \hphantom{-}0.15$\pm$\scriptsize0.01 & \winc\hphantom{-}0.06$\pm$\scriptsize0.01 & \hphantom{-}0.18$\pm$\scriptsize0.01 & \hphantom{-}0.30$\pm$\scriptsize0.01 & \hphantom{-}3.58$\pm$\scriptsize0.81&  \winc \hphantom{-}0.07$\pm$\scriptsize 0.01 \\
    		\bottomrule 
    	\end{tabular}
    }
    \end{sc}
    \end{small}
    \end{center}
    \vskip -0.1in
\end{table}
\paragraph{Discussion of the Results in \Cref{tab:Appendix:bnn_uniform}}
We see that for $d\leq5$ NOMU significantly outperforms all other considered benchmarks. For dimensions $d\geq 10$ it gets harder to find a good metric for measuring the quality of the uncertainty bounds:
For high dimensions almost no test data point sampled uniformly from $[-1,1]^d$ is close to the training data points, so they all have quite high uncertainty. We further verified this for dimensions $d=10$ and $d=20$ by introducing another algorithm we call \textsc{Tube}. \textsc{Tube} uses NOMU's mean prediction and assigns the same (calibrated) constant uncertainty $c$ to all test data points. As we can see in \Cref{tab:Appendix:bnn_uniform}, \textsc{Tube} is ranked first (en par with GP) in $d=10$ and $d=20$, highlighting that the metric \NLPD\ is flawed in this setting for dimensions $d\ge10$. However, for dimensions $d\leq5$ the trivial \textsc{Tube} algorithm is significantly outperformed by other methods.
Thus, the \NLPD\-metric for dimensions $d\geq10$ and uniformly distributed data on $[-1,1]^d$ mainly measures if \ref{itm:Axioms:largeDistantLargeUncertainty} is reliably fulfilled. \ref{itm:Axioms:largeDistantLargeUncertainty} is particularly reliably fulfilled for \textsc{Tube} and for GP. For BO however, especially \ref{itm:Axioms:ZeroUncertaintyAtData} is important to not get stuck in local optima, because in BO the test data-points are not uniformly distributed, and the predicted uncertainty close to the training points is particularly important.
Therefore, in this setting, we only include results for input dimensions 1-5D (see \Cref{tab:bnn_uniform}) in the main part of the paper.

\paragraph{Discussion of the Results in \Cref{tab:Appendix:bnn_gauss}} Another approach to circumvent the problem that the \NLPD\ metric loses its relevance for scenarios where the input test data points are all far away from the input training data points is to use a different distribution for $X$. The well known manifold-hypothesis \citep{cayton2005algorithms} states that in typical real world data-sets relevant input data points lie close to a low-dimensional manifold. Therefore, we run a further experiment where we sample from our training and test data from a Gaussian distribution that concentrates mainly close to a $\min(d,5)$-dimensional flat manifold as described in \cref{itm:SamlingTheDataPoints} and report the results in \Cref{tab:Appendix:bnn_gauss}. For this experiment, we see that dimension $d\geq10$ are interesting too, since \textsc{Tube} is significantly beaten by NOMU and GP. In \Cref{tab:Appendix:bnn_gauss}, one can see that NOMU is among the best methods for all considered dimensions.
\begin{table}[t!]
    \setlength\tabcolsep{2pt}
    \caption{5D-Gaussian data generation. Shown are average \NLPD\ {\scriptsize(without constant $\ln(2\pi)/2$)} and a 95\% CI over 200 BNN samples.}
    \label{tab:Appendix:bnn_gauss}
    \vskip 0.1in
    \begin{center}
    \begin{small}
    \begin{sc}
    \resizebox{1\columnwidth}{!}{
    	\begin{tabular}{
    			l
    			c
    			c
    			c
    			c
    			c
    			c
    			c
    			c
    			c
    			c
    			c
    		}
    		\toprule
    		\textbf{Function} & \multicolumn{1}{c}{\textbf{NOMU}} &  \multicolumn{1}{c}{\textbf{GP}}  & \multicolumn{1}{c}{\textbf{MCDO}}  &  \multicolumn{1}{c}{\textbf{DE}} & \multicolumn{1}{c}{\textbf{HDE}} &\multicolumn{1}{c}{\textbf{TUBE}}\\
    		\midrule
    	    BNN1D & \winc-1.91$\pm$\scriptsize0.13 & -1.13$\pm$\scriptsize0.27 & -0.49$\pm$\scriptsize0.16 &  -0.27$\pm$\scriptsize0.54 & \hphantom{-}8.95$\pm$\scriptsize1.18& -0.85$\pm$\scriptsize0.13\\
    	    BNN2D &  \winc-1.55$\pm$\scriptsize0.05& -0.85$\pm$\scriptsize0.12 &-0.62$\pm$\scriptsize0.13 & -1.15$\pm$\scriptsize0.08 & \hphantom{-}7.92$\pm$\scriptsize0.42& -0.95$\pm$\scriptsize0.11 \\
    	    BNN5D & \winc-0.77$\pm$\scriptsize0.02 & -0.72$\pm$\scriptsize0.02 & -0.47$\pm$\scriptsize0.03 & -0.55$\pm$\scriptsize0.03 & \hphantom{-}9.68$\pm$\scriptsize1.08& -0.68$\pm$\scriptsize0.02 \\
    	    BNN10D & \winc-1.36$\pm$\scriptsize0.01 &\winc-1.31$\pm$\scriptsize0.01 & -1.08$\pm$\scriptsize0.03 & -1.14$\pm$\scriptsize0.02 &\hphantom{-}5.16$\pm$\scriptsize 1.55& -1.25$\pm$\scriptsize0.01\\
    	    BNN20D & \winc-1.70$\pm$\scriptsize0.01 & \winc-1.72$\pm$\scriptsize0.01 & -1.52$\pm$\scriptsize0.02 & -1.53$\pm$\scriptsize0.01 & 0.99$\pm$\scriptsize0.39& -1.68$\pm$\scriptsize0.01 \\
    	    
    		\bottomrule 
    	\end{tabular}
    }
    \end{sc}
    \end{small}
    \end{center}
    \vskip -0.1in
\end{table}
\paragraph{Our Generative Test-Bed vs. \citet{osband2021epistemic}}
\citet{osband2021epistemic} measures the Kullback–Leibler divergence between the posterior of any method to the posterior of shallow GP with a fixed (deep) neural tangent kernel (NTK). Thus, they evaluate methods by their similarity to a NTK-GP posterior. Because of the fixed kernel the posterior of the NTK-GP does not fulfill \ref{itm:Axioms:irregular} at all. This implies that their evaluation metric does not reward \ref{itm:Axioms:irregular} at all. However, we think that \ref{itm:Axioms:irregular} is very important for real-world deep learning applications (see
\cref{footnote:LearningTheMetricISImportant} and \Cref{subsec:A note on Desideratum D4}).
The posterior of a finite-width BNN fulfills \ref{itm:Axioms:irregular}. Therefore, approximating such a posterior is more desirable. However, in contrast to the NTK-GP posterior in \cite{osband2021epistemic}, there is no closed formula for a finite-width BNN posterior. Thus, at first sight one cannot straightforwardly evaluate methods based on the Kullback–Leibler divergence between their posterior and this intractable BNN posterior as in \citet{osband2021epistemic}.

Nonetheless, in the following, we prove in \Cref{thm:appendix_dkl_approximation} that the metric we use in \Cref{tab:bnn_uniform,tab:Appendix:bnn_uniform,tab:Appendix:bnn_gauss} indeed converges (up to a constant)\footnote{If we are just interested in the relative performance of different methods compared to each other, the constant does not matter.} to the average Kullback–Leibler divergence $\adkl$ to the posterior of a finite width BNN as $\mSeeds$ tends to infinity. First, we define the average Kullback–Leibler divergence $\adkl$ and introduce some notation.
\begin{definition}[Avg Kullback–Leibler divergence]\label{def:average_dKL}
Let $\Dtr$ be a finite set of training points and 
consider a prior distribution\footnote{In our generative test-bed the prior distribution is a BNN prior with i.i.d centered Gaussian weights.} $\PP[f\in\cdot]$ on the function space $\{f:X\to Y\}$ and the corresponding posterior $\PP[f\in\cdot\ |\ \Dtr]$ on the function space. Then the marginal of the posterior $\PP[f(x)\in\cdot\ |\ \Dtr,x]$ is a measure on $\R$ for every given input data point $x\in X$ and every given training data set $\Dtr$. Let $\Q[\cdot \ |\ \Dtr,x]$ also be a measure on $\R$ for every given input data point $x\in X$ and every given training data set $\Dtr$.\footnote{In our context $\Q[\cdot \ |\ \Dtr,x]$ can be the approximation of the marginal posterior at $x\in X$ given training data $\Dtr$ obtained from any method such as NOMU, GP, MCDO, DE, HDE, etc.}
The average Kullback–Leibler divergence is then defined as
\[\adkl=\E_{\Dtr,x}\left[{\dkl\left(\PP[f(x)\in\cdot\ |\ \Dtr,x]\ ||\ \Q[\cdot \ |\ \Dtr,x]\right)}\right],\]
where the expectation is taken over $x$ and $\Dtr$ according to $\PP$, and $\dkl$ is the classical \href{https://en.wikipedia.org/wiki/Kullback\%E2\%80\%93Leibler_divergence}{Kullback–Leibler divergence} between two probability measures on $\R$.
This is equivalent to Equation~(1) from \cite{osband2021epistemic}.
\end{definition}
\begin{theorem}\label{thm:appendix_dkl_approximation} Using the notation from \Cref{def:average_dKL}, let $q(\cdot\ |\ \Dtr,x)$ be the density of $\Q[\cdot\ |\ \Dtr,x]$ on $\R$.
Let $(f_j(x_j),\Dtr_j,x_j)_{j\in \fromto{\mSeeds}}$ be i.i.d samples of $\PP[(f(x),\Dtr,x)\in \cdot]$.%
\footnote{\label{footnote:dkl_theorem_ntest_geq2}We formulate the theorem for the hardest case of $\ntest=1$. The convergence of

\scalebox{0.85}{\parbox{1\columnwidth}{
\[\lim_{\mSeeds\to\infty}\frac{1}{\mSeeds}\sum_{j=1}^{\mSeeds}\frac{1}{\ntest}\sum_{i=1}^{\ntest}-\log\left(q(f_j(x_{j,i})\ |\ \Dtr_j,x_{j,i})\right)
=\adkl+C_{\PP}\]
}}\newline
is obviously even faster for $\ntest>1$. For the proof it is only important that $\mSeeds\to\infty$.
}
Then, the average \NLPD{}

\resizebox{\columnwidth}{!}{\parbox{1\columnwidth}{
\[\lim_{\mSeeds\to\infty}\frac{1}{\mSeeds}\sum_{j=1}^{\mSeeds}-\log\left(q(f_j(x_j)\ |\ \Dtr_j,x_j)\right)
=\adkl+C_{\PP}\]
}}
converges ($\PP$-a.s.) to $\adkl$ up to an additive constant $C_{\PP}$, where $C_{\PP}$ only\footnote{$\PP$ does not depend on the chosen method. Only $\Q$ (and accordingly $q$) differs among the methods. Thus, $C_{\PP}$ does not change the ranking amongst different methods.} depends on $\PP$ and not on $\Q$.
\end{theorem}
\begin{proof}
Let $p$ be the density of $\PP[(f(x),\Dtr,x)\in \cdot]$ and $p(\cdot\ |\ \Dtr,x)$ the density on $\R$ of the marginal posterior $\PP[f(x)\in\cdot\ |\ \Dtr,x]$.
Further we write $p(\Dtr,x)$ for the density of of the marginal $\PP[(\Dtr,x)\in\cdot]$ evaluated at $(\Dtr,x)$.
Since
\[\lim_{\mSeeds\to\infty}\frac{1}{\mSeeds}\sum_{j=1}^{\mSeeds}-\log\left(q(f_j(x_j)\ |\ \Dtr_j,x_j)\right)\]
can be seen as a Monte-Carlo approximation, it converges ($\PP$-a.s.) to%

\resizebox{\columnwidth}{!}{\parbox{1\columnwidth}{
\begin{align*}
    &\E_{f(x),\Dtr,x}[-\log\left(q(f(x)\ |\ \Dtr,x)\right)]=\\
    &=\int -\log\left(q(f(x)\ |\ \Dtr,x)\right) p(f(x),\Dtr,x) d(f(x),\Dtr,x)=\\
    &=\int -\log\left(q(f(x)\ |\ \Dtr,x)\right) p(f(x)\ |\ \Dtr,x)p(\Dtr,x) d(f(x),\Dtr,x).
\end{align*}
}}%

By Fubini this is equal to%

\resizebox{\columnwidth}{!}{\parbox{1\columnwidth}{
\begin{align*}
    &\int\int -\log\left(q(f(x)\ |\ \Dtr,x)\right) p(f(x)\ |\ \Dtr,x) d(f(x)) p(\Dtr,x) d(\Dtr,x)=\\
    &=\E_{\Dtr,x}\Bigg\lbrack
    \underbrace{\int -\log\left(q(f(x)\ |\ \Dtr,x)\right) p(f(x)\ |\ \Dtr,x) d(f(x))}_{H\left(\PP[f(x)\in\cdot\ |\ \Dtr,x],\Q[\cdot\ |\ \Dtr,x]\right)}
    \Bigg\rbrack,
\end{align*}
}}%

where $H$ is the \href{https://en.wikipedia.org/wiki/Cross_entropy}{cross-entropy}. So this is further equal to

\resizebox{\columnwidth}{!}{\parbox{1\columnwidth}{
\begin{align*}
    &\E_{\Dtr,x}\left\lbrack
    \dkl\left(\PP[f(x)\in\cdot\ |\ \Dtr,x]\ ||\ \Q[\cdot\ |\ \Dtr,x]\right) 
    \right\rbrack
    +\\
    &\E_{\Dtr,x}\left\lbrack
    H\left(\PP[f(x)\in\cdot\ |\ \Dtr,x]\right) 
    \right\rbrack=\\
    &= \adkl + C_{\PP},
\end{align*}
}}

where $C_{\PP}=\E_{\Dtr,x}\left\lbrack
    H\left(\PP[f(x)\in\cdot\ |\ \Dtr,x]\right) 
    \right\rbrack$ only depends on $\PP$ and not on $\Q$ or $q$.
\end{proof}
To apply \Cref{thm:appendix_dkl_approximation} to the metrics reported in \Cref{tab:bnn_uniform,tab:Appendix:bnn_uniform,tab:Appendix:bnn_gauss} one has to apply \cref{footnote:dkl_theorem_ntest_geq2} and has to set $q(\cdot\ |\ \Dtr,x)$ to be the density corresponding to the already calibrated uncertainty at $x$ obtained from any method trained on $\Dtr$, i.e., for example let $c$ be a fitted calibration parameter, and $\fhat$ and $\sigmahat$ be the fitted model and model uncertainty prediction of NOMU then $q(\cdot\ |\ \Dtr,x):=\mathcal{N}(\cdot;\fhat(x),c\cdot \sigmahat(x))$ for $x\in X$.\footnote{In theory, more general posteriors than Gaussians could be used too, but within this paper we always assumed Gaussian marginals of the posterior as all the considered benchmarks also output Gaussian distributed approximations of the marginal posteriors.} In our setting, we made sure that the correct calibration constant $c$ is chosen in the limit $\mSeeds\to\infty$, since we chose one fixed value for $c$ per dimension and method and do not over-fit on specific seeds.

\begin{figure*}[htbp]
    \centering
    \resizebox{.887\textwidth}{!}{
	\includegraphics[trim= 0 180 0 200, clip,width=2\columnwidth]{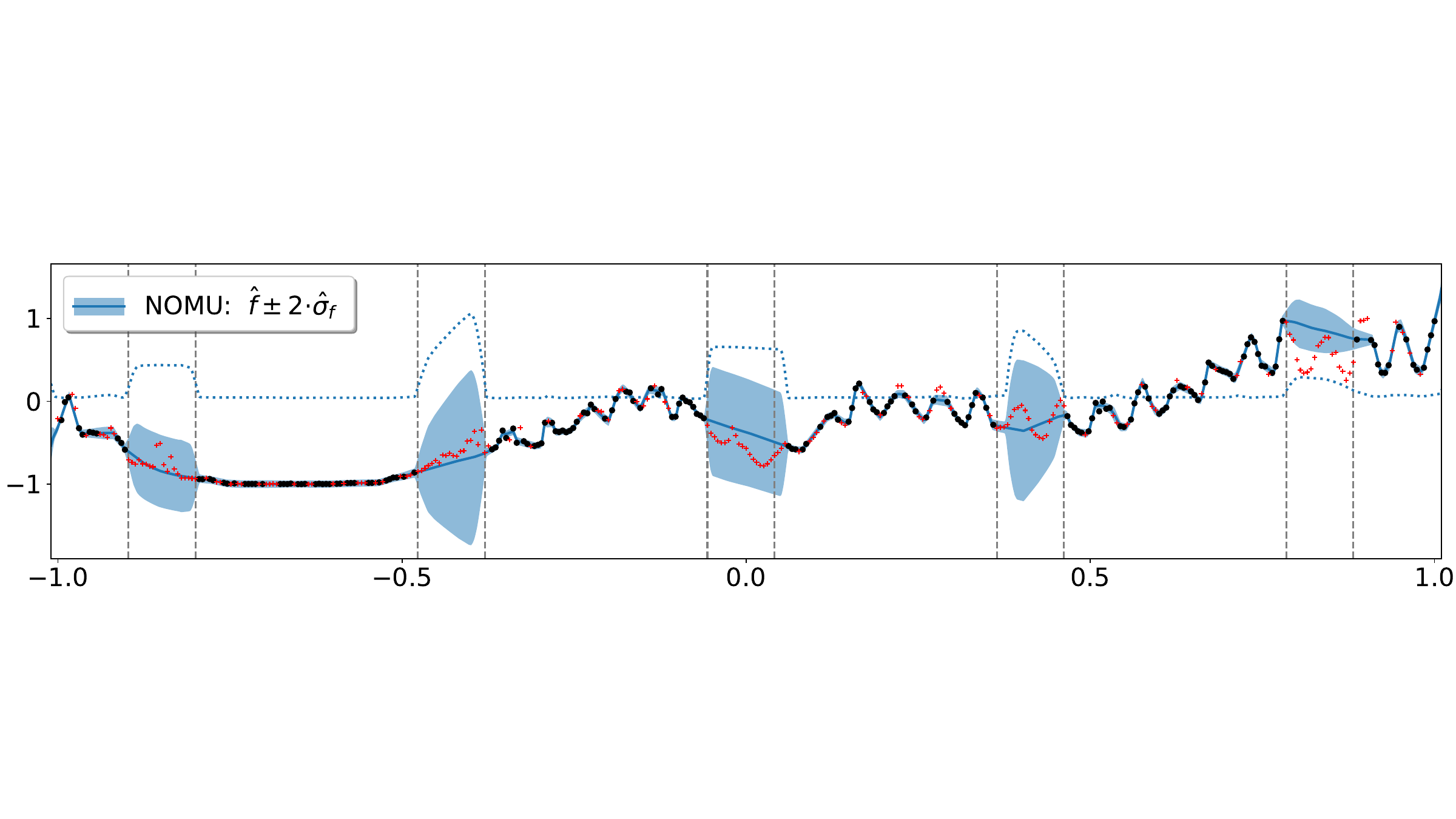}
	}
	\resizebox{.887\textwidth}{!}{
	\includegraphics[trim= 0 180 0 200, clip,width=2\columnwidth]{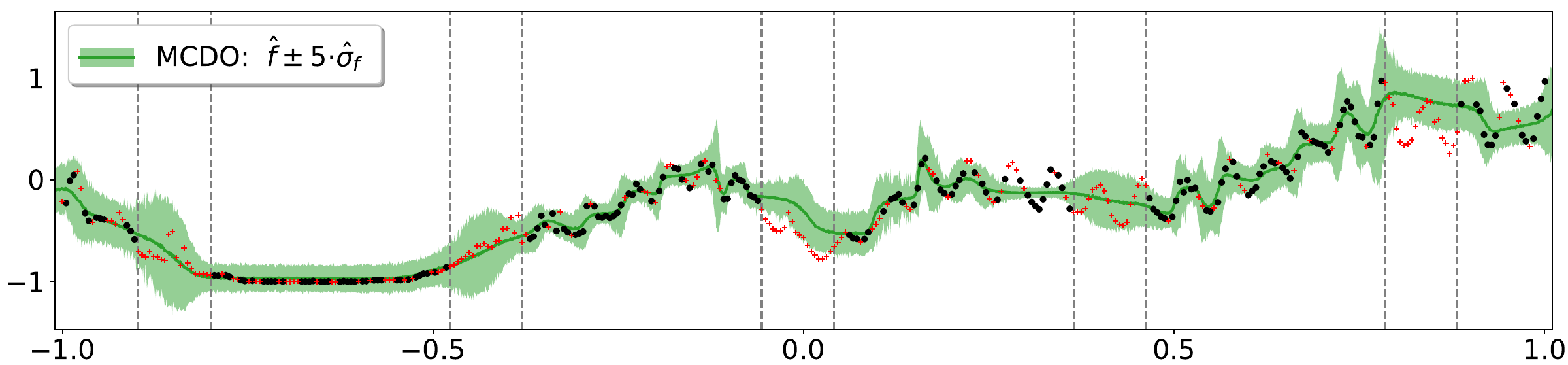}
	}
	\resizebox{.887\textwidth}{!}{
	\includegraphics[trim= 0 180 0 200, clip,width=2\columnwidth]{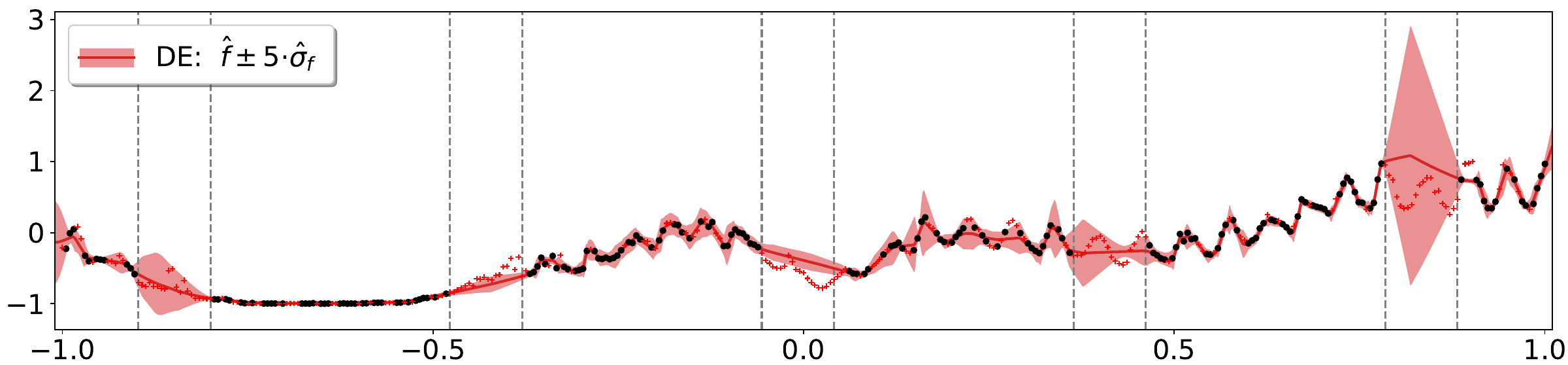}
	}
	\resizebox{.887\textwidth}{!}{
    \includegraphics[trim= 0 180 0 200, clip,width=2\columnwidth]{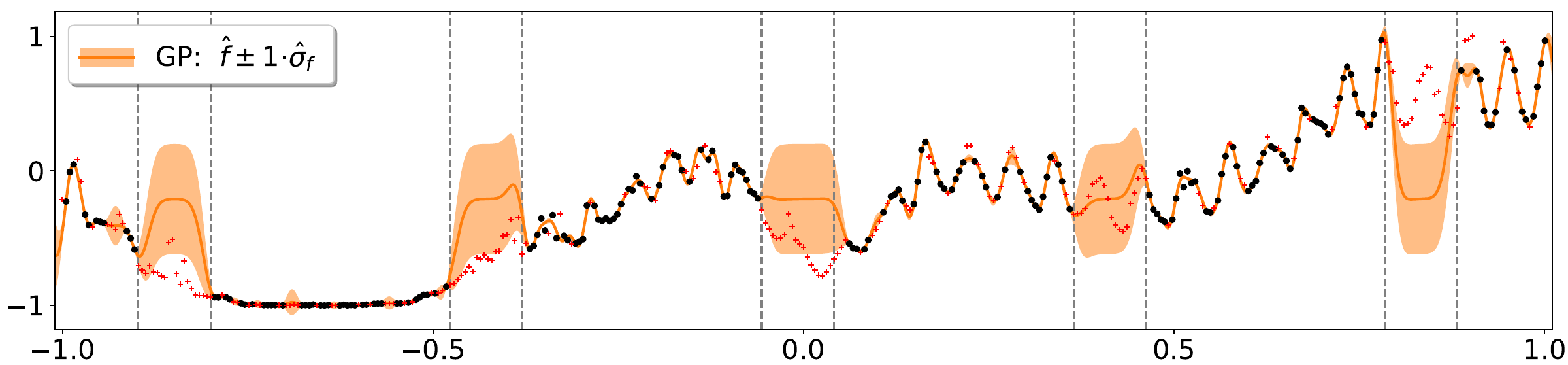}
    }
    \resizebox{.887\textwidth}{!}{
    \includegraphics[trim= 0 180 0 200, clip,width=2\columnwidth]{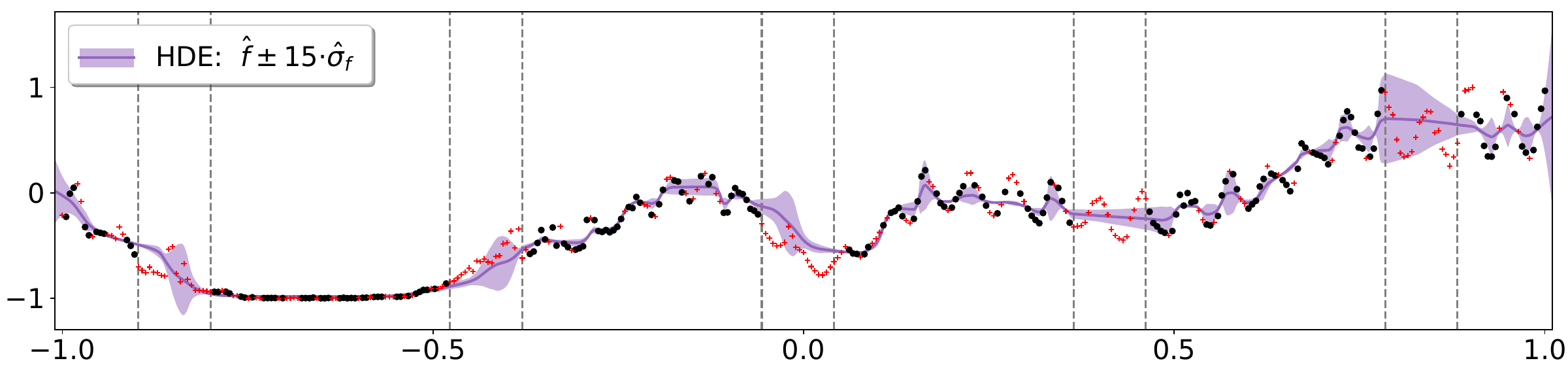}
    }
    \resizebox{.887\textwidth}{!}{
    \includegraphics[trim= 0 180 0 200, clip,width=2\columnwidth]{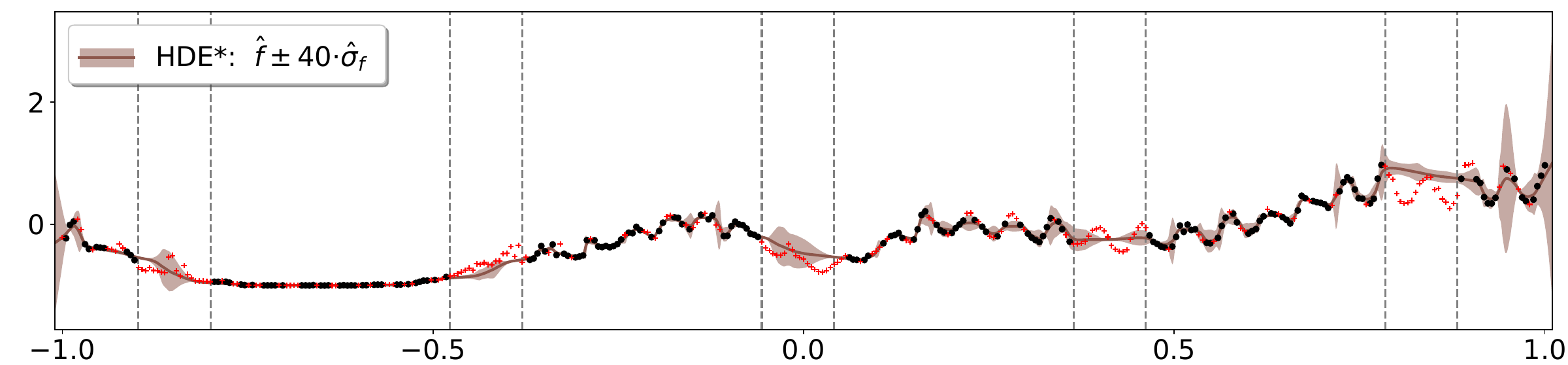}
    }
    \vskip -0.35cm
    \caption{
        Visualization of each algorithm's model prediction (solid lines) and UBs (shaded areas) on the solar irradiance data set. Training and test points are shown as black dots and red crosses, respectively. We present UBs obtained by NOMU (c=2) compared to the benchmarks MC dropout (MCDO) (c=5), deep ensembles (DE) (c=5), Gaussian process (GP) (c=1) and hyper deep ensembles (HDE) (c=15) and (HDE*) (c=40). Additionally, NOMU's estimated and scaled model uncertainty  $2\sigmodelhat$ is shown as a dotted line.
        }
    \label{fig:irradiance}
\end{figure*}

\subsubsection{Solar Irradiance Time Series: Details}\label{subsubsec:Detailed Real-Data Time Series Results}
\paragraph{Configuration Details}
We use for all algorithms the same configuration as in the 1D and 2D toy regression setting (see \Cref{subsubsec:ToyRegression} and \Cref{subsec:ConfigurationDetailsofBenchmarksRegression}) except that we now train all algorithms for $2^{14}$ epochs and set as L2-regularization $\lambda=10^{-19}$ for NOMU, $\lambda=10^{-19}/\ntr$ for DE, and $\lambda=(1-0.2)\cdot10^{-19}/\ntr$ for MCDO. For HDE, we accordingly draw the L2-regularization parameter log-uniform from $(10^{-19}\cdot 10^{-3},10^{-19}\cdot 10^{+3})$ (which affects the L2-regularization parameter of HDE* in the same way as in \Cref{subsec:ConfigurationDetailsofBenchmarksRegression}).

\paragraph{Results} \Cref{fig:irradiance} visualizes the UBs from NOMU and the benchmark algorithms. As becomes evident, NOMU manages to best fit the training data\footnote{\citet{gal2016dropout} consider the same data set to compare UBs of MC dropout and GPs. In this work however, NNs are trained for $10^6$ epochs possibly explaining why MC dropout more nicely fits the training data in their case.}, while keeping model uncertainty in-between training points. The corresponding metrics for this specific run are given in \Cref{tab:irradiance}.

\begin{table}[t!]
    \caption{Metrics for solar irradiance time series.}
    \label{tab:irradiance}
    \vskip 0.1in
    \begin{center}
    \begin{small}
    \begin{sc}
    \resizebox{0.6\columnwidth}{!}{
    \begin{tabular}{l
                S[table-format=2.3,detect-weight]
    			S[table-format=2.3,detect-weight]}
    	\toprule
        \textbf{Method} & {\textbf{\AUC}$\downarrow$}  & {\textbf{\MNLPD}$\downarrow$} \\
        \midrule
        {NOMU} &   0.32  &  \winc -1.44\\
        {GPR} & 0.46 &  -1.24\\
        {MCDO} & \winc 0.30&  -1.11\\
        {DE} & 0.35 &  -1.28\\
        {HDE} & 0.47 &  -0.98\\
        {HDE*} & 0.59 &  -1.08\\
        \bottomrule
    \end{tabular}
    }
    \end{sc}
    \end{small}
    \end{center}
\vskip -0.1in
\end{table}
\subsubsection{UCI Data Sets: Details}\label{subsec:DetailsUCIExperiment}
To get a feeling for how well NOMU (without data noise extension or hyperparameter tuning) performs in noisy regression with high-dimensional input, we test our algorithm on the popular UCI benchmark \citep{hernandez2015probabilistic} and the UCI gap data sets introduced in \citep{foong2019inbetween}.

The UCI gap data sets were designed to capture whether an algorithm's uncertainty estimate increases in between separated clusters of observations where epistemic uncertainty should be higher compared to regions with many data points. This is also required by \hyperref[itm:Axioms:largeDistantLargeUncertainty]{D3}. However, in \citep{foong2019inbetween} only those data points that had been removed to create gaps in the training data were used as test data for calculating \NLPD. Thus, \NLPD\ on UCI gap data fails by construction to capture the uncertainty quality outside of the gap: First, \hyperref[itm:Axioms:ZeroUncertaintyAtData]{D2} is not properly measured by this evaluation, since there are not many test-points in the test data-set that are close to training points.
Second, also \hyperref[itm:Axioms:largeDistantLargeUncertainty]{D3} is not properly measured, since the smaller gaps in between data points outside of the gap are not part of the test data-set. Note that \hyperref[itm:Axioms:largeDistantLargeUncertainty]{D3} also concerns these kinds of gaps.
To provide better evaluation of uncertainty, future work should focus on mixed test data-sets (moving some input points outside of the gap from the training set to the test set) similar to our train/test split from the experimental setting in \Cref{subsubsec:Detailed Real-Data Time Series Results}.

Nonetheless, we test NOMU on the UCI and UCI gap data sets using the same experiment setup as in the respective works. 

\paragraph{NOMU Setup}
In line with the literature, NOMU's main and side architectures were chosen as a single-hidden-layer NN with $50$ hidden nodes (except for the large protein data set, where the number of hidden nodes was $100$). NOMU was trained for $400$ (UCI) respectively $40$ (UCI gap) epochs, with L2-regularization $1e^{-09}$ and $1e^{-04}$ on the main- and side architectures respectively, using a batch size of $100$ in Adam stochastic gradient descent with learning rate $0.01$. NOMU used $\ell=100$ artificial data points randomly sampled on each batch and was refit on validation data after the constant $c$ had been calibrated on these. For the standard UCI data sets, we used the same loss parameters as in the remaining regression experiments, namely $\muexp=0.01$, $\musqr=0.1$, and $\cexp=30$. For the UCI gap data set, where uncertainty is only evaluated in gap regions, we chose $\muexp=0.1$ and $\musqr=0.01$, relaxing the requirement of small uncertainty at observed data points and strengthening the pull upward on our raw uncertainty output. Moreover, for numerical stability we use the following slight adaptation of the NOMU loss from \Cref{def:Model Uncertainty Loss}: \resizebox{\columnwidth}{!}{\parbox{1\columnwidth}{
\begin{align}
  L_{\mathsmaller{\text{stable}}}^\hp(\NN_\theta):=&\underbrace{\sum_{i=1}^{\ntr}(\fhat(\xtr_i
  )-\ytr_i)^2}_{\terma{}}+ \,\musqr\cdot\underbrace{\sum_{i=1}^{\ntr}\rho_\mathsmaller{\text{stable}}\left(\sigmodelhatraw(\xtr_i)\right)}_{\text{stable version of }\termb{}}+ \notag\\ 
   &\muexp\cdot\underbrace{\frac{1}{\lambda_d(\X)}\int_{\X}u_{\mathsmaller{\text{stable}}} \left(\sigmodelhatraw(x)\right)\,dx}_{\text{stable version of }\termc{}},
\end{align}
}}
where
\begin{equation}
\rho_\mathsmaller{\text{stable}}(r)=\begin{cases}
r^2 & ,|r|\leq 1\\
2|r|-1 & ,|r|>1
\end{cases}
\end{equation}
is the Huber-loss and
\begin{equation}
    u_\mathsmaller{\text{stable}}(r)=
    \begin{cases}
    u(r)=e^{-\cexp\cdot r} & ,r\geq0 \\
    -\cexp\cdot r & ,r<0.
    \end{cases}
\end{equation}
This stable loss $L_{\mathsmaller{\text{stable}}}^\hp$ behaves exactly the same as the standard NOMU-loss $L^\hp$ as long as the outputs of the NN stay in a reasonable range. Only if the gradient descent gets unstable, such that the NN outputs very extreme value, this stabilized loss assures a bounded gradient of the loss~$L_{\mathsmaller{\text{stable}}}^\hp$ with respect to the outputs of the NN~$\NN_\theta$.

\begin{table}[t!]
    \robustify\bfseries
    \setlength\tabcolsep{4pt}
    \caption{\textsc{UCI-Gap} average \NLPD{} and a $95\%$ normal-CI.}
    \label{tab:uci_gap_nll}
    \vskip 0.1in
    \begin{center}
    \begin{small}
    \begin{sc}
    \resizebox{1\columnwidth}{!}{
    \begin{tabular}{lccccc}
    \toprule
    \textbf{Dataset}& NOMU & LL & NLM-HPO & NLM \\
    \midrule
    Boston & \winc2.80 $\pm$\scriptsize 0.11 & \winc2.79 $\pm$\scriptsize 0.09 & \winc2.81 $\pm$\scriptsize 0.15 & 3.60 $\pm$\scriptsize 0.21 \\
    Concrete & \winc3.38 $\pm$\scriptsize 0.08 & \winc3.53 $\pm$\scriptsize 0.13 & 3.72 $\pm$\scriptsize 0.11 & 3.89 $\pm$\scriptsize 0.21 \\
    Energy & \winc3.71 $\pm$\scriptsize 1.62 & \winc6.49 $\pm$\scriptsize 5.50 & \winc3.78 $\pm$\scriptsize 2.27 & \winc3.40 $\pm$\scriptsize 1.99 \\
    Kin8nm & -0.97 $\pm$\scriptsize 0.02 & \winc-1.14 $\pm$\scriptsize 0.03 & -1.06 $\pm$\scriptsize 0.03 & -1.04 $\pm$\scriptsize 0.03 \\
    Naval & 31.81 $\pm$\scriptsize 28.0 & 15.66 $\pm$\scriptsize 8.34 & \winc1.17 $\pm$\scriptsize 1.56 & \winc1.48 $\pm$\scriptsize 1.64 \\
    CCPP & \winc2.91 $\pm$\scriptsize 0.04 & \winc2.89 $\pm$\scriptsize 0.03 & \winc2.96 $\pm$\scriptsize 0.09 & \winc2.90 $\pm$\scriptsize 0.05 \\
    Protein & \winc3.21 $\pm$\scriptsize 0.10 & \winc3.09 $\pm$\scriptsize 0.05 & \winc3.22 $\pm$\scriptsize 0.09 &     3.35 $\pm$\scriptsize 0.11 \\
    Wine & \winc0.99 $\pm$\scriptsize 0.02 & \winc0.96 $\pm$\scriptsize 0.01 & \winc0.98 $\pm$\scriptsize 0.01 & 1.75 $\pm$\scriptsize 0.07 \\
    Yacht & 2.15 $\pm$\scriptsize 0.30 & \winc1.33 $\pm$\scriptsize 0.29 & \winc1.72 $\pm$\scriptsize 0.33 & \winc1.44 $\pm$\scriptsize 0.21 \\
    \bottomrule
    \end{tabular}
    }
    \end{sc}
    \end{small}
    \end{center}
    \vskip -0.1in
\end{table}
\paragraph{Results UCI} \Cref{tab:uci_nll_detailed} in the main paper reports \NLPD\ on test data averaged over 20 runs as in \citep{hernandez2015probabilistic} for the UCI data set. It includes the following benchmark models:
\begin{itemize}[leftmargin=*]
    \item NLM-HPO and NLM correspond to BN(BO)-1 NL in Section D.1 respectively Section D.2. of \cite{ober2019benchmarking},
    \item LL corresponds to LL 1HL TANH in \cite{foong2019inbetween},
    \item MCDO2 represents Dropout (Convergence) in \cite{mukhoti2018importance},
    \item MCDO corresponds to Dropout in \cite{gal2016dropout} and DE to Deep Ensembles in \cite{lakshminarayanan2017simple}.
\end{itemize}
We used the standard deviations that were reported in these respective works to derive $95\%$-normal CIs.

\paragraph{Results UCI Gap}\Cref{tab:uci_gap_nll} lists Gaussian $95\%$-confidence intervals for \NLPD\ on the UCI gap data sets for NOMU as well as the values reported for benchmarks NLM, NLM-HPO and LL (Linearized Laplace) as listed above. These values result from different runs where in each run the gap were introduced in a different input dimension (see \citet{foong2019inbetween}). We see that NOMU performs comparably, and confirm the observation that LL tends to best capture uncertainty in the large gaps artificially introduced in the training data.

\subsection{Bayesian Optimization}
\subsubsection{BO: Configuration Details}\label{subsubsec:Hyperparameters}
In the following, we describe the detailed hyperparameter setup of all algorithms.

\paragraph{NOMU Setup} For NOMU, we set $\musqr=1$, $\sigmin=1\mathrm{e}{-6}$ and use $l=500$ artificial input points for 5D, 10D and 20D. Otherwise we use the exact same hyperparameters as in 1D and 2D regression (see in \Cref{subsec:Regression} the paragraph \textbf{Algorithm setup}).

\paragraph{Deep Ensembles Setup} For deep ensembles, we use the exact same  hyperparameters as in 1D and 2D regression (see \Cref{subsec:ConfigurationDetailsofBenchmarksRegression}).

\paragraph{MC Dropout Setup} For MC dropout, due to computational constraints, we were only able to use $10$ stochastic forward passes (instead of $100$ in the regression setting) However, this still results in an increase in compute time  by a factor 10 of every single acquisition function evaluation compared to NOMU. Otherwise, we use the exact same hyperparameters as in 1D and 2D regression (see \Cref{subsec:ConfigurationDetailsofBenchmarksRegression}) .

\paragraph{Gaussian Process Setup} For Gaussian processes (pGP and GP), we use the exact same setup as in 1D and 2D regression (see \Cref{subsec:ConfigurationDetailsofBenchmarksRegression}).

\paragraph{Hyper Deep Ensembles Setup}
For hyper deep ensembles, due to computational constraints, we were only able to use $\kappa=20$ (instead of $\kappa=50$ in the regression setting). However, this still results in an increase of training time by a factor of 5 in each single BO step compared to NOMU. Otherwise we use the exact same hyperparameters as in the 1D and 2D regression setting (see \Cref{subsec:ConfigurationDetailsofBenchmarksRegression}).

\paragraph{Acquisition Function Optimization}
For each algorithm and each BO step, we determine the next BO input point by maximizing the corresponding upper-bound acquisition function, i.e., $(\fhat(x)+ c~\sigmodelhat(x))$. We maximize this upper-bound acquisition function using the established \href{https://scipydirect.readthedocs.io/en/latest/reference.html}{\textsc{Direct} (Dividing Rectangles)} algorithm.

\paragraph{Gaussian BNN Test Functions}
Note that for the Gaussian BNN test functions in BO (see \Cref{tab:BOresults} and \Cref{tab:BOresultsDetails}, respectively) we do not have access to the ground truth global maximum. Thus, to determine the final regret, we use the established \href{https://scipydirect.readthedocs.io/en/latest/reference.html}{\textsc{Direct} (Dividing Rectangles)} algorithm. With \textsc{Direct}, we calculate a point $x\textsuperscript{direct}$ s.t. $f(x\textsuperscript{direct})\approx\max_{x\in X} f(x)$ and report $|f(x\textsuperscript{direct})-\max_{i\in\fromto{72}} f(x_i)|/f(x\textsuperscript{direct})$ in \Cref{tab:BOresults} and \Cref{tab:BOresultsDetails}, respectively.

\subsubsection{BO: Calibration}\label{subsubsec:Calibration}
\paragraph{\textsc{MW Scaling}}
In general, having a good understanding of the prior scale is very hard. Nevertheless, often it is easier to have at least some intuition in which order of magnitude the posterior UBs should be on average. When function values approximately range in $[-1,1]$, it is reasonable to require that after observing $8$ initial points the mean width (MW) of the UBs lies within the order of magnitudes $0.05$ and $0.5$. Hence our motivation for choosing the calibration parameter $c$ accordingly. 
An advantage of such a calibration is that it can be applied to every method equally, whereas there is in general no clear notion of setting the prior scale of two different methods (e.g., MC dropout and deep ensembles) to the same value.
Note, that we only use MW to calibrate $c$ directly after mean and variance predictions were fit based on the $8$ initial data points. So MW is not fixed when further data points are observed in subsequent steps of the BO.

\sisetup{output-exponent-marker=\ensuremath{\mathrm{e}}}
\begin{table*}
    \setlength\tabcolsep{2pt} %
    \renewcommand{\arraystretch}{1.3} %
    \caption{Results for 15 different BO tasks. Shown are the average final regrets and a 95\% normal-CI per dimension and for each individual function over 100 (5D) and 50 (10D and 20D) runs for an initial mean width (MW) of $0.05$ and $0.5$, respectively. Winners are marked in grey.}
    \label{tab:BOresultsDetails}
    \vskip 0.1in
    \begin{center}
	\begin{small}
	\begin{sc}
    \resizebox{\textwidth}{!}{
\begin{tabular}{lllllllllllll}
    \toprule
               &                        \multicolumn{1}{c}{\textbf{NOMU}} &                         \multicolumn{1}{c}{\textbf{NOMU}} &                          \multicolumn{1}{c}{\textbf{GP}} &                           \multicolumn{1}{c}{\textbf{GP}} &                          \multicolumn{1}{c}{\textbf{MCDO}} &                           \multicolumn{1}{c}{\textbf{MCDO}} &                          \multicolumn{1}{c}{\textbf{DE}} &                           \multicolumn{1}{c}{\textbf{DE}} &                         \multicolumn{1}{c}{\textbf{HDE}} &                          \multicolumn{1}{c}{\textbf{HDE}} &                         \multicolumn{1}{c}{\textbf{pGP}} &          \multicolumn{1}{c}{\textbf{RAND}} \\
              \textbf{Function} &                        \multicolumn{1}{c}{MW 0.05} &                         \multicolumn{1}{c}{MW 0.5} &                          \multicolumn{1}{c}{MW 0.05} &                           \multicolumn{1}{c}{MW 0.5} &                          \multicolumn{1}{c}{MW 0.05} &                           \multicolumn{1}{c}{MW 0.5} &                          \multicolumn{1}{c}{MW 0.05} &                           \multicolumn{1}{c}{MW 0.5} &                         \multicolumn{1}{c}{MW 0.05} &                          \multicolumn{1}{c}{MW 0.5} &                         \multicolumn{1}{c}{MW} &          \multicolumn{1}{c}{MW} \\
    \midrule
                Levy5D &                    $\num{2.12e-03}\pm\scriptsize\num{5.95e-04}$ & $\winc\num{1.52e-03}\pm\scriptsize\num{4.34e-04}$ & $\winc\num{1.10e-03}\pm\scriptsize\num{1.68e-04}$ &                    $\num{8.84e-03}\pm\scriptsize\num{1.15e-03}$ &                    $\num{1.98e-02}\pm\scriptsize\num{3.59e-03}$ &                    $\num{1.27e-02}\pm\scriptsize\num{2.57e-03}$ &                    $\num{7.09e-03}\pm\scriptsize\num{3.76e-03}$ &                    $\num{9.09e-02}\pm\scriptsize\num{9.16e-03}$ &                    $\num{4.48e-03}\pm\scriptsize\num{1.25e-03}$ &                    $\num{3.76e-03}\pm\scriptsize\num{1.22e-03}$ &                    $\num{6.25e-03}\pm\scriptsize\num{5.26e-04}$ & $\num{5.44e-02}\pm\scriptsize\num{4.22e-03}$ \\ 
          Rosenbrock5D & $\winc\num{1.75e-04}\pm\scriptsize\num{2.76e-05}$ & $\num{3.57e-04}\pm\scriptsize\num{6.39e-05}$ & $\num{3.62e-04}\pm\scriptsize\num{6.52e-05}$ & $\num{8.44e-04}\pm\scriptsize\num{1.62e-04}$ & $\num{2.83e-04}\pm\scriptsize\num{4.64e-05}$ & $\num{7.35e-04}\pm\scriptsize\num{1.08e-04}$ & $\winc\num{7.93e-04}\pm\scriptsize\num{7.74e-04}$ &                    $\num{5.96e-03}\pm\scriptsize\num{1.78e-03}$ & $\num{1.11e-03}\pm\scriptsize\num{7.25e-04}$ & $\num{5.02e-04}\pm\scriptsize\num{1.04e-04}$ & $\num{7.56e-04}\pm\scriptsize\num{8.13e-05}$ & $\num{4.73e-03}\pm\scriptsize\num{7.04e-04}$ \\ 
          G-Function5D &                    $\num{1.12e-01}\pm\scriptsize\num{1.99e-02}$ &                    $\num{1.06e-01}\pm\scriptsize\num{1.97e-02}$ &                    $\num{1.68e-01}\pm\scriptsize\num{1.62e-02}$ &                    $\num{2.60e-01}\pm\scriptsize\num{1.72e-02}$ & $\winc\num{2.66e-02}\pm\scriptsize\num{8.57e-03}$ &                    $\num{7.80e-02}\pm\scriptsize\num{8.82e-03}$ &                    $\num{1.88e-01}\pm\scriptsize\num{2.08e-02}$ &                    $\num{2.03e-01}\pm\scriptsize\num{2.20e-02}$ &                    $\num{1.35e-01}\pm\scriptsize\num{2.46e-02}$ &                    $\num{1.37e-01}\pm\scriptsize\num{2.59e-02}$ &                    $\num{1.78e-01}\pm\scriptsize\num{1.22e-02}$ & $\num{3.63e-01}\pm\scriptsize\num{1.09e-02}$ \\
                Perm5D &                    $\num{4.06e-04}\pm\scriptsize\num{1.70e-04}$ &                    $\num{2.58e-04}\pm\scriptsize\num{1.13e-04}$ & $\winc\num{7.76e-05}\pm\scriptsize\num{3.19e-05}$ &                    $\num{1.80e-03}\pm\scriptsize\num{4.44e-04}$ & $\winc\num{7.79e-05}\pm\scriptsize\num{4.19e-05}$ & $\winc\num{7.11e-05}\pm\scriptsize\num{1.45e-05}$ &                    $\num{6.96e-04}\pm\scriptsize\num{2.20e-04}$ &                    $\num{1.73e-03}\pm\scriptsize\num{4.55e-04}$ &                    $\num{2.74e-03}\pm\scriptsize\num{5.20e-04}$ &                    $\num{2.50e-03}\pm\scriptsize\num{5.58e-04}$ &                    $\num{2.06e-04}\pm\scriptsize\num{7.03e-05}$ & $\num{4.62e-04}\pm\scriptsize\num{1.20e-04}$ \\ 
                 BNN5D & $\winc\num{6.60e-02}\pm\scriptsize\num{4.03e-02}$ & $\winc\num{3.77e-02}\pm\scriptsize\num{1.89e-02}$ &   $\num{1.16e-01}\pm\scriptsize\num{8.63e-02}$ & $\winc\num{4.53e-02}\pm\scriptsize\num{3.04e-02}$ &   $\num{2.37e-01}\pm\scriptsize\num{8.04e-02}$ &   $\num{2.04e-01}\pm\scriptsize\num{9.29e-02}$ & $\winc\num{6.32e-02}\pm\scriptsize\num{5.28e-02}$ & $\winc\num{7.92e-02}\pm\scriptsize\num{5.34e-02}$ &   $\num{2.70e-01}\pm\scriptsize\num{8.78e-02}$ &   $\num{2.36e-01}\pm\scriptsize\num{8.27e-02}$ & $\winc\num{2.23e-02}\pm\scriptsize\num{6.42e-03}$ &   $\num{5.42e-01}\pm\scriptsize\num{9.21e-02}$ \\
                 \midrule
     \emph{Average Regret 5D} & $\winc\num{3.62e-02}\pm\scriptsize\num{8.99e-03}$ & $\winc\num{2.91e-02}\pm\scriptsize\num{5.46e-03}$ &   $\num{5.71e-02}\pm\scriptsize\num{1.76e-02}$ &   $\num{6.33e-02}\pm\scriptsize\num{6.99e-03}$ &   $\num{5.67e-02}\pm\scriptsize\num{1.62e-02}$ &   $\num{5.92e-02}\pm\scriptsize\num{1.87e-02}$ &   $\num{5.19e-02}\pm\scriptsize\num{1.14e-02}$ &   $\num{7.62e-02}\pm\scriptsize\num{1.17e-02}$ &   $\num{8.26e-02}\pm\scriptsize\num{1.82e-02}$ &   $\num{7.59e-02}\pm\scriptsize\num{1.73e-02}$ &   $\num{4.16e-02}\pm\scriptsize\num{2.76e-03}$ &   $\num{1.93e-01}\pm\scriptsize\num{1.86e-02}$ \\
     \midrule
     \midrule
               Levy10D & $\winc\num{6.27e-03}\pm\scriptsize\num{2.22e-03}$ & $\winc\num{6.27e-03}\pm\scriptsize\num{2.22e-03}$ & $\num{1.04e-02}\pm\scriptsize\num{2.10e-03}$ & $\num{1.04e-02}\pm\scriptsize\num{2.10e-03}$ &                    $\num{2.16e-02}\pm\scriptsize\num{3.71e-03}$ &                    $\num{2.17e-02}\pm\scriptsize\num{5.30e-03}$ &                    $\num{8.65e-02}\pm\scriptsize\num{2.01e-02}$ &                    $\num{1.46e-01}\pm\scriptsize\num{1.41e-02}$ & $\winc\num{6.21e-03}\pm\scriptsize\num{2.48e-03}$ & $\winc\num{5.81e-03}\pm\scriptsize\num{1.99e-03}$ & $\winc\num{6.16e-03}\pm\scriptsize\num{8.94e-04}$ & $\num{1.06e-01}\pm\scriptsize\num{7.64e-03}$ \\
         Rosenbrock10D & $\winc\num{2.34e-03}\pm\scriptsize\num{1.43e-03}$ &                    $\num{2.01e-03}\pm\scriptsize\num{6.91e-04}$ & $\winc\num{9.14e-04}\pm\scriptsize\num{1.25e-04}$ &                    $\num{5.67e-03}\pm\scriptsize\num{1.60e-03}$ &                    $\num{2.25e-03}\pm\scriptsize\num{2.58e-04}$ &                    $\num{4.96e-03}\pm\scriptsize\num{4.40e-04}$ &                    $\num{1.42e-02}\pm\scriptsize\num{4.43e-03}$ &                    $\num{7.65e-02}\pm\scriptsize\num{1.03e-02}$ &                    $\num{5.10e-03}\pm\scriptsize\num{1.44e-03}$ &                    $\num{5.00e-03}\pm\scriptsize\num{1.70e-03}$ &                    $\num{3.09e-03}\pm\scriptsize\num{5.77e-04}$ & $\num{2.82e-02}\pm\scriptsize\num{3.92e-03}$ \\
         G-Function10D &                    $\num{2.92e-01}\pm\scriptsize\num{3.85e-02}$ &                    $\num{3.53e-01}\pm\scriptsize\num{2.74e-02}$ &                    $\num{3.79e-01}\pm\scriptsize\num{2.24e-02}$ &                    $\num{4.21e-01}\pm\scriptsize\num{2.35e-02}$ & $\winc\num{1.79e-01}\pm\scriptsize\num{2.54e-02}$ &                    $\num{4.15e-01}\pm\scriptsize\num{1.12e-02}$ &                    $\num{3.24e-01}\pm\scriptsize\num{2.28e-02}$ &                    $\num{3.33e-01}\pm\scriptsize\num{2.06e-02}$ &                    $\num{2.53e-01}\pm\scriptsize\num{3.76e-02}$ &                    $\num{2.55e-01}\pm\scriptsize\num{3.72e-02}$ &                    $\num{3.80e-01}\pm\scriptsize\num{1.34e-02}$ & $\num{4.50e-01}\pm\scriptsize\num{6.48e-03}$ \\
               Perm10D & $\num{3.74e-04}\pm\scriptsize\num{1.26e-04}$ & $\num{4.27e-04}\pm\scriptsize\num{1.34e-04}$ & $\winc\num{3.59e-04}\pm\scriptsize\num{2.13e-04}$ & $\num{6.52e-04}\pm\scriptsize\num{2.04e-04}$ & $\num{3.37e-04}\pm\scriptsize\num{8.49e-05}$ & $\num{5.53e-04}\pm\scriptsize\num{1.33e-04}$ &                    $\num{1.01e-03}\pm\scriptsize\num{3.10e-04}$ &                    $\num{1.48e-03}\pm\scriptsize\num{3.92e-04}$ & $\num{4.07e-04}\pm\scriptsize\num{1.21e-04}$ & $\num{4.08e-04}\pm\scriptsize\num{1.48e-04}$ & $\num{5.57e-04}\pm\scriptsize\num{1.43e-04}$ & $\winc\num{1.81e-04}\pm\scriptsize\num{5.44e-05}$ \\
                BNN10D & $\winc\num{1.23e-01}\pm\scriptsize\num{3.76e-02}$ & $\winc\num{1.00e-01}\pm\scriptsize\num{3.43e-02}$ &   $\num{1.96e-01}\pm\scriptsize\num{5.16e-02}$ &   $\num{1.68e-01}\pm\scriptsize\num{4.09e-02}$ & $\winc\num{1.45e-01}\pm\scriptsize\num{3.60e-02}$ &   $\num{1.61e-01}\pm\scriptsize\num{2.86e-02}$ & $\winc\num{1.51e-01}\pm\scriptsize\num{4.27e-02}$ & $\winc\num{1.13e-01}\pm\scriptsize\num{4.00e-02}$ &   $\num{2.58e-01}\pm\scriptsize\num{4.96e-02}$ &   $\num{2.02e-01}\pm\scriptsize\num{3.79e-02}$ & $\winc\num{8.65e-02}\pm\scriptsize\num{2.29e-02}$ &   $\num{5.92e-01}\pm\scriptsize\num{5.34e-02}$ \\
                \midrule
    \emph{Average Regret 10D} & $\winc\num{8.48e-02}\pm\scriptsize\num{1.08e-02}$ &   $\num{9.24e-02}\pm\scriptsize\num{8.79e-03}$ &   $\num{1.17e-01}\pm\scriptsize\num{1.13e-02}$ &   $\num{1.21e-01}\pm\scriptsize\num{9.45e-03}$ & $\winc\num{6.97e-02}\pm\scriptsize\num{8.84e-03}$ &   $\num{1.21e-01}\pm\scriptsize\num{6.24e-03}$ &   $\num{1.15e-01}\pm\scriptsize\num{1.05e-02}$ &   $\num{1.34e-01}\pm\scriptsize\num{9.65e-03}$ &   $\num{1.05e-01}\pm\scriptsize\num{1.25e-02}$ &   $\num{9.36e-02}\pm\scriptsize\num{1.06e-02}$ &   $\num{9.52e-02}\pm\scriptsize\num{5.30e-03}$ &   $\num{2.35e-01}\pm\scriptsize\num{1.09e-02}$ \\
    \midrule
    \midrule
               Levy20D & $\winc\num{1.51e-02}\pm\scriptsize\num{1.69e-03}$ & $\winc\num{1.40e-02}\pm\scriptsize\num{1.63e-03}$ & $\winc\num{1.98e-02}\pm\scriptsize\num{8.61e-03}$ & $\winc\num{2.64e-02}\pm\scriptsize\num{1.12e-02}$ &                    $\num{4.27e-02}\pm\scriptsize\num{4.16e-03}$ &                    $\num{6.91e-02}\pm\scriptsize\num{9.00e-03}$ &                    $\num{1.88e-01}\pm\scriptsize\num{1.15e-02}$ &                    $\num{2.01e-01}\pm\scriptsize\num{1.07e-02}$ & $\winc\num{1.13e-02}\pm\scriptsize\num{4.60e-03}$ & $\winc\num{1.61e-02}\pm\scriptsize\num{5.47e-03}$ & $\winc\num{1.98e-02}\pm\scriptsize\num{7.88e-03}$ & $\num{1.48e-01}\pm\scriptsize\num{6.58e-03}$ \\
         Rosenbrock20D &                    $\num{3.47e-02}\pm\scriptsize\num{7.08e-03}$ &                    $\num{6.03e-03}\pm\scriptsize\num{9.93e-04}$ &                    $\num{8.94e-03}\pm\scriptsize\num{4.00e-03}$ &                    $\num{1.91e-02}\pm\scriptsize\num{4.04e-03}$ &                    $\num{7.41e-03}\pm\scriptsize\num{1.14e-03}$ &                    $\num{7.61e-03}\pm\scriptsize\num{2.15e-03}$ &                    $\num{6.46e-02}\pm\scriptsize\num{1.22e-02}$ &                    $\num{1.41e-01}\pm\scriptsize\num{1.29e-02}$ & $\winc\num{2.96e-03}\pm\scriptsize\num{1.36e-04}$ & $\winc\num{4.27e-03}\pm\scriptsize\num{1.33e-03}$ &                    $\num{1.09e-02}\pm\scriptsize\num{1.52e-03}$ & $\num{7.80e-02}\pm\scriptsize\num{8.38e-03}$ \\
         G-Function20D &                    $\num{4.16e-01}\pm\scriptsize\num{8.99e-03}$ &                    $\num{4.18e-01}\pm\scriptsize\num{1.70e-02}$ &                    $\num{4.59e-01}\pm\scriptsize\num{6.77e-03}$ &                    $\num{4.86e-01}\pm\scriptsize\num{3.95e-03}$ &                    $\num{4.70e-01}\pm\scriptsize\num{4.43e-03}$ &                    $\num{4.92e-01}\pm\scriptsize\num{1.58e-03}$ &                    $\num{4.18e-01}\pm\scriptsize\num{1.33e-02}$ & $\winc\num{4.02e-01}\pm\scriptsize\num{1.61e-02}$ & $\winc\num{3.79e-01}\pm\scriptsize\num{2.97e-02}$ & $\winc\num{3.73e-01}\pm\scriptsize\num{2.56e-02}$ &                    $\num{4.47e-01}\pm\scriptsize\num{9.37e-03}$ & $\num{4.86e-01}\pm\scriptsize\num{2.44e-03}$ \\
               Perm20D &                    $\num{7.45e-05}\pm\scriptsize\num{2.09e-05}$ &                    $\num{7.46e-05}\pm\scriptsize\num{2.50e-05}$ &                    $\num{2.20e-04}\pm\scriptsize\num{8.98e-05}$ &                    $\num{1.57e-04}\pm\scriptsize\num{7.50e-05}$ &                    $\num{9.98e-05}\pm\scriptsize\num{3.19e-05}$ &                    $\num{2.03e-04}\pm\scriptsize\num{8.86e-05}$ & $\num{3.54e-05}\pm\scriptsize\num{6.52e-06}$ &                    $\num{1.85e-04}\pm\scriptsize\num{6.90e-05}$ &                    $\num{7.98e-05}\pm\scriptsize\num{2.19e-05}$ &                    $\num{9.28e-05}\pm\scriptsize\num{3.89e-05}$ &                    $\num{9.44e-05}\pm\scriptsize\num{3.25e-05}$ & $\winc\num{1.90e-05}\pm\scriptsize\num{4.67e-06}$ \\
                Bnn20D & $\winc\num{1.58e-01}\pm\scriptsize\num{4.12e-02}$ & $\winc\num{1.23e-01}\pm\scriptsize\num{3.21e-02}$ &   $\num{1.84e-01}\pm\scriptsize\num{3.46e-02}$ & $\winc\num{1.42e-01}\pm\scriptsize\num{2.70e-02}$ &   $\num{1.77e-01}\pm\scriptsize\num{3.46e-02}$ &   $\num{2.45e-01}\pm\scriptsize\num{4.41e-02}$ &   $\num{1.89e-01}\pm\scriptsize\num{3.98e-02}$ &   $\num{3.05e-01}\pm\scriptsize\num{7.02e-02}$ &   $\num{2.93e-01}\pm\scriptsize\num{5.72e-02}$ &   $\num{3.05e-01}\pm\scriptsize\num{5.01e-02}$ & $\winc\num{1.11e-01}\pm\scriptsize\num{2.06e-02}$ &   $\num{6.85e-01}\pm\scriptsize\num{3.29e-02}$ \\
                \midrule
    \emph{Average Regret 20D} & $\winc\num{1.25e-01}\pm\scriptsize\num{8.57e-03}$ & $\winc\num{1.12e-01}\pm\scriptsize\num{7.27e-03}$ &   $\num{1.34e-01}\pm\scriptsize\num{7.31e-03}$ &   $\num{1.35e-01}\pm\scriptsize\num{5.95e-03}$ &   $\num{1.39e-01}\pm\scriptsize\num{7.03e-03}$ &   $\num{1.63e-01}\pm\scriptsize\num{9.02e-03}$ &   $\num{1.72e-01}\pm\scriptsize\num{9.03e-03}$ &   $\num{2.10e-01}\pm\scriptsize\num{1.48e-02}$ &   $\num{1.37e-01}\pm\scriptsize\num{1.29e-02}$ &   $\num{1.40e-01}\pm\scriptsize\num{1.13e-02}$ & $\winc\num{1.18e-01}\pm\scriptsize\num{4.81e-03}$ &   $\num{2.80e-01}\pm\scriptsize\num{6.94e-03}$ \\
    \midrule
    \bottomrule
    \end{tabular}     %
    }
    \end{sc}
    \end{small}
    \end{center}
    \vskip -0.1in
\end{table*}

\paragraph{\textsc{Dynamic c}}\label{par:dynamicC} The initial choice of $c$ can still be corrected in each BO step. Certainly, in the noiseless case it does not make sense to pick an input point $x_{i'}$ that is identical to an already observed input point $x_i, i<i'$, where nothing new is to be learned.
Therefore, we want our NN-agent to get \enquote{bored} if its acquisition function optimization would suggest to pick an input point $x_{i'}\approx x_i, i<i'$.
The idea of \textsc{Dynamic c} is to encourage the agent, in case it was \enquote{bored}, to become more \enquote{curious} to explore something new instead of picking a \enquote{boring} input point. This can be achieved by iteratively increasing $c$ until the acquisition function optimization suggests an input point $x_{i'}\not\approx x_i, \forall i<i'$. We then only apply the expensive function evaluation for $x_{i'}\not\approx x_i, \forall i<i'$ that we obtained after increasing $c$ enough to not \enquote{bore} the NN.
However, towards the last BO-steps, if we already know approximately where a global optimum is and we only want to fix the last digits of the optimizer, we have to evaluate the function closer to already observed points. 
In contrast, a very \enquote{young} NN (i.e., a network that has been trained on few training points) should get \enquote{bored} much more easily, since it is not sensible to exploit a given local optimum up to the last digits, when there is plenty of time to reach other, possibly better local optima. 

Thus, it makes sense to first explore on a large scale where the good local optima are approximately, then to find out which of them are the best ones and finally to exploit the best one in greater detail at the very end.

Therefore, we want the threshold $\delta_i$, that determines if $x_{i'}\overset{\delta_{i'}}{\approx} x_i \iff \twonorm[x_{i'}-x_i]\leq\delta_{i'}$ to decrease in each BO step. In our experiment, we choose an exponential decay of
\begin{align}
\label{eq:dc:exp}
\delta_{i} = \delta_{\nStart} \cdot \left(\dfrac{\delta_{\nEnd}}{\delta_{\nStart}}\right)^{(i-\nStart)/(\nEnd-\nStart)},
\end{align}
with $\delta_{\nStart}=\frac{1}{16}$ and $\delta_{\nEnd}=0.01$.

Concretely, we only evaluate $f$ at $x_{i'}$ if $\twonorm[x_{i'}-x_i]>\delta_{i'} \forall i<i'$ is fulfilled. Otherwise we double $c$ until it is fulfilled (With larger $c$ more emphasis is put on exploration, so there is a tendency that $x_{i'}$ will be further away from the observed input points the larger we pick $c$, if \ref{itm:Axioms:largeDistantLargeUncertainty} is fulfilled). After doubling $c$ 15 times without success, we evaluate $f$ at $x_{i'}$ no matter how close it is to the already observed input points (for methods that have severe troubles to fulfill \ref{itm:Axioms:largeDistantLargeUncertainty}, such as MCDO, even doubling $c$ infinite times would not help if the maximal uncertainty is within an $\delta_{i'}$-ball around an already observed input point).

\subsubsection{BO: Detailed Results}\label{subsubsec:Detailed Results}
In \Cref{tab:BOresultsDetails}, we present the mean final regrets, which correspond to the ranks shown in \Cref{tab:BOresults} in the main paper.

\subsubsection{BO: Regret Plots}\label{subsubsec:Regret Plots}
In \Cref{fig:5d_regret_plots,fig:10d_regret_plots,fig:20d_regret_plots}, we present the regret plots for each test function and both MW values.
\begin{figure*}[p]
\makebox[\textwidth][c]{%
\subfloat[Regret plot G-Function, 0.05 MW\label{subfig:gf5D_0.05}]{%
		\includegraphics[trim={30 0 60 30},clip,width =.83\columnwidth]{
		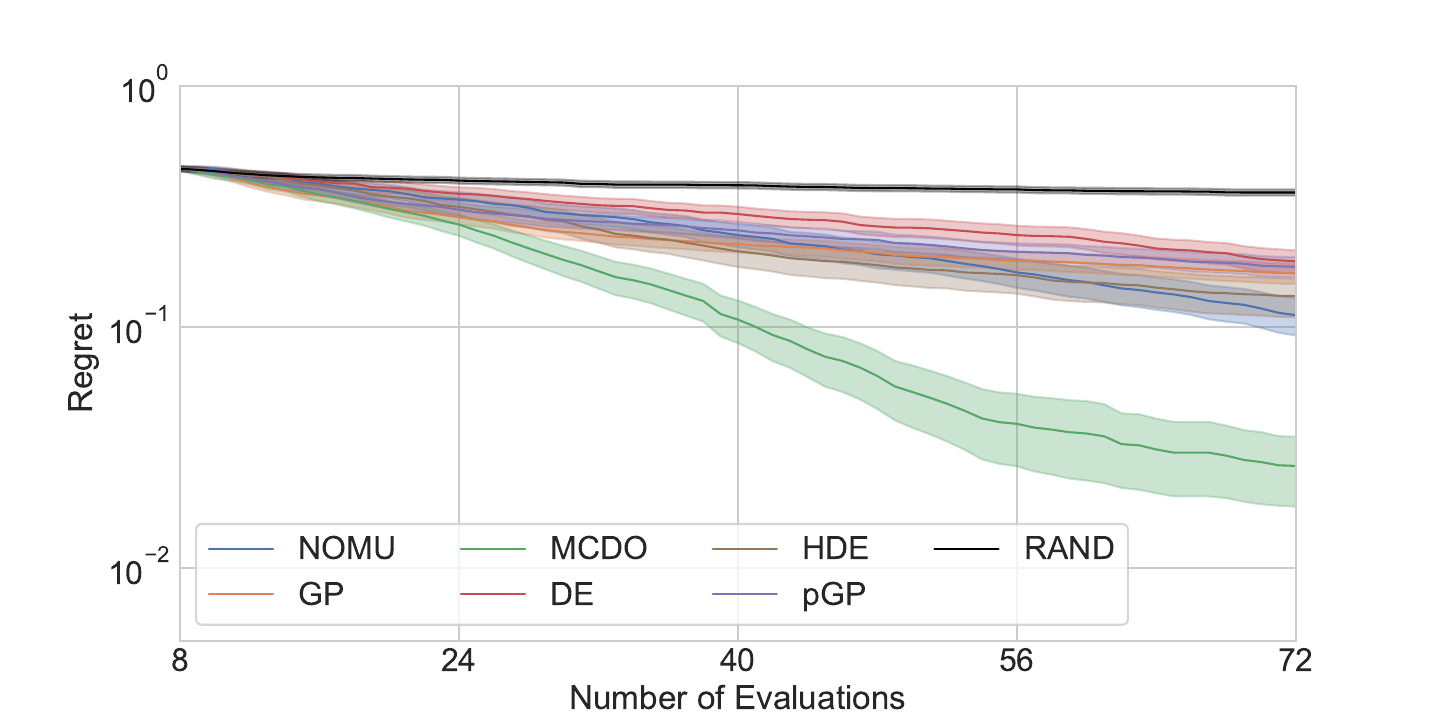}
		}
\hfill
\subfloat[Regret plot G-Function, 0.5 MW\label{subfig:gf5D_0.5}]{%
		\includegraphics[trim={30 0 60 30},clip,width =.83\columnwidth]{
		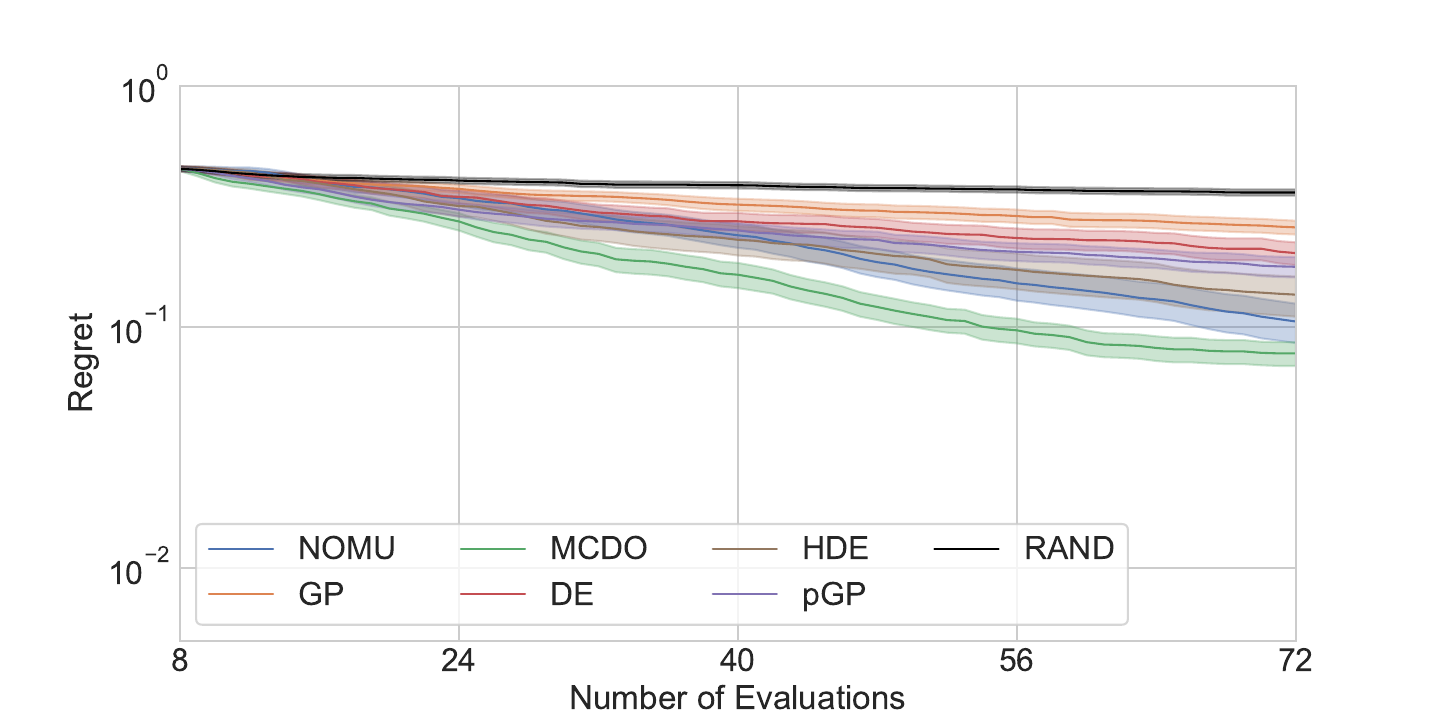}
	}
}
\makebox[\textwidth][c]{%
 \subfloat[Regret plot Levy, 0.05 MW\label{subfig:lev5D_0.05}]{%
		\includegraphics[trim={30 0 60 40},clip,width =.83\columnwidth]{
		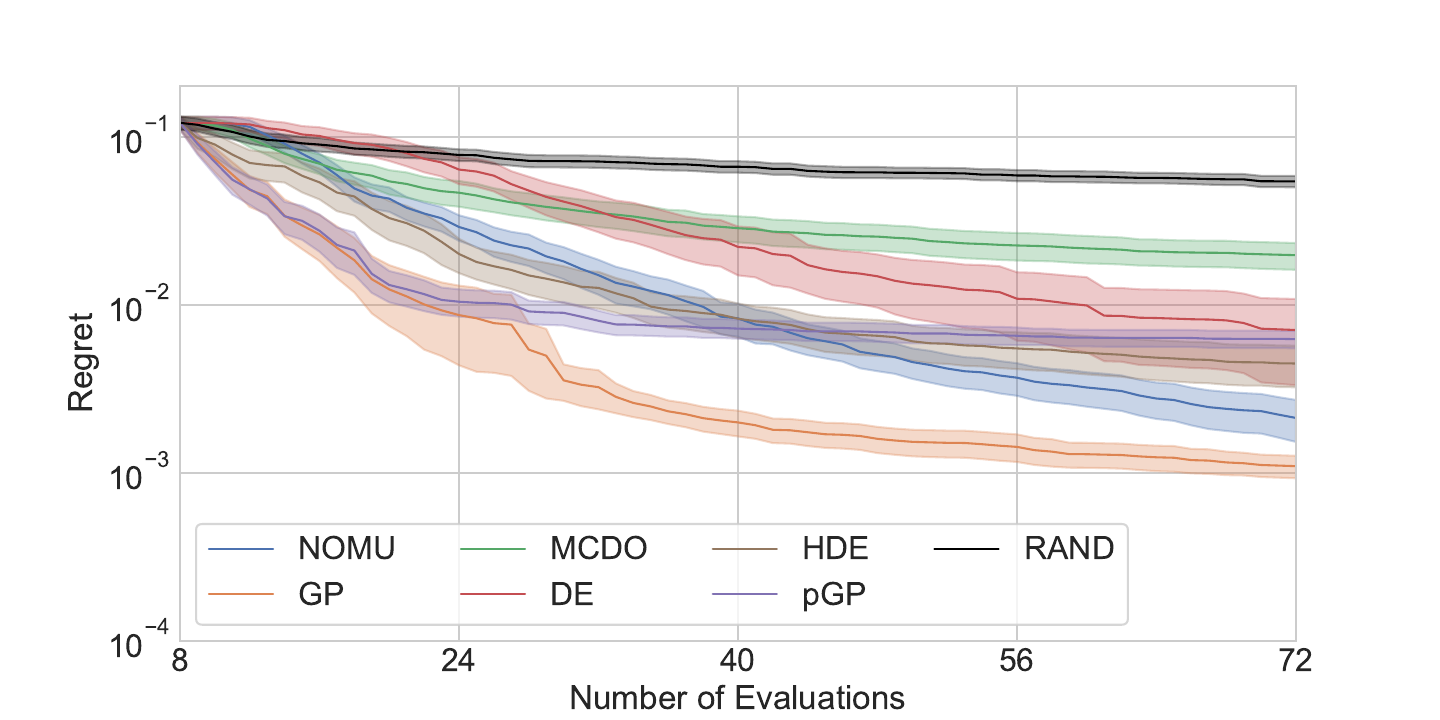}
}		
\hfill
\subfloat[Regret plot Levy, 0.5 MW\label{subfig:lev5D_0.5}]{%
		\includegraphics[trim={30 0 60 40},clip,width =.83\columnwidth]{
		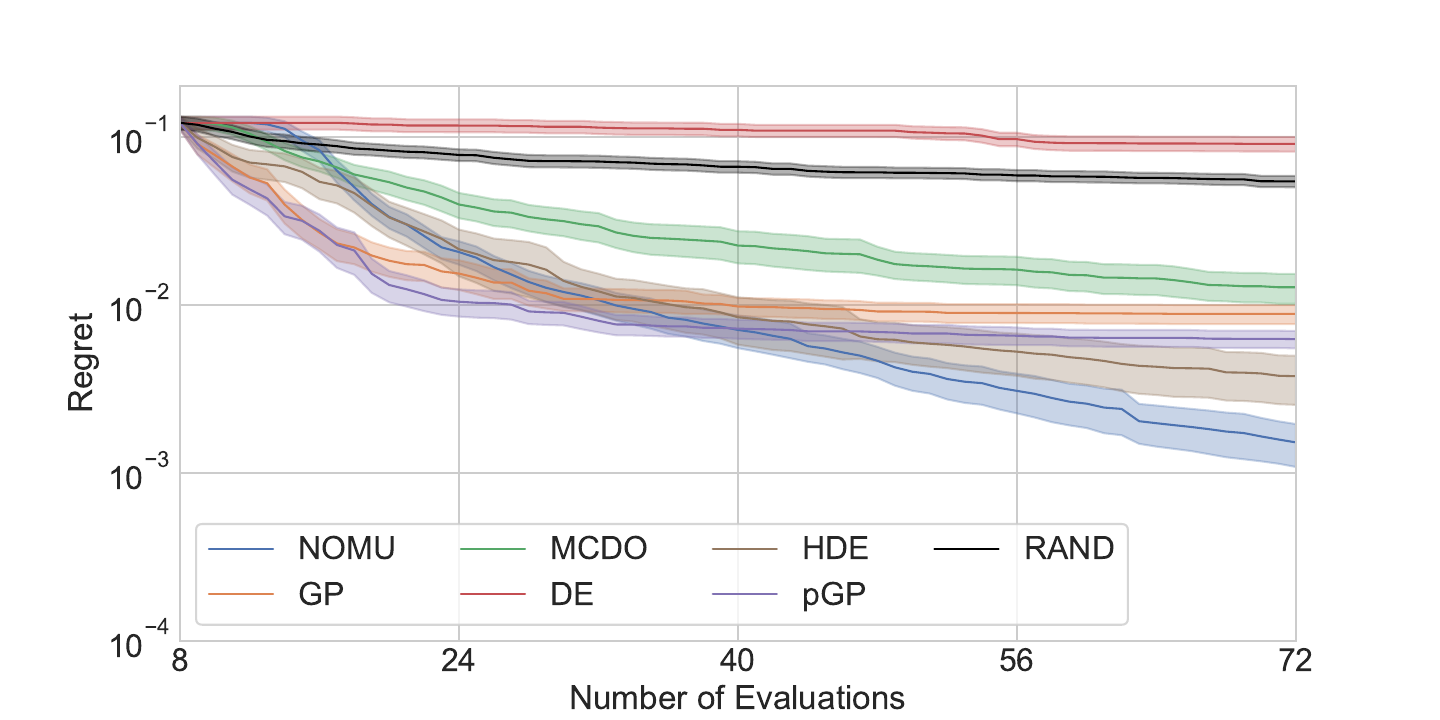}	
}
}
\makebox[\textwidth][c]{%
\subfloat[Regret plot Perm, 0.05 MW\label{subfig:per5D_0.05}]{%
		\includegraphics[trim={30 0 60 40},clip,width =.83\columnwidth]{
		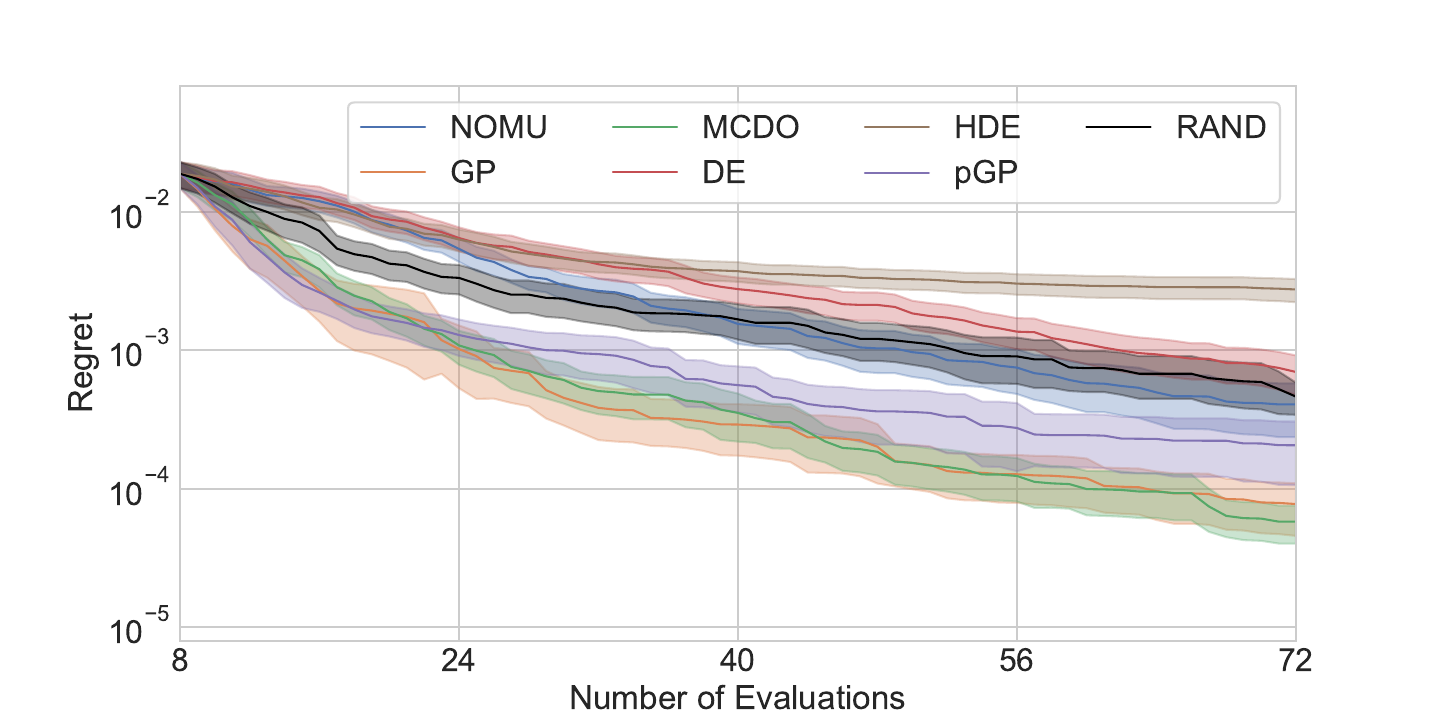}
	}
\hfill
\subfloat[Regret plot Perm, 0.5 MW\label{subfig:per5D_0.5}]{%
		\includegraphics[trim={30 0 60 40},clip,width =.83\columnwidth]{
		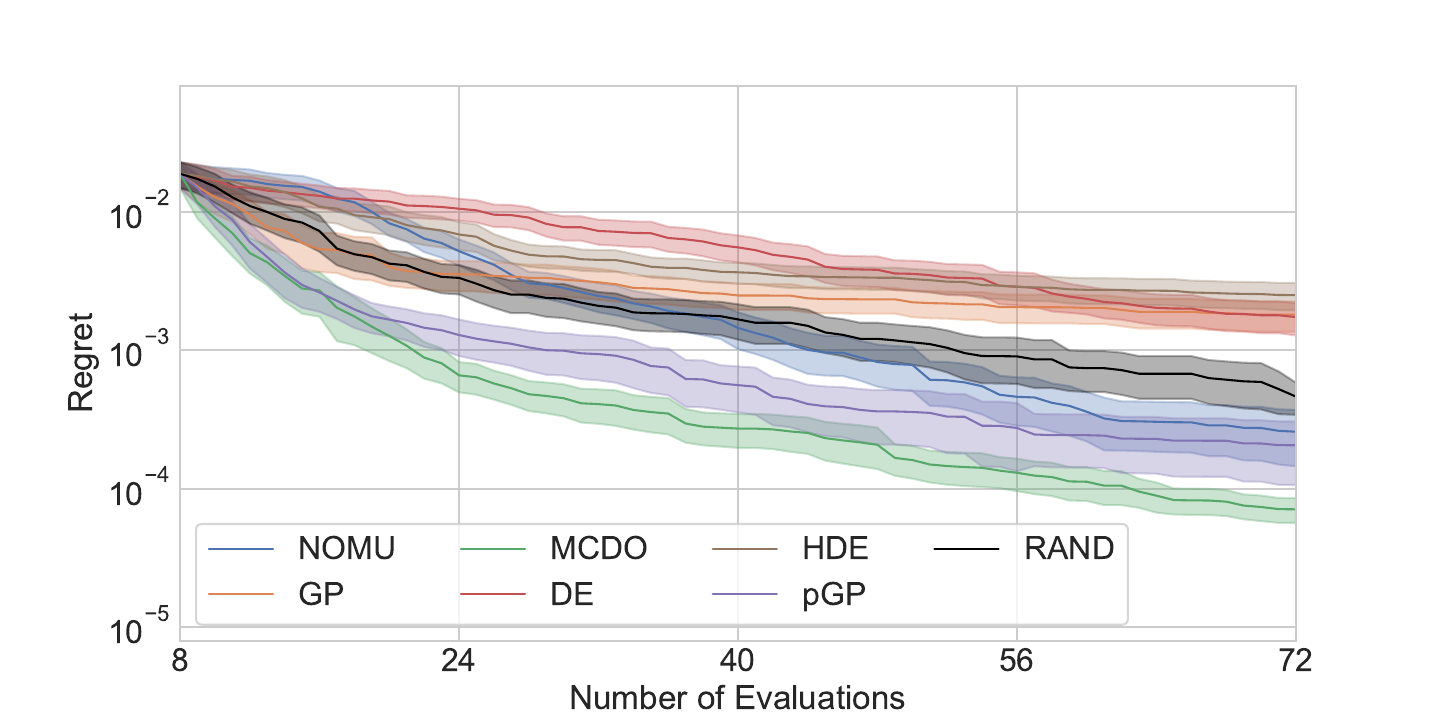}
	}
}
\makebox[\textwidth][c]{%
\subfloat[Regret plot Rosenbrock, 0.05 MW\label{subfig:ros5D_0.05}]{%
		\includegraphics[trim={30 0 60 30},clip,width =.83\columnwidth]{
		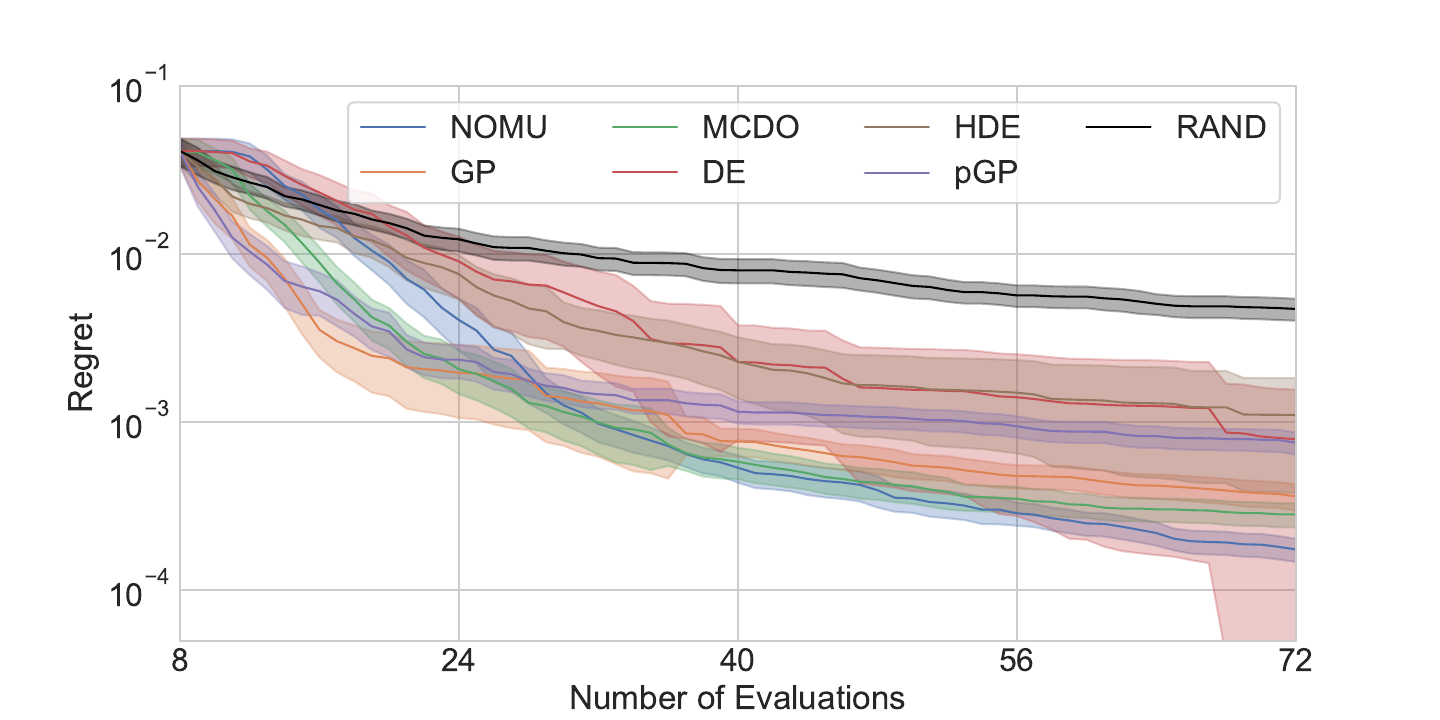}
	}
\hfill
\subfloat[Regret plot Rosenbrock, 0.5 MW\label{subfig:ros5D_0.5}]{%
		\includegraphics[trim={30 0 60 30},clip,width =.83\columnwidth]{
		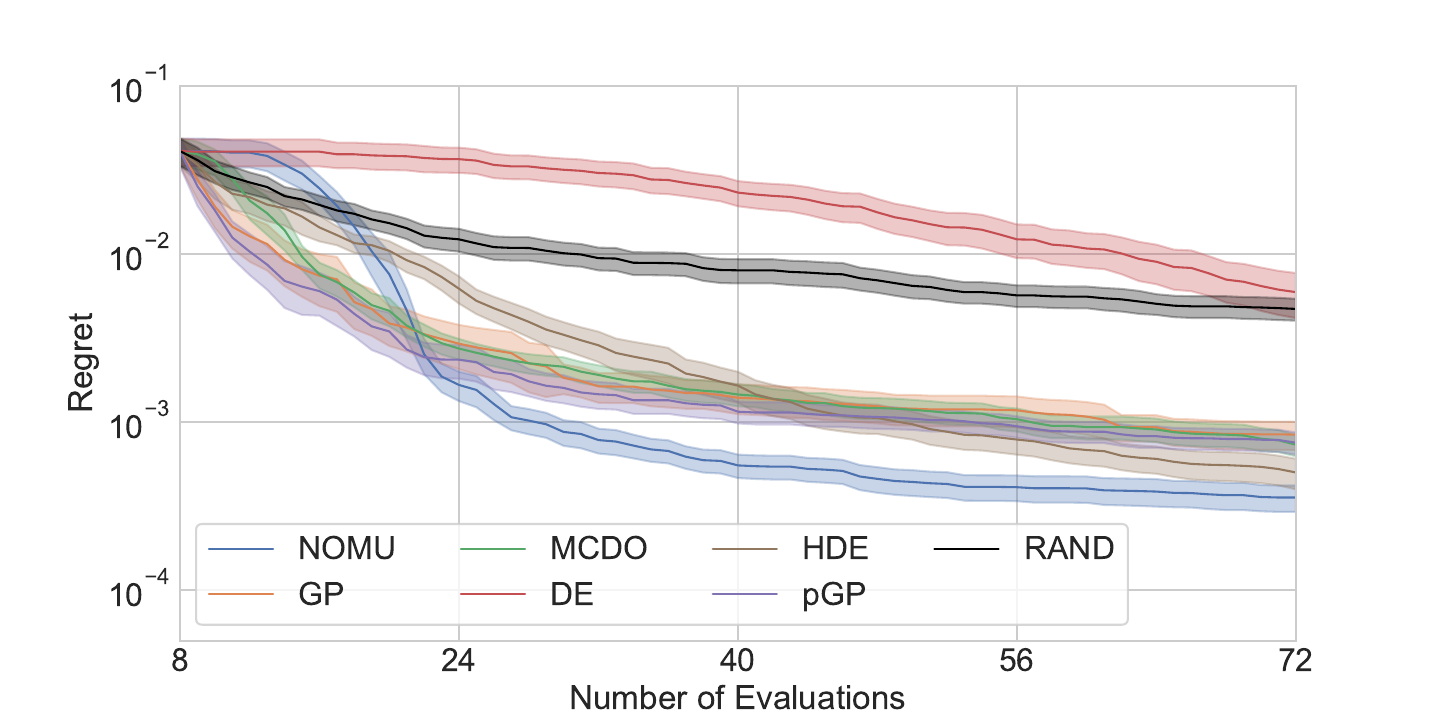}
	}
}
\makebox[\textwidth][c]{%
\subfloat[Regret plot BNN, 0.05 MW\label{subfig:bnn5D_0.05}]{%
		\includegraphics[trim={30 0 60 40},clip,width =.83\columnwidth]{
		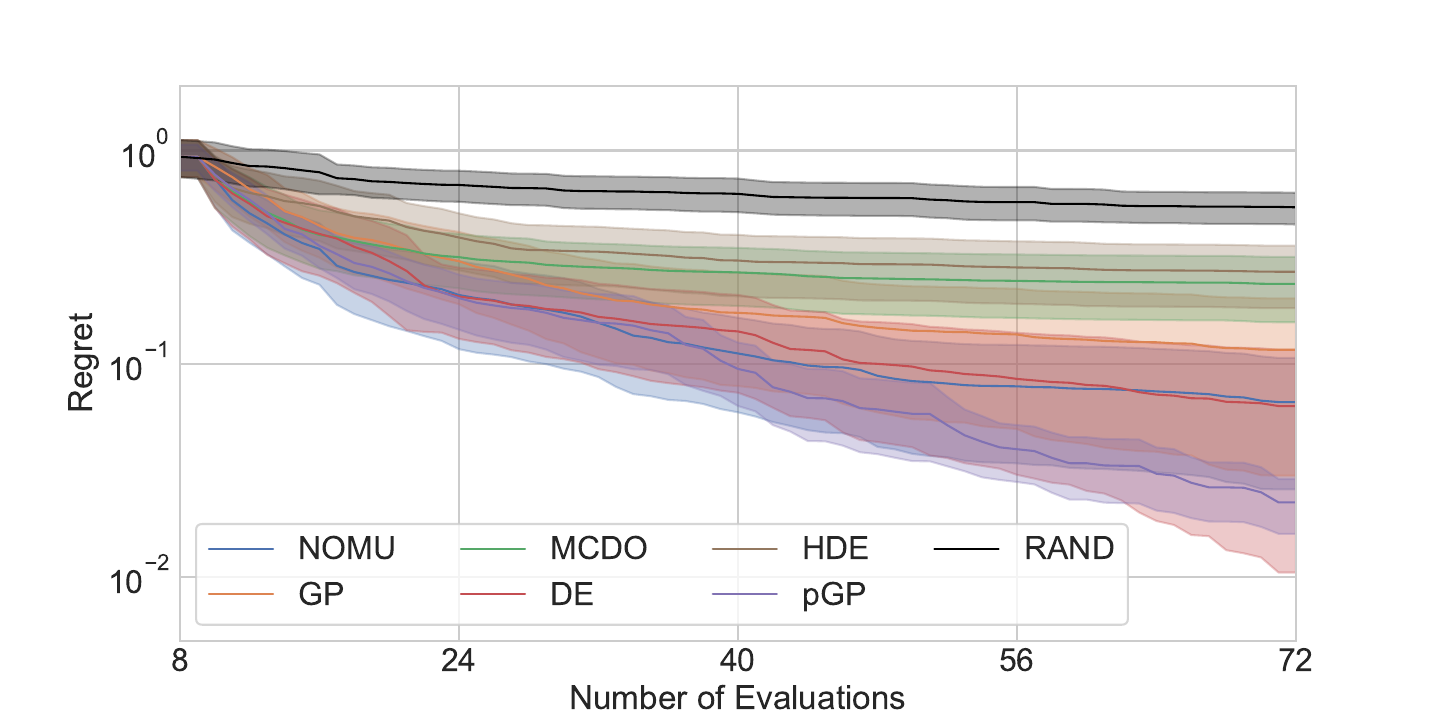}
	}
\hfill
\subfloat[Regret plot BNN, 0.5 MW\label{subfig:bnn5D_0.5}]{%
		\includegraphics[trim={30 0 60 40},clip,width =.83\columnwidth]{
		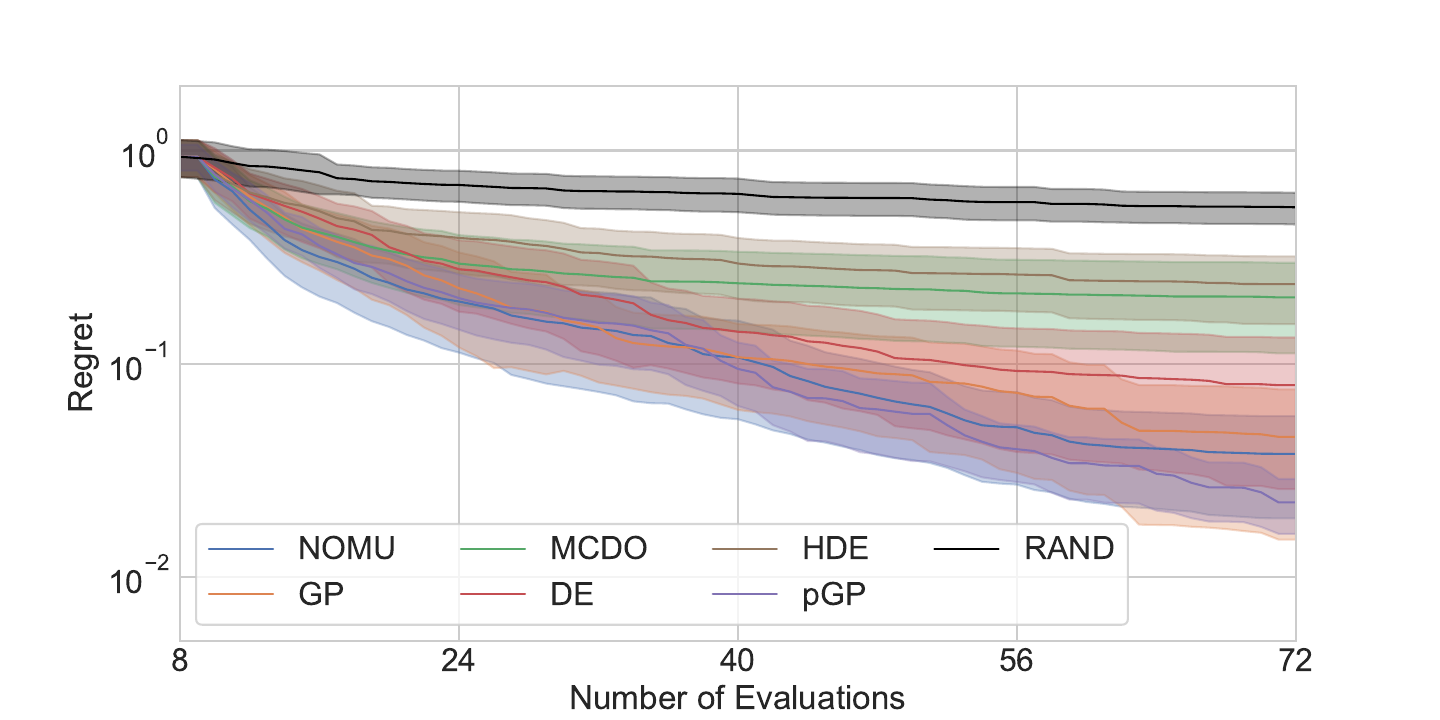}
	}
}
\vskip -0.2cm
	\caption{Regret plots for all 5D test functions and MWs of 0.05 and 0.5, respectively. We show regrets averaged over 100 runs (solid lines) with 95\% CIs.}
	\label{fig:5d_regret_plots}
\end{figure*}

\begin{figure*}[p]
\makebox[\textwidth][c]{%
\subfloat[Regret plot G-Function, 0.05 MW\label{subfig:gf10D_0.05}]{%
		\includegraphics[trim={30 0 60 40},clip,width =.83\columnwidth]{
		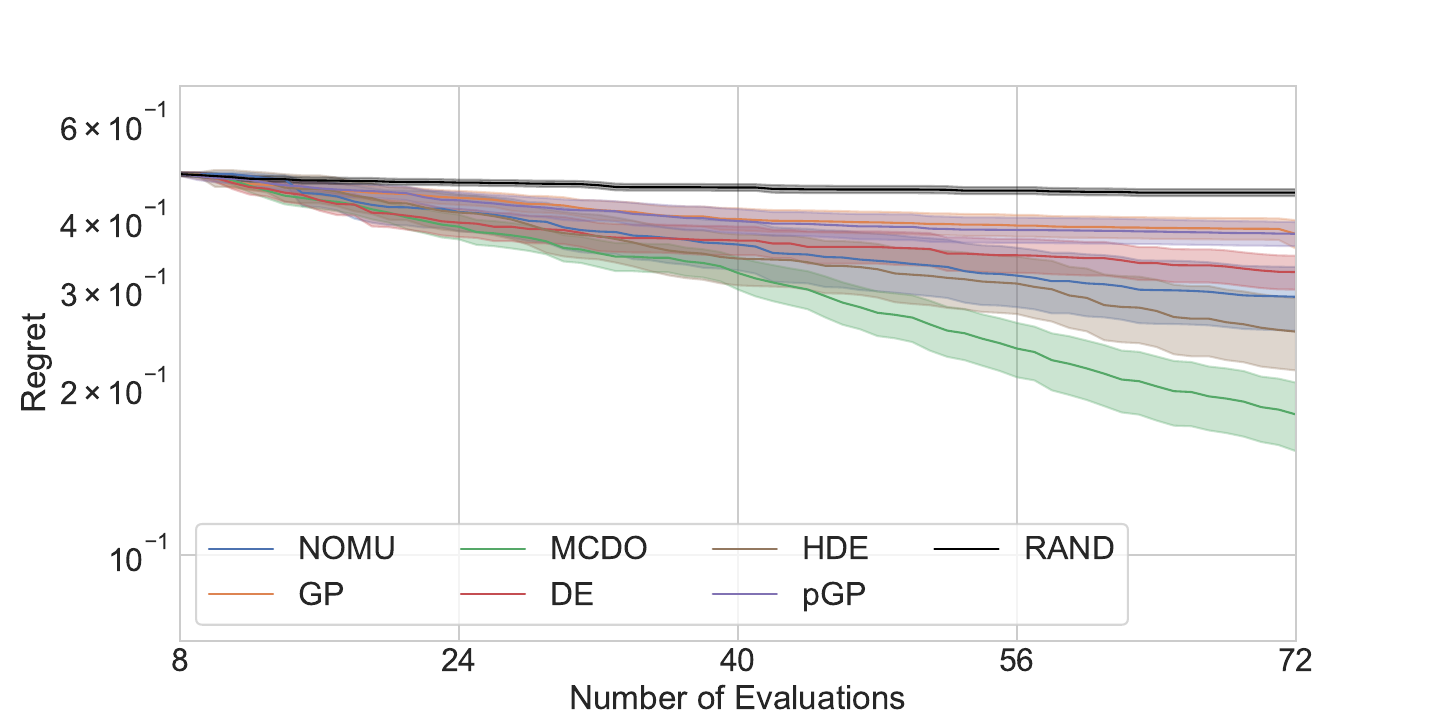}
		}
\hfill
\subfloat[Regret plot G-Function, 0.5 MW\label{subfig:gf10D_0.5}]{%
		\includegraphics[trim={30 0 60 40},clip,width =.83\columnwidth]{
		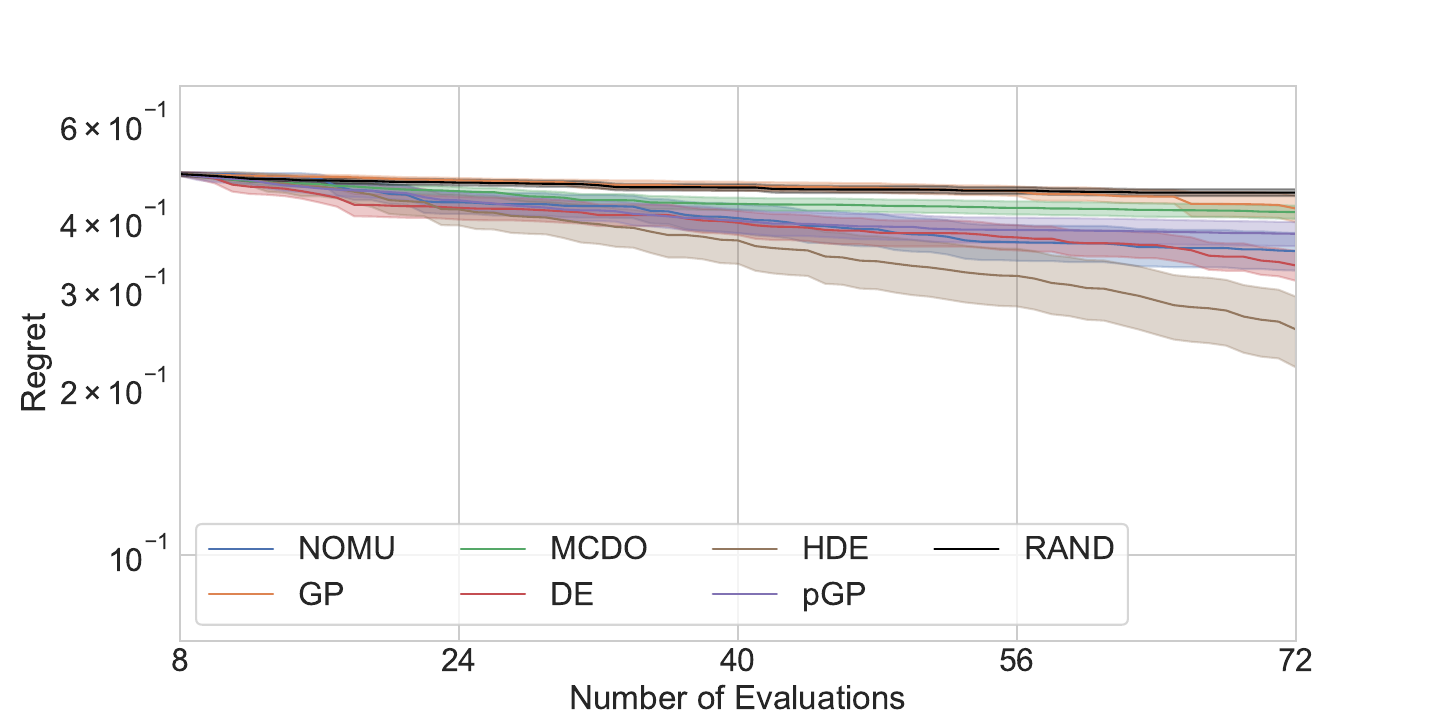}
	}
}
\makebox[\textwidth][c]{%
 \subfloat[Regret plot Levy, 0.05 MW\label{subfig:lev10D_0.05}]{%
		\includegraphics[trim={30 0 60 40},clip,width =.83\columnwidth]{
		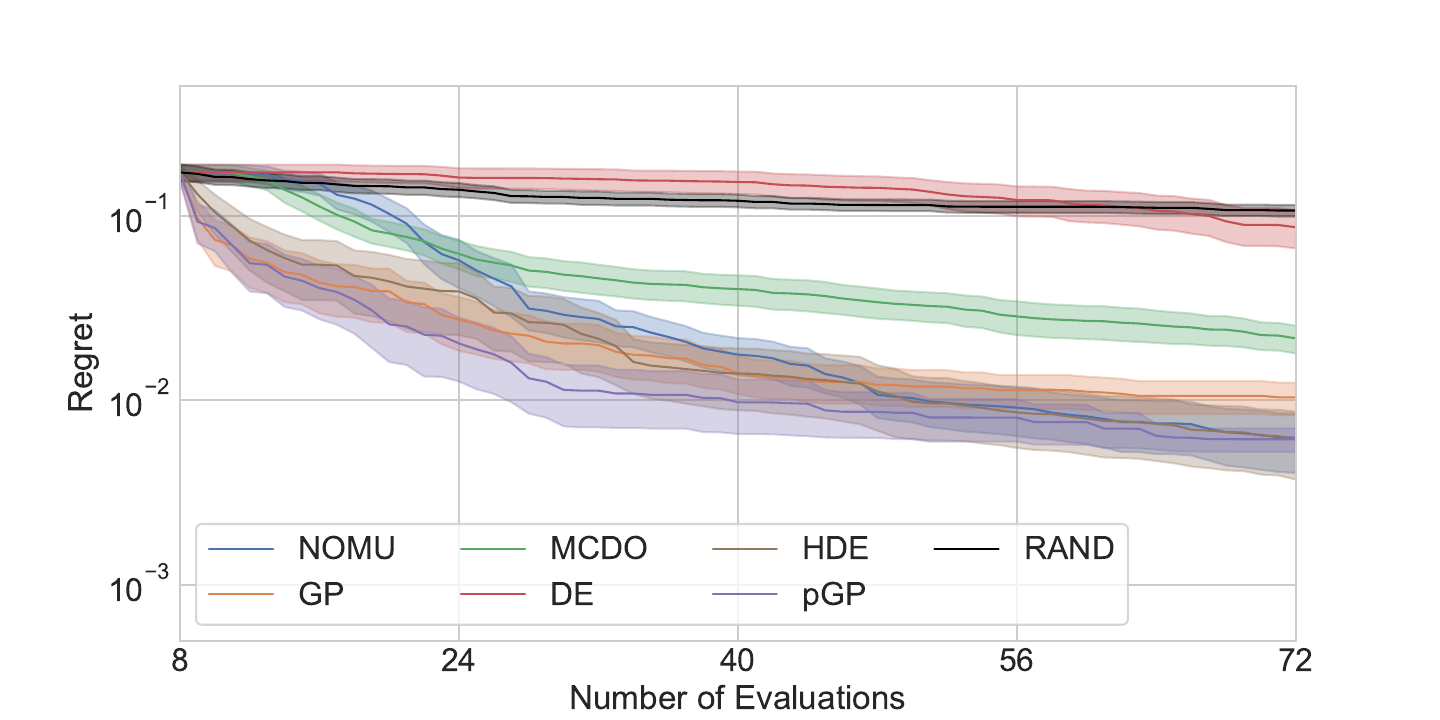}
}		
\hfill
\subfloat[Regret plot Levy, 0.5 MW\label{subfig:lev10D_0.5}]{%
		\includegraphics[trim={30 0 60 40},clip,width =.83\columnwidth]{
		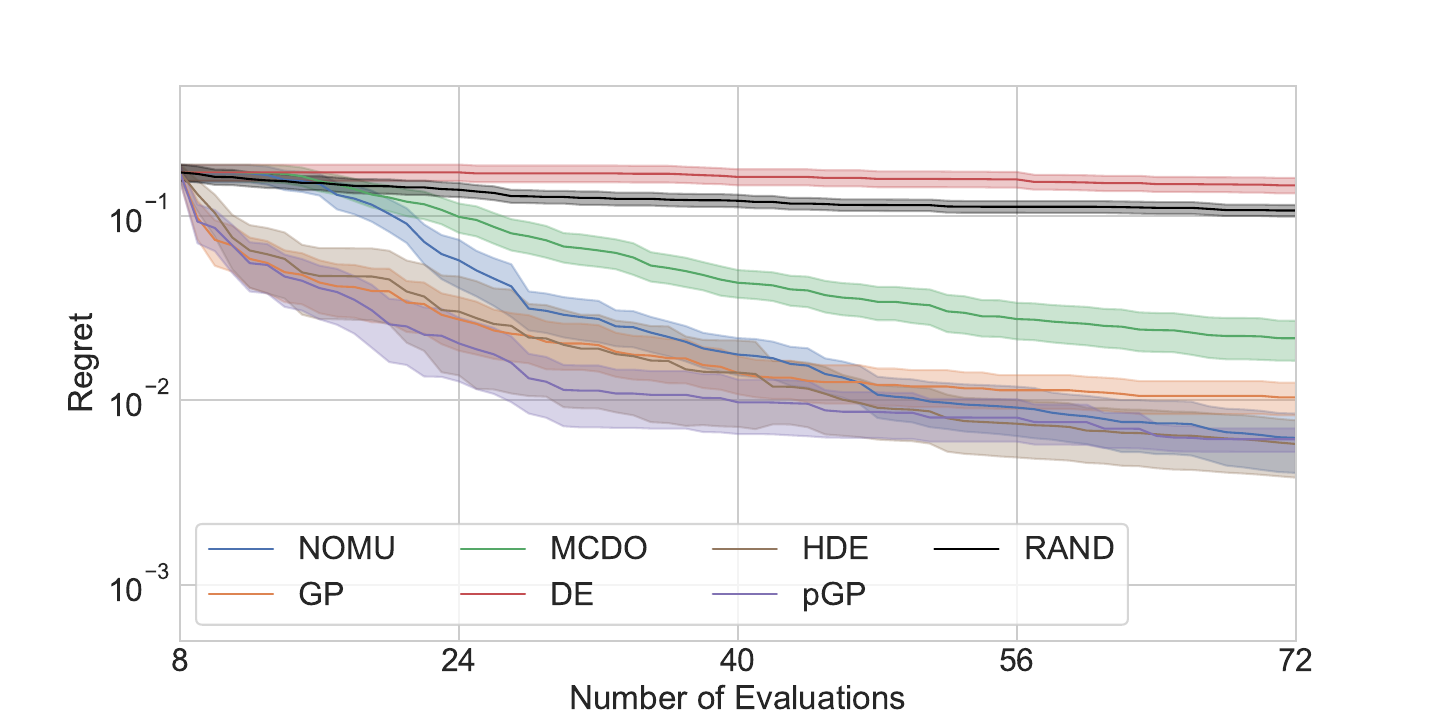}	
}
}
\makebox[\textwidth][c]{%
\subfloat[Regret plot Perm, 0.05 MW\label{subfig:per10D_0.05}]{%
		\includegraphics[trim={30 0 60 40},clip,width =.83\columnwidth]{
		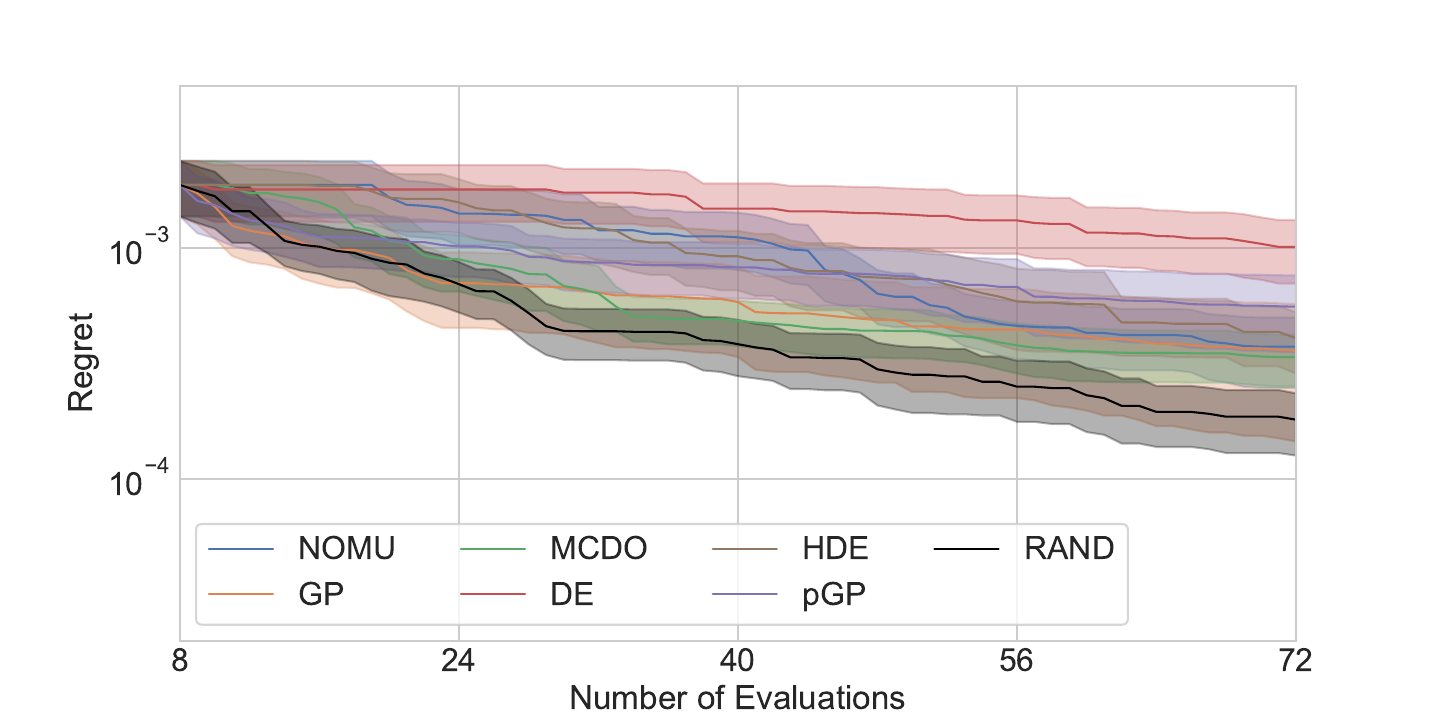}
	}
\hfill
\subfloat[Regret plot Perm, 0.5 MW\label{subfig:per10D_0.5}]{%
		\includegraphics[trim={30 0 60 40},clip,width =.83\columnwidth]{
		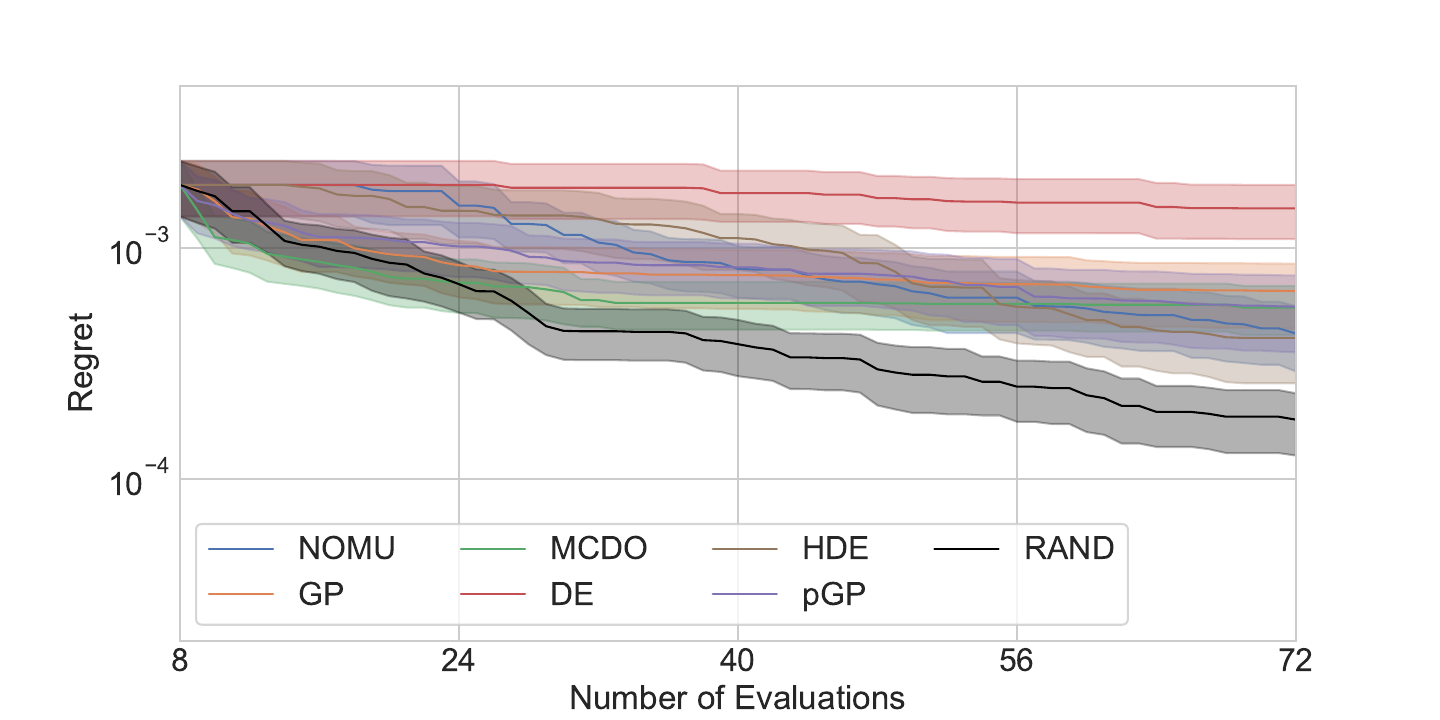}
	}
}
\makebox[\textwidth][c]{%
\subfloat[Regret plot Rosenbrock, 0.05 MW\label{subfig:ros10D_0.05}]{%
		\includegraphics[trim={30 0 60 30},clip,width =.83\columnwidth]{
		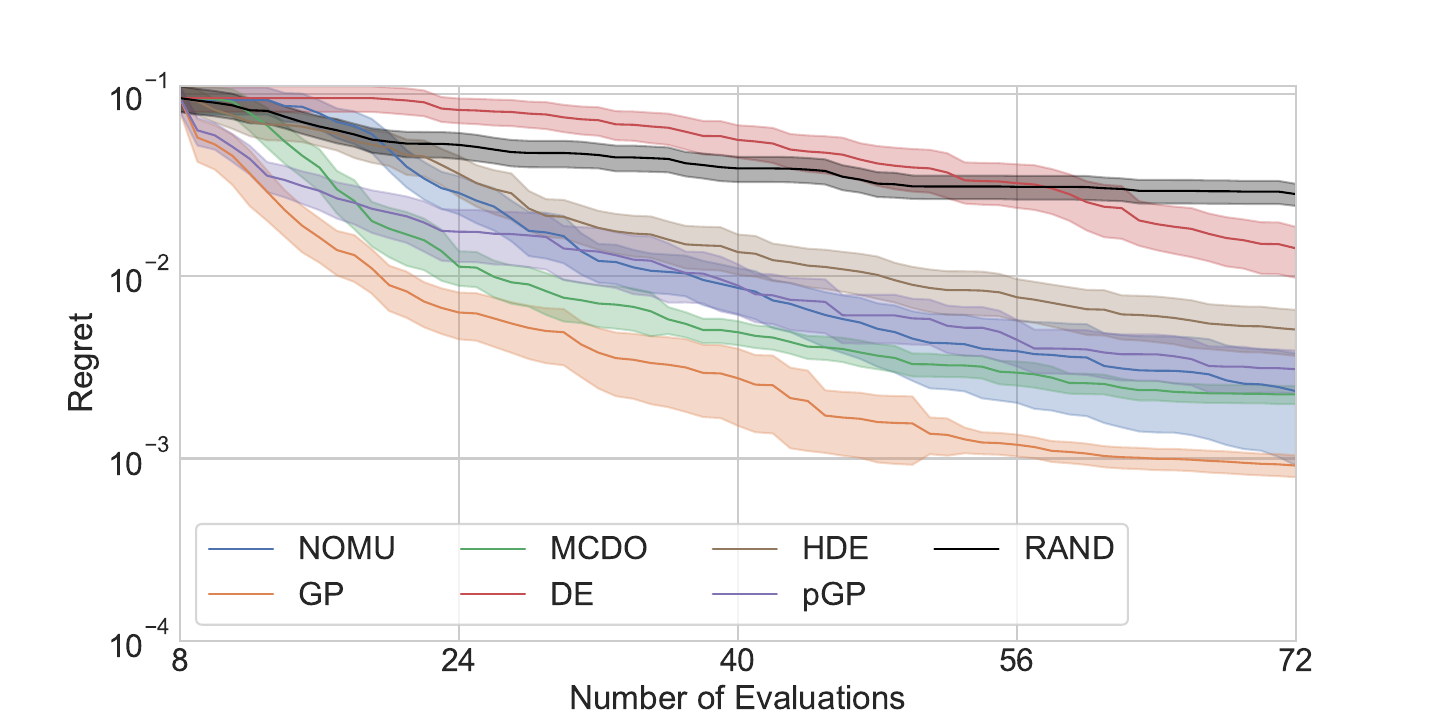}
	}
\hfill
\subfloat[Regret plot Rosenbrock, 0.5 MW\label{subfig:ros10D_0.5}]{%
		\includegraphics[trim={30 0 60 30},clip,width =.83\columnwidth]{
		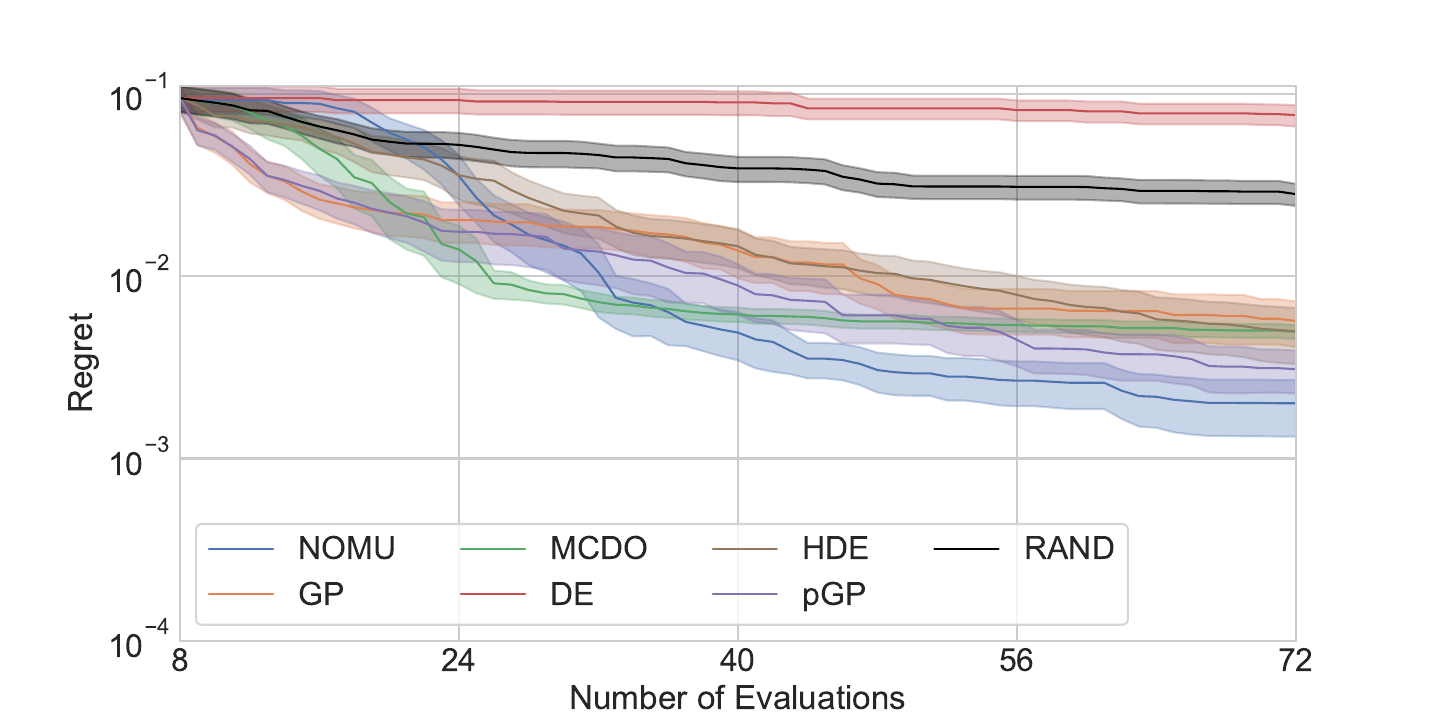}
	}
}
\makebox[\textwidth][c]{%
\subfloat[Regret plot BNN, 0.05 MW\label{subfig:bnn10D_0.05}]{%
		\includegraphics[trim={30 0 60 30},clip,width =.83\columnwidth]{
		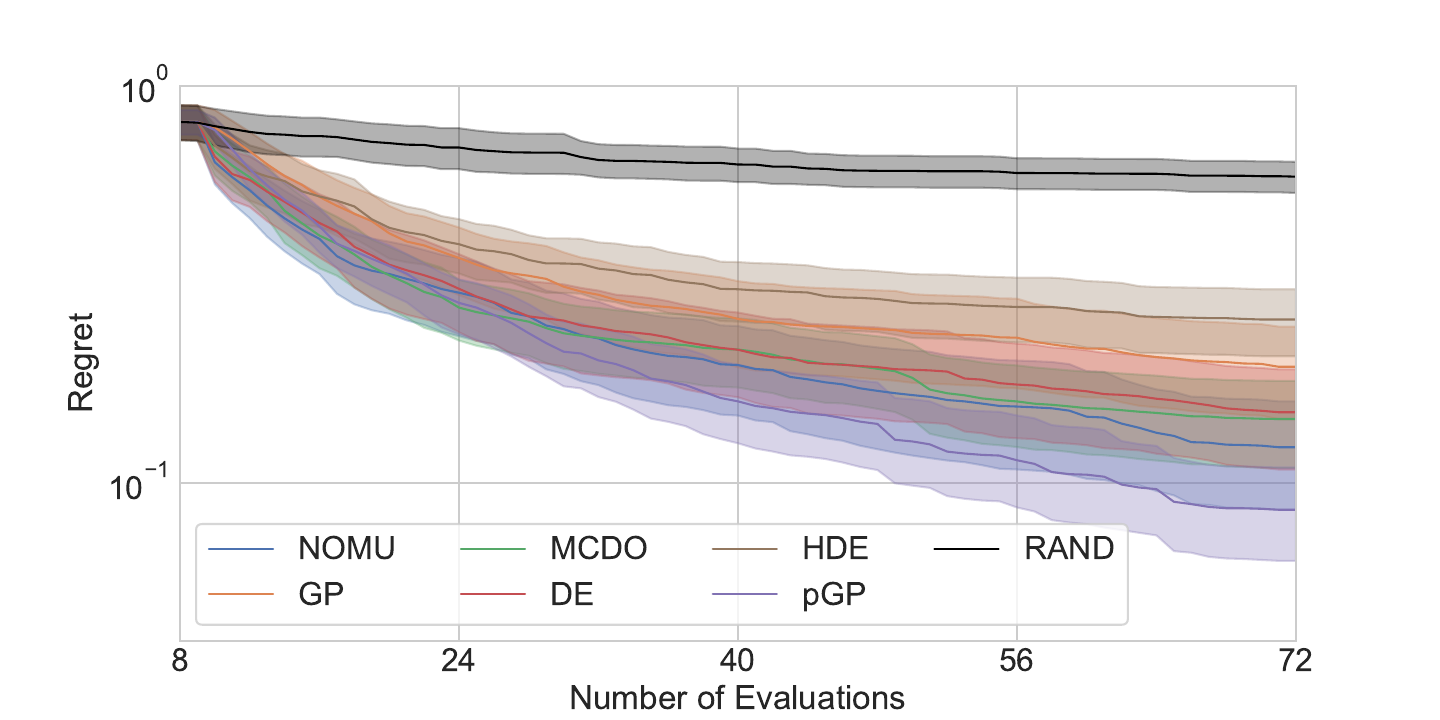}
	}
\hfill
\subfloat[Regret plot BNN, 0.5 MW\label{subfig:bnn10D_0.5}]{%
		\includegraphics[trim={30 0 60 30},clip,width =.83\columnwidth]{
		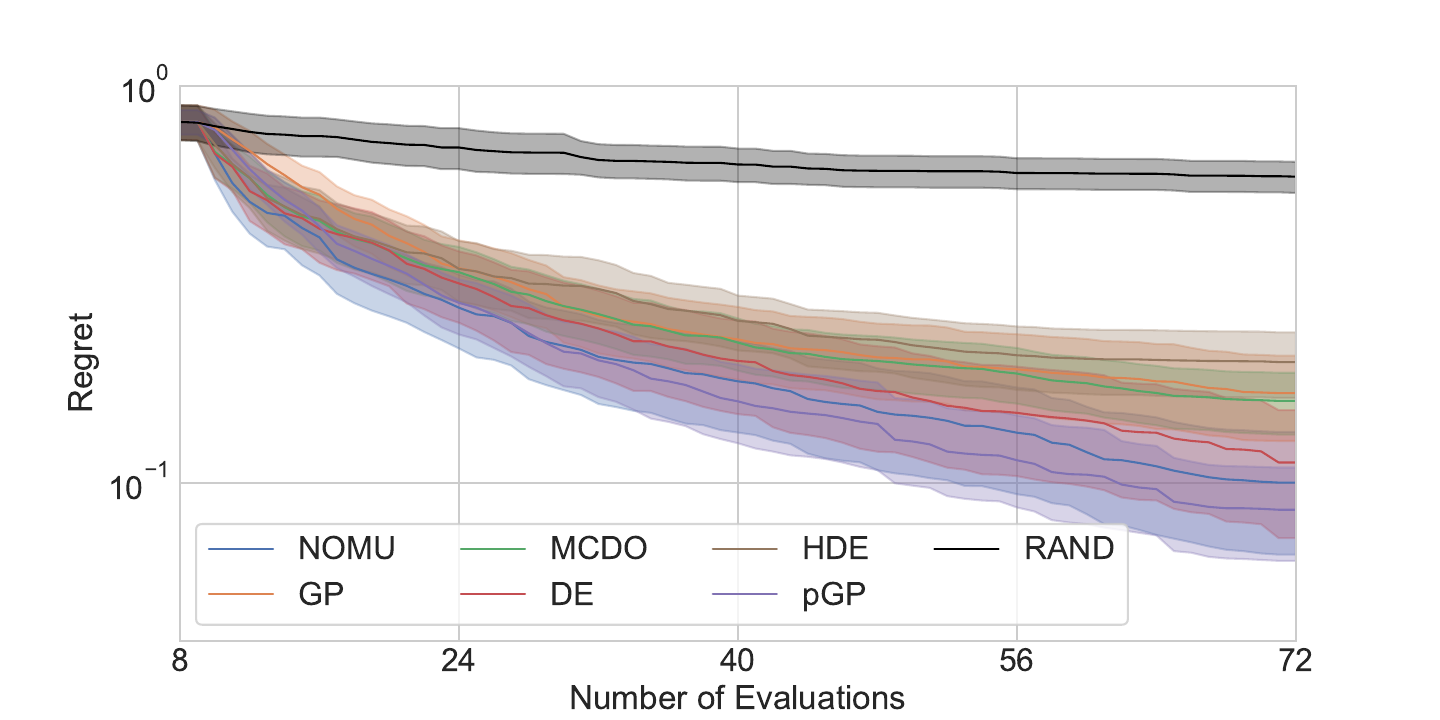}
	}
}
\vskip -0.2cm
	\caption{Regret plots for all 10D test functions and MWs of 0.05 and 0.5, respectively. We show regrets averaged over 100 runs (solid lines) with 95\% CIs.}
	\label{fig:10d_regret_plots}
\end{figure*}

\begin{figure*}[p]
\makebox[\textwidth][c]{%
\subfloat[Regret plot G-Function, 0.05 MW\label{subfig:gf20D_0.05}]{%
		\includegraphics[trim={30 0 60 30},clip,width =.83\columnwidth]{
		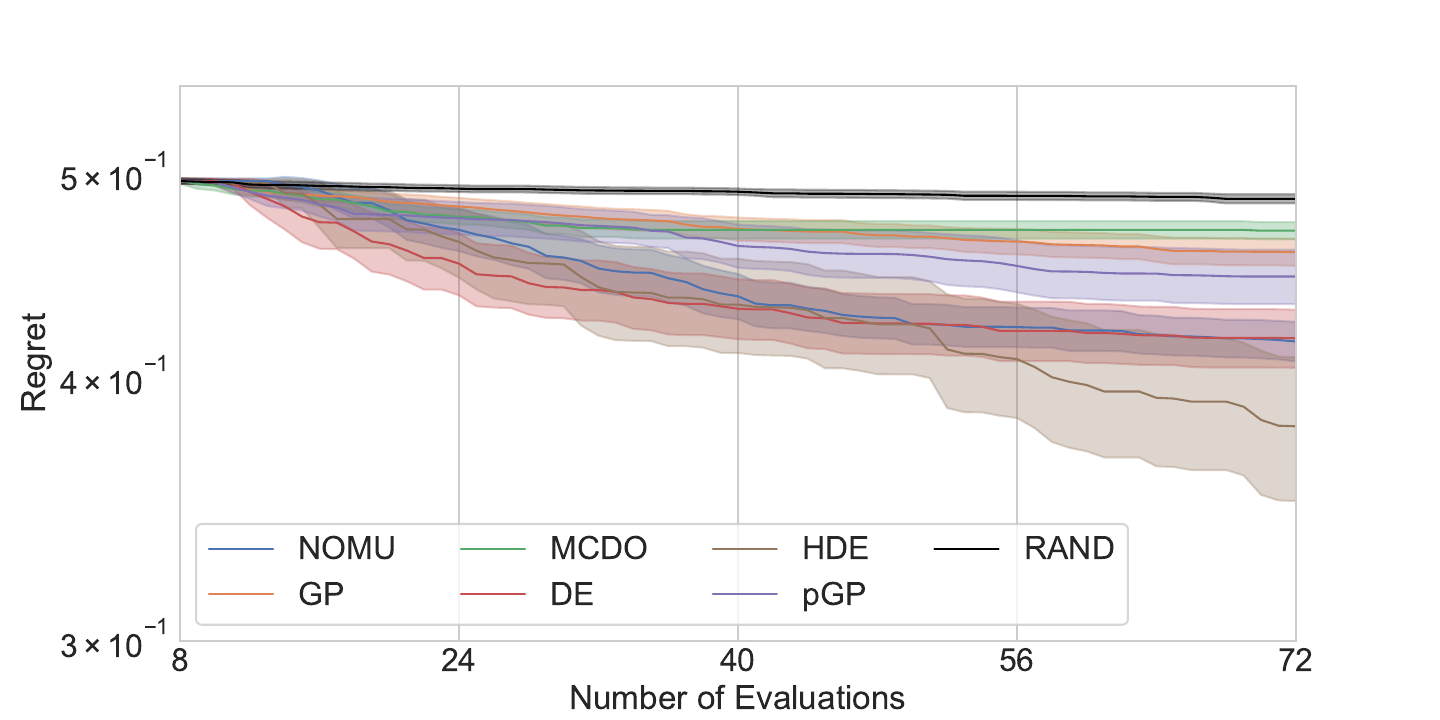}
		}
\hfill
\subfloat[Regret plot G-Function, 0.5 MW\label{subfig:gf20D_0.5}]{%
		\includegraphics[trim={30 0 60 30},clip,width =.83\columnwidth]{
		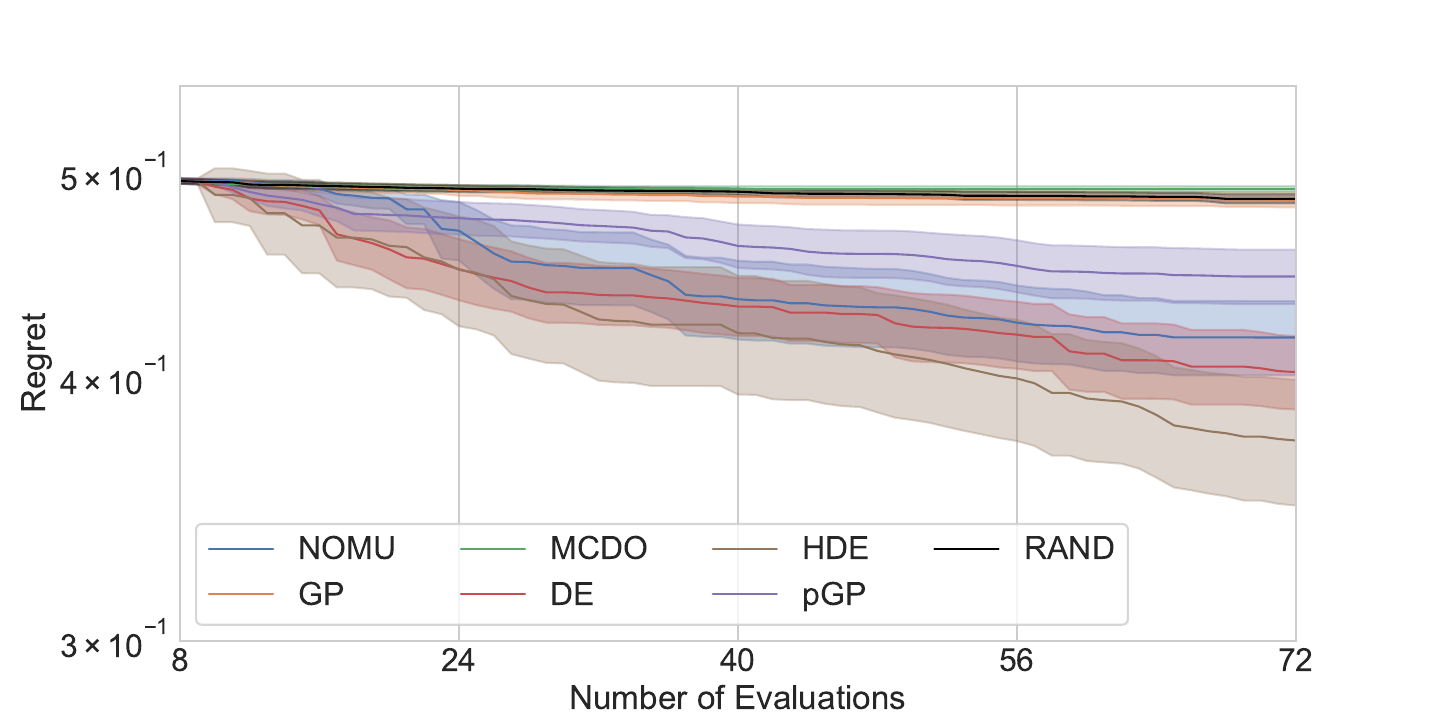}
	}
}
\makebox[\textwidth][c]{%
 \subfloat[Regret plot Levy, 0.05 MW\label{subfig:lev20D_0.05}]{%
		\includegraphics[trim={30 0 60 40},clip,width =.83\columnwidth]{
		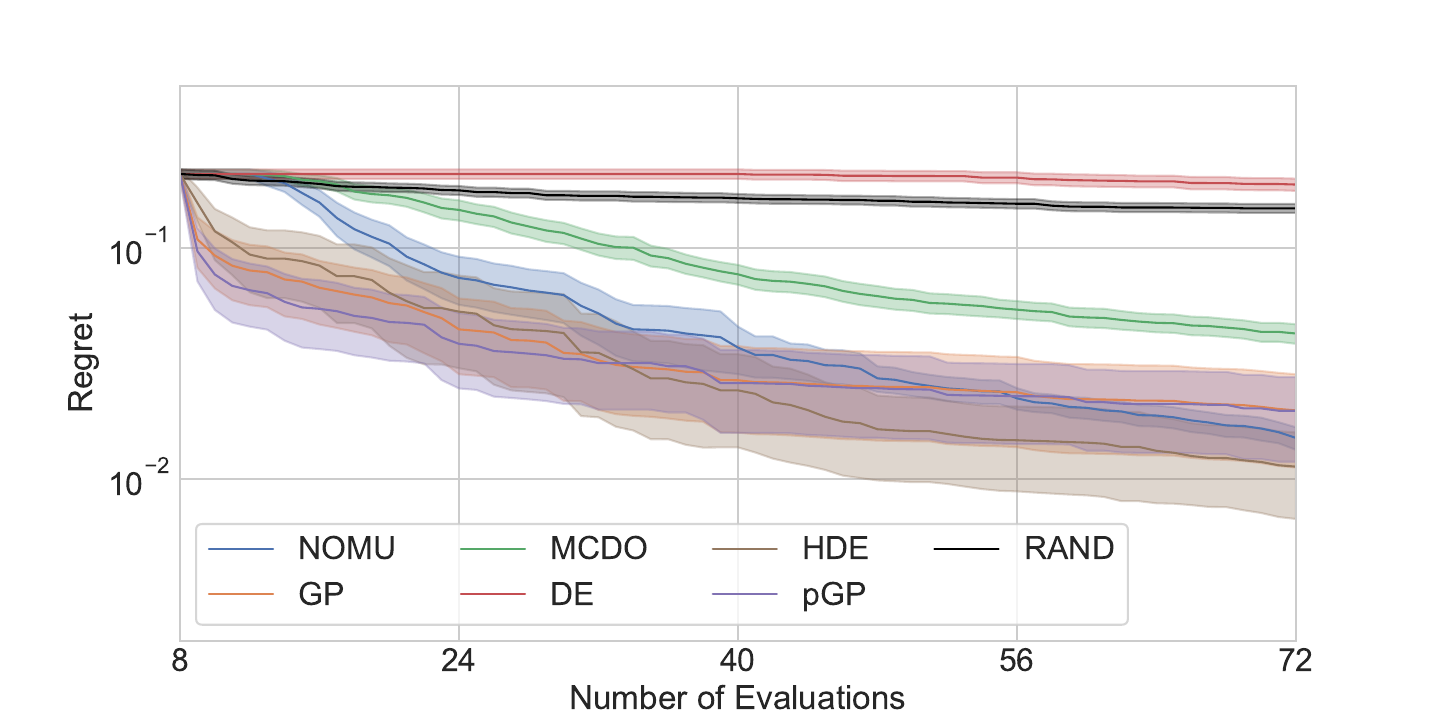}
}		
\hfill
\subfloat[Regret plot Levy, 0.5 MW\label{subfig:lev20D_0.5}]{%
		\includegraphics[trim={30 0 60 40},clip,width =.83\columnwidth]{
		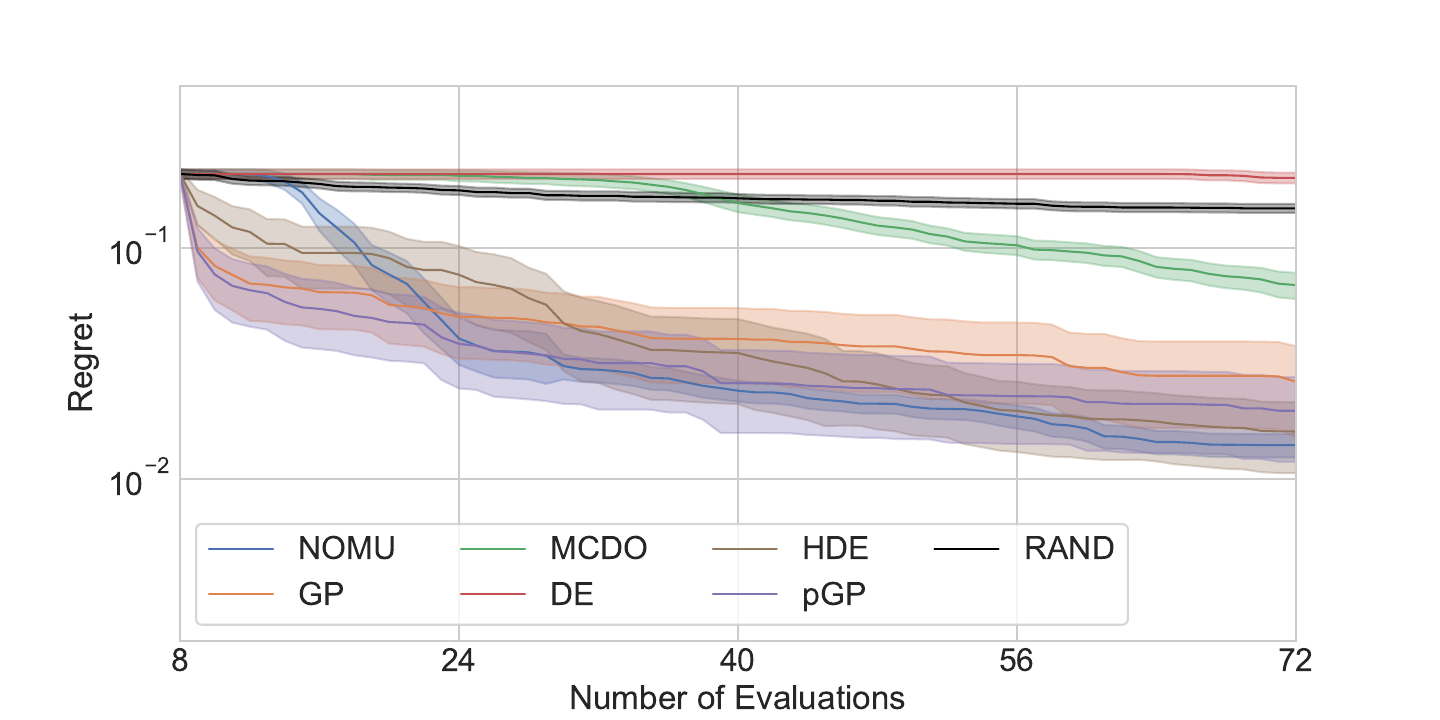}	
}
}
\makebox[\textwidth][c]{%
\subfloat[Regret plot Perm, 0.05 MW\label{subfig:per20D_0.05}]{%
		\includegraphics[trim={30 0 60 40},clip,width =.83\columnwidth]{
		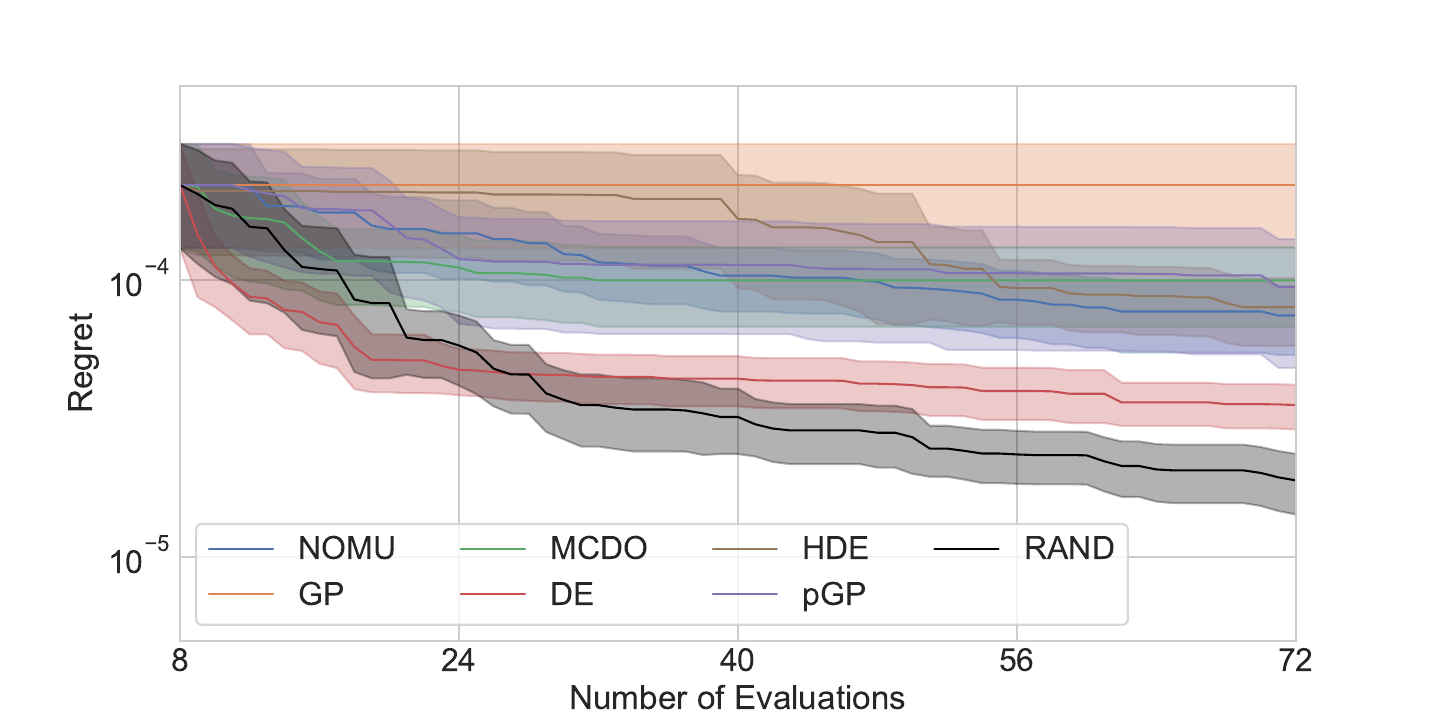}
	}
\hfill
\subfloat[Regret plot Perm, 0.5 MW\label{subfig:per20D_0.5}]{%
		\includegraphics[trim={30 0 60 40},clip,width =.83\columnwidth]{
		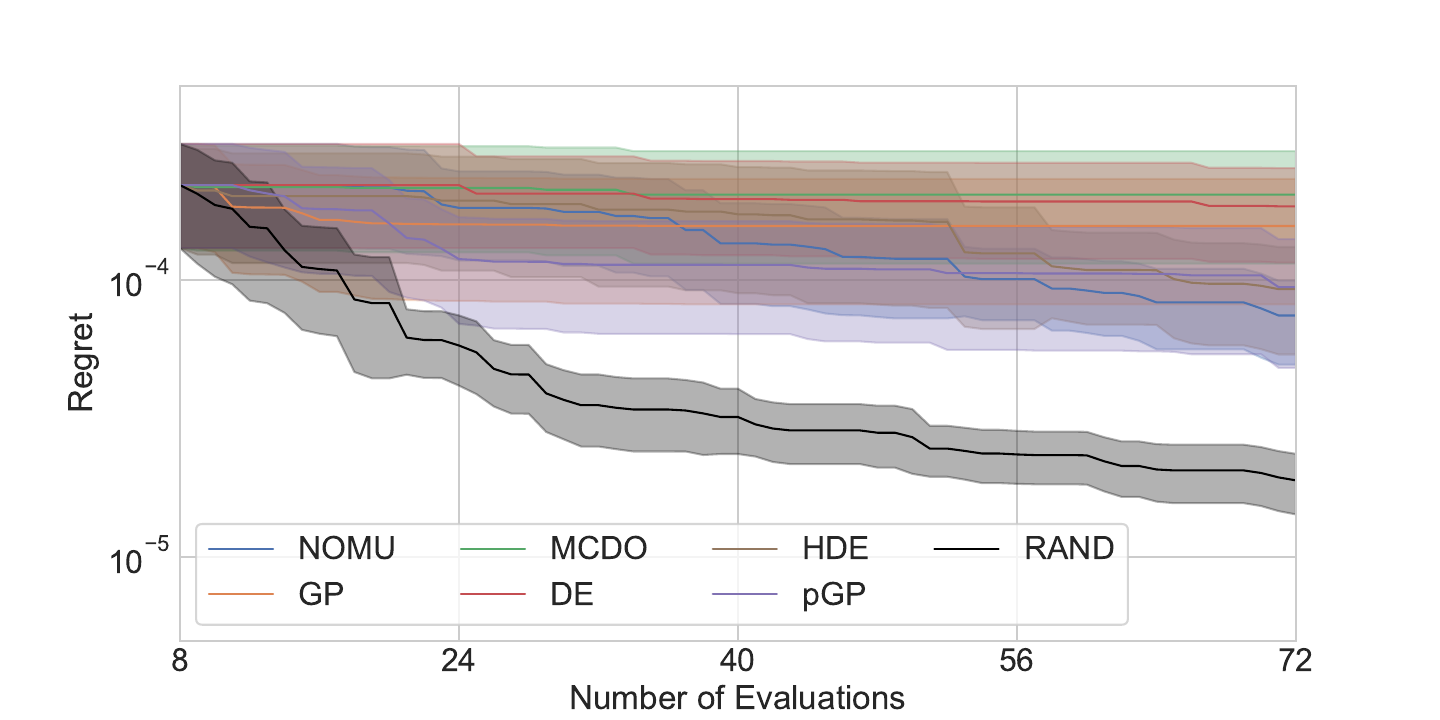}
	}
}
\makebox[\textwidth][c]{%
\subfloat[Regret plot Rosenbrock, 0.05 MW\label{subfig:ros20D_0.05}]{%
		\includegraphics[trim={30 0 60 40},clip,width =.83\columnwidth]{
		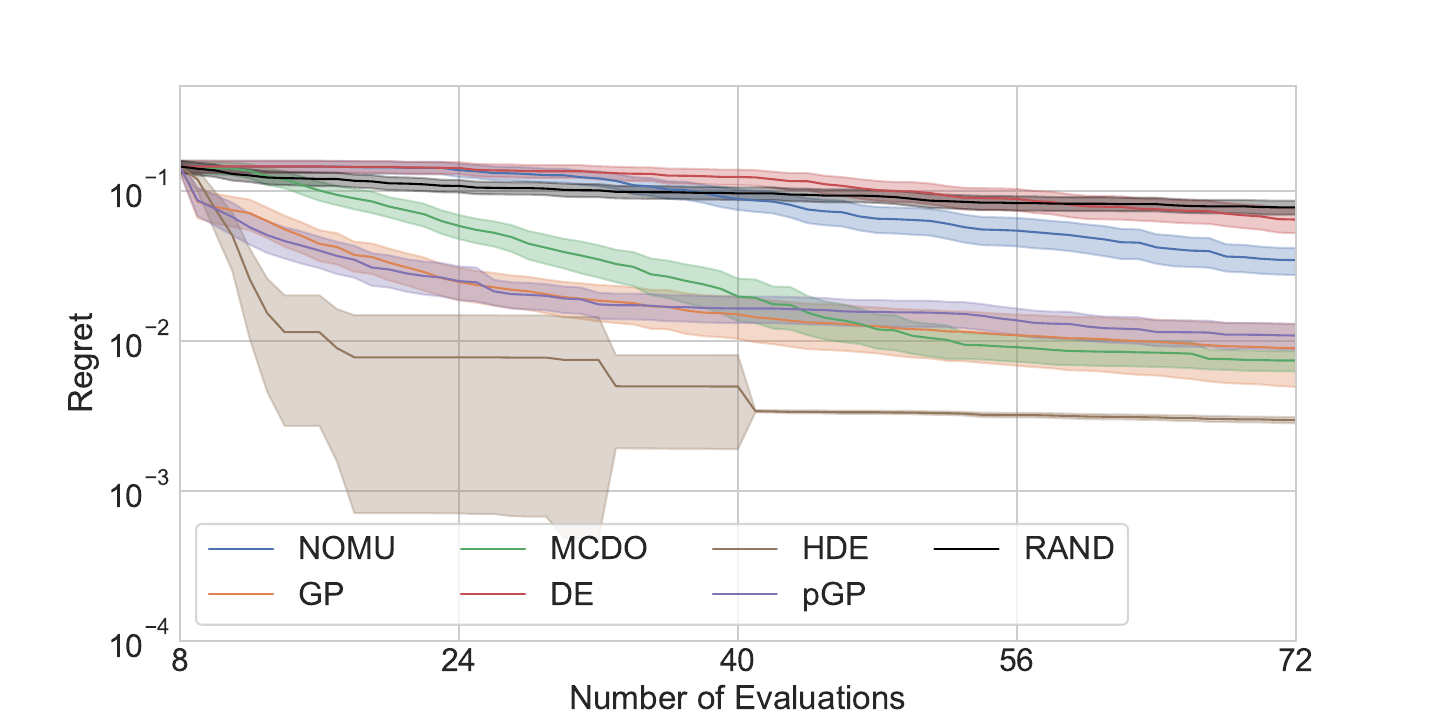}
	}
\hfill
\subfloat[Regret plot Rosenbrock, 0.5 MW\label{subfig:ros20D_0.5}]{%
		\includegraphics[trim={30 0 60 40},clip,width =.83\columnwidth]{
		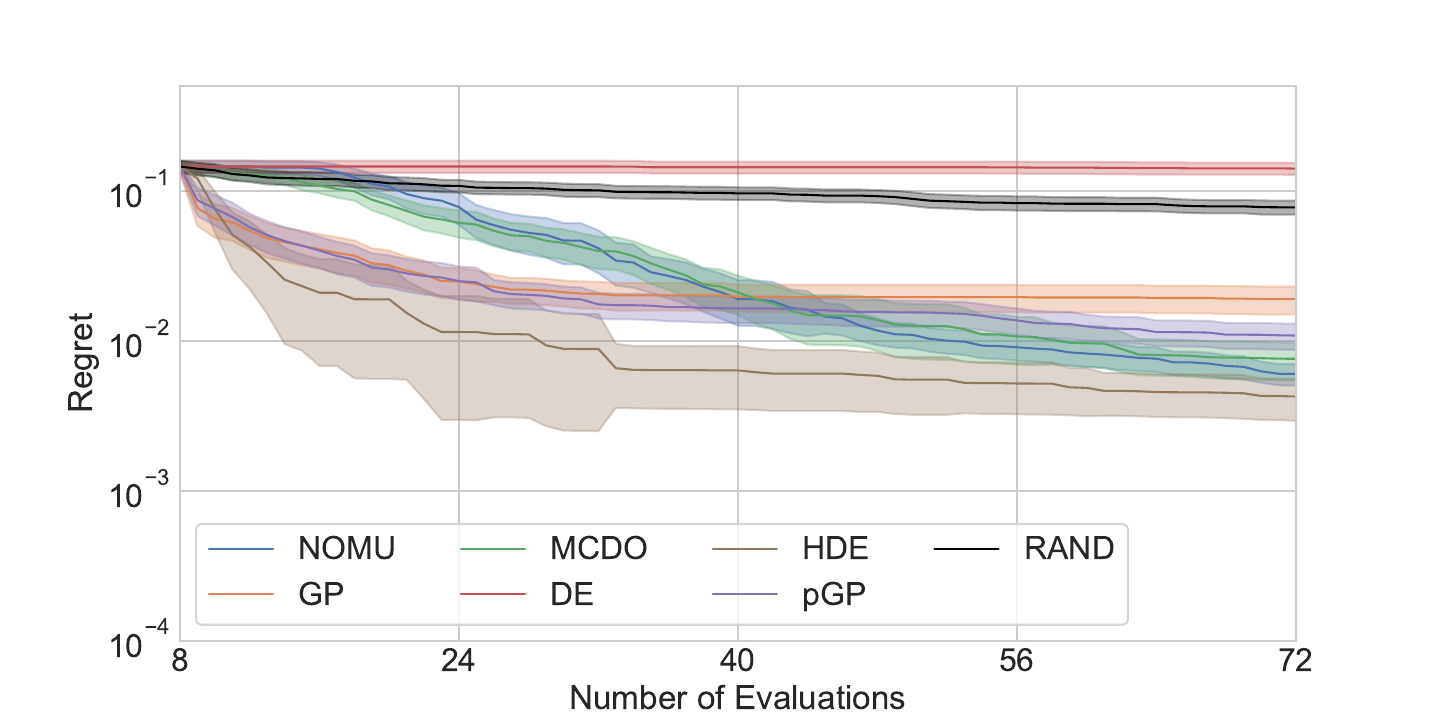}
	}
}
\makebox[\textwidth][c]{%
\subfloat[Regret plot BNN, 0.05 MW\label{subfig:bnn20D_0.05}]{%
		\includegraphics[trim={30 0 60 30},clip,width =.83\columnwidth]{
		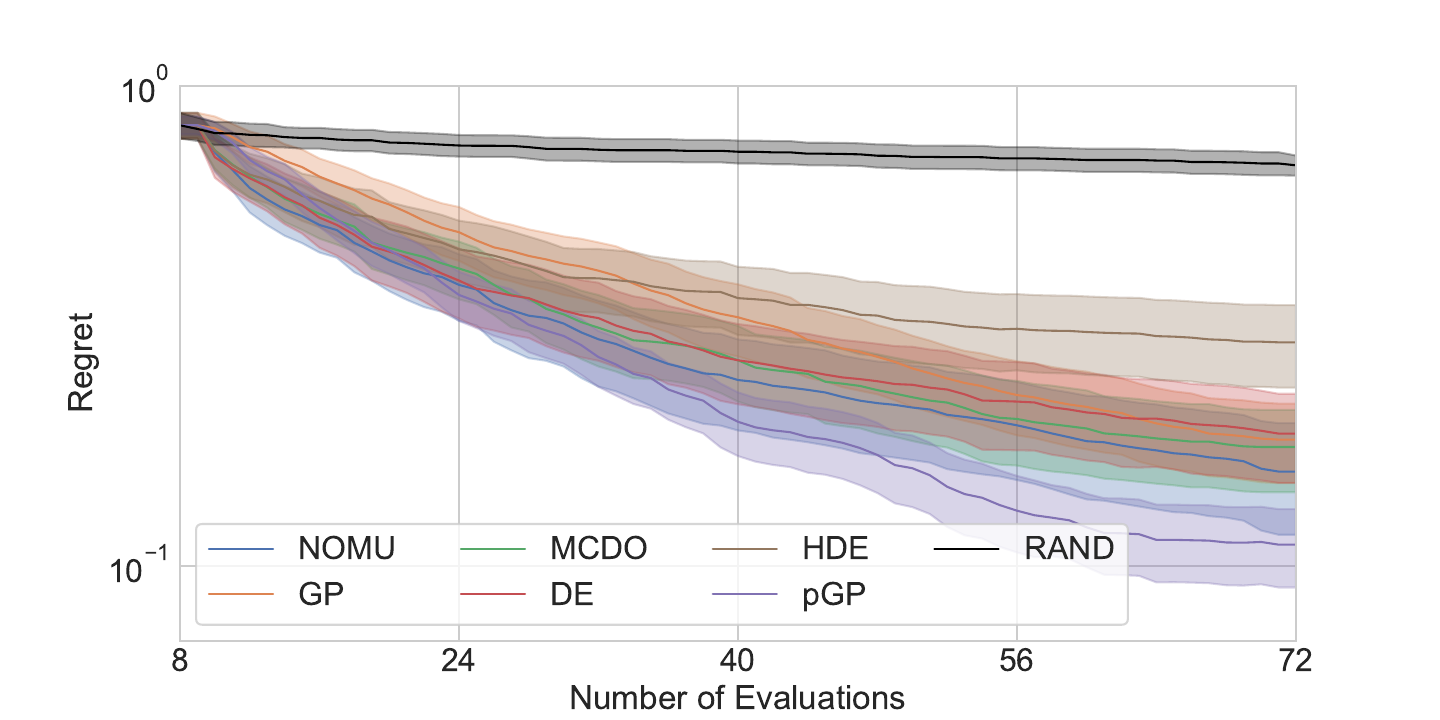}
	}
\hfill
\subfloat[Regret plot BNN, 0.5 MW\label{subfig:bnn20D_0.5}]{%
		\includegraphics[trim={30 0 60 30},clip,width =.83\columnwidth]{
		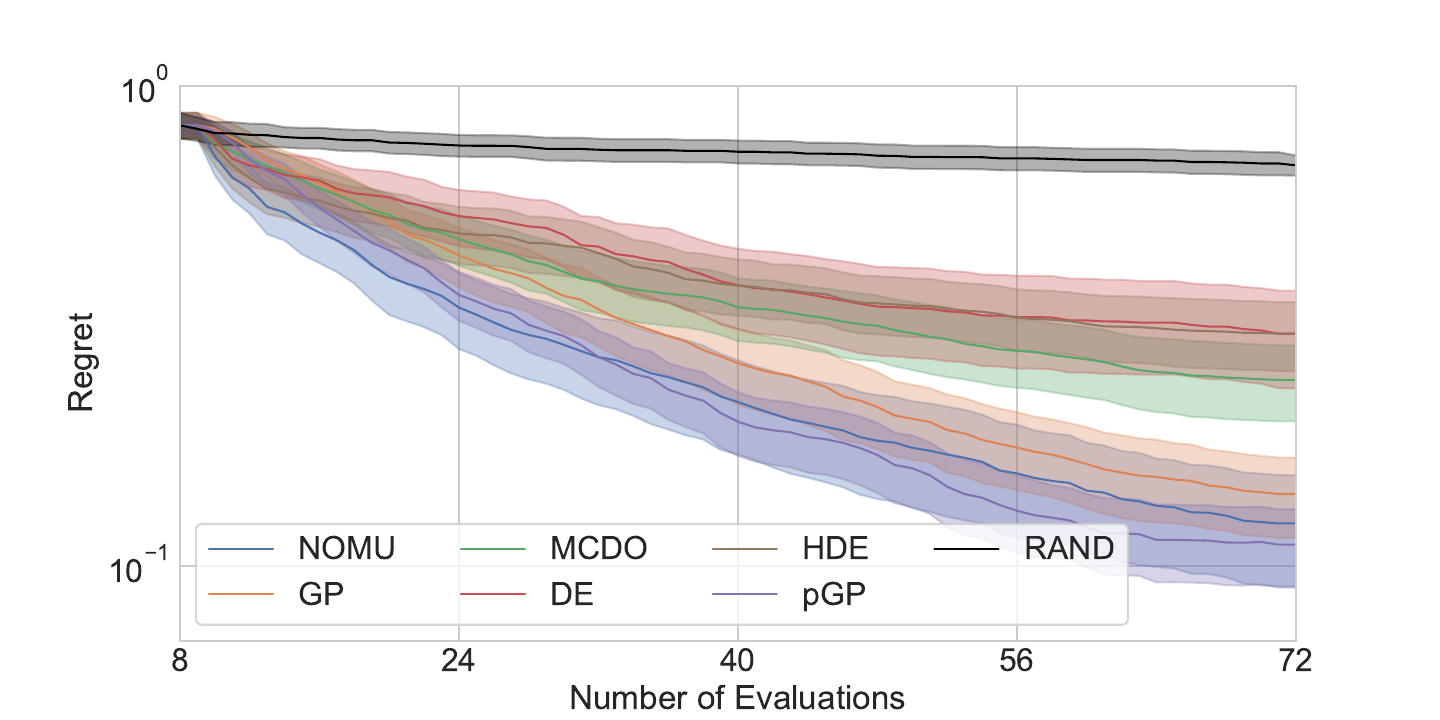}
	}
}
\vskip -0.2cm
	\caption{Regret plots for all 20D test functions and MWs of 0.05 and 0.5, respectively. We show regrets averaged over 100 runs (solid lines) with 95\% CIs.}
	\label{fig:20d_regret_plots}
\end{figure*}

\section{Extensions}\label{sec:Extensions}
\subsection{Incorporating Data Noise}\label{subsec:Incorporating Data Noise Uncertainty}
We now discuss one way to incorporate data noise in NOMU. If $\signoise$ is unknown, one option to learn it, is to add another output~$\signoisehat$ to its architecture and change the loss function to
\begin{align}\label{eq:lmu2datanoise}
  L^\hp(\NN_\theta):=&\sum_{i=1}^{\ntr}\left(\frac{\left(\fhat(\xtr_i)-\ytr_i\right)^2}{2\left(\signoisehat(\xtr_i)\right)^2}+\ln\left(\signoisehat(\xtr_i)\right)\right)\notag\\
  &+ \,\musqr\cdot\sum_{i=1}^{\ntr}\frac{\left(\sigmodelhatraw(\xtr_i)\right)^2}{2\left(\signoisehat(\xtr_i)\right)^2}\\ 
  &+ \muexp\cdot\frac{1}{\lambda_d(\X)}\int_{\X}e^{-\cexp\cdot \sigmodelhatraw(x)}\,dx,\notag
\end{align}

in the case of Gaussian data noise uncertainty.
 In this case we recommend to first train the model prediction~$\fhat$ and the data noise~$\signoisehat$ only and then freeze their parameters and train the $\sigmodelhatraw$-network for capturing model uncertainty~$\sigmodelhat$. Note that in the NOMU loss~\eqref{eq:lmu2} we implicitly assumed a constant very small and negligible data noise $\signoise^2$, absorbed as a factor into the hyperparameter $\muexp$ and the regularization factor $\lambda$. 
 Thus, when using the loss  \eqref{eq:lmu2datanoise} instead of the NOMU loss~\eqref{eq:lmu2}, $\muexp$ and $\lambda$ need to be chosen significantly larger.

In the case of known (heteroscedastic) data noise~$\signoise(x)$, \eqref{eq:lmu2datanoise} can be simplified, replacing $\signoisehat$ by $\signoise$ and dropping the term $\ln\left(\signoise\right)$ (in this case, one can then also drop the $\signoisehat$-output of the NOMU architecture).

\emph{Predictive} UBs are then obtained as
\begin{align}
&\left(\fhat(x)\mp\sqrt{ c_1~\sigmodelhat^2(x)+c_2~\signoisehat^2(x)}\right)
\end{align}
with suitable calibration parameters $c_1,c_2 \in \Rpz$.

In the case of  known normally distributed data noise (and under the assumption that the posterior of each $f(x)$ is Gaussian), it is sufficient to calibrate one calibration parameter $\tilde{c}$ to obtain approximate $\alpha$ predictive bounds
\begin{align}
&\left(\fhat(x)\mp\Phi^{-1}(1-\frac{1-\alpha}{2})\sqrt{ \tilde{c}~\sigmodelhat^2(x)+\signoise^2(x)}\right),
\end{align}
where $\tilde{c}$ relates to the typically unknown prior scale.

\subsection{NOMU for Upwards and Downwards Directed Hypothesis Classes}\label{subsec:NOMU for Upwards and Downwards Directed Hypothesis Class}

As mentioned in \Cref{subsec:connectingNOMUtoPointwiseUncertaintyBounds} and discussed in more detail in \Cref{subsec:A note on Theorem}, often the set~$\HC_{\Dtr}$ is not upwards directed for typical NN-architectures and \cref{eq:IntegralEqualityUB} of \Cref{thm:approxOfUBs} is not fulfilled in general. Therefore, we carefully designed our NOMU algorithm to be able to cope with settings where the set~$\HC_{\Dtr}$ is not upwards and/or downwards directed. The downwards directed property is defined analogously as follows:

\begin{assumption}[Downwards Directed]\label{assumption:downwards direction}
For every $f_1,f_2 \in \HC_{\Dtr}$ there exists an $f \in \HC_{\Dtr}$ such that $f(x)\le \min(f_1(x),f_2(x))$ for all $x \in X$.
\end{assumption}

However, in the following, we discuss a modification of NOMU, which is specifically designed for the case if $\HC_{\Dtr}$ is indeed upwards and/or downwards directed. In this case, by \Cref{thm:approxOfUBs}, we can directly solve 
\begin{align}\label{eq:upperBoundIfUpwardsDirected}
\argmax_{h\in \HC_{\Dtr}}\int_{X}u(h(x)-\fhat(x))\,dx 
\end{align}
to obtain an upper UB and/or
\begin{align}\label{eq:lowerBoundIfDownwardsDirected}
\argmin_{h\in \HC_{\Dtr}}\int_{X}u(h(x)-\fhat(x))\,dx
\end{align}
to obtain a lower UB,
without the need for any modifications as used in the proposed NOMU algorithm (we do not need the dashed connections in the architecture from \Cref{fig:nn_tik}, we do not need a specific choice of $u$ and we do not need to introduce $\sigmodelhatraw$ and the final activation function $\varphi$). The UBs obtained in this way \emph{exactly} coincide then with the pointwise upper and lower UBs defined in \eqref{eq:upperSupBound}, respectively. 

Moreover in this case, $\fhat$ can be even removed from \cref{eq:upperBoundIfUpwardsDirected,eq:lowerBoundIfDownwardsDirected} (as can be seen from the proof of \Cref{thm:gernelazidApproxOfUBs}). Thus, in the following loss formulation, we will remove $\fhat$ in the respective term \termcprime{}.

\subsubsection{The Architecture}
Under \Cref{assumption:upwards direction} (upwards directed)  \emph{and} \Cref{assumption:downwards direction} (downwards directed), we propose an architecture $\NNt_\theta$ consisting of three sub-networks with three outputs: the model prediction $\fhat$, the estimated lower UB $\lUBhat$ and the estimated upper UB $\uUBhat$. In \Cref{fig:nn_tik_extension}, we provide a schematic representation of $\NNt_\theta$.
\begin{figure}[t!]
        \vskip 0.1in
        \begin{center}
        \centerline{
        \resizebox{1\columnwidth}{!}{
                \begin{tikzpicture}
                [cnode/.style={draw=black,fill=#1,minimum width=3mm,circle}]
                \node[cnode=gray,label=180:$\mathlarger{\mathlarger{x \in \X}}$] (x1) at (0.5,-2) {};
                \node[cnode=gray] (x2+3) at (3,3) {};
                \node at (3,2) {$\mathlarger{\mathlarger{\vdots}}$};
                \node[cnode=gray] (x2+1) at (3,1) {};
                \node[cnode=gray] (x2-3) at (3,-3) {};
                \node at (3,-2) {$\mathlarger{\mathlarger{\vdots}}$};
                \node[cnode=gray] (x2-1) at (3,-1) {};
                
                \node[cnode=gray] (x2-5) at (3,-5) {};
                \node at (3,-6) {$\mathlarger{\mathlarger{\vdots}}$};
                \node[cnode=gray] (x2-7) at (3,-7) {};
                
                \draw (x1) -- (x2+3);
                \draw (x1) -- (x2+1);
                \draw (x1) -- (x2-1);
                \draw (x1) -- (x2-3);
                
                \draw (x1) -- (x2-5);
                \draw (x1) -- (x2-7);

                \node[cnode=gray] (x3+3) at (6,3) {};
                \node at (6,2) {$\mathlarger{\mathlarger{\vdots}}$};
                \node[cnode=gray] (x3+1) at (6,1) {};
                \node[cnode=gray] (x3-3) at (6,-3) {};
                \node at (6,-2) {$\mathlarger{\mathlarger{\vdots}}$};
                \node[cnode=gray] (x3-1) at (6,-1) {};
                
                \node[cnode=gray] (x3-5) at (6,-5) {};
                \node at (6,-6) {$\mathlarger{\mathlarger{\vdots}}$};
                \node[cnode=gray] (x3-7) at (6,-7) {};

                \foreach \y in {1,3}
                {   \foreach \z in {1,3}
                        {   \draw (x2+\z) -- (x3+\y);
                        }
                }
                \foreach \y in {1,3}
                {   \foreach \z in {1,3}
                        {   \draw (x2-\z) -- (x3-\y);
                        }
                }
                
                \foreach \y in {5,7}
                {   \foreach \z in {5,7}
                        {   \draw (x2-\z) -- (x3-\y);
                        }
                }

                \node at (7.5,+3) {$\mathlarger{\mathlarger{\ldots}}$};
                \node at (7.5,+2) {$\mathlarger{\mathlarger{\ldots}}$};
                \node at (7.5,+1) {$\mathlarger{\mathlarger{\ldots}}$};
                \node at (7.5,-1) {$\mathlarger{\mathlarger{\ldots}}$};
                \node at (7.5,-2) {$\mathlarger{\mathlarger{\ldots}}$};
                \node at (7.5,-3) {$\mathlarger{\mathlarger{\ldots}}$};
                
                \node at (7.5,-5) {$\mathlarger{\mathlarger{\ldots}}$};
                \node at (7.5,-6) {$\mathlarger{\mathlarger{\ldots}}$};
                \node at (7.5,-7) {$\mathlarger{\mathlarger{\ldots}}$};

                \node[cnode=gray] (x4+3) at (9,3) {};
                \node at (9,2) {$\mathlarger{\mathlarger{\vdots}}$};
                \node[cnode=gray] (x4+1) at (9,1) {};
                \node[cnode=gray] (x4-3) at (9,-3) {};
                \node at (9,-2) {$\mathlarger{\mathlarger{\vdots}}$};
                \node[cnode=gray] (x4-1) at (9,-1) {};
                
                \node[cnode=gray] (x4-5) at (9,-5) {};
                \node at (9,-6) {$\mathlarger{\mathlarger{\vdots}}$};
                \node[cnode=gray] (x4-7) at (9,-7) {};

                \node[cnode=gray,label=360:$\mathlarger{\mathlarger{\uUBhat(x)\in \Rpz}}$] (x5+2) at (11.5,2) {};
                \node[cnode=gray,label=360:$\mathlarger{\mathlarger{\fhat(x)\in \Y}}$] (x5-2) at (11.5,-2) {};
                \node[cnode=gray,label=360:$\mathlarger{\mathlarger{\lUBhat(x)\in \Rpz}}$] (x5-6) at (11.5,-6) {};
                
                \draw (x4+3)--(x5+2);
                \draw (x4+1)--(x5+2);

                \draw (x4-1)--(x5-2);
                \draw (x4-3)--(x5-2);

                \draw (x4-5)--(x5-6);
                \draw (x4-7)--(x5-6);

                \draw [dotted, line width=0.4mm] (2.5,0.5) rectangle (9.5,+3.5);
                \draw [dotted, line width=0.4mm] (2.5,-3.5) rectangle (9.5,-0.5);
                \draw [dotted, line width=0.4mm] (2.5,-7.5) rectangle (9.5,-4.5);
                \node at (6,-0.2) {$\mathlarger{\mathlarger{\fhat}}$-network};
                \node at (1,+2.5) {$\mathlarger{\mathlarger{\uUBhat}}$-network};
                \node at (1,-6.5) {$\mathlarger{\mathlarger{\lUBhat}}$-network};
                \end{tikzpicture}
        }
        }
        \caption{$\NNt_\theta$: a modification of NOMU's original network architecture for upwards and downwards directed hypothesis classes.}
        \label{fig:nn_tik_extension}
        \end{center}
\vskip -0.2in
\end{figure}

\subsubsection{The Loss Function}
From \cref{eq:upperBoundIfUpwardsDirected,eq:lowerBoundIfDownwardsDirected} we can then directly formulate the following modified NOMU loss function $\tilde{L}^\hp$.
\begin{definition}[NOMU Loss Upwards and Downwards Directed]\label{def:NOMU_Loss_Upwards_Directed}
Let  $\hp:=(\musqr,\muexp,\cexp)\in \Rpz^3$ denote a tuple of hyperparameters. Let $\lambda_d$ denote the $d$-dimensional Lebesgue measure. Furthermore, let $u:Y\to\R$ be strictly-increasing and continuous. Given a training set $\Dtr$, the loss function $\tilde{L}^\hp$ is defined as
\begin{align}\label{eq:NOMU_Loss_Extension}
  &\tilde{L}^\hp(\NNt_\theta):=\underbrace{\sum_{i=1}^{\ntr}(\fhat(\xtr_i
  )-\ytr_i)^2}_{\termaprime{}}\\
  &+ \,\musqr\cdot\underbrace{\sum_{i=1}^{\ntr}(\uUBhat(\xtr_i)-\ytr_i)^2 + (\lUBhat(\xtr_i)-\ytr_i)^2}_{\termbprime{}}\\
  &- \muexp\cdot\underbrace{\frac{1}{\lambda_d(\X)}\int_{\X}u(%
  -\lUBhat(x))+u(\uUBhat(x)%
  )\,dx}_{\termcprime{}}.
\end{align}

\end{definition}
The interpretations of the three terms \termaprime{}, \termbprime{} and \termcprime{} are analogous to the ones in the original NOMU loss.

Note that, the three sub-networks: the $\uUBhat$-network, the $\lUBhat$-network and the $\fhat$-network can also be trained independently using the corresponding terms in the loss function. Moreover, if one is only interested in the upper (lower) UB or $\HC_{\Dtr}$ is only upwards (downwards) directed, i.e., fulfills only \Cref{assumption:upwards direction} (\Cref{assumption:downwards direction}), then one can remove the respective sub-network from the architecture as well as the corresponding terms in the loss function. Furthermore, note that, now the obtained UBs can be asymmetric too.\footnote{After the publication of the present paper, \citet{weissteiner2023bayesian} implemented such an upper UB for monotonically non-decreasing functions.}
\section{Discussion of the Desiderata}\label{appendix:Desiderata}
In this section, we discuss in more detail the desiderata proposed in \Cref{subsubsec:Desiderata}. Specifically, we discuss how NOMU fulfills them and thereby prove several propositions.
First, we establish a relation of NOMU to the classical Bayesian approach.

The Bayesian point of view allows for mathematically rigorous estimation of uncertainty.
However, in general a fully Bayesian approach for quantifying uncertainty is very challenging and involves to

\begin{enumerate}[i.,leftmargin=*,topsep=0pt,itemsep=0pt,partopsep=0pt, parsep=0pt]
    \item\label{itm:BayesPriorChallenge} formulate a realistic prior,%
    \item\label{itm:BayesAlgorithmChallenge} use an algorithm to approximate the posterior (challenging to get a good approximation in feasible time for complex models such as NNs),
    \item\label{itm:BayesLastStep} use this approximation of the posterior to obtain UBs.
\end{enumerate}
We follow a different approach by directly approximating \ref{itm:BayesLastStep} based on essential properties of the posterior, e.g., for zero data noise model uncertainty vanishes at, and becomes larger far away from training data.
This can be a reasonable approach in applications since many Bayesian state-of-the-art methods even fail to fulfill these basic properties when implemented in practice \citep{malinin2018predictive} (see \Cref{fig:D41dJakobhfun}).
Since especially for NNs \ref{itm:BayesAlgorithmChallenge} is very costly, we ask ourselves the question, whether in practice it is even true that one has more intuition about the important properties of the prior than about the important properties of the posterior? In other words, can \labelcref{itm:BayesPriorChallenge,itm:BayesAlgorithmChallenge} be skipped by directly approximating \ref{itm:BayesLastStep}? In the case of mean predictions, many successful algorithms following this procedure already exist. These algorithms directly try to approximate the posterior mean by exploiting one's intuition how the posterior of a realistic prior should behave without the need of precisely specifying the prior, e.g.,:
\begin{enumerate}[leftmargin=*,topsep=0pt,partopsep=0pt, parsep=0pt]
     \item \emph{Spline regression}: In many applications it is very intuitive that a good approximation of the posterior mean should not have unnecessarily large second derivative, without explicitly stating the corresponding prior. Even though spline regression can be formulated as the posterior mean of a limit of priors \citep{wahba1978improper}, for a practitioner it can be much more intuitive to decide whether spline regression fits to one's prior beliefs by looking at the smoothness properties of the posterior mean than looking at such complex priors.
     \item \emph{Convolutional Neural Networks (CNNs)}: In image recognition, it is very intuitive to see that a good approximation of the posterior mean should fulfill typical properties, e.g., two pictures that are just slightly shifted should be mapped to similar outputs. CNNs fulfill such typical properties to a large extent and thus have proven to be very successful in practice. Nevertheless, from a Bayesian point of view these properties rely on a yet unknown prior.
 \end{enumerate}
 
 \subsection{Desideratum \ref{itm:Axioms:trivial}}
 Desideratum~\ref{itm:Axioms:trivial} is trivial, since $\sigmodel\geq 0$ per definition. Credible bounds $\underline{C\!B}$ and $\overline{C\!B}$ are lower and upper bounds of an interval $[\underline{C\!B},\overline{C\!B}]$, therefore $\underline{C\!B}\leq\overline{C\!B}$ holds by definition as well. To the best of our knowledge, every established method to estimate model uncertainty yields bounds that fulfil \ref{itm:Axioms:trivial}.
 Furthermore, note that \ref{itm:Axioms:trivial} also holds in the presence of data noise uncertainty.
 
 \subsubsection{How Does NOMU Fulfill \ref{itm:Axioms:trivial}?}
 By definition, NOMU exactly fulfills \ref{itm:Axioms:trivial}.
 \begin{subproposition}\label{prop:NOMUD1}
 For NOMU, $\sigmodelhat\geq0$ and thus $\lUB_c=\fhat-c\sigmodel\le\fhat\le\fhat+c\sigmodel=\uUB_c$ for all $c\ge0$.
 \end{subproposition}
 \begin{proof}
 This holds since $\sigmin\geq0$ in the readout map $\readout$ (see \Cref{eq:modelUncertaintyPrediction}).
 \end{proof}
 
 \subsection{Desideratum~\ref{itm:Axioms:ZeroUncertaintyAtData}}\label{sec:desiderata:InSample}
 In the case of zero data noise, Desideratum~\ref{itm:Axioms:ZeroUncertaintyAtData} holds true exactly.
 
 \begin{subproposition}[Zero Model Uncertainty at Training Points]\label{thm:ZeroModel UncertaintyAtTrainingPoints}
 
 Let $\signoise\equiv0$. Furthermore, let $\Dtr$ be a finite set of training points and consider a prior distribution $\PP[f\in\cdot]$ on the function space $\{f:X\to Y\}$ such that there exists a function in the support of $\PP[f\in\cdot]$ that exactly fits through the training data. Then for the posterior distribution it holds that for all $(\xtr,\ytr)\in \Dtr$ that
 \begin{align}
     &\PP(f(\xtr)=\ytr|\Dtr)=1\\
     &\PP(f(\xtr)\neq\ytr|\Dtr)=0.\label{eq:Pfneqy}
 \end{align}
 In words, there is no model uncertainty at input training points, i.e., $\sigmodel(\xtr)=0$ for all $\xtr\in\{\xtr:(\xtr,\ytr)\in\Dtr\}$.
 \end{subproposition}
\begin{proof}
Intuitively, if the noise is zero, the data generating process is $\ytr=f(\xtr)+0$. Thus, if we observe $(\xtr,\ytr)\in \Dtr$, we know that $f(\xtr)=\ytr$ with zero uncertainty.
More formally, let $(\xtr,\ytr)\in \Dtr$ and define for some $\epsilon>0$
\begin{align*}
    U_\epsilon(\ytr)&:=(\ytr-\epsilon,\ytr+\epsilon)\\
    U_\epsilon(\Dtr)&:=\bigcup_{(x,y)\in\Dtr}U_\epsilon(x,y),
\end{align*}
 where $U_\epsilon(x,y)$ denotes an $\epsilon$-ball around $(x,y)\in\X\times\Y$. Furthermore, let $D$ be a random variable describing the data generating process assuming zero noise.
 Then it holds that
\begin{align*}
     &\PP(f(\xtr)\in U_\epsilon(\ytr)^c|D\in U_\epsilon(\Dtr) )\\
     =&\dfrac{\PP(D\in U_\epsilon(\Dtr)\wedge f(\xtr)\in U_\epsilon(\ytr)^c)}{\PP(D\in U_\epsilon(\Dtr))}\\
     =&\frac{0}{\PP(D\in U_\epsilon(\Dtr))}.
\end{align*}
Note that $\PP(D\in U_\epsilon(\Dtr))>0$ for every $\epsilon>0$, since by assumption there exists a function in the support of the prior that exactly fits through the training data.\footnote{Formally, $\exists f^*$ that fits exactly through $\Dtr$ with the property $\PP[f\in U_{\epsilon}(f^*)]>0$. Since, $0<\PP[f\in U_{\epsilon}(f^*)]<\PP(D\in U_\epsilon(\Dtr))$ (for the canonical $L_\infty$-topology) the claim follows. Given this one can also see that \Cref{thm:ZeroModel UncertaintyAtTrainingPoints} still holds true with the even weaker assumption $\PP(D\in U_\epsilon(\Dtr))>0$.} Defining the posterior
 \begin{align*}
    &\PP(f(\xtr)\neq\ytr|\Dtr):=\\
    &\lim_{\epsilon\to 0} \PP(f(\xtr)\in U_\epsilon(\ytr)^c|D\in U_\epsilon(\Dtr) ),
 \end{align*}
in the canonical way concludes the proof.
\end{proof}

 Even if theoretically, we know that $\sigmodel(\xtr)=0$ at all training points $\xtr$, in practice $\sigmodelhat(\xtr)\approx 0$ can be acceptable (due to numerical reasons).
 
 \subsubsection{Non-Zero Data Noise}\label{subsubsec:Non-Zero Data Noise}
 For non-zero data noise there is non-zero \emph{data-noise induced model uncertainty} at input training points.
 However, also for non-zero but small data noise the model uncertainty at input training points should not be significantly larger than the data noise. In fact, for GPs one can rigorously show that $\sigmodel(\xtr)\leq\signoise(\xtr)$ in the case of known $\signoise$. 
 
 \begin{subproposition}[GPs Model Uncertainty at training points]\label{thm:GPsModelUncertaintyAtTrainingPoints} Let $\Dtr$ be a set of training points.
 For a prior $f\sim \mathcal{GP}(m(\cdot),k(\cdot,\cdot))$ and fixed $\signoise$ it holds that
 \begin{align}
 \sigmodel(\xtr)\leq\signoise(\xtr),
 \end{align}
for all input training points $\xtr$.
\end{subproposition}
 \begin{proof}
 We prove the proposition by induction over the number of training points $\ntr$. For this let \[A^{\ntr}:=\{\xtr: (\xtr,\ytr) \in \Dtr\}.\]
 \begin{itemize}[leftmargin=*,topsep=0pt,partopsep=0pt, parsep=0pt]
     \item \textbf{Base case} $\ntr=1$:
     In this case $A^{\ntr}=A^1=\{\xtr_1\}$. Let $k:=k(\xtr_1,\xtr_1)$. Since
     \begin{align}
     &\sigmodel^2(\xtr_1)\stackrel{\eqref{eq:sigPredGPR}}{=}k-k\frac{1}{k+\signoise^2(\xtr_1)}\cdot k\le\signoise^2(\xtr_1)\\
     &\iff k^2\ge k^2-\signoise^4(\xtr_1),
     \end{align}
     the claim follows.
     \item $\ntr=m$:
     Let $K(A^{m},A^{m})$ be the Gram matrix and let \[P:=\left[K(A^{m},A^{m})+diag(\signoise(A^{m}))\right].\] We then assume for all $\xtr \in A^{m}$ that
     \begin{align}
         &\sigmodel^2(\xtr|A^m)\stackrel{\eqref{eq:sigPredGPR}}{=}\\
         &k(\xtr,\xtr)-k(\xtr,A^{m})^TP^{-1}k(\xtr,A^{m})\le\signoise^2(\xtr)
     \end{align}
     \item \textbf{Inductive step} $\ntr=m+1$:
     We now show that under the inductive assumption for any $\xtr \in A^{m+1}$ we have
     
    {\scriptsize
     \begin{align}
         &\sigmodel^2(\xtr)\nonumber\\
         &\stackrel{\eqref{eq:sigPredGPR}}{=}k(\xtr,\xtr)-\left(\begin{matrix}k(\xtr,A^{m})\\
         k(\xtr,\xtr_{m+1})
         \end{matrix}\right)^T\left(\begin{matrix}P & Q\\
         R & S \end{matrix}\right)^{-1}\left(\begin{matrix}k(\xtr,A^{m})\\
         k(\xtr,\xtr_{m+1})
         \end{matrix}\right)\label{eq:sigmod_m+1}\\
         &\le\signoise^2(\xtr)\nonumber
    \end{align}
    }%
    with
     \begin{align}
         &R:=(k(\xtr_1,\xtr_{m+1}),\ldots k(\xtr_m,\xtr_{m+1}))=k(\xtr_{m+1}, A^m)^T,\nonumber\\
         &Q:=(k(\xtr_1,\xtr_{m+1}),\ldots k(\xtr_m,\xtr_{m+1}))^T=R^T,\nonumber\\
         &S:=k(\xtr_{m+1},\xtr_{m+1})+\signoise^2(\xtr_{m+1}).\nonumber
     \end{align}
     Setting $k:= k(\xtr,\xtr), v:=k(\xtr,A^{m})$, and $w:=k(\xtr,\xtr_{m+1})$ \eqref{eq:sigmod_m+1} can be rewritten as
     \begin{align}
         k-\left(\begin{matrix}v\\
         w
         \end{matrix}\right)^T\left(\begin{matrix}\tilde{P} & \tilde{Q}\\
         \tilde{R} & \tilde{S} \end{matrix}\right)\left(\begin{matrix}v\\
         w
         \end{matrix}\right)\le\signoise^2(\xtr)\label{eq:inductive_step_to_show}
     \end{align}
     with submatrices $\tilde{P},\tilde{Q},\tilde{R},\tilde{S}$ as in \citep[(A.12)]{williams2006gaussian}. Furthermore, with $M:=(S-RP^{-1}Q)^{-1}$ as in \citep[(A.12)]{williams2006gaussian}, we get that  \eqref{eq:inductive_step_to_show} is equivalent to
     
     {\scriptsize
     \begin{align}
     &k-\big({v}^T\tilde{P}v+{v}^T\tilde{Q}w+w\tilde{R}v+w\tilde{S}w\big)\le\signoise^2(\xtr)\nonumber\\
     \iff&k-\big({v}^T{P}^{-1}v+{v}^T{P}^{-1}QMRP^{-1}v-{v}^T{P}^{-1}QMw\nonumber\\
     &\hspace{.5cm}-wMRP^{-1}v+wMw\big)\le\signoise^2(\xtr)\nonumber\\
     \iff &k-{v}^T{P}^{-1}v-(v^TP^{-1}Q-w)M({RP^{-1}v}-w)\le\signoise^2(\xtr)\nonumber\\
     \iff &\underbrace{k-{v}^T{P}^{-1}v}_{=\sigmodel^2(\xtr|A^m)}-(v^TP^{-1}Q-w)^2M\le\signoise^2(\xtr)\label{eq:inductive_step_to_show_2}
     \end{align}
     }%
     
     where the last line follows since $R^T=Q$ and $P$ symmetric. Note that
     \begin{align*}
         M&=\bigg(k(\xtr_{m+1},\xtr_{m+1})+\signoise^2(\xtr_{m+1})\\
         &\hspace{.5cm}-k(\xtr_{m+1},A^m)^TP^{-1}k(\xtr_{m+1},A^m)\bigg)^{-1}\\
         &=\big({\signoise^2(\xtr_{m+1})}+\sigmodel^2(\xtr_{m+1}|A^m)\big)^{-1}.
     \end{align*}
     With this, \eqref{eq:inductive_step_to_show_2} can be further reformulated as
     
     {\scriptsize
     \begin{align}
         &\sigmodel^2(\xtr|A^m)-\underbrace{\frac{\big(k(\xtr,A^m)^TP^{-1}k(\xtr_{m+1},A^m)-k(\xtr,\xtr_{m+1})\big)^2}{{\signoise^2(\xtr_{m+1})}+\sigmodel^2(\xtr_{m+1}|A^m)}}_{\ge 0}\nonumber\\
         &\le\signoise^2(\xtr).\label{eq:inductive_step_to_show_3}
     \end{align}
     }%
     
     First, for $\xtr\in A^m$, \eqref{eq:inductive_step_to_show_3} holds true by assumption. Second, for $\xtr=\xtr_{m+1}$ we obtain
     
     {\scriptsize
     \begin{align*}
        &\sigmodel^2(\xtr_{m+1}|A^m)-\frac{\big( \sigmodel^2(\xtr_{m+1}|A^m)\big)^2}{{\signoise^2(\xtr_{m+1})+\sigmodel^2(\xtr_{m+1}|A^m)}}\le\signoise^2(\xtr_{m+1})\\
        \iff&\sigmodel^4(\xtr_{m+1}|A^m)-\signoise^4(\xtr_{m+1})\le\sigmodel^4(\xtr_{m+1}|A^m)\\
        \iff&-\signoise^4(\xtr_{m+1})\le 0.
     \end{align*}
     }%
     
     Thus, \eqref{eq:inductive_step_to_show} holds true for any $\xtr\in A^{m+1}$.\qedhere
 \end{itemize}
 \end{proof}
 In fact, Proposition~\ref{thm:GPsModelUncertaintyAtTrainingPoints} does not come as a surprise: Even if we only observe one training point~$(\xtr,\ytr)$ and ignore all our prior knowledge by using a flat \enquote{uninformative} improper prior $p(f(\xtr))\propto 1$, this results in $\sigmodel(\xtr)=\signoise(\xtr)$.
 Introducing additional information, e.g., observing more additional training points and introducing additional prior information (such as smoothness assumptions instead of a flat uninformative prior), typically reduces model uncertainty further. Thus, we believe that $\sigmodel(\xtr)\leq\signoise(\xtr)$ holds for most reasonable priors.

Finally, note that Proposition~\ref{thm:ZeroModel UncertaintyAtTrainingPoints} and Proposition~\ref{thm:GPsModelUncertaintyAtTrainingPoints} hold true for \emph{every} prior respectively \emph{every Gaussian} process prior as long as there exists an $f$ in the support of this prior which explains the observed training points (even if this prior is strongly misspecified).
For example this assumption is obviously fulfilled for the prior of Gaussian distributed weights of an overparameterized NN (BNN).

 \subsubsection{Why Does MC Dropout Strongly Violate \ref{itm:Axioms:ZeroUncertaintyAtData} ?}\label{subsubsec:MCDropoutviolatesD2}
 In \Cref{fig:D41dJakobhfun}, MC dropout (MCDO) predicts for every input training point $\sigmodelhat(\xtr_i)>100\signoise$. Thus, if $\sigmodelhat(\xtr_i)$ was correctly calculated as posterior model uncertainty, this would be an practically unobservable event as long as $f$ actually comes from this prior ($\href{https://www.wolframalpha.com/input/?i=\%281+-+CDF\%5BNormalDistribution\%5B0\%2C+1\%5D\%2C+100\%5D\%29+*+2}{\PP[\left|\ytr_i-f(\xtr_i)\right|>100\signoise]< 10^{-2173}}$). Therefore, this is clear statistical evidence that MCDO severely fails to estimate posterior model uncertainty at training points. This can have one of the following three reasons:
 \begin{enumerate}[leftmargin=*,topsep=0pt,itemsep=0pt,partopsep=0pt, parsep=0pt]
     \item\label{itm:MCdropoutCannotApproximatePosterior} MCDO severely fails in correctly approximating the posterior given its prior (i.i.d. Gaussian on weights).
     \item\label{itm:MCdropoutCannotApproximatePosterior2} MCDO's prior does not fit to the data generating process at all.
     \item\label{itm:MCdropoutCannotApproximatePosterior3} During our experiments we very often observed very extreme events that should only happen with probabilities smaller than $10^{-2000}$.
 \end{enumerate}
 We agree with prior work \citep{gal2016dropout,blundell2015weight} that a Gaussian prior on the weights of a NN, i.e., the prior mentioned in \Cref{itm:MCdropoutCannotApproximatePosterior2}, is a very reasonable assumption.  Note that NOMU can also be seen as a heuristic to approximate the posterior model uncertainty given exactly the same prior (see \Cref{subsec:Pointwise Uncertainty Bounds}). 
 Therefore, since \Cref{itm:MCdropoutCannotApproximatePosterior3} can be ruled out, we can conclude that MCDO's problem is \Cref{itm:MCdropoutCannotApproximatePosterior}.

 \paragraph{MC Dropout's Failure in Approximating the Posterior} \Cref{tab:Appendix:bnn_uniform} and \Cref{tab:Appendix:bnn_gauss} show that even though we generate the ground truth function from the \emph{same} prior assumed by MCDO (and also assumed by most BNN algorithms), NOMU significantly outperforms MCDO. This empirically shows (with the help of \Cref{thm:appendix_dkl_approximation}) that (i) NOMU is able to better approximate posterior BNN-credible bounds than MCDO in terms of average Kullback-Leibler divergence $\adkl$ (including further popular variational BNN approximations from \citep{graves2011practical,blundell2015weight,hernandez2015probabilistic}, which themselves are outperformed by MCDO) and (ii) MCDO's variational approximation algorithm severely fails in approximating the targeted posterior.

\subsubsection{Importance of \ref{itm:Axioms:ZeroUncertaintyAtData} in BO} Especially in Bayesian optimization (BO) it is particularly important to fulfill \ref{itm:Axioms:ZeroUncertaintyAtData} as much as possible, since \ref{itm:Axioms:ZeroUncertaintyAtData} helps a lot to prevent the BO-algorithm from getting stuck in local maxima.
For NNs, we often observed that at the $i'$-th step, the mean prediction $\fhat$ is maximized/minimized either at the boundary or exactly at a training point with the largest/smallest function value observed so far $\max_{i\in\fromto{i'}} f(x_i)$/$\min_{i\in\fromto{i'}}f(x_i)$ (see \Cref{fig:1dbounds}). 
In the latter case, without model uncertainty (or with almost constant model uncertainty as is the case in MC dropout), one would query all future function evaluations at exactly this point without learning anything new.
E.g., consider the situation of \Cref{subfig:forrester} when minimizing the \emph{Forrester} function. Each new function evaluation of MC Dropout would be sampled at an already observed training point $x\approx0.4$.
This intuitively explains why estimating model uncertainty precisely at the training data points is especially important in BO and why it can be very problematic in BO, if the model uncertainty does not decrease sufficiently at the training data points.
To summarize, \ref{itm:Axioms:ZeroUncertaintyAtData} strongly influences the acquisition function in a direction that discourages the algorithm from choosing the same point again and \ref{itm:Axioms:ZeroUncertaintyAtData} together with \ref{itm:Axioms:largeDistantLargeUncertainty} can prevent the BO-algorithm from getting stuck (see also \Cref{subsubsec:Calibration}).

 \subsubsection{Dominating Data Noise}\label{subsubsec:Large data noise}
 In the case of dominating data noise uncertainty $\signoise\gg 0$, the model uncertainty $\sigmodel$ should not be small at input training points (only if one observes a very large amount of input training points very close to a input training point $\xtr$ the model uncertainty should become small.) However, in this paper, we do not focus on the case of large data noise uncertainty, but on the case of negligible or zero data noise. In particular, \ref{itm:Axioms:ZeroUncertaintyAtData} is only formulated for this case. 
 
 \subsubsection{How Does NOMU Fulfill \ref{itm:Axioms:ZeroUncertaintyAtData}?}
  Recall, that we train NOMU by minimizing 
  \begin{align}\label{eq:objective}
      \Lmu(\NN_{\theta})+\lambda\twonorm[{\theta}]^2,
  \end{align} 
  where the NOMU loss $L^\hp(\NN_\theta)$ is defined as:
  
  \begin{align}\label{eq:lmu2appendix}
  L^\hp(\NN_\theta):=&\underbrace{\sum_{i=1}^{\ntr}(\fhat(\xtr_i
  )-\ytr_i)^2}_{(a)}+ \,\musqr\cdot\underbrace{\sum_{i=1}^{\ntr}\left(\sigmodelhatraw(\xtr_i)\right)^2}_{(b)}\\ &+\muexp\cdot\underbrace{\frac{1}{\lambda_d(\X)}\int_{\X}e^{-\cexp\cdot \sigmodelhatraw(x)}\,dx}_{(c)}.
\end{align}
 Then, the following proposition holds:
 \begin{subproposition}\label{prop:NOMUD2} Let $\lambda,\muexp,\cexp\in \Rpz$ be fixed and let $\sigmodelhat$ be NOMU's model uncertainty prediction. Then, it holds that $\sigmodelhat(\xtr_i)$ converges to $\sigmin$ for $\musqr\to \infty$ for all input training points $\xtr_i$, where $\sigmin\ge0$ is an arbitrarily small constant modelling a minimal model uncertainty used for numerical reasons.
 \end{subproposition}
 \begin{proof}
 By the definition of $L^\hp(\NN_\theta)$, i.e., since (b) dominates the loss function if $\musqr\to\infty$, it follows that $\sigmodelhatraw(\xtr_i)=0$. More precisely, for the NN $\NN_{\theta^*}=(\fhat^*, \sigmodelhatraw
 ^*)$ with parameters $\theta^*$ that minimize \eqref{eq:objective} it holds that
 \resizebox{\columnwidth}{!}{\parbox{1\columnwidth}{
 \begin{align*}
     &\Lmu(\NN_{\theta^*})+\lambda\twonorm[{\theta^*}]^2\le\Lmu(0)+\lambda\twonorm[0]^2\\
      &\qquad=\sum_{i=1}^{\ntr}(\ytr_i)^2+\muexp\cdot 1\\ 
     \iff& \sum_{i=1}^{\ntr}(\fhat^*(\xtr_i
  )-\ytr_i)^2+ \musqr\cdot\sum_{i=1}^{\ntr}\left(\sigmodelhatraw^*(\xtr_i)\right)^2+\\
  &\muexp\cdot\frac{1}{\lambda_d(\X)}\int_{\X}e^{-\cexp\cdot \sigmodelhatraw^*(x)}\,dx+\lambda\twonorm[{\theta^*}]^2\le \sum_{i=1}^{\ntr}(\ytr_i)^2+\muexp\\
  \iff&\musqr\cdot\sum_{i=1}^{\ntr}\left(\sigmodelhatraw^*(\xtr_i)\right)^2\le\sum_{i=1}^{\ntr}(\ytr_i)^2+\muexp=:C
 \end{align*}
 }
 }%
 
 where for fixed parameters $\lambda,\muexp,\cexp\in \Rpz$, $C>0$ is a constant.
 Assume now that for $\sigmodelhatraw^*$ does not vanish at all training data points for $\musqr$ to infinity, i.e., that there exists an $\epsilon>0$ such that for every $\musqr$ large enough $\sum_{i=1}^{\ntr}\left(\sigmodelhatraw^*(\xtr_i)\right)^2>\epsilon$. This however implies
 \begin{align*}
     \musqr\cdot\epsilon< C \iff \musqr<\frac{C}{\epsilon}\quad \forall \musqr \text{ large enough},
 \end{align*}
 which yields a contradiction. Thus, $\lim_{\musqr\to\infty}\sigmodelhatraw^*(\xtr)=0$ for all training input points $\xtr$.
 Finally, by \cref{eq:modelUncertaintyPrediction} it follows that $\sigmodelhat(\xtr_i)=\sigmin$.
 \end{proof}
 
 Note that even for a finite (sufficiently large) $\musqr$, the raw model uncertainty $\sigmodelhatraw$ converges to zero as $\lambda$ goes to zero ($\frac{\musqr}{\lambda}\to \infty$), if the model is sufficiently over-parameterized. Empirically one can see in \Cref{fig:D41dJakobhfun,fig:1dboundsLevy,fig:irradianceNOMU,fig:1dbounds,fig:2dStyblinski_NOMU,fig:irradiance,fig:D4,fig:2D_D4} that NOMU fulfills \ref{itm:Axioms:ZeroUncertaintyAtData} with a high precision for our choice of hyper-parameters.
 
 \subsection{Desideratum \ref{itm:Axioms:largeDistantLargeUncertainty}}
 We first consider the case of zero (or negligible) data noise $\signoise\approx0$ and then discuss possible extensions to settings with non-zero data noise.
 \subsubsection{Zero Data Noise}\label{subsec:D3ZeroNoise}
The notion of distance used in \ref{itm:Axioms:largeDistantLargeUncertainty} heavily depends on the specific application (i.e., on the prior used in this application). More concretely, there are the following two \enquote{hyperparameters}.
 \begin{enumerate}[leftmargin=*,topsep=0pt,partopsep=0pt, parsep=0pt]
 \item First, the metric\footnote{We use the term \enquote{metric} to describe a general \href{https://en.wikipedia.org/wiki/Pseudometric_space}{pseudometric}.} $d:\X\times\X\to\Rpz$ on $\X$ we use to measure distances can heavily depend on the prior for the specific application. For example, in the case of image recognition, two pictures that are only slightly shifted can be seen as very close to each other even if the Euclidean distance of their pixel-values is quite high.\footnote{For example, if one sees a $1920\times1080$-pixel image, which is perfectly recognizable as a cat, every 10-pixel shift of this picture is also recognizable as a cat with almost no uncertainty (even though this cannot be proven mathematically). Thus, it is very desirable to predict very small model uncertainty~$\sigmodelhat(x)$ for every image $x\in X$ which is only shifted by less than 10 pixel from at least one noiselessly labeled training image $\xtr$.} If one uses a CNN-architecture in NOMU this prior belief on $d$ is approximately captured. The successful generalization properties of many different network architectures can be explained precisely by their use of application-dependent \emph{non-Euclidean} metrics \citep{bronstein2017geometric}.
 (Additionally, instead of fixing $d$ apriori, further aspects of the metric $d$ can be learned from the training data as we discuss in detail in \Cref{subsec:A note on Desideratum D4}.) 
 
 \item\label{itm:dxDtr} Second, even if we can precisely write down a metric $d:\X\times\X\to \Rpz$, a priori it is not clear how to define the distance $\tilde{d}:\X\times 2^{\X}\to\Rpz$ between a point $x$ and the input training points from $\Dtrx:=\left\{\xtr:(\xtr,\ytr)\in\Dtr\right\}$. Both common definitions $\tilde{d}(x,\Dtrx):=\inf_{z\in\Dtrx}d(x,z)$ and $\tilde{d}(x,\Dtrx):=\inf_{z\in\text{Conv}(\Dtrx)}d(x,z)$, where $\text{Conv}(\cdot)$ denotes the convex hull, are inappropriate choices for $\tilde{d}$.\footnote{E.g., for GPs, none of these two classical notions of distance between a point and a set is entirely applicable (see \Cref{eq:sigPredGPR}). An appropriate choice of $\tilde{d}$ should be a compromise between these two notions.} In \Cref{itm:Axioms:largeDistantLargeUncertainty}, we consider a point $x$ closer to the input training points if it is \enquote{surrounded} by input training points in all directions, as opposed to a point $x$ which only has close input training points in some directions and there is a large range of directions without any close input training points. This implies that, for example,
 \begin{enumerate}[(i)]%
 \item\label{itm:closeToNoiselss} very close to noiseless input training points that are surrounded by many other noiseless input training points there is very little model uncertainty.
 \item\label{itm:extrapolationVsInterpolation} for extrapolation one typically has more uncertainty than for interpolation.
 \item\label{itm:farAwayFromConvexHull} far away from the convex hull of the training points model uncertainty is very high. 
 \item\label{itm:Gaps} also within the convex hull of the training data, model uncertainty is high within big gaps in-between training points.
 \end{enumerate}
 \Cref{fig:2dStyblinski_NOMU}
shows how well NOMU fulfills these properties of $\tilde{d}$ similarly to a GP (see \Cref{subfig:2d_Styblinski_GPR}).
\end{enumerate}
 
\subsubsection{Non-Zero Homoscedastic Data Noise} If there is homoscedastic non-zero data noise $\signoise(x)\equiv \signoise>0$, it is important that the \enquote{distance} $\tilde{d}$ of $x$ to the input training points is not minimal if it exactly equals one of the input training points. Instead, one should use a notion of distance $\tilde{d}$ that can even get smaller if there are multiple input training points at $x$ or very close to $x$.

\subsubsection{Non-Zero Heteroscedastic Data Noise} One can also extend \ref{itm:Axioms:largeDistantLargeUncertainty} to heteroscedastic settings. In that case, the used notion of \enquote{distance}~$\tilde{d}$ of $x$ to the input training points needs to be weighted by the precision of the input training points, i.e., if $x$ is close to multiple input training points with low data noise~$\signoise(\cdot)$ you consider $x$ \enquote{closer} to the input training points than if $x$ is close to multiple input training points with high data noise.

\subsubsection{Example for \texorpdfstring{$\tilde{d}$}{tilde d} Based on GPs}
In this section, we give the concrete example of Gaussian process regression (GPR) from \Cref{subsubsec:GaussianProcess} in which  $\tilde{d}$ from \ref{itm:Axioms:largeDistantLargeUncertainty} can be written down explicitly in closed form.

For any arbitrary metric $d$ on $X$, consider for instance the kernel $k(x_i,x_j)=e^{-d(x_i,x_j)^2}$. Then, $\tilde{d}(x,\Dtrx)=\sigmodelhat(x|\Dtrx)$, with posterior model uncertainty $\sigmodelhat$ from \cref{eq:sigPredGPR}. While this is one of the simplest possible ways to define $\tilde{d}$, alternatively one could also define it differently if it shares similar qualitative properties.\footnote{
For instance, $\tilde{d}$ could also be defined 
based on a kernel of the form $k(x_i,x_j)=g(d(x_i,x_j))$ with a monotonically decreasing function $g$, e.g., a Matérn-typed kernel.}

Why do we still consider it interesting to formulate \ref{itm:Axioms:largeDistantLargeUncertainty} \emph{vaguely}, given that there is already such a precise formula as is the case for GPs?
\begin{enumerate}[leftmargin=*,topsep=0pt,partopsep=0pt, parsep=0pt]
\item The GP's formula only holds true for the specific prior of a GP. We however, want to formulate desiderata that capture the most essential properties of credible bounds that
almost all reasonable priors have in common.

\item We want to provide some easy to understand intuition for \ref{itm:Axioms:largeDistantLargeUncertainty}: It might be challenging to see directly from the GP's formula \eqref{eq:sigPredGPR} how the posterior model uncertainty qualitatively behaves as visualized in \Cref{subfig:2d_Styblinski_GPR}.
\end{enumerate}

To summarize, both the exact notion of distances $d, \tilde{d}$ \emph{and} the exact rate of how model uncertainty increases with increasing distance to the input training points depend on one's prior belief. However, \Cref{itm:Axioms:largeDistantLargeUncertainty} gives a qualitative description of properties that most reasonable (generic) priors have in common (see \crefrange{itm:closeToNoiselss}{itm:Gaps}).

\subsubsection{How Does NOMU Fulfill \ref{itm:Axioms:largeDistantLargeUncertainty}?}

Recall, that we train NOMU by minimizing 
  \begin{align}
      \Lmu(\NN_{\theta})+\lambda\twonorm[{\theta}]^2,
  \end{align} where the NOMU loss $L^\hp(\NN_\theta)$ is defined as:
  \begin{align}
  L^\hp(\NN_\theta):=&\underbrace{\sum_{i=1}^{\ntr}(\fhat(\xtr_i
  )-\ytr_i)^2}_{\terma{}}+ \,\musqr\cdot\underbrace{\sum_{i=1}^{\ntr}\left(\sigmodelhatraw(\xtr_i)\right)^2}_{\termb{}}\\ &+\muexp\cdot\underbrace{\frac{1}{\lambda_d(\X)}\int_{\X}e^{-\cexp\cdot \sigmodelhatraw(x)}\,dx}_{\termc{}}.
\end{align}

The interplay of \termb{}, \termc{}, and regularization promotes \ref{itm:Axioms:largeDistantLargeUncertainty} (note that the behaviour of $\sigmodelhatraw$ directly translates to the behaviour of $\sigmodel$):
Term~\termc{} pushes $\sigmodelhatraw$ towards infinity across the whole input space $\X$. However, due to the counteracting force of \termb{} as well as regularization, $\sigmodelhatraw$ increases continuously as you move away from the training data -- see for example \Cref{fig:1dbounds} and \Cref{fig:2dStyblinski_NOMU} (or any other plot showing NOMU, i.e., \Cref{fig:D41dJakobhfun,fig:1dboundsLevy,fig:irradianceNOMU,fig:irradiance,fig:D4,fig:2D_D4}).
In \Cref{fig:2dStyblinski_NOMU}, one can see how NOMU fulfills the properties \ref{itm:closeToNoiselss}\crefrangeconjunction\ref{itm:Gaps} of $\tilde{d}:X\times 2^X\to\Rpz$ mentioned in \Cref{subsec:D3ZeroNoise}.
In \Cref{fig:D4,fig:2D_D4}, one can observe how NOMU behaves when a non-stationary metric $d_\text{\Cref{fig:D4}}(x,x')\neq|x-x'|$ respectively non-stationary non-isotropic metric $d_\text{\Cref{fig:2D_D4}}(x,x')\neq\|x-x'\|_2$ is used (because $d_\text{\Cref{fig:D4}}$ and $d_\text{\Cref{fig:2D_D4}}$ were learned from the data as desired by \ref{itm:Axioms:irregular} in these examples).

The hyperparameters $\muexp$ and $\cexp$ control the size and shape of the UBs. Concretely, the larger $\muexp$, the wider the UBs; the larger $\cexp$, the narrower the UBs at points $x$ with large $\sigmodelhat(x)$ and the wider the UBs at points $x$ with small $\sigmodelhat(x)$.

Finally, we give some intuition that if CNNs are used for the two sub-networks in NOMU's architecture, \ref{itm:Axioms:largeDistantLargeUncertainty} will be fulfilled with respect to an almost shift-invariant metric $d$:
In the noiseless setting, we can choose $\musqr$ large enough such that \ref{itm:Axioms:ZeroUncertaintyAtData} is fulfilled, so that we have $\sigmodelhat(\xtr)\approx 0$ at any training input point $\xtr$. Regularized CNNs have the property that if you slightly shift the input the output barely changes. So if $x$ can be obtained from $\xtr$ by slightly shifting it, the CNN-output $\sigmodelhat(x)\approx\sigmodelhat(\xtr)\approx0$ also does not move too far away from zero. Only if you move further away with respect to the almost shift-invariant metric $d$, the CNN-output~$\sigmodelhat$ is able to move further away from zero. The same principle can also be used for other geometric NNs (e.g., graph neural networks (GNNs)) which correspond to different (non-Euclidean) metrics \citet{bronstein2017geometric}.

\subsection{Desideratum \ref{itm:Axioms:irregular}}\label{subsec:A note on Desideratum D4}

A priori, it is often not obvious which metric $d$ to choose in \hyperref[itm:Axioms:largeDistantLargeUncertainty]{D3} to measure distances. In many applications, it is therefore best to \textit{learn} this metric from the training data (as explained in \cref{footnote:LearningTheMetricISImportant} on \cpageref{footnote:LearningTheMetricISImportant}).

In the following section, we present visualizations of \ref{itm:Axioms:irregular} for all benchmark algorithms in easy to understand, low dimensional settings.

\subsubsection{Visualization of \ref{itm:Axioms:irregular}}\label{subsec:VisualizeAxioms:irregular}

\paragraph{1D} In order to visualize \hyperref[itm:Axioms:irregular]{D4} and show how for NOMU the mean prediction impacts it's model uncertainty prediction we conduct the following experiment. We sample 16 \emph{equidistant} noiseless training points of a trend-adjusted version of \emph{Sine 3}. We then train NOMU (hyperparameters are as in \Cref{subsec:ConfigurationDetailsofBenchmarksRegression} with $\musqr=0.5$, $\sigmin=10^{-4}$, regularization parameter $10^{-4}$ on the $\sigmodelhatraw$-network, and number of training epochs $2^{12}$) and compute the corresponding UBs. \Cref{fig:D4} shows that NOMU UBs are wider (cp. the dotted blue line) in those areas of the input space where small changes of $x$ lead to large variation in the target ($\approx x\ge 0$) compared to areas without large variation in the target ($\approx x\le 0$). This effect is present even though the input training points are sampled from an \emph{equidistant} grid, and thus isolates the effect of \hyperref[itm:Axioms:irregular]{D4}.
\begin{figure}[b!]
    \begin{center}
    \centerline{\includegraphics[width=1\columnwidth]{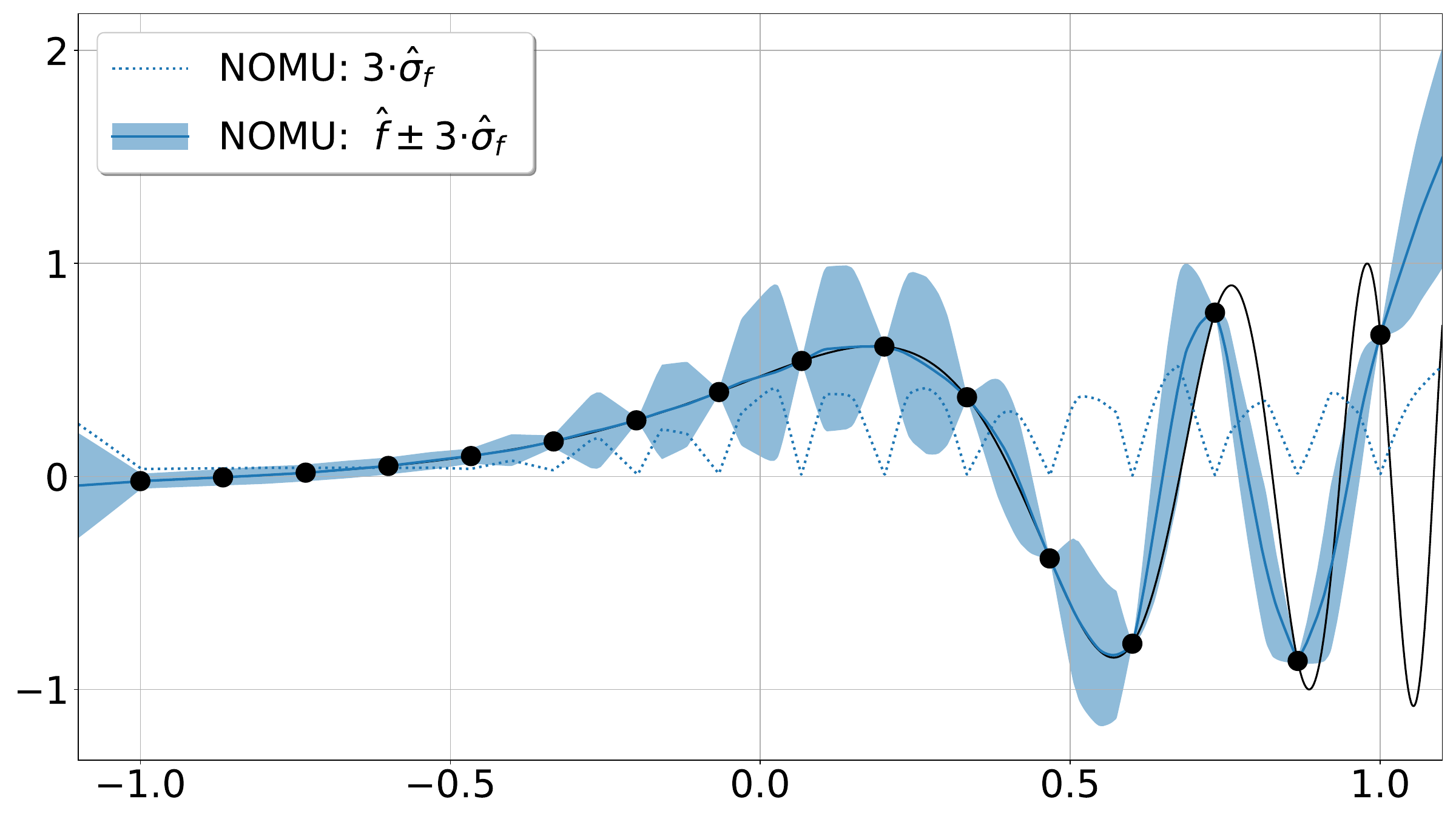}}
    \caption{Visualisation of \ref{itm:Axioms:irregular}.}
    \label{fig:D4}
    \end{center}
\vskip -0.2in
\end{figure}

\paragraph{2D} Analogously, we visualize \hyperref[itm:Axioms:irregular]{D4} for two-dimensional input by training NOMU on 16 training points sampled on an \emph{equidistant} $4{\times}4$-grid and evaluated at the two-dimensional extension of the \emph{Step} function, i.e.,
\begin{align}
f=%
\R^2\to\R:(x_1,x_2)\mapsto
\begin{cases}
-1 &\text{if } x_1<0\\
1 &\text{if } x_1\geq 0.
\end{cases}
\end{align}

Here, \hyperref[itm:Axioms:irregular]{D4} can be interpreted as follows: imagine we do not have any prior knowledge of whether $x_1$ or $x_2$ is more important for predicting the unknown function $f$. However, when NOMU observes the 16 training points it should be able to learn that $x_1$ is more important for the model prediction than $x_2$, \emph{and} that the function is more regular/predictable far away from $\{x_1\approx 0\}$. \hyperref[itm:Axioms:irregular]{D4} requires in this example that feature $x_1$ should have a higher impact than feature $x_2$ also on the model uncertainty prediction.
If a model for UBs did not incorporate \hyperref[itm:Axioms:irregular]{D4}, we would expect the uncertainty in this example to fulfill $\sigmodelhat\left((x_1,x_2)\right)=\sigmodelhat\left((x_2,x_1)\right)$ because of the equidistant grid of the training points (this is indeed the case for GPs, see Figure \ref{subfig:D4_2d_GPR}).

For NOMU however, we have very good control on how strongly we enforce \hyperref[itm:Axioms:irregular]{D4}, e.g., we can strengthen \hyperref[itm:Axioms:irregular]{D4} by increasing the L2-regularization of the hidden layers in the $\sigmodelhatraw$-network and/or decreasing the size of the $\sigmodelhatraw$-network.

\paragraph{NOMU: Visualization of \hyperref[itm:Axioms:irregular]{D4} in 2D} In \Cref{fig:2D_D4}, we present the estimated model uncertainty $\sigmodelhat$ obtained for \emph{different} hyperparameters of the $\sigmodelhatraw$-network with \emph{fixed} $\fhat$-architecture among all four subplots. Thus, \Cref{fig:2D_D4} shows how \hyperref[itm:Axioms:irregular]{D4} realizes in different magnitudes.
In \Cref{fig:D4_2d_same_same}, we use the same hyperparameters for the $\sigmodelhatraw$-network as for the $\fhat$-network.
In \ref{fig:D4_2d_4_same}, we only increase the L2-regularization of the $\sigmodelhatraw$-network. In \Cref{subfig:D4_2d_same_4shallow}, we only decrease the size of $\sigmodelhatraw$-networks. In \Cref{subfig:D4_2d_4_4shallow}, we combine both, i.e., we increase the L2-regularization of the $\sigmodelhatraw$-network \emph{and} decrease the size of the  $\sigmodelhatraw$-network. While \hyperref[itm:Axioms:irregular]{D4} is barely visible in \Cref{fig:D4_2d_same_same}, it is clearly visible in \Cref{fig:D4_2d_4_same,subfig:D4_2d_same_4shallow,subfig:D4_2d_4_4shallow}. In \Cref{fig:D4_2d_4_same,subfig:D4_2d_same_4shallow,subfig:D4_2d_4_4shallow}, we observe that the estimated model uncertainty $\sigmodelhat$ grows faster in horizontal directions (corresponding to changes in $x_1$) than in vertical directions. In \Cref{fig:D4_2d_4_same,subfig:D4_2d_same_4shallow,subfig:D4_2d_4_4shallow}, we further observe that the estimated model uncertainty $\sigmodelhat$ is larger around $\{x_1\approx 0\}$ than far away from this region. The magnitude of both these effects increases from  
\Cref{fig:D4_2d_4_same} to \Cref{subfig:D4_2d_4_4shallow}.
Both of these effects can also be observed for MC dropout (MCDO) and deep ensembles (DE) (see \Cref{subfig:D4_2d_DO} and \Cref{subfig:D4_2d_DE}).

\paragraph{Benchmarks: Visualization of \hyperref[itm:Axioms:irregular]{D4} in 2D} In \Cref{fig:D4_2d_benchmarks}, we present uncertainty plots of all benchmark methods. We can see that deep ensembles (DE) gives high preference to capturing \hyperref[itm:Axioms:irregular]{D4}, even though its estimated model uncertainty still is subject to some randomness with non-uniform patterns for $x_1\in[-0.25,0.25]$ (\Cref{{subfig:D4_2d_DE}}). Moreover, MC dropout (MCDO) also captures higher model uncertainty for $x_1\in[-0.25,0.25]$ as desired by \hyperref[itm:Axioms:irregular]{D4}, but it does not fulfill \hyperref[itm:Axioms:ZeroUncertaintyAtData]{D2} (\Cref{{subfig:D4_2d_DO}}). The Gaussian process (GP) with RBF kernel does not account for \hyperref[itm:Axioms:irregular]{D4} (\Cref{{subfig:D4_2d_GPR}}), which directly follows from the definition. Similarly to deep ensembles (DE), hyper deep ensembles (HDE) and HDE* strongly capture \hyperref[itm:Axioms:irregular]{D4} but show even more random behaviour. This randomness is visible most prominently along $x_1=0$ where one should observe large model uncertainty, whereas their estimated model uncertainty is surprisingly close to $0$.

\begin{figure}[H]
\vskip 0.1cm
\centering
\captionsetup[subfloat]{font=small,labelfont=small}
\subfloat[Same L2-regularization on the $\sigmodelhatraw$-network and $\fhat$-network ($\lambda=10^{-8}$).
\label{fig:D4_2d_same_same}]{%
\includegraphics[trim= 10 10 80 10, clip, width=.9\columnwidth]{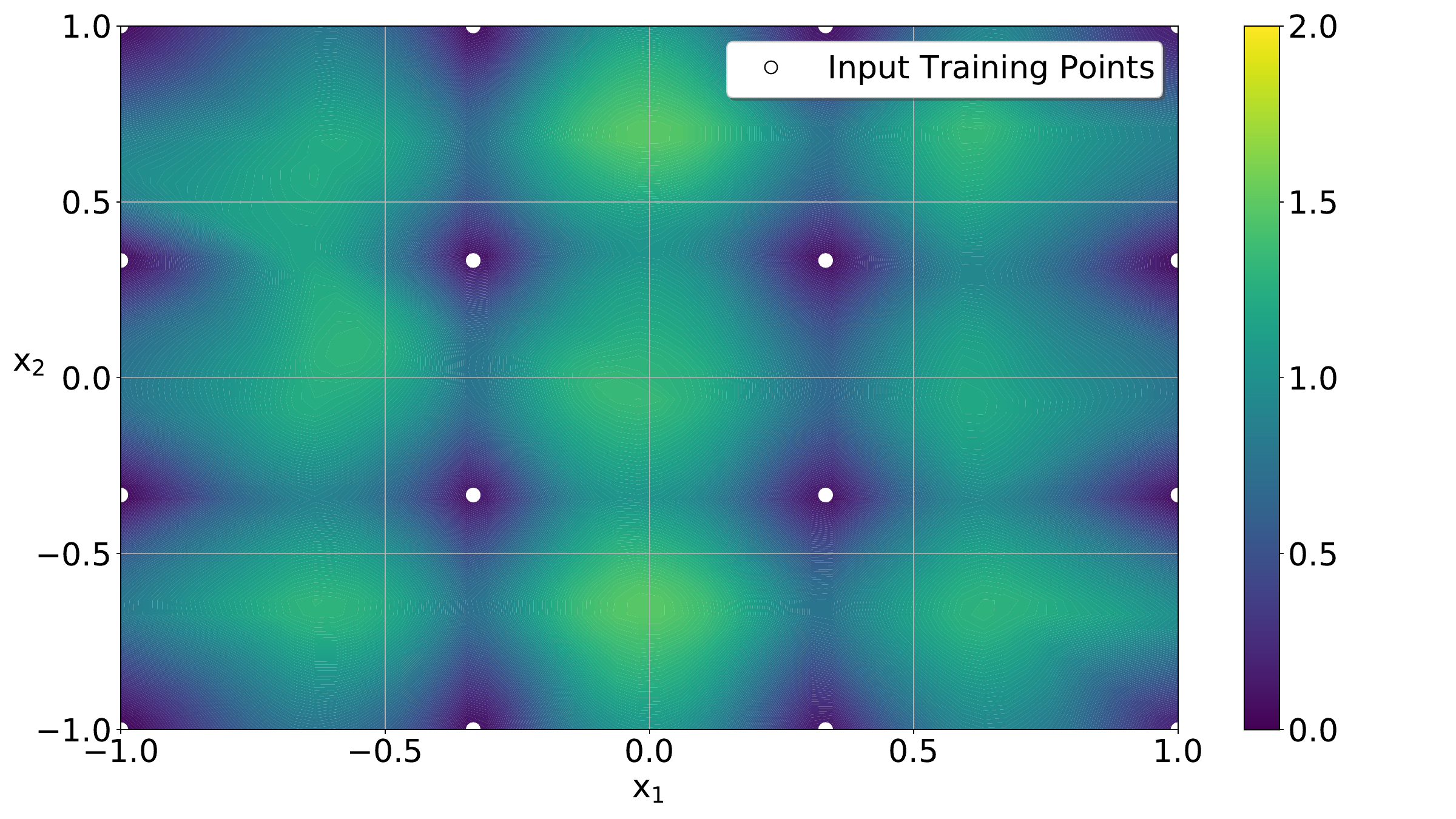}
}%

\subfloat[Larger L2-regularization on the $\sigmodelhatraw$-network ($\lambda=10^{-4}$) than on the $\fhat$-network ($\lambda=10^{-8}$).
\label{fig:D4_2d_4_same}]{%
\includegraphics[trim= 10 10 80 10, clip, width=.9\columnwidth]{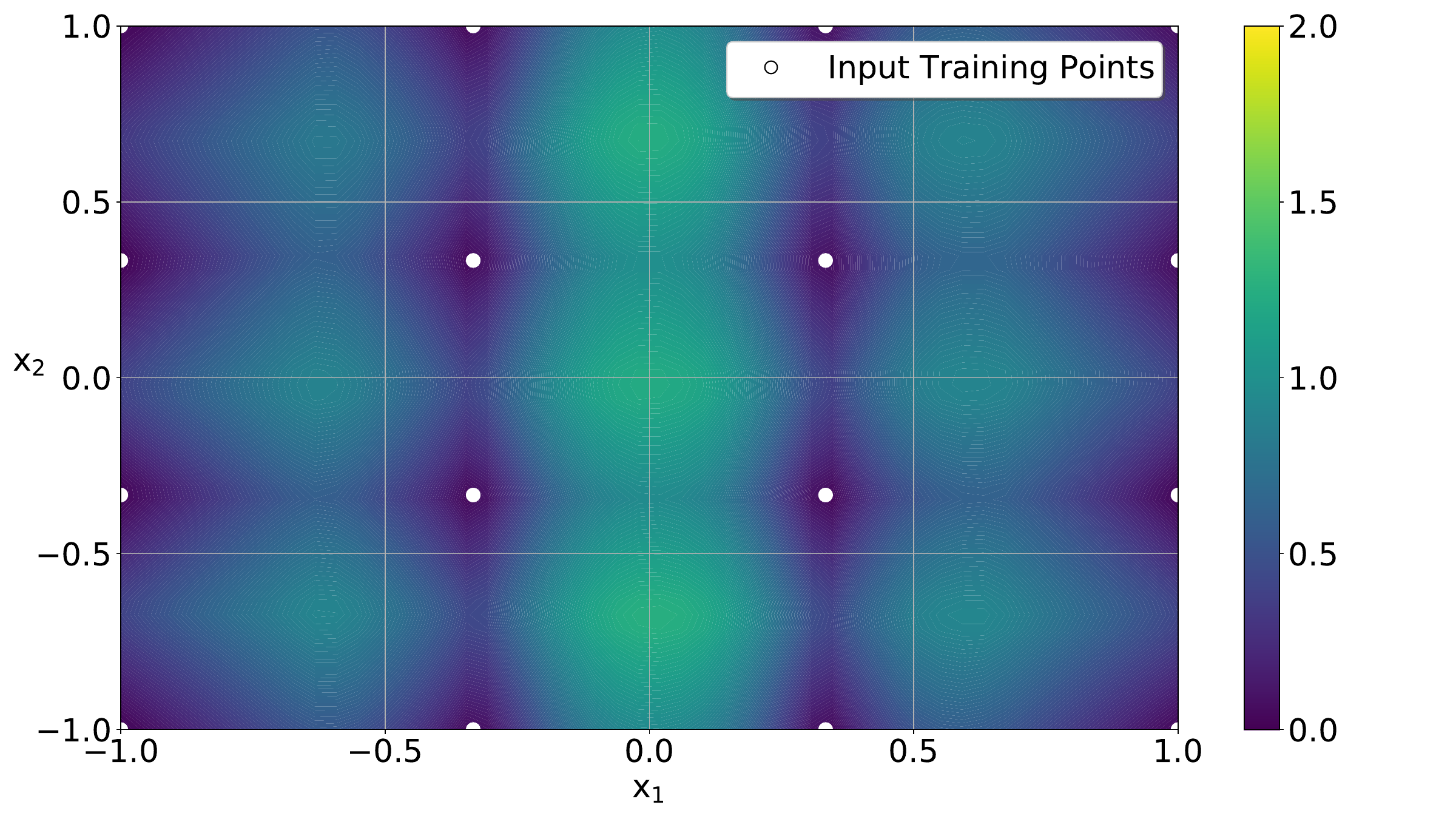}
}%

\subfloat[Shallow $\sigmodelhatraw$-network consisting of 4 hidden nodes.
\label{subfig:D4_2d_same_4shallow}]{%
\includegraphics[trim= 10 10 80 10, clip, width=.9\columnwidth]{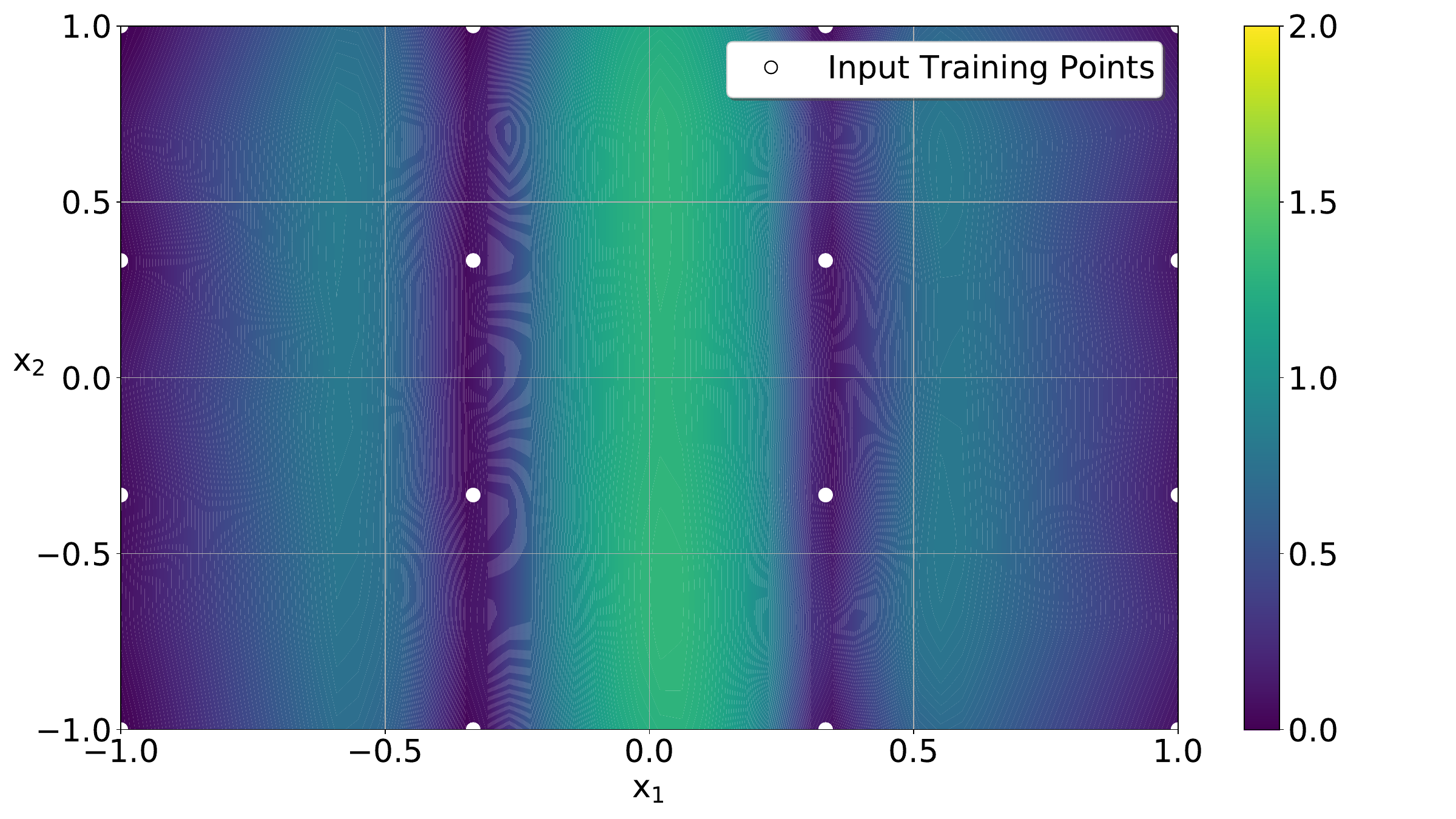}
}%

\subfloat[Shallow $\sigmodelhatraw$-network consisting of 4 hidden nodes \emph{and} larger regularization of $\lambda=10^{-4}$ on $\sigmodelhatraw$-network.
\label{subfig:D4_2d_4_4shallow}]{%
\includegraphics[trim= 10 10 80 10, clip, width=.9\columnwidth]{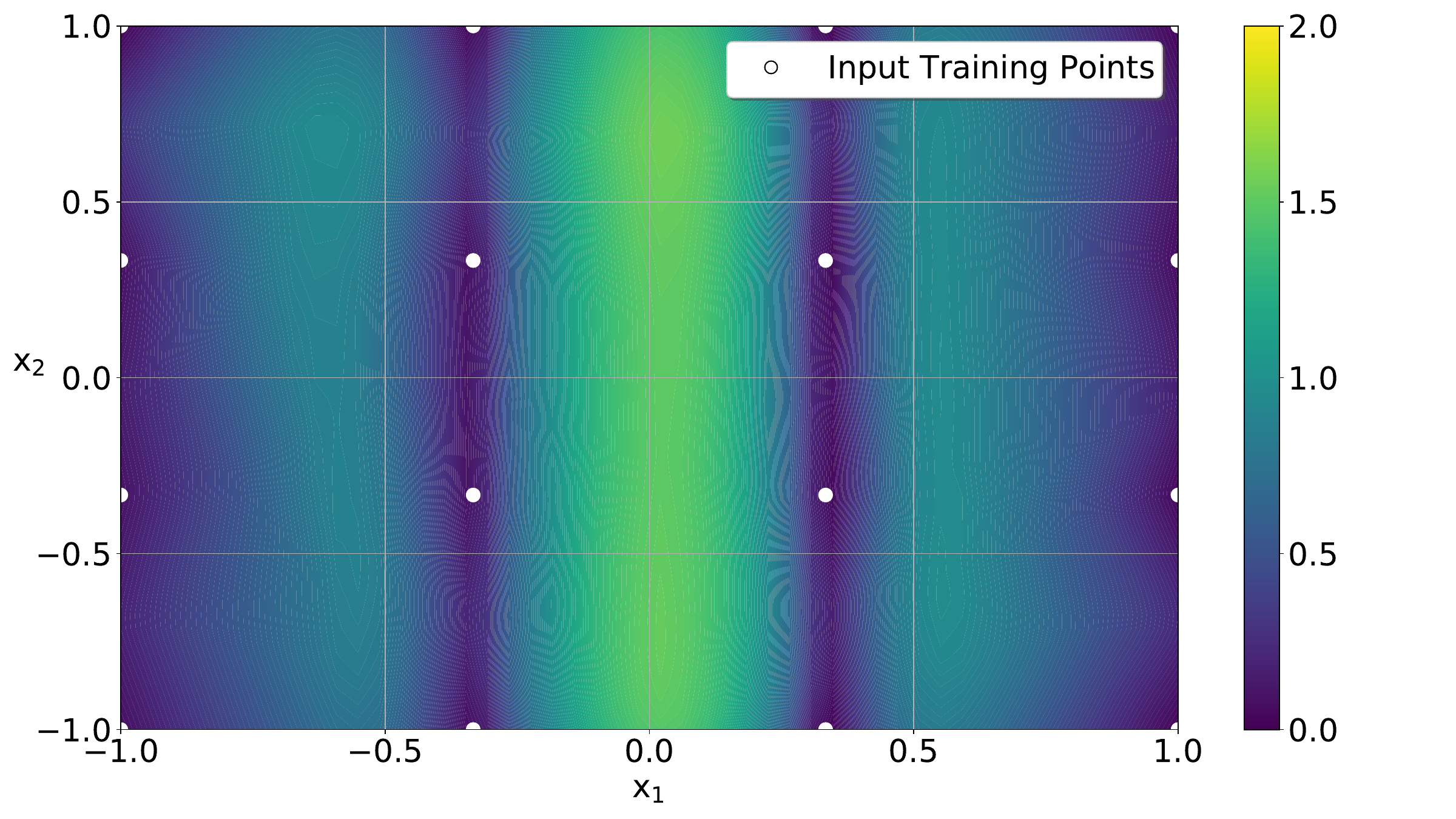}
}%
\caption{Estimated model uncertainty $\sigmodelhat$ of NOMU: visualizing \hyperref[itm:Axioms:irregular]{D4} in 2D.}
\label{fig:2D_D4}
\end{figure}

\begin{figure}[H]
\centering
\captionsetup[subfloat]{font=small,labelfont=small}
\subfloat[GP (c=15)\label{subfig:D4_2d_GPR}]{%
\includegraphics[trim= 10 10 80 10, clip, width=0.9\columnwidth]{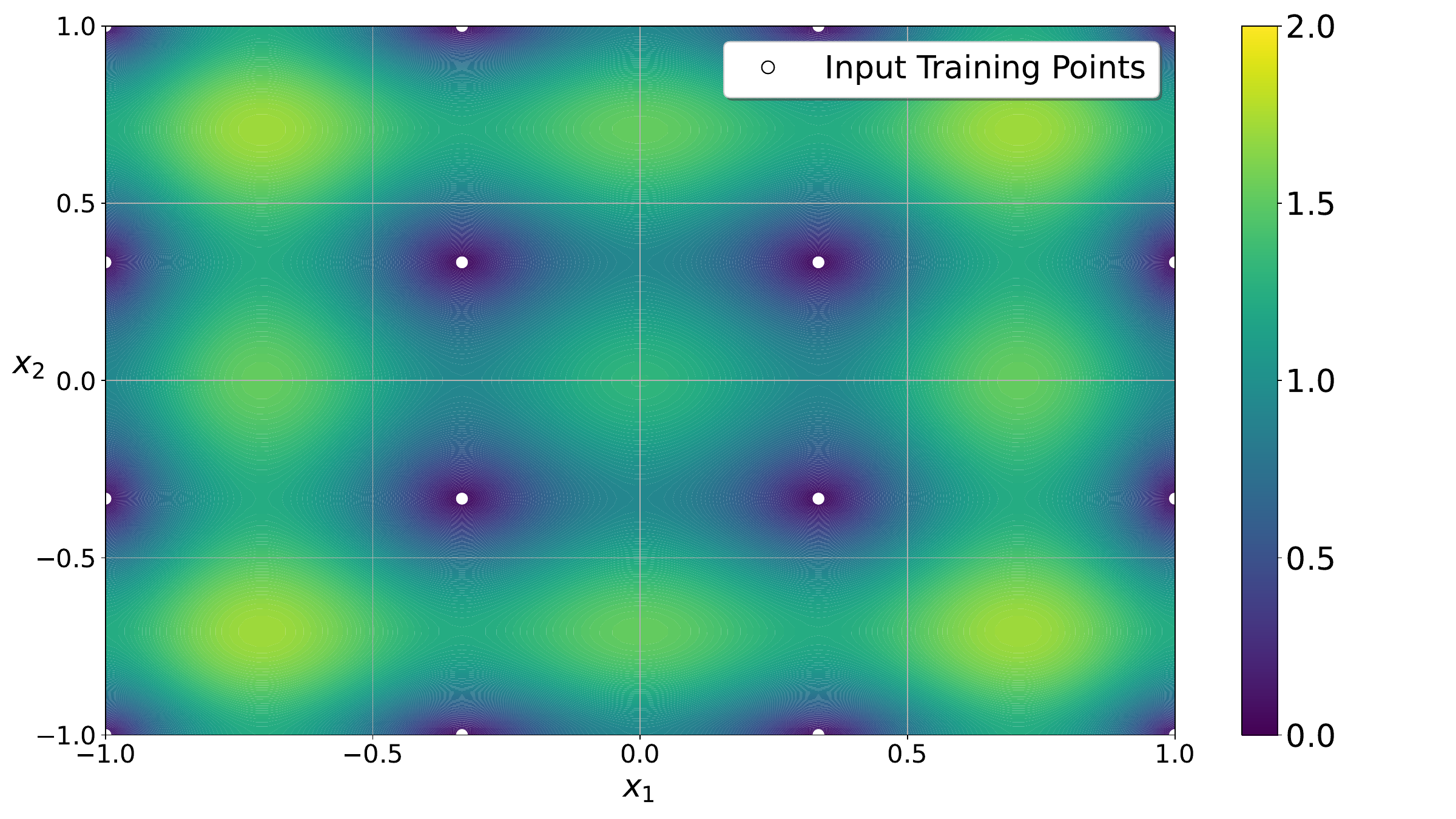}\label{fig:D4GP2d}
}

\subfloat[{MCDO} (c=20)\label{subfig:D4_2d_DO}]{%
\includegraphics[trim= 10 10 80 10, clip, width=.9\columnwidth]{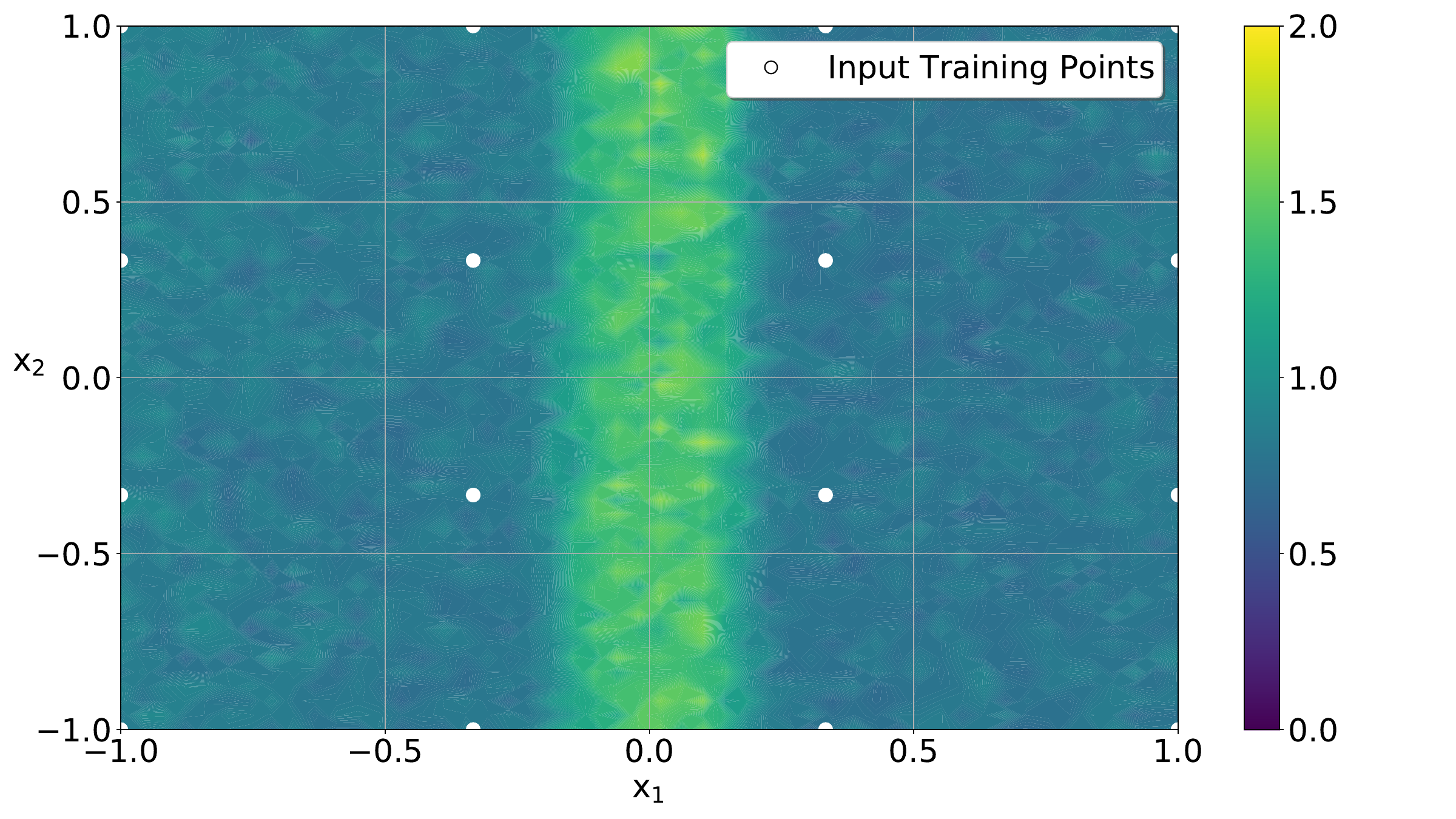}
}

\subfloat[DE (c=30)\label{subfig:D4_2d_DE}]{%
\includegraphics[trim= 10 10 80 10, clip, width=.9\columnwidth]{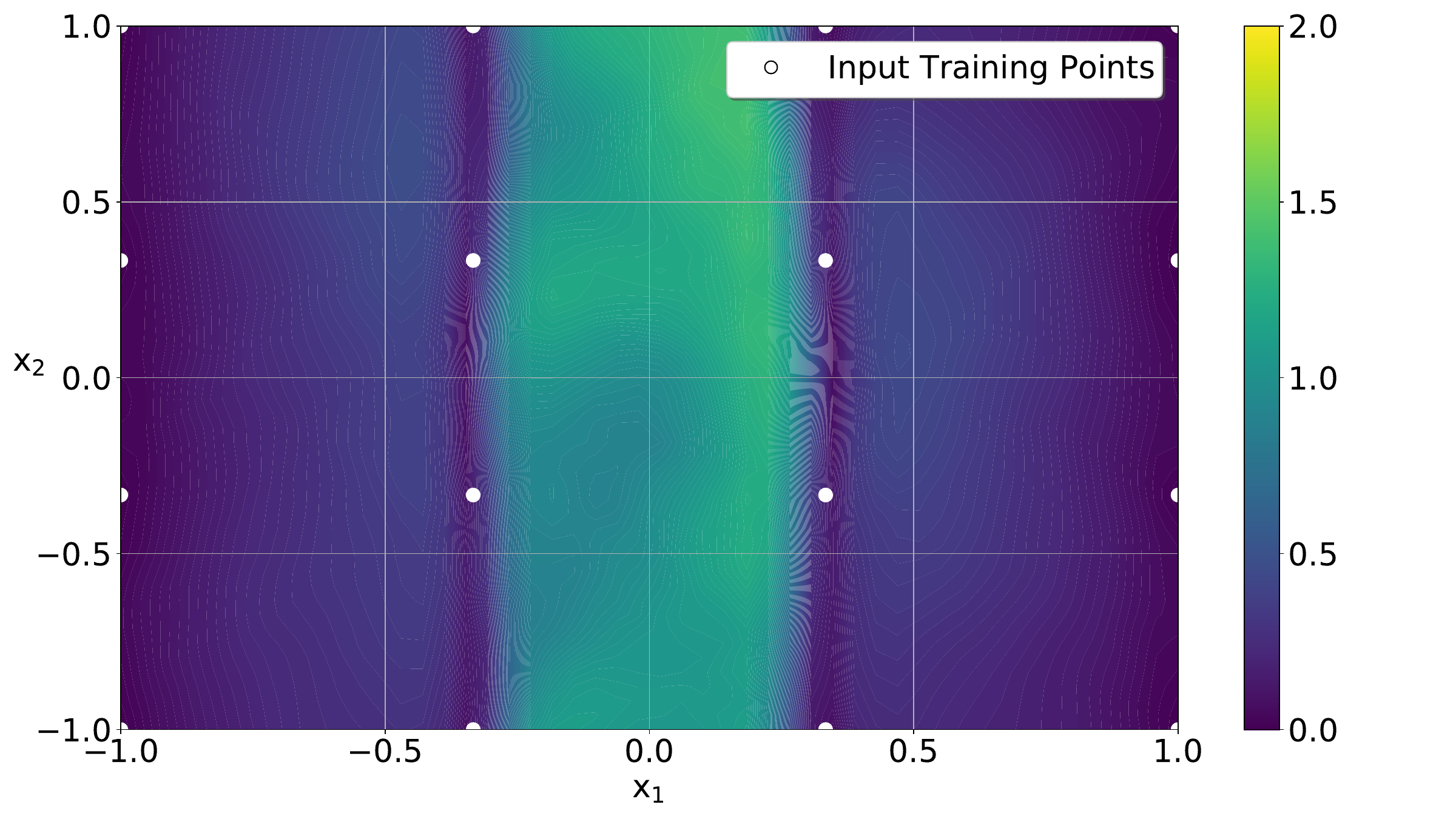}
}

\subfloat[HDE (c=30)\label{subfig:D4_2d_HDE}]{%
\includegraphics[trim= 10 10 80 10, clip, width=.9\columnwidth]{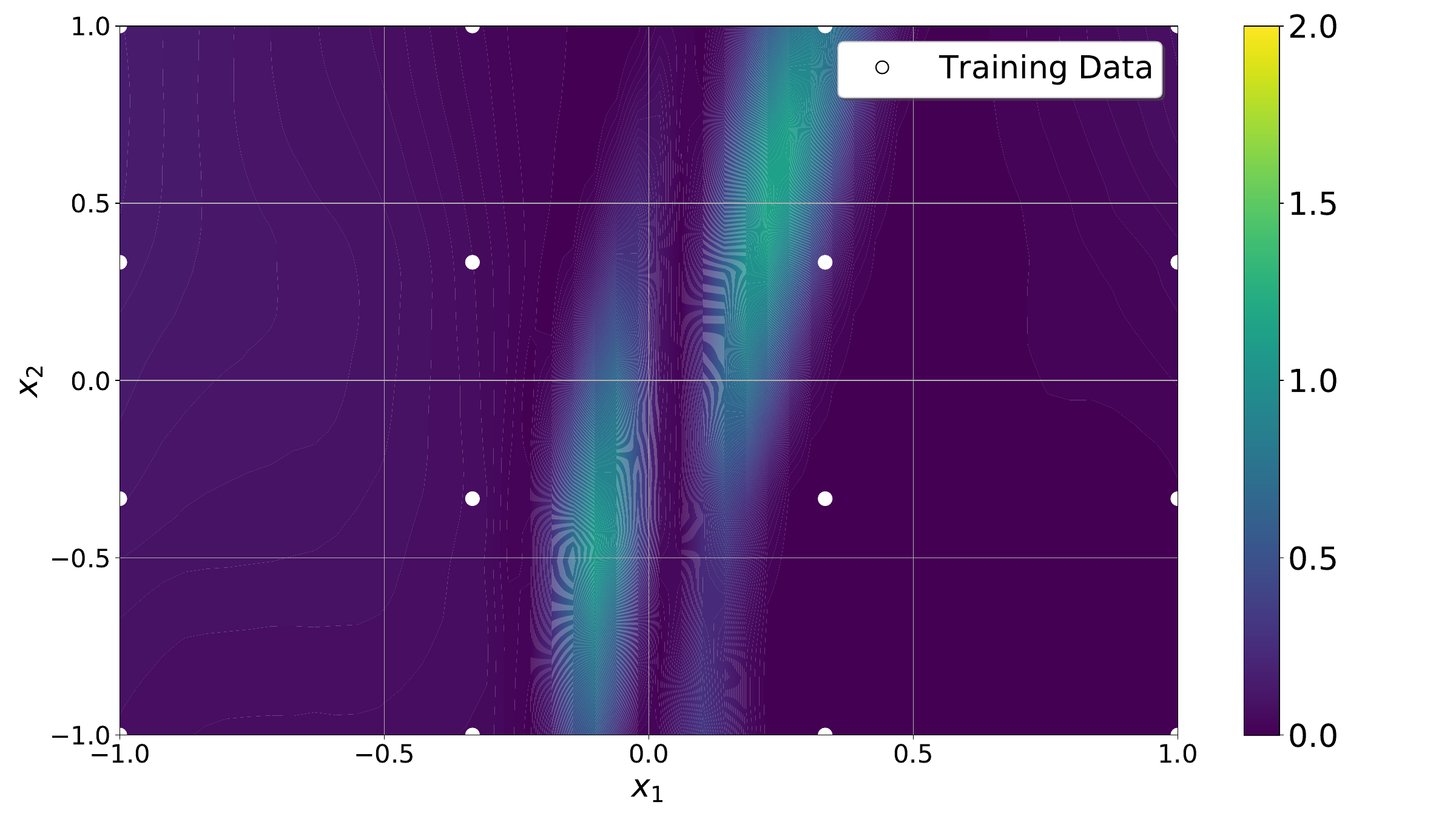}
}

\caption{Estimated model uncertainty of Gaussian process (GP), MC dropout (MCDO), deep ensembles (DE), and hyper deep ensembles (HDE)}
\end{figure}

\begin{figure}[H]
\ContinuedFloat

\subfloat[HDE* (c=30)\label{subfig:D4_2d_HDEstar}]{%
\includegraphics[trim= 10 10 80 10, clip, width=.9\columnwidth]{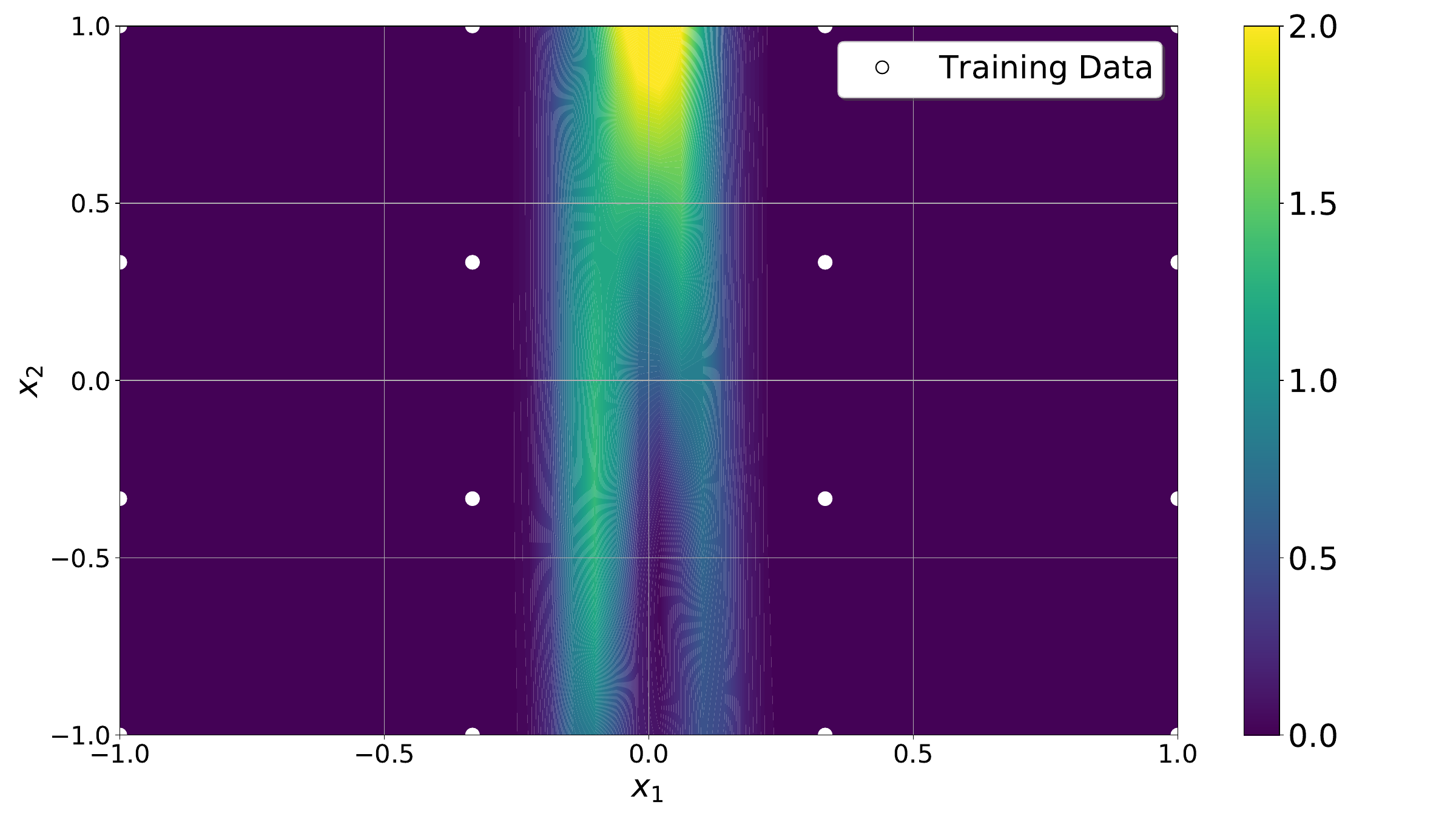}
}
\caption{(cont.) Estimated model uncertainty of HDE*.}
\label{fig:D4_2d_benchmarks}
\end{figure}

\subsubsection{How Does NOMU Fulfill \ref{itm:Axioms:irregular}?}
Recall NOMU's architecture depicted in \Cref{fig:nn_tik} in the main paper.

The $\sigmodelhatraw$-network learns the raw model uncertainty and is connected with the $\fhat$-network through the last hidden layer (dashed lines in Figure \ref{fig:nn_tik}). This connection enables $\sigmodelhatraw$ to re-use features that are important for the model prediction~$\fhat$. 
More theory on how features are reused in L2-regularized deep neural networks can be found in \citep{HeissPart3MultiTask}.
This behaviour directly translates to $\sigmodelhat$ (see \cref{eq:modelUncertaintyPrediction}), implementing Desideratum \ref{itm:Axioms:irregular}.

In \Cref{fig:D4,fig:2D_D4}, one can observe that NOMU fulfills \ref{itm:Axioms:irregular}. Moreover, \Cref{fig:2D_D4} shows how to control the strength of \ref{itm:Axioms:irregular} by varying the L2-regularization and the number of neurons of the $\sigmodelhatraw$-architecture.

\subsection{Desideratum \ref{itm:Axioms:UncertaintyDecreasesWithMoreTrainingPoints}}
First, note that \ref{itm:Axioms:largeDistantLargeUncertainty} already suggest \ref{itm:Axioms:UncertaintyDecreasesWithMoreTrainingPoints}, since for $\ntr\to\infty$ every point $x$ in the support of the input data generating distribution is infinitely close to infinitely many other i.i.d input training points $\xtr$ and thus infinitely close to the input training set.

To the best of our knowledge, \ref{itm:Axioms:UncertaintyDecreasesWithMoreTrainingPoints} is fulfilled by most reasonable algorithms that estimate model uncertainty. Specifically, NOMU, GPs, DE and HDE capture \ref{itm:Axioms:UncertaintyDecreasesWithMoreTrainingPoints}. This can be nicely observed in \Cref{fig:irradiance}, where all these algorithms result in zero model uncertainty in areas with many input training points (up to numerical precision).

MC Dropout can be seen as a variational algorithm for approximating BNNs.
While in theory, BNNs should also fulfill \ref{itm:Axioms:UncertaintyDecreasesWithMoreTrainingPoints}, MC Dropout often struggles to do so, as can be observed in \Cref{fig:irradiance} (see also \ref{subsubsec:MCDropoutviolatesD2} for a discussion why MC Dropout struggles to approximate BNNs).

Finally, \ref{itm:Axioms:UncertaintyDecreasesWithMoreTrainingPoints} is widely accepted \citep{kendall2017uncertainties,malinin2018predictive} and most of the time loosely stated along the lines of \enquote{While data noise uncertainty (aleatoric uncertainty) is irreducible, model uncertainty (epistemic uncertainty) vanishes with an increasing number of training observations.} In other words, the width of credible bounds converges to zero whilst the width of predictive bounds does not converge to zero in the presence of data noise.

However, whilst such statements are qualitatively true, formally, \ref{itm:Axioms:UncertaintyDecreasesWithMoreTrainingPoints} only holds \emph{in the limit} of the number of i.i.d training points~$\ntr$ to infinity and for $x\in \X$ that are \emph{in the support} of the input data generating distribution. Furthermore, note that \ref{itm:Axioms:largeDistantLargeUncertainty} also holds in the presence of data noise uncertainty.

Moreover, note that while \hyperref[itm:Axioms:trivial]{D1}\crefrangeconjunction\hyperref[itm:Axioms:irregular]{D4} are statements on \emph{relative} model uncertainty, i.e., statements that are independent of the calibration parameter $c\ge0$ (see \ref{subsubsec:relativeModelUncertainty}),
\ref{itm:Axioms:UncertaintyDecreasesWithMoreTrainingPoints} is a statement about \emph{absolute} model uncertainty. Thus, \ref{itm:Axioms:UncertaintyDecreasesWithMoreTrainingPoints} only holds for a fixed calibration parameter $c\ge0$ (if $c$ increases sufficiently fast with increasing $\ntr$, model uncertainty does not vanish). 

\subsubsection{Why does \ref{itm:Axioms:UncertaintyDecreasesWithMoreTrainingPoints} only hold in the limit?}
\begin{enumerate}[leftmargin=*,topsep=0pt,partopsep=0pt, parsep=0pt]
\item Even for fixed $c\ge0$, in the case of large \emph{unknown} data noise uncertainty that is simultaneously learned by the algorithm, adding another input training point $x$ close to existing input training points~$\xtr$ whose corresponding target $y$ is very far away from $\ytr$ could lead to an increase in model uncertainty, since this new training point $(x,y)$ would increase the predicted data noise uncertainty and thus increase the data noise induced model uncertainty.

\item Even if there is no data noise uncertainty $\signoise\equiv0$ and $c\ge 0$ is fixed, adding another input training point can increase the model uncertainty, when \ref{itm:Axioms:irregular} is fulfilled.
To see this, consider the following scenario: an already observed set of training points suggest that $f$ is very flat/simple/predictable (e.g., linear) in a certain region. However, adding a new training point $(x,y)$ shows that $f$ is much more irregular in this region than expected. Then, the learned metric can drastically change resulting in increased model uncertainty in this region (outside an $\epsilon$-ball around the new input training point $x$).
\end{enumerate}

\subsubsection{How Does NOMU Fulfill \ref{itm:Axioms:UncertaintyDecreasesWithMoreTrainingPoints}?}

Recall, that we train NOMU by minimizing 
  \begin{align}\label{eq:NOMUOptimizationProblem}
      \Lmu(\NN_{\theta})+\lambda\twonorm[{\theta}]^2,
  \end{align} where the NOMU loss $L^\hp(\NN_\theta)$ is defined as:
  \begin{align}
  L^\hp(\NN_\theta):=&\underbrace{\sum_{i=1}^{\ntr}(\fhat(\xtr_i
  )-\ytr_i)^2}_{\terma{}}+ \,\musqr\cdot\underbrace{\sum_{i=1}^{\ntr}\left(\sigmodelhatraw(\xtr_i)\right)^2}_{\termb{}}\\ &+\muexp\cdot\underbrace{\frac{1}{\lambda_d(\X)}\int_{\X}e^{-\cexp\cdot \sigmodelhatraw(x)}\,dx}_{\termc{}}.
\end{align}

 Then, the following proposition holds:
 \begin{subproposition}\label{prop:NOMUD5} Let $\lambda,\muexp,\cexp,\musqr\in \Rp$ be fixed and let the activation-functions of $\NN_\theta$ be Lipschitz-continuous and let $\sigmodelhat(x)$ be NOMU's model uncertainty prediction. Then, it holds that $\sigmodelhat(x)$ converges in probability to $\sigmin$ for $\ntr\to \infty$ (with $\xtr_i\overset{\text{i.i.d}}{\sim}\PP_X$) for all input points $x$ in the support of the input data generating distribution $\PP_{X}$, i.e., more formally $\forall x \in \text{supp}\left(\PP_{X}\right), \forall \epsilon>0, \forall \delta\ge0: \exists n^0: \forall n\ge n^0:$
 \begin{align}
     \PP\left[|\sigmodelhat(x)-\sigmin|\ge\epsilon\right]<\delta,
 \end{align}
 where $\sigmin\ge0$ is an arbitrarily small constant modelling a minimal model uncertainty used for numerical reasons.
 \end{subproposition}
 \begin{proof}
 Let $x\in \text{supp}\left(\PP_{X}\right)$, $\delta\ge0$ and $\epsilon>0$. 
 
 First, let $ \Lmu(0):=c<\infty$, i.e., the value of the loss function when inserting the constant zero function. Then for the optimal solution $\NN_{\theta^*}$ of \eqref{eq:NOMUOptimizationProblem} it immediately follows that 
 \begin{align}\label{eq:twonormofthetastar}
     \twonorm[\theta^*]^2\le\frac{c}{\lambda}.
 \end{align}
 Using the fact that all activation-functions $\phi$ of $\NN_\theta^*$ are Lipschitz-continuous together with \eqref{eq:twonormofthetastar}, one can show that there exists a constant $L:=L(c,\lambda,\phi,\text{architecture})$ such that the raw model uncertainty prediction
$\sigmodelhatraw^{\theta^*}$ is Lipschitz-continuous with constant $L$.

Next, let $U:=U_{\frac{\epsilon}{4L}}(x)$ be an open ball with radius $\frac{\epsilon}{4L}$ around $x$. Given that the diameter of $U$ is equal to $\frac{\epsilon}{2L}$ and the fact that $\sigmodelhatraw^{\theta^*}$ is Lipschitz-continuous with constant $L$, it follows that
\begin{align}\label{eq:maxmindistance}
    \max\limits_{z\in U }\sigmodelhatraw^{\theta^*}(z)-\min\limits_{z\in U }\sigmodelhatraw^{\theta^*}(z)< \frac{\epsilon}{2}.
\end{align}
Given $U$ let $p:=\PP[x\in U]$. Since $x\in \text{supp}\left(\PP_{X}\right)$, per definition it holds that $p>0$.

In the following let $\Dtr_{n,x}\sim (\PP_{X})^n$ denote the random variable representing a set of $n$ input training points.
Now, let $n^0$ be sufficiently large such that
\begin{align}
    \PP\left[|\Dtr_{n^0,x}\cap U|>4\cdot\frac{c}{\epsilon^2}\right]>1-\delta.
\end{align}
Note that one can explicitly calculate the value of $n^0$, since $|\Dtr_{n^0,x}\cap U|\in \N_0$ is binomial distributed with $p>0$ and $n^0\in \N$.

Finally, we show that $\sigmodelhatraw^{\theta^*}(x)<\epsilon$ by contradiction. For this, assume on the contrary that
\begin{align}
\sigmodelhatraw^{\theta^*}(x)\ge\epsilon.
\end{align}
Using \eqref{eq:maxmindistance} it follows that for all $z\in\Dtr_{n^0,x}\cap U$ it holds that $ \sigmodelhatraw^{\theta^*}(z)>\frac{\epsilon}{2}$ with probability larger than $1-\delta$.
This together with the fact that each summand in the term~\termb{} in the NOMU loss function $\Lmu$ is non-negative implies that
\begin{align}\label{eq:termbInequality}
    \termb{}
    \geq \sum_{\xtr\in\Dtr_{n^0,x}\cap U}\left(\sigmodelhatraw(\xtr)\right)^2
    > \left(\frac{\epsilon}{2}\right)^2\cdot 4\cdot\frac{c}{\epsilon^2}=c.
\end{align}
Putting everything together and using the fact that each term in the NOMU loss is non-negative implies that
\begin{align*}
    \Lmu(\NN_{\theta^*})+\lambda\twonorm[{\theta^*}]^2\ge \termb{}\overset{\eqref{eq:termbInequality}}{>}c=\Lmu(0)+\lambda\twonorm[{0}]^2,
\end{align*}
which is a contradiction for $\NN_{\theta^*}$ being optimal in \eqref{eq:NOMUOptimizationProblem}.

Therefore, we can conclude that $\sigmodelhatraw^{\theta^*}(x)<\epsilon$ with probability larger than $1-\delta$. By definition of $\sigmodelhat$ this implies that $|\sigmodelhat-\sigmin|< \epsilon$ with probability larger than $1-\delta$, which concludes the proof.
\end{proof}

Note that empirically one can see in \Cref{fig:irradiance} (in the areas with many input training points) how well NOMU fulfills \ref{itm:Axioms:UncertaintyDecreasesWithMoreTrainingPoints} in real-world settings. Furthermore, one can see that the statement only holds true and is only desirable for $x$ in the support of the input data generating distribution~$\PP_{X}$ (not in the gaps).

\section{NOMU vs. Prior Networks}\label{sec:NOMUvsPriorNetworks}
In this section, we highlight several differences of NOMU compared to \emph{prior regression networks} that were recently introduced in a working paper by \citet{malinin2020regression}.

While the high level idea of introducing a separate loss term for the in-sample-distribution and \emph{out-of-distribution (OOD)} distribution is related to NOMU, there are several important differences, which we discuss next:
\begin{enumerate}[leftmargin=*,topsep=0pt,partopsep=0pt, parsep=0pt]
\item \citet{malinin2020regression}'s approach focuses on estimating both model and data noise uncertainty. Thus, to properly compare it to NOMU, we consider their approach for known and negligible data noise uncertainty, e.g., for $\signoise=10^{-10}$ we need to set in their paper $(L,\nu)=(I\cdot l^{-1},\frac{1}{\signoise} l)$ with $l \to \infty$, such that the their corresponding model uncertainty prediction is given by $\left(\kappa(x)\Lambda(x)\right)^{-1}\overset{l\to\infty}{=}\frac{\signoise}{\kappa(x)}\cdot I$. In the following, we will consider for simplicity a one-dimensional output, i.e.,  $\sigmodelhat=\frac{\signoise}{\kappa(x)}$.
\item They explicitly define a prior ``target'' distribution independent of $x\in \X$, which is parametrized by $\kappa_0$ (model uncertainty) and $m_0$ (mean prediction) for OOD input points. Specifically, for OOD input points their mean prediction $\fhat$ is pushed towards $m_0$. In many applications the success of classical mean predictions of deep NNs is evident. In none of these applications there was a term that pushed the mean prediction to a fixed predefined prior mean. Therefore, for NOMU we keep the mean prediction untouched.
\item Instead of our loss, their loss (derived from a reverse KL-divergence) is of the form:
\begin{align}\label{eq:priorRegressionNetworksModelUncertainty}
&\underbrace{\sum_{i=1}^{\ntr}\frac{\left(\fhat(\xtr_i)-\ytr_i\right)^2}{2\left(\signoise\right)^2}+\sum_{i=1}^{\ntr}\frac{1}{\kappa(x)}}_{\textrm{in-sample}}+\\
&\underbrace{\int_{\X}\frac{\kappa_0\left(\fhat(x)-m_0\right)^2}{2\left(\signoise\right)^2}+\frac{\kappa_0}{\kappa(x)}-\log\frac{\kappa_0}{\kappa(x)}-1\,d\muOOD(x)}_{\textrm{out-of-distribution}}\notag,
\end{align}
where $m_0,\kappa_0$ are the prior parameters for the mean and model uncertainty prediction and $\muOOD$ is an OOD measure. Specifically, they enforce zero model uncertainty at input training points via the linear term $\frac{1}{\kappa(x)}$, while we use a quadratic term. Moreover, they only use an OOD term and no \emph{out-of-sample (OOS)} term (see below \Cref{itm:OODtermOnly}).
\item\label{itm:OODtermOnly} In their loss formulation in \eqref{eq:priorRegressionNetworksModelUncertainty}, they only use an \emph{out-of-distribution} (OOD) term, while we use an \emph{out-of-sample (OOS)} term. By OOD they refer to input training points \emph{only} far away from the training data, e.g., in \citep[Section~3]{malinin2020regression} $\muOOD$ only has support far away from the convex hull of the input training points. Thus, they do not enforce model uncertainty in gaps between input training points. In contrast, by \emph{out-of-sample (OOS)} we refer to a distribution with no mass on the input training points, i.e., we sample new input points that are not equal to the input training points but come from the same range (we recommend to use a distribution that is similar to the data generating process). Therefore, our loss explicitly also enforces model uncertainty in gaps between input training points.
\item They use a different architecture and train only \emph{one} NN. This implies that their mean prediction $m(x)$ can be influenced in unwanted ways by the model uncertainty prediction $\left(\kappa(x)\Lambda(x)\right)^{-1}$.

\item Their theoretical motivation substantially differs from ours: They only partially specify a prior belief by defining \emph{marginal} distributions for $f(x)$ for each input point $x\in \X$, without specifying a joint prior distribution for $f$.
However, given only marginals no joint distribution, which is the crucial aspect when defining a prior in regression, can be derived without further information (E.g., consider Gaussian processes (GPs); here all one-dimensional marginal distributions are simply given by $\mathcal{N}((m(x)),k(x,x))$.
However, the crucial part is how to define $k(x,x^{'})$ specifying the relation of $x$ and $x^{'}$.
Only defining the marginals does not suffice to fully define GPs, leaving this most crucial part undefined).

However, for NOMU, we give in \Cref{sec:TheoreticalAnalysisAppendix} a theoretical connection to BNNs, with Gaussian prior on the weights. This induces a prior on the function space, i.e., a distribution over $f$ rather than separate \emph{marginal} distributions over $f(x)$ for each $x \in \X$.
\item Parametrizing the model precision instead of the model uncertainty can have negative effects due to (implicit) regularization of NNs in the case of negligible or zero data noise. To get uncertainties in gaps between the input training points (small $\kappa(x)$) while having almost zero uncertainty at these input training points ($\kappa(x)\to \infty$), would imply very high regularization costs for the function $\kappa(x)$ and thus is very hard to learn for a NN. For NOMU, we therefore parameterize directly the model uncertainty (which is always finite) instead of the model precision (that should be infinite at noiseless training data points).
\item Our experimental results suggest that NOMU clearly outperforms DE in BO, whilst DE outperforms \emph{prior regression networks} in their set of experiments.
\end{enumerate}

\section{NOMU vs. Neural Processes}\label{sec:appendix:NOMU vs. Neural Processes}
In this section, we discuss the differences between neural processes (NPs) introduced by \citet{conditional_neural_proc_pmlr-v80-garnelo18a,neural_proc} and NOMU. 
Specifically, we explain in the following why NPs and NOMU are solving very different problems in different settings.

For training NPs, one has to observe data from 1000s of realizations of $f_k$, sampled i.i.d. from the prior distribution (for each $f_k$ one observes $x_{i}$ and $f_k(x_i)$ to train the NP).
This is often mentioned in \citet{conditional_neural_proc_pmlr-v80-garnelo18a,neural_proc}, e.g., \citet[Section~4.1]{conditional_neural_proc_pmlr-v80-garnelo18a}: \textit{''We generate [...] datasets that consist of function\textbf{s} generated from a GP [...] \textbf{At every training step we sample a curve from the GP} [...].``.} For NOMU we consider the \textbf{very different} task of estimating $p(y^\text{test} | x^\text{test};\Dtr)$ based on a \textbf{single dataset}, i.e, generated from a single realization $f=f_1$.

For example in the case of the Boston housing data set from \cref{subsubsec:UCI}, there is only one function $f=f_1$ involved that maps a (multidimensional) input data point $x$ corresponding to a house in Boston to its price $f(x)$. For this data set, NPs would not be well suited, since it only contains data $(x_i,y_i)=(x_i,f(x_i)+\varepsilon_i)$ coming from this specific function $f$. One does not have access to data corresponding to another function $f_2$ that had been sampled from the same prior distribution.

The same is true for the other data sets we consider in this paper (e.g., for the solar irradiance data set we only use the data visible in \Cref{fig:irradianceNOMU} and NOMU does not have access to any data coming from other time series to make its predictions). Thus, NPs cannot be applied to the tasks considered in this paper.

\mypar{Summary} NOMU, GP, MCDO, DE and HDE are designed to be trained on data coming from \emph{one} unknown function~$f$ without having access to data from other functions $f_2,f_3,\dots$. In contrast,
NPs are designed to be trained on \emph{multiple} data sets generated from \emph{multiple} functions $f_1, f_2, f_3,\dots$.

\section{Aleatoric Neural Networks: Aleatoric vs. Epistemic Uncertainty}\label{sec:appendix:Aleatoric Neural Networks: Aleatoric vs. Epistemic Uncertainty}
In this section, we discuss the classical approach of a NN with two outputs, one output for a model prediction and another for aleatoric uncertainty, which is trained using the (scaled) Gaussian negative log-likelihood as introduced by \citet{nix1994estimating}. We will use the terms aleatoric uncertainty and data noise as well as model uncertainty and epistemic uncertainty interchangeably. 

In what follows, we call this method \emph{aleatoric neural network (ANN)}. Within this section, we show that such an ANN does not explicitly estimate model uncertainty (in contrast to all other benchmark methods discussed in this paper), i.e., when using the aleatoric uncertainty output~$\signoisehat$ naively as $\sigmodelhat$, the so obtained $\sigmodelhat:=\signoisehat$ does not represent model uncertainty (epistemic uncertainty).\footnote{The reminder of this section only targets readers who do not directly see that substituting $\sigmodelhat$ by $\signoisehat$ is an extremely bad idea.} First, we give the definition of an ANN.

\begin{definition}[ANN]\label{def:ANN}
An ANN is a fully-connected feed-forward NN $\NNann:\mathbb{R}^d\to \mathbb{R}\times \mathbb{R}_+$ with two outputs: (i) the model prediction $\fhat \in \mathbb{R}$ and (ii) a model uncertainty prediction $\signoisehat\in \mathbb{R}_+$ that is trained for a given set of training points $\Dtr$ using the following loss function:
\begin{align}
&\Lann(\NNann):=\nonumber\\
&\frac{1}{|\Dtr|}\sum_{(x,y) \in \Dtr}\left[\frac{\left(y-\fhat(x)\right)^2}{2 \left(\signoisehat(x)\right)^2} +  \ln\left(\signoisehat(x)\right)\right]
\end{align}
\end{definition}
The idea of ANN is to estimate the noise scale $\signoise(x)=\sqrt{\mathbb{V}[\varepsilon]}=\sqrt{\mathbb{V}[y|x,f(x)]}$. \Cref{fig:appendix:NOMUvsANNc1} shows that, as we would expect, the trained ANN has learned the true $\signoise\equiv0\approx\signoisehat$ quite precisely. However, it as also becomes evident that the ANN does not learn any form of model uncertainty. Very far away from all observed training data points, the ANN does not express any uncertainty about its prediction (in \Cref{fig:appendix:NOMUvsANNc5}, to the right of $x=0.5$, the predictions are very far away from the truth, but $\signoisehat$ does not capture this uncertainty). Therefore, the ANN's aleatoric uncertainty output $\signoisehat$ does not fulfill desideratum \ref{itm:Axioms:largeDistantLargeUncertainty}.
\begin{figure}[t!]
    \begin{center}
    \centerline{\includegraphics[width=\columnwidth]{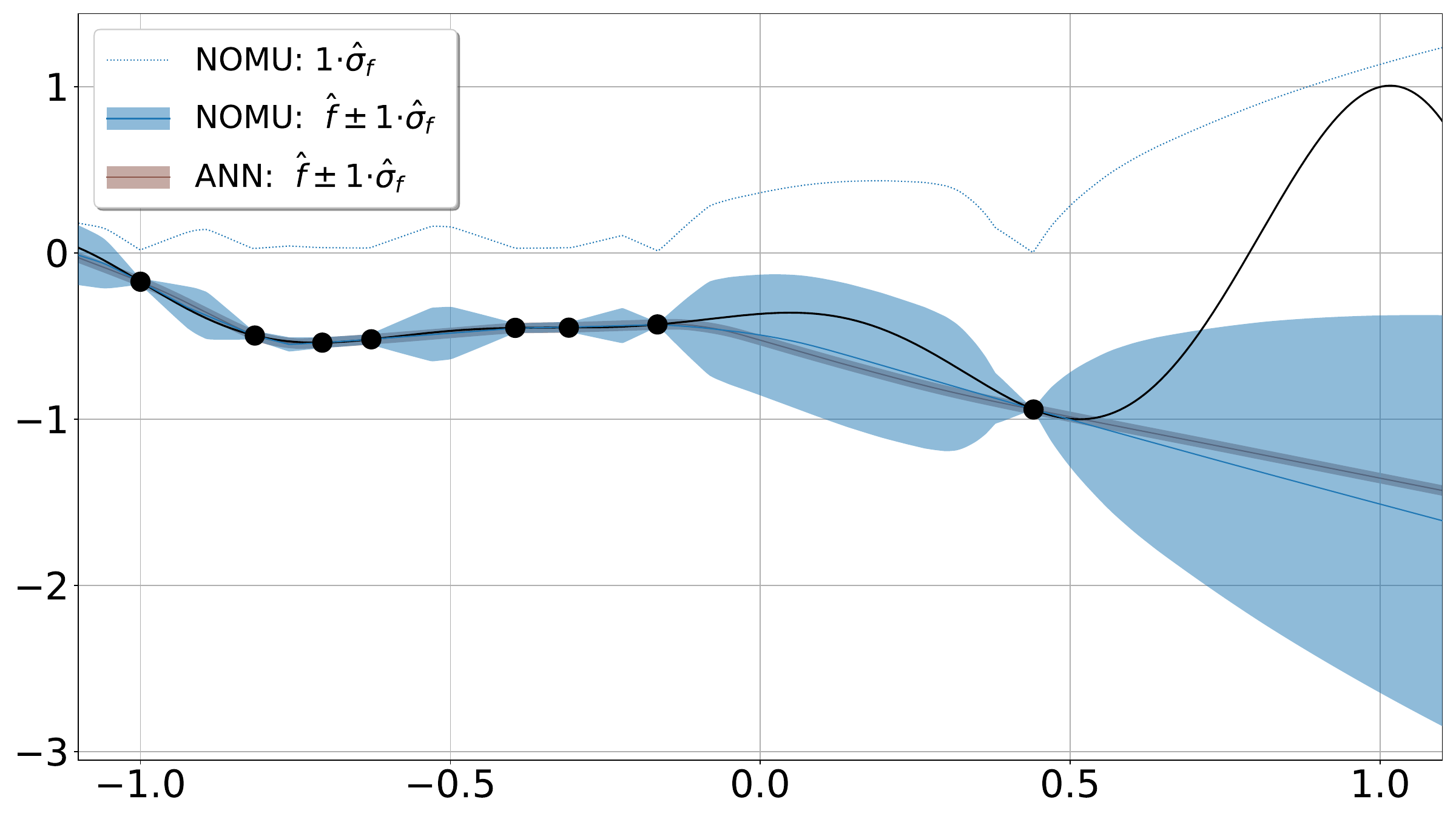}
    }
    \caption{Comparison of UBs resulting from ANN ($c=1$) (where $\signoisehat$ is used as a substitute for $\sigmodelhat$) vs. NOMU for the Forrester function (solid black line). For NOMU, we also show $\sigmodelhat$ as a dotted blue line. Training points are shown as black dots.}
    \label{fig:appendix:NOMUvsANNc1}
    \end{center}
    \vskip -0.2in
\end{figure}
The problem of misusing $\signoisehat$ as substitute for $\sigmodel$ is not that $\signoisehat$ is too small (as we study relative uncertainty in this paper (see \cref{subsubsec:relativeModelUncertainty}), one can always scale the uncertainty by a factor $c$). However,  \Cref{fig:appendix:NOMUvsANNc5} shows that also the scaled uncertainty completely fails to capture the desiderata, i.e., the aleatoric uncertainty output $\signoisehat$ is almost constant for all input points $x$. Thus, UBs of an ANN do not fulfill \ref{itm:Axioms:ZeroUncertaintyAtData} and \ref{itm:Axioms:largeDistantLargeUncertainty}.
In \Cref{fig:appendix:NOMUvsANNc5}, we can see that $5\signoisehat$ is way too underconfident at input training points and at the same time way too overconfident far away from the observed input training points. This would result in very bad $\NLPD{}$ scores on a test set.
(Moreover in noisy settings, i.e., $\signoise\neq 0$, the aleatoric uncertainty output $\signoisehat$ of an ANN does not fulfill \ref{itm:Axioms:UncertaintyDecreasesWithMoreTrainingPoints} either: $\signoisehat$ should converge to $\signoise$ while $\sigmodelhat$ should converge to zero as the amount of training data increases.)

Furthermore, especially in Bayesian optimization (BO) $c\signoisehat$ would be a terribly bad substitute for $\sigmodel$:
Maximizing the upper UB acquisition function $\hat{f}+c\signoisehat$, would be almost equivalent to maximizing $\hat{f}$ since $c\signoisehat$ is almost constant because of the lack of \ref{itm:Axioms:ZeroUncertaintyAtData} and \ref{itm:Axioms:largeDistantLargeUncertainty}.
If one wants to maximize the function in \Cref{fig:appendix:NOMUvsANNc5} on $[-1,1]$, the next BO-step would propose to query an input training point at the left boundary $x=-1$ (even for large $c$). However, one does not learn anything new from evaluating at $x=-1$, because this input training point has already been evaluated in a previous BO-step. All subsequent BO steps would propose to query the same point $x=-1$ without exploring any other region of the input space and one would never find the true maximum at $x\approx 1$.
\begin{figure}[t!]
    \begin{center}
    \centerline{\includegraphics[width=\columnwidth]{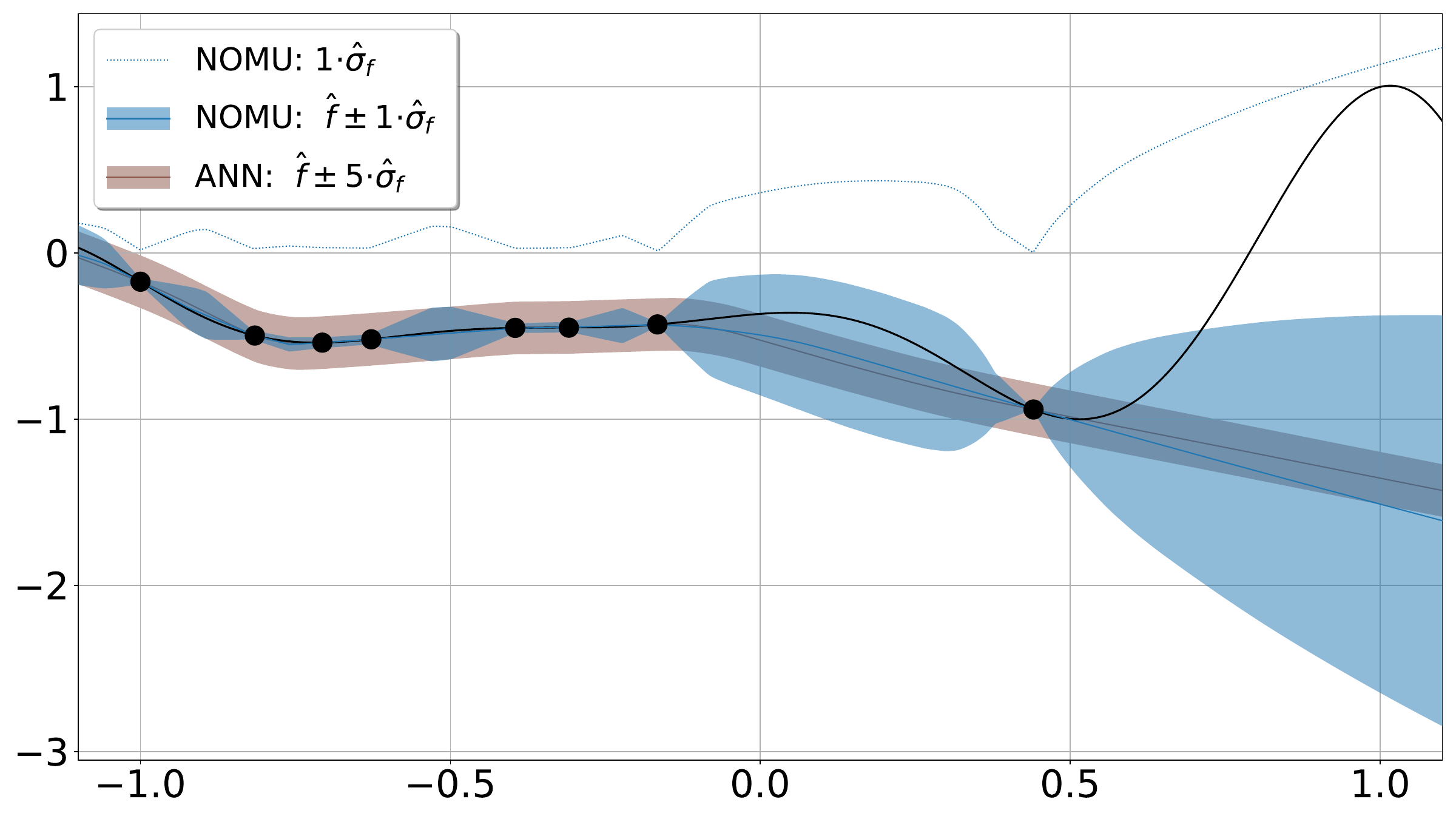}
    }
    \caption{Comparison of UBs resulting from ANN ($c=5$) (where $\signoisehat$ is used as a substitute for $\sigmodelhat$) vs. NOMU for the Forrester function (solid black line). For NOMU, we also show $\sigmodelhat$ as a dotted blue line. Training points are shown as black dots.}
    \label{fig:appendix:NOMUvsANNc5}
    \end{center}
    \vskip -0.2in
\end{figure}
In contrast, a reasonable model for estimating $\sigmodel$ (such as NOMU), would directly (after scaling up $c$ dynamically by a factor $2$ as described in \cref{par:dynamicC}) choose a point in the unexplored right region $x\approx 1$, because the left side $x\approx-1$ is already well explored.

Overall, aleatoric uncertainty $\signoise$ and epistemic uncertainty $\sigmodel$ are two very different objects. Thus, an estimator $\signoisehat$ designed to estimate $\signoise$ is usually a bad estimator for $\sigmodel$. 
\section{Hyperparameter Sensitivity Analysis}\label{sec:appendix:Sensitivity Analysis - Loss HPs}
In this section, we provide a sensitivity analysis with respect to NOMU's loss hyperparameters, i.e., $\musqr$, $\muexp$, $\cexp$, and $\Daug$. First, we present a visual qualitative analysis in 1D showing how each of these hyperparameters affects the shape of NOMU's UBs (\Cref{subsec:appendix:Qualitative Sensitivity Analysis}). Second, we also present an extensive quantitative sensitivity analysis in the generative test-bed setting from \Cref{subsubsec:GenerativeTestbed}, where in addition to the loss hyperparameters we also include the hyperparameters of the readout map $\sigmin$, and $\sigmax$ in our analysis (\Cref{subsec:appendix:Quantitative Sensitivity Analysis}).

\subsection{Qualitative Sensitivity Analysis}\label{subsec:appendix:Qualitative Sensitivity Analysis}
In this section, we consider the setting of \Cref{subsubsec:ToyRegression}, and visualize the effect of increasing or decreasing each of NOMU's loss hyperparameters $\musqr$, $\muexp$, $\cexp$, and $\Daug$ on the example of the 1D Levy function. For reference, \Cref{fig:sensitivity:default} shows NOMU's UBs (with scaling factor $c=2$) for the default loss hyperparameters $\musqr=0.1$, $\muexp=0.01$, $\cexp=30$, and $\Daug=128$ that are used in \Cref{subsubsec:ToyRegression} in the main paper.

\begin{figure}[ht]
\begin{center}
\centerline{\includegraphics[width=\columnwidth]{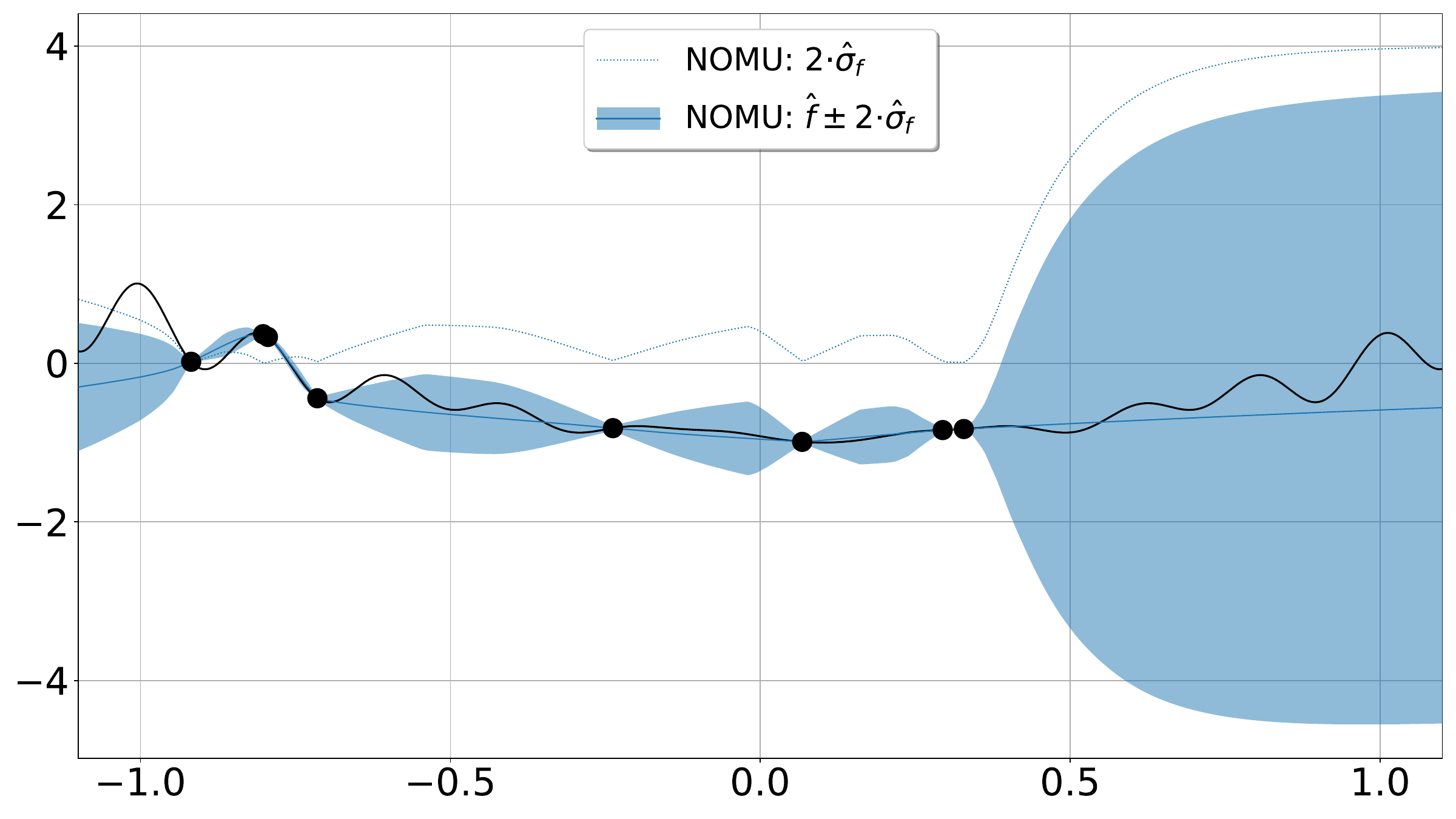}}
\caption{NOMU's UBs (c=2) for the generic loss hyperparameters from \Cref{subsubsec:ToyRegression} in the main paper.}
\label{fig:sensitivity:default}
\end{center}
\end{figure}

For each of $\Daug$ and $\cexp$, we fit two additional NOMU models, where we ceteris paribus de- and increase the hyperparameter's default value by factors $1/s$ and $s$, respectively. The multiplicative factors $\musqr$ and $\muexp$ we treat jointly in our sensitivity analysis: First, we show the effect of ceteris paribus de- and increasing the default value of the product $\muexp\musqr$ by factors $s_l=0.001$ and $s_u=10$. Second, we vary the ratio $\muexp/\musqr$ in the same fashion, with scaling factors $s_l=1/16$ and $s_u=16$.
Within all of the experiments in this section, we sample artificial input points $\Daug$ on an equidistant (deterministic) grid on $[-1.1,1.1]$. This allows us to give a qualitative analysis of the hyperparameters' effects as follows.

\paragraph{Varying $\muexp\musqr$ with Scaling Factors of $s_l=0.001$ and $s_u=10$.}
Decreasing $\muexp\musqr$ by decreasing both $\muexp$ and $\musqr$ leads to more tubular bounds by relaxing the desiderata \ref{itm:Axioms:ZeroUncertaintyAtData} and \ref{itm:Axioms:largeDistantLargeUncertainty}. This can be seen in \Cref{fig:sensitivity:muexpmalmusqr}:
NOMU's blue dashed uncertainty (corresponding to small $\muexp\musqr$) is larger at data points than the orange one of NOMU 2 (corresponding to large $\muexp\musqr$), and it is smaller further away from the training data points.
NOMU's default hyperparameters are in a range where the loss is already at its limit\footnote{For NOMU's default parameters the ratio $\muexp\musqr / \lambda$ is already very large such that the explicit regularization via $\lambda\twonorm[\theta]^2$ is almost negligible. Thus, the UBs are only prevented from having even larger curvature by implicit regularization, i.e., within a given number of epochs the training algorithm cannot reach a function with more curvature, because increasing the amplitude of the loss is (partially) compensated by the adaptivity of the \textsc{Adam} algorithm. Only in ranges where $\muexp\musqr / \lambda$ is small enough for the explicit regularization to actually matter, the UBs become sensitive to the ratio $\muexp\musqr / \lambda$. Then the regularization of $\lambda\twonorm[\theta]^2$ keeps the curvature of $\sigmodelhat$ low, i.e., the UBs become more tubular.} enforcing the desiderata \ref{itm:Axioms:ZeroUncertaintyAtData} and \ref{itm:Axioms:largeDistantLargeUncertainty}. Therefore,
further increasing $\muexp\musqr$ (while keeping their ratio and the other hyperparameters fixed) barely causes the UBs to change as can be seen for the orange UBs in \Cref{fig:sensitivity:muexpmalmusqr}. Increasing $\muexp\musqr$ too much, can lead to numerical instabilities.
\begin{figure}[t]
\begin{center}
\centerline{\includegraphics[width=\columnwidth]{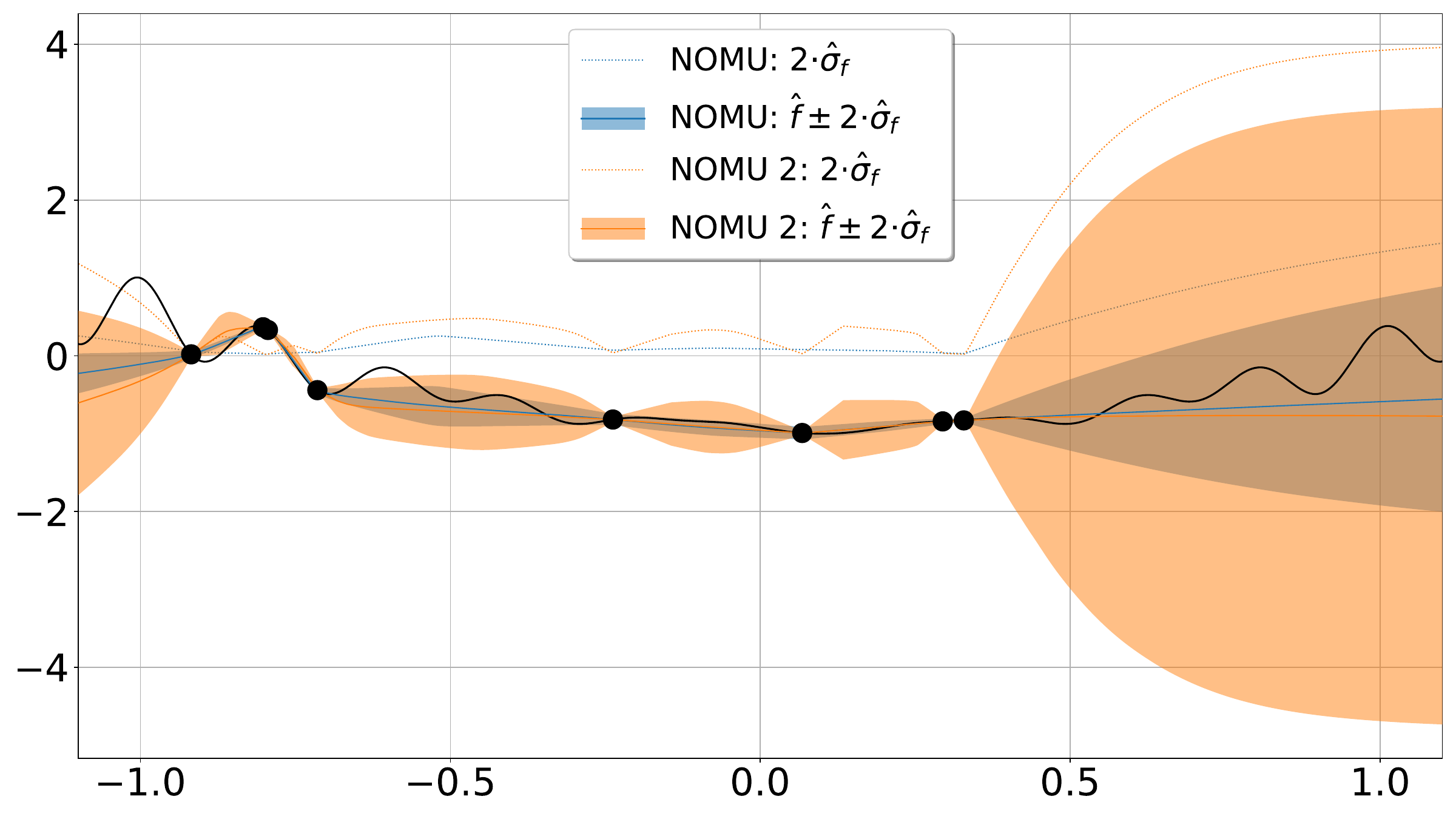}}
    \caption{NOMU's UBs (c=2) for $\musqr=1e-4, \muexp=1e-5$ (blue) and $\musqr=1, \muexp=0.1$ (orange).}
    \label{fig:sensitivity:muexpmalmusqr}
\end{center}
\vskip -0.2in
\end{figure}

\paragraph{Varying $\muexp/\musqr$ with Scaling Factors of $s_l=1/16$ and $s_u=16$.}
Increasing the ratio of $\muexp/\musqr$ (while keeping their product and all other hyperparameters fixed) simply causes NOMU's UBs to uniformly widen across the entire domain. Indeed, in \Cref{fig:sensitivity:muexpdurchmusqr}, the orange UBs of NOMU 2 (corresponding to large $\muexp/\musqr$) are blown up and cover the blue UBs (corresponding to small  $\muexp/\musqr$) throughout the input space.

\begin{figure}[ht]
\vskip 0.2in
\begin{center}
\centerline{\includegraphics[width=\columnwidth]{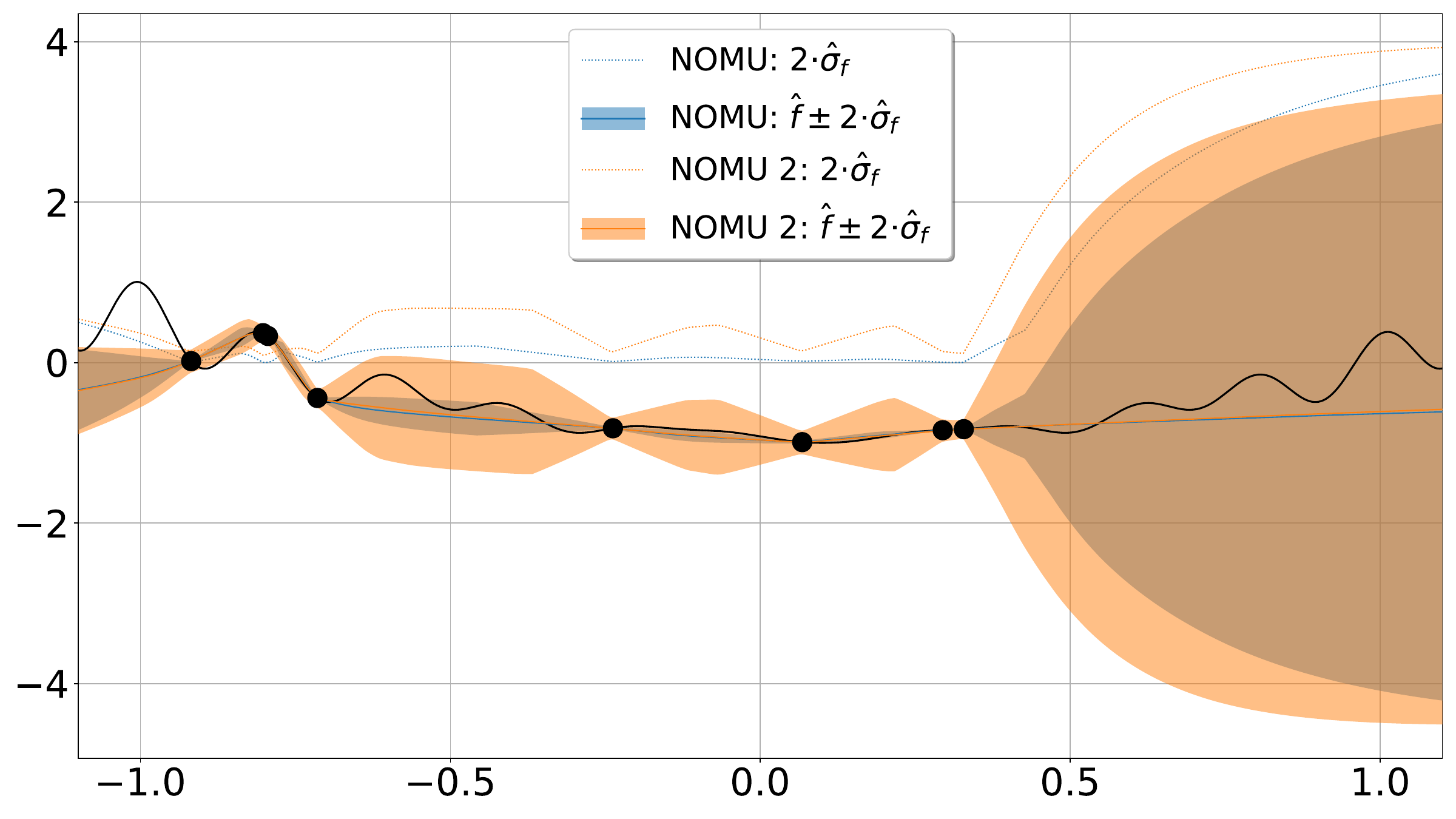}}
    \caption{NOMU's UBs (c=2) for $\muexp=0.0025, \musqr=0.4$ (blue) and $\muexp=0.04, \musqr=0.025$ (orange).}
    \label{fig:sensitivity:muexpdurchmusqr}
\end{center}
\vskip -0.2in
\end{figure}

\paragraph{Varying $\cexp$ with a Scaling Factor of $s=2$.}
Increasing the hyperparameter $\cexp$ causes UBs to shrink in areas of large uncertainty and to widen in areas of small uncertainty. This effect is visualized in \Cref{fig:sensitivity:cexp}: the orange dashed uncertainty line of NOMU 2 (large $\cexp=60$) lies above the blue one of NOMU (small $\cexp=15$) at data points; and in regions of large uncertainty, the orange UBs (corresponding to large $\cexp$) turn out to be more narrow than the blue ones (corresponding to small $\cexp$). Thus, increasing $\cexp$ causes NOMU's UBs to be more tubular.

\begin{figure}[ht]
\begin{center}
\centerline{\includegraphics[width=\columnwidth]{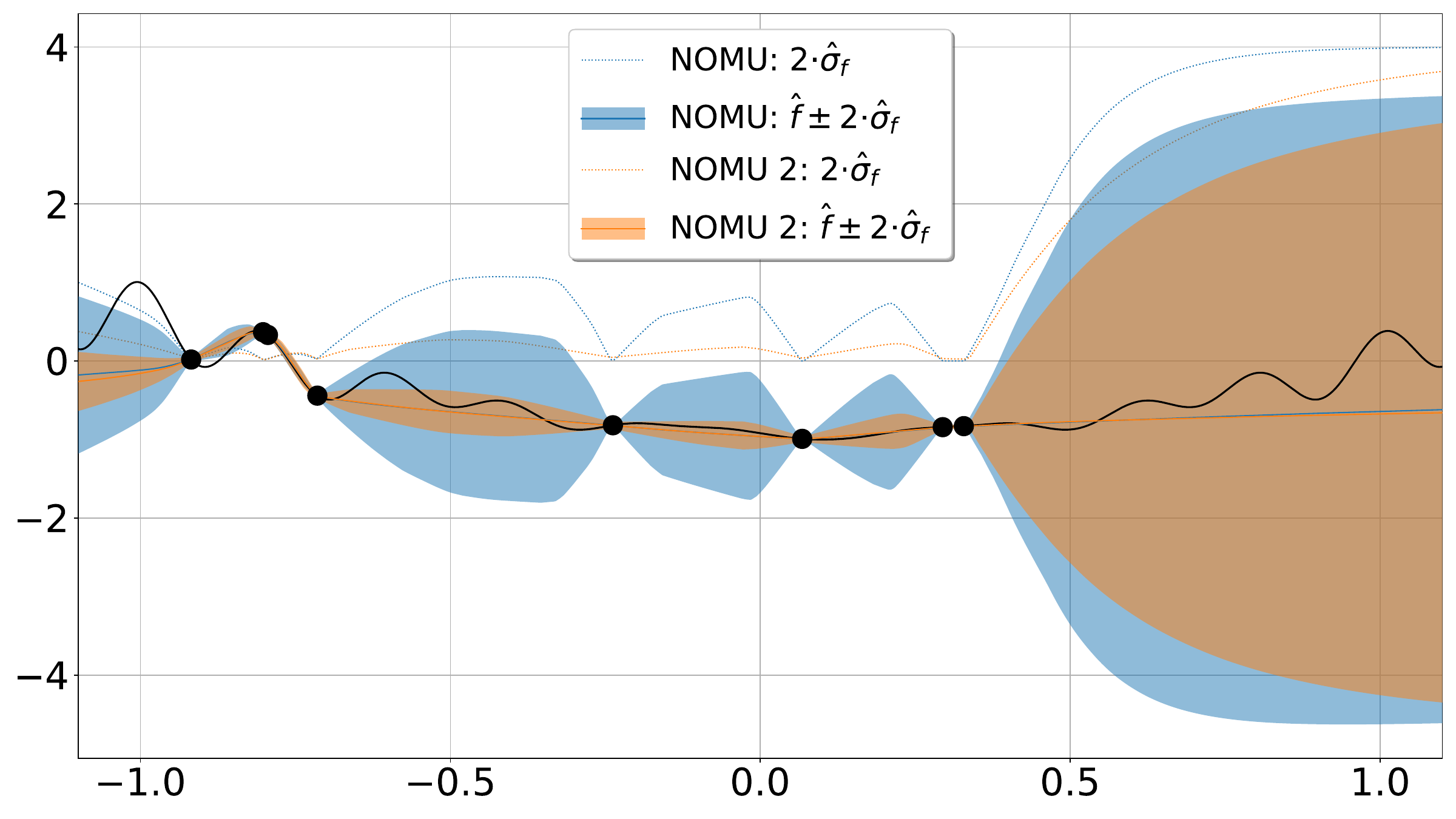}}
    \caption{NOMU's UBs (c=2) for $\cexp=15$ (blue) and $\cexp=60$ (orange).}
    \label{fig:sensitivity:cexp}
\end{center}
\vskip -0.2in
\end{figure}

\paragraph{Varying $\Daug$ with a Scaling Factor of $s=8$.}
Finally, we assess the effect of changing the number of artificial data points $\Daug$ used to approximate the integral \termc{} of NOMU's loss function defined in \Cref{eq:lmu2}. As expected, the UBs behave overall very similarly. However, for very small gaps in between input training points, $\Daug$ can have an influence on the estimated UBs. For example, in the gap between the training input point at $x\approx-0.77$ and the one at $x\approx-0.73$, the $\sigmodelhat$ obtained from the smaller $\Daug$ (blue) vanishes (because of a lack of artificial data points falling in this gap), while $\sigmodelhat$ obtained from the larger $\Daug$ also estimates non-zero model uncertainty in this small gap.

\begin{figure}[ht]
\vskip 0.2in
\begin{center}
\centerline{\includegraphics[width=\columnwidth]{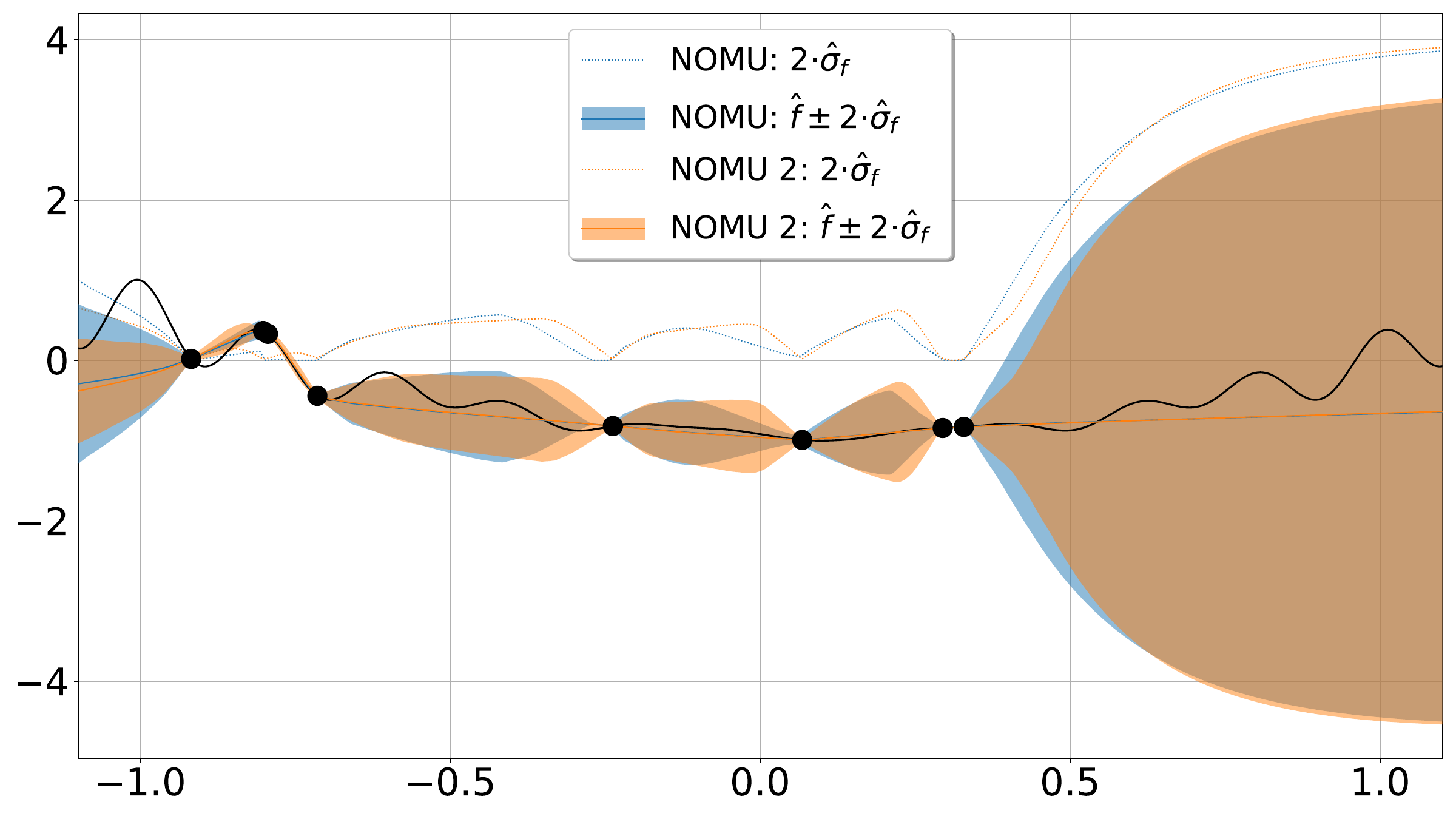}}
    \caption{NOMU's UBs (c=2) for $\Daug=16$ (blue) and $\Daug=1024$ (orange).}
    \label{fig:sensitivity:daug}
\end{center}
\vskip -0.2in
\end{figure}

\paragraph{Varying the architecture and L2-regularization}
In \Cref{subsec:VisualizeAxioms:irregular}, we visualize how certain changes to the architecture and different L2-regularization parameters $\lambda$ for different parts of the network influence NOMU's model uncertainty estimate $\sigmodelhat$. In particular, we show in \Cref{subsec:VisualizeAxioms:irregular} how the choice of the architecture and the L2-regularization determine the degree to which NOMU fulfills desiderata \ref{itm:Axioms:irregular}.

Finally, in \Cref{subsec:appendix:Quantitative Sensitivity Analysis}, we empirically show that NOMU is robust with respect to its hyperparameters within a certain range.

\subsection{Quantitative Sensitivity Analysis}\label{subsec:appendix:Quantitative Sensitivity Analysis}
We now present an extensive quantitative sensitivity analysis of NOMU's loss hyperparameters: $\musqr$, $\muexp$, $\cexp$, and $\Daug$ in the generative test-bed setting (see \Cref{subsubsec:GenerativeTestbed} for details on this setting). Additionally, we also consider in this analysis the hyperparameters corresponding to the readout map, i.e., $\sigmin$ and $\sigmax$. We decided to perform the quantitative sensitivity analysis in the generative test-bed setting, since it offers a particularly rich variety of different test functions and thus exposes each hyperparameter selection to hundreds of different test functions.

\paragraph{Setting} We use the same default hyperparameters as in \Cref{subsubsec:GenerativeTestbed}. This includes NOMU's loss hyperparameters: $\musqr$, $\muexp$, $\cexp$, and $\Daug$, the hyperparameters of the readout map: $\sigmin$ and $\sigmax$ as well as all other hyperparameters. For the following sensitivity analysis, we then vary at each time only a \emph{single} hyperparameter, i.e., one of $\musqr$, $\muexp$, $\cexp$, $\Daug$, $\sigmin$, and $\sigmax$, and set all other hyperparameters to their default values (i.e., we perform a \emph{ceteris paribus} analysis as in \Cref{subsec:appendix:Qualitative Sensitivity Analysis}).

In \Cref{tab:appendix:sensitivity_analysis_grids}, we present for each considered hyperparameter a grid of five different values which we use to test its sensitivity. The \textbf{\textsc{NOMU}} column in \Cref{tab:appendix:sensitivity_analysis_grids} corresponds to the NOMU's default hyperparameters used in the generative test-bed setting (\Cref{subsubsec:GenerativeTestbed}). The columns \textbf{\textsc{NOMU1}} to \textbf{\textsc{NOMU4}} in \Cref{tab:appendix:sensitivity_analysis_grids} then correspond to deviations from these original hyperparameters.

\begin{table}[b!]
    \robustify\bfseries
    \caption{Grid selection for each hyperparameter (HP). The column NOMU corresponds to NOMU's default hyperparameters used in the generative test-bed setting. NOMU1 to NOMU4 correspond to deviations from these default hyperparameters. For $\Daug$, $d$ denotes the input dimension.}
    \label{tab:appendix:sensitivity_analysis_grids}
    \vskip 0.1in
    \begin{center}
    \begin{small}
    \begin{sc}
    \resizebox{1\columnwidth}{!}{
    \setlength\tabcolsep{4pt}
    \begin{tabular}{
    		c
    		c
    		c
    		c
    		c
    		c
    		c
    	}
    	\toprule
    	\textbf{HP} & \multicolumn{1}{c}{\textbf{NOMU1}} &  \multicolumn{1}{c}{\textbf{NOMU2}}  & \multicolumn{1}{c}{\textbf{NOMU}}  &  \multicolumn{1}{c}{\textbf{NOMU3}} & \multicolumn{1}{c}{\textbf{NOMU4}}\\
    	\midrule
        $\musqr$ &0.01&0.02&0.1&0.5&1\\
        $\muexp$ &0.001&0.002&0.01&0.05&0.1\\
        $\cexp$ &10&20&30&45&90\\
        $\Daug$ &$\frac{100\cdot d}{4}$&$\frac{100\cdot d}{2}$&$100\cdot d$&$2\cdot\left(100\cdot d\right)$&$4\cdot\left(100\cdot d\right)$\\
        $\sigmin$ &0.01&0.05&0.1&0.15&0.20\\
        $\sigmax$ &0.50&0.75&1&2.0&4.0\\
    	\bottomrule 
    \end{tabular}
    }
    \end{sc}
    \end{small}
    \end{center}
    \vskip 0.1in
\end{table}
\paragraph{Results} In \Cref{tab:appendix:sensitivity_analysis_1d}, \Cref{tab:appendix:sensitivity_analysis_2d}, and \Cref{tab:appendix:sensitivity_analysis_5d} we present for each of the ceteris paribus runs average \NLPD{} values for input dimensions 1D, 2D, and 5D, respectively. Each cell in those tables represents a single hyperparameter selection where we use NOMU's default hyperparameters except for the hyperparameter of the corresponding row which we choose according to the cell's column, e.g., to obtain the result for the cell ($\muexp$, \textbf{\textsc{NOMU3}}), we use the default NOMU hyperparameters from \Cref{subsubsec:GenerativeTestbed} \emph{except} for $\muexp$ which we set according to column \textbf{\textsc{NOMU3}} in \Cref{tab:appendix:sensitivity_analysis_grids} to $\muexp:=0.05$. 
Overall, we can make the following three main observations:
\begin{enumerate}[leftmargin=*,topsep=0pt,partopsep=0pt, parsep=0pt]
    \item The majority of all cells in \Cref{tab:appendix:sensitivity_analysis_1d}, \Cref{tab:appendix:sensitivity_analysis_2d}, and \Cref{tab:appendix:sensitivity_analysis_5d} are marked in grey. This shows that, their corresponding hyperparameters lead to average \NLPD{} results which are statistically on par with the results obtained via NOMU's default hyperparameters in the \textbf{\textsc{NOMU}} columns. This highlights NOMU's robustness with respect to all considered hyperparameters within the chosen grids. Furthermore, it confirms our claim from the main paper that using generic hyperparameters for NOMU often works well without much hyperparameter-tuning.
    \item Since NOMU with the default hyperparameters already outperforms \emph{all} other considered benchmark methods in this setting, i.e., each \textbf{\textsc{NOMU}} column represents the winning method among benchmarks (see \Cref{tab:bnn_uniform}), we see that all grey marked deviations of hyperparameters lead to results that outperform all other considered benchmark methods too.
    Moreover, all except for one (5D ($\sigmin$, \textbf{\textsc{NOMU1}})) NOMU models corresponding to cells which are not marked in grey, i.e., with hyperparameters that lead to statistically worse results than NOMU's default hyperparameters, are as good or better than the best benchmark methods from \Cref{tab:bnn_uniform}.
    \item By varying NOMU's hyperparameters, we can even obtain better results (i.e., with a smaller average $\NLPD{}$) than the ones reported in \Cref{tab:bnn_uniform} of the main paper, e.g., in 1D with $\sigmin:=0.01$ the average $\NLPD{}=-1.83<-1.65$. While these improvements are not statistically significant, these results still suggest that systematic hyperparameter-tuning could improve the performance of NOMU even further.
\end{enumerate}
\begin{table}[b!]
    \setlength\tabcolsep{2pt}
    \caption{Sensitivity analysis for 1D generative test-bed setting. We present for each hyperparameter (HP) and its five corresponding grid-points the average \NLPD{} {\scriptsize(without const. $\ln(2\pi)/2$)} and a 95\% CI over 200 BNN samples. Results which are statistically on par with NOMU's default HPs, i.e., the column \textbf{\textsc{NOMU}}, are marked in grey. Note that, the best benchmark method for this experiment is GP with $\NLPD=-1.08\pm0.22$ (see \Cref{tab:bnn_uniform}).}
    \label{tab:appendix:sensitivity_analysis_1d}
    \vskip 0.1in
    \begin{center}
    \begin{small}
    \begin{sc}
    \resizebox{1\columnwidth}{!}{
    \begin{tabular}{
    	l
    	c
    	c
    	c
    	c
    	c
    	c
    	c
    	c
    	c
    	c
    }
    \toprule
    \textbf{HP} & \multicolumn{1}{c}{\textbf{NOMU1}} &  \multicolumn{1}{c}{\textbf{NOMU2}}  & \multicolumn{1}{c}{\textbf{NOMU}}  &  \multicolumn{1}{c}{\textbf{NOMU3}} & \multicolumn{1}{c}{\textbf{NOMU4}}\\
    \midrule
    $\musqr$ &\winc-1.59$\pm$\scriptsize0.11&\winc-1.63$\pm$\scriptsize0.10&\winc-1.65$\pm$\scriptsize0.10 &\winc-1.55$\pm$\scriptsize0.13&\winc-1.54$\pm$\scriptsize0.14&\\
    
    $\muexp$ & \winc-1.64$\pm$\scriptsize0.09&\winc-1.67$\pm$\scriptsize0.09&\winc-1.65$\pm$\scriptsize0.10&\winc-1.62$\pm$\scriptsize0.10&\winc-1.60$\pm$\scriptsize0.10\\
    
    $\cexp$ &\winc-1.77$\pm$\scriptsize0.12&\winc-1.73$\pm$\scriptsize0.10&\winc-1.65$\pm$\scriptsize0.10&\winc-1.49$\pm$\scriptsize0.12&-1.11$\pm$\scriptsize0.13\\
    
    $\Daug$ &\winc-1.65$\pm$\scriptsize0.09&\winc-1.62$\pm$\scriptsize0.11&\winc-1.65$\pm$\scriptsize0.10&\winc-1.63$\pm$\scriptsize0.12&\winc-1.65$\pm$\scriptsize0.10\\
    
    $\sigmin$ &\winc-1.83$\pm$\scriptsize0.09&\winc-1.72$\pm$\scriptsize0.10&\winc-1.65$\pm$\scriptsize0.10&\winc-1.58$\pm$\scriptsize0.10&\winc-1.47$\pm$\scriptsize0.12\\
    
    $\sigmax$ &\winc-1.49$\pm$\scriptsize0.12&\winc-1.60$\pm$\scriptsize0.10&\winc-1.65$\pm$\scriptsize0.10&\winc-1.63$\pm$\scriptsize0.16&\winc-1.66$\pm$\scriptsize0.14\\
    \bottomrule 
    \end{tabular}}
    \end{sc}
    \end{small}
    \end{center}
    \vskip -0.1in
    \end{table}
\begin{table}[t!]
    \setlength\tabcolsep{2pt}
    \caption{Sensitivity analysis for 2D generative test-bed setting. We present for each hyperparameter (HP) and its five corresponding grid-points the average \NLPD{} {\scriptsize(without const. $\ln(2\pi)/2$)} and a 95\% CI over 200 BNN samples. Results which are statistically on par with NOMU's default HPs, i.e., the column \textbf{\textsc{NOMU}}, are marked in grey. Note that, the best benchmark method for this experiment is DE with $\NLPD=-0.77\pm0.07$ (see \Cref{tab:bnn_uniform}).}
    \label{tab:appendix:sensitivity_analysis_2d}
    \vskip 0.1in
    \begin{center}
    \begin{small}
    \begin{sc}
    \resizebox{1\columnwidth}{!}{
    \begin{tabular}{
    	l
    	c
    	c
    	c
    	c
    	c
    	c
    	c
    	c
    	c
    	c
    }
    \toprule
    \textbf{HP} & \multicolumn{1}{c}{\textbf{NOMU1}} &  \multicolumn{1}{c}{\textbf{NOMU2}}  & \multicolumn{1}{c}{\textbf{NOMU}}  &  \multicolumn{1}{c}{\textbf{NOMU3}} & \multicolumn{1}{c}{\textbf{NOMU4}}\\
    \midrule
    $\musqr$ &\winc-1.18$\pm$\scriptsize0.04 &\winc -1.18$\pm$\scriptsize0.04 & \winc-1.16$\pm$\scriptsize0.05 & \winc-1.15$\pm$\scriptsize0.04 & \winc-1.15$\pm$\scriptsize0.04\\
    
    $\muexp$ &\winc-1.15$\pm$\scriptsize0.04&\winc-1.15$\pm$\scriptsize0.05&\winc-1.16$\pm$\scriptsize0.05&\winc-1.18$\pm$\scriptsize0.04&\winc-1.18$\pm$\scriptsize0.04\\
    
    $\cexp$ &\winc-1.17$\pm$\scriptsize0.04&\winc-1.19$\pm$\scriptsize0.04&\winc-1.16$\pm$\scriptsize0.05&\winc-1.11$\pm$\scriptsize0.05&-1.00$\pm$\scriptsize0.05\\
    
    $\Daug$ &\winc-1.16$\pm$\scriptsize0.05 &\winc-1.16$\pm$\scriptsize0.05 &\winc-1.16$\pm$\scriptsize0.05 &\winc-1.16$\pm$\scriptsize0.04&\winc-1.15$\pm$\scriptsize0.05\\
    
    $\sigmin$ &\winc-1.07$\pm$\scriptsize0.05&\winc-1.17$\pm$\scriptsize0.04&\winc-1.16$\pm$\scriptsize0.05&\winc-1.14$\pm$\scriptsize0.05&\winc-1.12$\pm$\scriptsize0.05\\
    
	$\sigmax$ &\winc-1.13$\pm$\scriptsize0.05&\winc-1.15$\pm$\scriptsize0.05&\winc-1.16$\pm$\scriptsize0.05&\winc-1.16$\pm$\scriptsize0.04&\winc-1.16$\pm$\scriptsize0.04\\
    
    \bottomrule 
    \end{tabular}
    }
    \end{sc}
    \end{small}
    \end{center}
    \vskip -0.1in
\end{table}
\begin{table}[t!]
    \setlength\tabcolsep{2pt}
    \caption{Sensitivity analysis for 5D generative test-bed setting. We present for each hyperparameter (HP) and its five corresponding grid-points the average \NLPD{} {\scriptsize(without const. $\ln(2\pi)/2$)} and a 95\% CI over 200 BNN samples. Results which are statistically on par with NOMU's default HPs, i.e., the column \textbf{\textsc{NOMU}}, are marked in grey. Note that, the best benchmark method for this experiment is GP with $\NLPD=-0.33\pm0.02$ (see \Cref{tab:bnn_uniform}).}
    \label{tab:appendix:sensitivity_analysis_5d}
    \vskip 0.1in
    \begin{center}
    \begin{small}
    \begin{sc}
    \resizebox{1\columnwidth}{!}{
    \begin{tabular}{
    	l
    	c
    	c
    	c
    	c
    	c
    	c
    	c
    	c
    	c
    	c
    }
    \toprule
    \textbf{HP} & \multicolumn{1}{c}{\textbf{NOMU1}} &  \multicolumn{1}{c}{\textbf{NOMU2}}  & \multicolumn{1}{c}{\textbf{NOMU}}  &  \multicolumn{1}{c}{\textbf{NOMU3}} & \multicolumn{1}{c}{\textbf{NOMU4}}\\
    \midrule
    $\musqr$ &\winc-0.37$\pm$\scriptsize0.03&\winc-0.37$\pm$\scriptsize0.02&\winc-0.37$\pm$\scriptsize0.02&\winc-0.36$\pm$\scriptsize0.02&\winc-0.35$\pm$\scriptsize0.02\\
    
    $\muexp$ &-0.33$\pm$\scriptsize0.02&\winc-0.34$\pm$\scriptsize0.02&\winc-0.37$\pm$\scriptsize0.02&\winc-0.40$\pm$\scriptsize0.02&\winc-0.40$\pm$\scriptsize0.02\\
    
    $\cexp$ &\winc-0.39$\pm$\scriptsize0.02&\winc-0.38$\pm$\scriptsize0.02&\winc-0.37$\pm$\scriptsize0.02&\winc-0.37$\pm$\scriptsize0.02&\winc-0.34$\pm$\scriptsize0.05\\
    
    $\Daug$ &\winc-0.37$\pm$\scriptsize0.02&\winc-0.37$\pm$\scriptsize0.02&\winc-0.37$\pm$\scriptsize0.02&\winc-0.37$\pm$\scriptsize0.02&\winc-0.37$\pm$\scriptsize0.02\\
    
    $\sigmin$ &-0.21$\pm$\scriptsize0.03&-0.33$\pm$\scriptsize0.02&\winc-0.37$\pm$\scriptsize0.02&\winc-0.38$\pm$\scriptsize0.02&\winc-0.39$\pm$\scriptsize0.02\\
    
    $\sigmax$ &\winc-0.39$\pm$\scriptsize0.02&\winc-0.37$\pm$\scriptsize0.02&\winc-0.37$\pm$\scriptsize0.02&\winc-0.35$\pm$\scriptsize0.02&-0.33$\pm$\scriptsize0.02\\
    \bottomrule 
    \end{tabular}
    }
    \end{sc}
    \end{small}
    \end{center}
    \vskip -0.1in
\end{table}
\section{Details on our Notation}\label{sec:DetailsNotation}
In \cref{sec:Bayesian Uncertainty Framework}, we are using a slightly overloaded notation, where we use the same symbol $f$ for different mathematical objects. Sometimes, we use $f$ for a function-valued random variable $F:(\Omega,\Sigma,\PP)\to Y^X$. and sometimes we use $f$ for the specific unknown ground truth function $f_\text{true}:=F(\omega)$ (i.e., $\forall x\in X: f_\text{true}(x)=(F(\omega))(x)$). While we used this slightly overloaded notation for the sake of readability in the main paper, we will now introduce our Bayesian uncertainty framework in its full mathematical detail:

In practice, there exists an unknown ground truth function $f_\text{true}$. In the classical Bayesian paradigm, one assumes that everything unknown (i.e., here $f_\text{true}$) was sampled from a prior distribution. Specifically, $f_\text{true}:=F(\omega)$ is a realization of a random variable $F:(\Omega,\Sigma,\PP)\to Y^X$ distributed according to a prior belief, i.e., a prior distribution. Using this notation, one can mathematically describe in a rigorous way the full Bayesian data generating process as follows:

Let $F:(\Omega,\Sigma,\PP)\to Y^X$ be a function-valued random variable distributed according to a prior belief, i.e., a prior distribution. Moreover, let $(\mathcal{X}_i,\mathcal{E}_i)_{i\in\fromto{\ntr}}$ denote the random variable representing the input data points and the corresponding data noise, i.e., a family off i.i.d random variables independent of $F$, where each random variable $(\mathcal{X}_i,\mathcal{E}_i):(\Omega,\Sigma,\PP)\to X\times \R$ fulfills $\forall x \in X: (\mathcal{E}_i | \mathcal{X}_i=x)\sim\mathcal{N}\left(0,\signoise(x)\right)$. Finally, let $\mathcal{Y}_i$ be the random variable associated to the targets, which we define as $\mathcal{Y}_i:(\Omega,\Sigma,\PP)\to Y,\omega\mapsto \mathcal{Y}_i(\omega):=F(\omega)(\mathcal{X}_i(\omega))+\mathcal{E}_i(\omega)$.

With this notation in place, the objects used in the main paper in \cref{sec:Bayesian Uncertainty Framework}: $x_i=\mathcal{X}_i(\omega)$, $y_i=\mathcal{Y}_i(\omega)$ and $\varepsilon_i=\mathcal{E}_i(\omega)$ are the values one actually observe in practice as training data, i.e., the realizations of the data generating process. 

Therefore, $\sigmodel(x)$ from \Cref{eq:modelUncertainty} should be interpreted for all $x\in \X$ as
\begin{align}
&\sigmodel(x)=\nonumber\\
&\sqrt{\mathbb{V}\left[F(\cdot)(x)\middle| \forall i \in \fromto{\ntr}:(\mathcal{X}_i,\mathcal{Y}_i)=(x_i,y_i)\right]}, 
\end{align}
which, for all $x\in \X$, is defined mathematically even more rigorously via the conditional variance as follows: 
\begin{align}
\sqrt{\mathbb{V}\left[F(\cdot)(x)\middle|(\mathcal{X}_i,\mathcal{Y}_i)_{i\in\fromto{\ntr}}\right](\omega)}.
\end{align}
Throughout the paper it should always be clear from the context if $x_i$ refers to the random variable $\mathcal{X}_i$ or its realization $\mathcal{X}_i(\omega)$ where the same holds true for $y_i$, $\varepsilon_i$ and $f$.

Importantly, note that in the setting which we consider in this paper, one only observes data coming from a \emph{single} function $f=F(\omega)$ and one does not have access to more functions $f_i$. Specifically, we neither have access to other samples of the random variable $F$ nor do we consider multiple i.i.d random variables $F_i$ in contrast to the setting considered for neural processes as described in \Cref{sec:appendix:NOMU vs. Neural Processes}.

\section{Visualization of the Readout Map}\label{sec:readout_map}
In this section, we provide in \Cref{fig:app:readout_map} a visualization of the readout map
\begin{equation}\label{eq:app:readout_map}
\readout(z)=\sigmax\left(1-\exp\left(-\frac{\max(0,z)+\sigmin}{\sigmax}\right)\right),
\end{equation}
for $\sigmin\ge0$ and $\sigmax>0$. The readout map $\readout$ was introduced in \Cref{subsec:The Network Architecture} to transform the \emph{raw} model uncertainty output $\sigmodelhatraw$ into NOMU's \emph{model uncertainty prediction} $\sigmodelhat(x)=\readout(\sigmodelhatraw(x))$ (see \Cref{eq:modelUncertaintyPrediction}).
 \begin{figure}[t!]
    \begin{center}
    \centerline{\includegraphics[width=0.9\columnwidth]{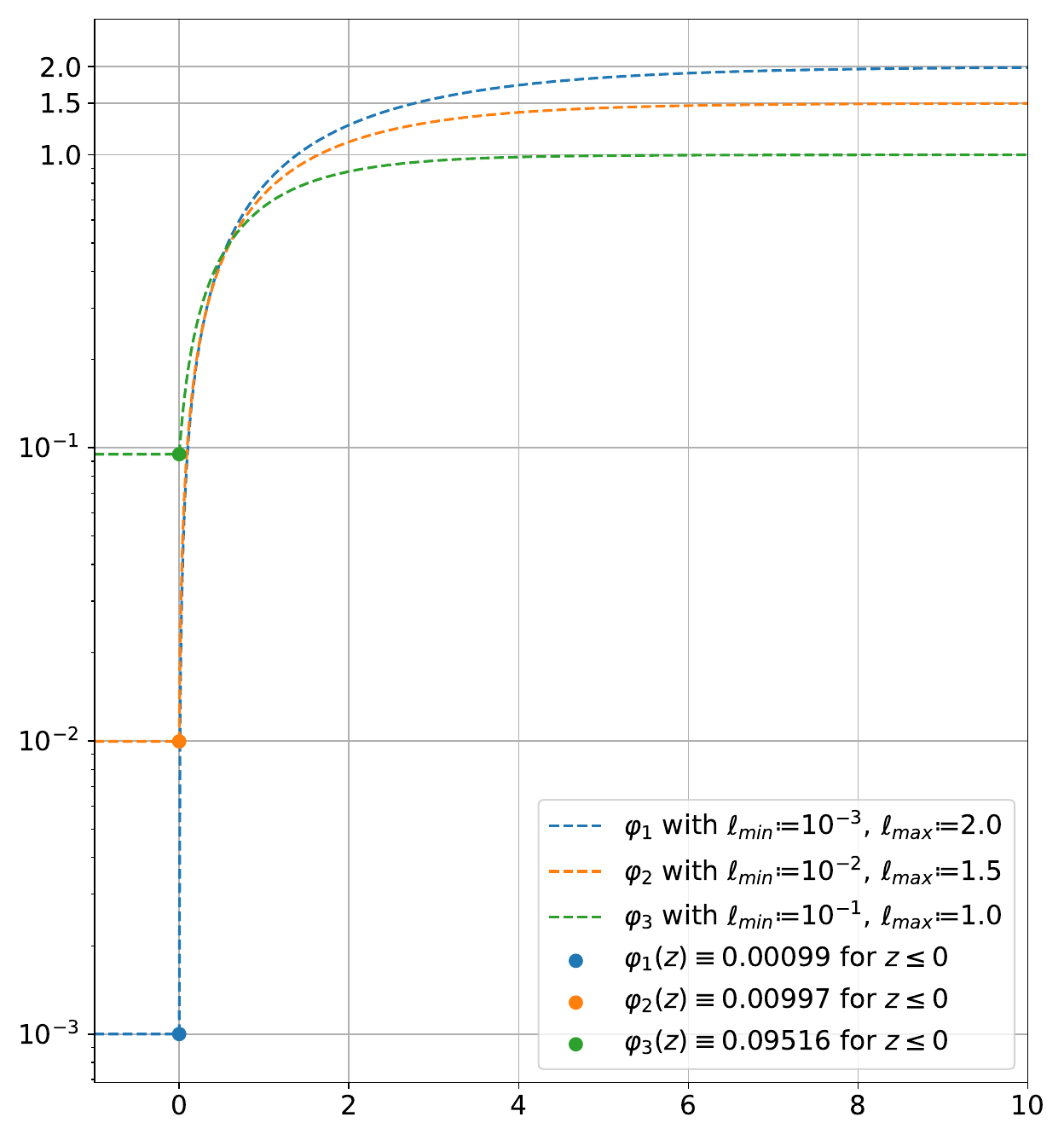}}
    \caption{Visualization of the readout map $\varphi(z)$ for three different $(\sigmin,\sigmax)$-pairs.}
    \label{fig:app:readout_map}
    \end{center}
    \vskip -0.2in
\end{figure}

\Cref{fig:app:readout_map} shows how the readout map $\readout$ interpolates between its minimal value $\sigmax(1-\exp(-\frac{\sigmin}{\sigmax}))\approx\sigmax(1-1+\frac{\sigmin}{\sigmax}))=\sigmin$ (shown in \Cref{fig:app:readout_map} for three different pairs of $(\sigmin,\sigmax)$ as the three dots) and its asymptotic maximal value $\sigmax=\lim_{z\to\infty}\readout(z)$. Specifically, we designed the readout map $\readout$ such that it has a relatively steep increase
starting from $0$ and flattens when asymptotically converging to its maximal value $\sigmax$. The parameter $\sigmin$ is used for numerical stability to prevent $0$ model uncertainty outputs, which would be a very extreme statements, causing metrics such as NLL or AUC to result in infinitely bad values for arbitrarily small numerical deviations of the model predictions.
The parameter $\sigmax$ defines the maximal model uncertainty far away from input training points (similarly to the prior variance for GPs with RBF kernel).
As discussed in \Cref{rem:different_readout}, the readout map $\readout$ can be modified depending on the subsequent use of the estimated UBs or it can even be learned by parameterizing $\readout$ by a NN and  minimizing a valid metric (such as the NLL) on a validation set.
}

\end{document}